\definecolor{DarkGreen}{rgb}{0.1,0.5,0.1}
\definecolor{DarkRed}{rgb}{0.5,0.1,0.1}
\definecolor{DarkBlue}{rgb}{0.1,0.1,0.5}
\definecolor{DarkYellow}{rgb}{.79,.79,0}
\definecolor{unitednationsblue}{rgb}{0.36, 0.57, 0.9}
\definecolor{blue_ppt}{rgb}{0,0.6,0.93}
\definecolor{darkblue_ppt}{rgb}{0.05,0.4,0.8}
\definecolor{orange_ppt}{rgb}{0.82,0.5,0}
\definecolor{yc}{RGB}{225,0,0}
\newcolumntype{?}{!{\vrule width 1pt}}
\theoremstyle{plain}
\newtheorem{lm}{Lemma} 
\newtheorem{thm}{Theorem}
\newtheorem{asmp}{Assumption}
\def\1{\bm{1}}
\DeclareMathAlphabet{\mathsfit}{\encodingdefault}{\sfdefault}{m}{sl}
\SetMathAlphabet{\mathsfit}{bold}{\encodingdefault}{\sfdefault}{bx}{n}
\def\gE{{\mathcal{E}}}
\def\gL{{\mathcal{L}}}
\def\gO{{\mathcal{O}}}
\def\gT{{\mathcal{T}}}
\def\gV{{\mathcal{V}}}
\newcommand{\E}{\mathbb{E}}
\newcommand{\R}{\mathbb{R}}
\definecolor{mydarkblue}{rgb}{0,0.08,0.45}
\definecolor{mygreen}{rgb}{0.032, 0.6392, 0.2039}
\definecolor{mypurple}{HTML}{B266FF}
\def\RR{{\mathbb R}}
\def\NN{{\mathbb N}}
\def\EE{{\mathbb E}}
\def\RR{{\mathbb R}}
\def\NN{{\mathbb N}}
\def\EE{{\mathbb E}}
\newcommand{\norm}[1]{\left\| #1 \right\|}
\newcommand{\sm}{\mathsf{softmax}}
\newcommand{\embedretrieval}{E_\mathsf{g2r}}
\newcommand{\embedext}{E_\mathsf{r2g}}
\newcommand{\pathreverse}{P_\mathsf{g2r}}
\newcommand{\pathforward}{P_\mathsf{r2g}}
\newcommand{\gtretrieval}{O_\mathsf{g2r}}
\newcommand{\gtext}{O_\mathsf{r2g}}
\newcommand{\losstrain}{\gL_\mathsf{train}}
\newcommand{\losstest}{\gL_\mathsf{test}}
\newcommand{\Ptrain}{P_\mathsf{train}}
\newcommand{\oretrieval}{\widehat{O}_\mathsf{g2r}}
\newcommand{\oext}{\widehat{O}_\mathsf{r2g}}
\newcommand{\poly}{\mathsf{poly}}
\definecolor{blue_ppt}{rgb}{0,0.47,0.97}
\definecolor{violet}{RGB}{138,43,226}
\title{Multi-head Transformers Provably Learn  Symbolic \\  Multi-step  Reasoning via Gradient Descent}
\author{%
 Tong Yang\thanks{Department of Electrical and Computer Engineering, Carnegie Mellon University; email: \texttt{tongyang@andrew.cmu.edu}. } \\
CMU    \\
	\and
	Yu Huang\thanks{Department of Statistics and Data Science, Wharton School, University of Pennsylvania; email: \texttt{yuh42@wharton.upenn.edu}. } \\
 UPenn  \\
 	\and
	Yingbin Liang\thanks{Department of Electrical and Computer Engineering, The Ohio State University; email: \texttt{liang.889@osu.edu}. }\\
	OSU\\
	\and
	Yuejie Chi\thanks{Department of Statistics and Data Science, Yale University; email: \texttt{yuejie.chi@yale.edu}. }\\
	Yale\\ 	
}
\date{August 2025; Revised \today}
\begin{document}

\maketitle

\begin{abstract}

Transformers have demonstrated remarkable capabilities in multi-step reasoning tasks. However, understandings of the underlying mechanisms by which they acquire these abilities through training remain limited, particularly from a theoretical standpoint. This work investigates how transformers learn to solve symbolic multi-step reasoning problems through chain-of-thought processes, focusing on path-finding in trees. We analyze two intertwined tasks: a backward reasoning task, where the model outputs a path from a goal node to the root, and a more complex forward reasoning task, where the model implements two-stage reasoning by first identifying the goal-to-root path and then reversing it to produce the root-to-goal path. Our theoretical analysis, grounded in the dynamics of gradient descent, shows that trained one-layer transformers can provably solve both tasks with generalization guarantees to unseen trees. In particular, our multi-phase training dynamics for forward reasoning elucidate how different attention heads learn to specialize and coordinate autonomously to solve the two subtasks in a single autoregressive path. These results provide a mechanistic explanation of how trained transformers can implement sequential algorithmic procedures. Moreover, they offer insights into the emergence of reasoning abilities, suggesting that when tasks are structured to take intermediate chain-of-thought steps, even shallow multi-head transformers can effectively solve problems that would otherwise require deeper architectures.

\end{abstract}

\noindent\textbf{Keywords:} training dynamics of transformers, chain of thought, multi-step reasoning, generalization

\setcounter{tocdepth}{2}
\tableofcontents

\section{Introduction}

Transformers~\citep{vaswani2023attentionneed}---the building blocks of large language models (LLMs)---have demonstrated impressive capabilities in tasks that require {\em multi-step reasoning} \citep{wei2022chain}, where LLMs start to excel in solving hard problems via taking simpler step-by-step actions, and executing different subtasks within a single response. Many of these capabilities are further fueled by the remarkable phenomenon called \textit{emergence of reasoning ability}, where simply extending the length of intermediate reasoning steps can dramatically boost accuracy~\citep{guo2025deepseek}.

Spurred by their remarkable success, understanding how transformers enable Chain-of-Thought (CoT) and multi-step reasoning has attracted considerable research attention recently. Most of the theoretical studies on the multi-step reasoning mechanism focus on the expressive power~\citep{feng2023revealingmysterychainthought,li2024chain,merrill2023expressive,chen2024theoretical}, statistical properties~\citep{hu2024unveiling,prystawski2023think,li2023dissectingchainofthoughtcompositionalityincontext} or learnability~\citep{abbe2024far,hahn2023theory,wies2022sub,sanford2024understandingtransformerreasoningcapabilities,kim2025metastable,amiri2025lower} of the CoT mechanism. Notwithstanding, the training theory of transformers, which examines the optimization and generalization properties of transformers trained via gradient-based methods, is another important direction for understanding the CoT mechanism. This line of work is more closely aligned with practical implementations, as it investigates how training enables transformers to develop multi-step reasoning capabilities through CoT. A small but growing number of studies have recently explored this direction, focusing on tasks such as in-context learning, algorithmic emulation, parity checking and state tracking (e.g., \citet{li2024training,wen2024sparse,kim2025metastable,huang2025transformers,huang2025how,huang2025transformers_lg}).

However, existing theoretical studies do not fully capture the complexity present in real-world applications that transformers are expected to handle through CoT reasoning: 
\begin{itemize}
\item {\em Graph-based structural complexity.} Reasoning over structured relational dependencies introduces a richer level of complexity. It requires transformers to perform multi-hop traversal and extract rule-based generalizations, making it more representative of realistic symbolic reasoning scenarios. 
\item {\em Multi-stage reasoning and autonomous stage transition.} In real-world CoT reasoning, achieving a complex reasoning objective often involves performing multiple consecutive reasoning tasks. Crucially, the model must learn to autonomously determine when to switch between stages, reflecting a hierarchical and self-regulated reasoning process.
\end{itemize}

  \begin{figure}[t]
\begin{center}
    \includegraphics[width=0.85\textwidth]{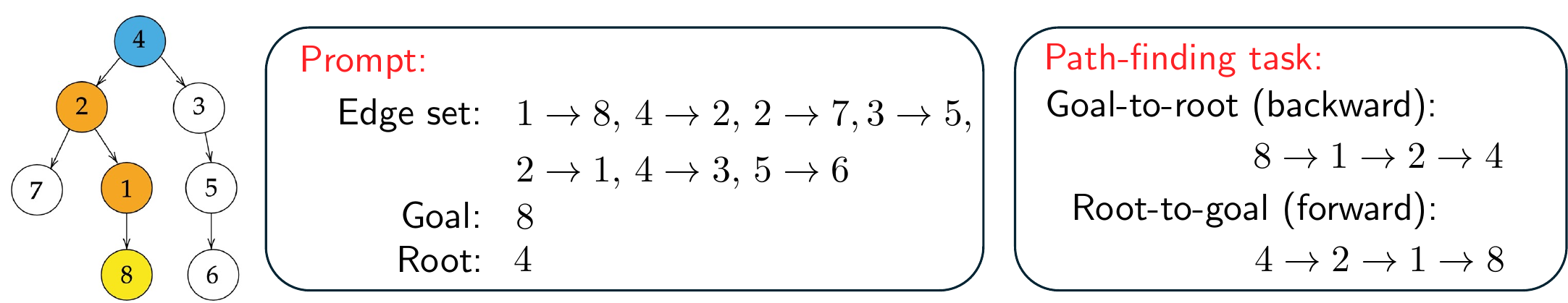}
  \end{center}
 \caption{An illustration of the path-finding reasoning task in a tree. Solving the forward task (finding the root-to-goal path) requires solving the backward task (finding the goal-to-root path) first. 
 } \label{fig:example}
 \end{figure}

\subsection{Symbolic multi-step reasoning: path finding on trees} 

In response to the above gap, this paper resorts to a canonical symbolic multi-step reasoning problem of {\em path-finding on trees}, motivated by the empirical investigation in \citet{brinkmann2024mechanistic}; see Figure~\ref{fig:example} for an illustration. The input prompt is the list of edges in the tree, the root node, and the goal node (which is a leaf node). We are interested in two fundamental reasoning tasks: {\em (i) backward reasoning}, which involves finding the path from the goal node to the root node; and {\em (ii) forward reasoning}, which aims to find the path from the root node to the goal node. While the backward reasoning task can be solved by sequentially chaining edges connected to the child node recursively, tackling the forward reasoning task requires solving the backward reasoning task (i.e., finding the goal-to-root path) first and then reversing the path, as there can be multiple nodes connected to a parent node. Studying these two intertwined tasks has the potential to elucidate deeper understandings of CoT on the following front: 
\begin{itemize}
\item {\em How CoT implements chaining over graphs.} Both tasks require {\em step-by-step reasoning}, as one needs to trace the edges connecting the goal node and the root node in a recursive manner. 
\item {\em How CoT implements subtask control.} The harder, forward reasoning task requires solving two {\em subtasks in a single CoT path},
where the transformer needs to automatically identify the {\em turning point} between the subtasks, when it traverses the root node. 
\end{itemize}

The empirical study in \citet{brinkmann2024mechanistic} revealed that deep transformers, when tasked with forward reasoning (i.e., outputting the root-to-goal path), internally implement a backward chaining algorithm across their layers that effectively from the goal node, ``climb'' the tree layer by layer towards the root node. The discrepancy between the internal reasoning direction (i.e., backward) and the required output format (i.e., forward) suggests that architectural depth is leveraged by these models to manage intermediate representations before producing the final answer. While inspiring, it is intriguing to ask if one can {\em provably train} shallow, one-layer, transformers to solving the path-finding task in trees via CoT, and if extending the reasoning length---rather than using deeper architectures---can be leveraged to solve the harder forward reasoning task. Answering these questions will lead to new insights into the multi-head attention mechanism that are unavailable in today's literature.

\subsection{Our contributions}

This paper investigates how a \textit{one-layer} transformer can learn symbolic multi-step reasoning---path finding in a tree---by employing the CoT mechanism to make sufficient intermediate reasoning steps. We provide comprehensive analyses covering  transformer constructions, training dynamics, and generalization guarantees, demonstrating that when equipped with CoT, even one-layer multi-head transformers are capable to solving complex tasks that require multi-stage and multi-step reasoning. Specifically, our contributions are as follows. 
\begin{itemize}
    \item {\em Construction.} We provide explicit constructions detailing how a one-layer transformer can perform both tasks using CoT. While a single attention head is sufficient for backward reasoning, it takes two to implement forward reasoning. Our construction elucidates how different attention heads, through coordination, can specialize for distinct reasoning subtasks.
    \item {\em Optimization.} We prove that gradient descent successfully trains the transformers to acquire multi-step reasoning capability for solving both tasks via CoT. Our multi-phase training dynamics analysis demonstrates that the model parameters indeed converge to those specified in our construction, explaining how attention heads learn their respective roles in a non-asymptotic manner. 
    \item {\em Generalization.} We demonstrate that the learned reasoning mechanisms, including the specialized head functions, generalize effectively to correctly solve path-finding problems on unseen tree structures. This demonstrates that transformers learn the generalizable rule for path-finding rather than memorizing paths in training graphs. 
\end{itemize}
Our finding offers a mechanistic explanation for how CoT can empower even shallow models to tackle tasks that seemingly require the capabilities of deeper architectures, which is particularly illuminating in the study of forward reasoning. In this setup, the one-layer transformer, utilizing two attention heads, is trained to perform two sequential subtasks: first, it must generate the path in the reverse order (goal-to-root) as an explicit CoT sequence; then, upon identifying the root, it must sequentially output the path in the correct forward order (root-to-goal). Our optimization-based analysis reveals how these two heads learn to coordinate: one head is responsible for identifying the next node in the current path-finding stage, while the other head acts as a stage controller, monitoring the current reasoning phase (e.g., backward chaining) and triggering the switch to forward chaining once the root node is detected. This learned specialization allows the one-layer, multi-head transformer to perform this entire two-stage process within a single autoregressive pass.

Importantly, by explicitly generating the backward path as a ``scratchpad'' in CoT outputs, the one-layer model can then leverage this intermediate information to produce the forward path, rather than relying on depth like in \cite{brinkmann2024mechanistic}. This resonates with the emergent reasoning abilities in LLMs through test-time scaling~\citep{guo2025deepseek}, where generating longer intermediate steps via CoT often unlocks more sophisticated problem-solving capabilities. Our two-stage setup illustrates how extending the reasoning trace allows a one-layer model to effectively perform a complex task that, without such explicit intermediate steps, might necessitate a deeper model.

\subsection{Related work}



\paragraph{Theoretical understanding of transformers with CoT.} A growing body of research seeks to theoretically demystify the success of CoT reasoning in transformer models. Most existing studies examine how CoT enhances transformer expressiveness~\citep{feng2023revealingmysterychainthought,li2024chain,merrill2023expressive,chen2024theoretical}, showing that polynomial-length CoT enables transformers to transcend $\mathsf{TC}^0$, a class efficiently handled by constant-depth transformers without CoT~\citep{liu2023transformerslearnshortcutsautomata}. Other works identify intrinsic limitations, establishing lower bounds on necessary intermediate steps for certain problems~\citep{peng2024limitations,barcelo2025ehrenfeucht,amiri2025lower}. Another line of research addresses the learnability~\citep{abbe2024far,hahn2023theory,wies2022sub,sanford2024understandingtransformerreasoningcapabilities,kim2024transformers} and statistical properties~\citep{hu2024unveiling,prystawski2023think,li2023dissectingchainofthoughtcompositionalityincontext} of CoT-enabled transformers.  

\paragraph{Optimization theory for CoT.} Recent analyses explore optimization aspects of transformers trained with CoT~\citep{li2024training,huang2025transformers,wen2024sparse,kim2024transformers,huang2025transformers,huang2025transformers_lg}.  The most closely related works are~\citet{wen2024sparse,kim2024transformers,huang2025how}, which analyze the optimization dynamics of one-layer transformers trained with CoT to solve parity-check tasks. However, these studies are limited to the single-head setting. In contrast, our work considers multi-head transformers solving a more challenging task involving reasoning over graphs as well as auto-transition over two subtasks, involving specializations and coordination of different heads.

\paragraph{Training dynamics of transformers.} Many existing works have investigated the training dynamics of transformers across a variety of tasks, including in-context learning~\citep{huang2023context,yang2024incontextlearningrepresentationscontextual,zhang2024trained,li2024nonlinear,chen2024training}, induction heads~\citep{nichani2024transformers,chen2024unveiling}, sparse token selection~\citep{wang2024transformers}, self-supervised learning~\cite{huang2025a} and group reasoning \citep{wang2025neural}. Due to the rapid growth of this area, we are unable to reference all related studies. Particularly relevant to our work are recent analyses of CoT reasoning~\citep{li2024training, huang2025transformers, wen2024sparse, kim2024transformers, huang2025how}. 
However, the majority of these studies are limited to the single-head setting. While a few works have considered the multi-head case, they typically address only simple tasks such as linear regression, leaving open the question of how multi-head architectures can be trained to solve more complex reasoning problems.

\paragraph{Notation.} Let $[N]=\{1,\ldots, N\}$. For a matrix $A$, denote $A_{:,-1}$ or $A(:,-1)$ as its last column. $I_n$ denotes the identity matrix of size $n\times n$. Let $\norm{\cdot}_p$ represent the $\ell_p$ norm of a vector. In addition, $g(x)\lesssim f(x)$ or $g(x)=\gO(f(x))$ (resp. $g(x)\gtrsim f(x)$ or $f(x)=\gO(g(x))$) means that $g(x)\leq c\cdot f(x)$ (resp. $g(x)\geq c\cdot f(x)$) for some constant $c>0$ and all $x$ in some range; $g(x)\asymp f(x)$ means that $g(x)\lesssim f(x)$ and $g(x)\gtrsim f(x)$;  $g(x)=\widetilde{\gO}(f(x))$ ignores the logarithmic dependency; and $\poly(N)$ stands for some polynomial factor of $N$.

\section{Formulation}

We consider a path-finding task in randomly generated trees \citep{brinkmann2024mechanistic}, defined formally below. Note that a similar task has also been considered in \citet{bachmann2024pitfalls} for next-token prediction.

\paragraph{Tree.} We consider the trees with \textit{distinct} nodes, i.e., the nodes of the trees are generated by random sampling from  set $[S]$ without replacement. Let $\gT = (\gV(\gT), \gE(\gT), g(\gT))$ denote a tree with node set $\gV(\gT)$, edge set $\gE(\gT)$, and a goal node $g(\gT)$, which is a leaf node in the tree. Denote the root node as $r(\gT)$. For each non-root node $i$, i.e., $i\in\gV(\gT)\setminus r(\gT)$, let $p(i)=p(i|\gT)$ denote the parent node of node $i$. Then the edge set is given by $\gE(\gT) = \{(p(i),i)\}_{i\in\gV(\gT)\setminus r(\gT)}$.
For $k\geq 2$, we recursively define $p^k(i)=p(p^{k-1}(i))$, i.e., $p^k(i)$ is the $k$-th ancestor of $i$. 
 
\paragraph{Path finding.} Given a tree $\gT$, the {\em backward} path from the goal node to the root node is given by 
$$\pathreverse(\gT) = g(\gT)\rightarrow p(g(\gT))\rightarrow p^2(g(\gT))\rightarrow \ldots \rightarrow r(\gT)=p^{m(\gT)}(g(\gT)),$$ 
where $m(\gT)\geq 2$ is the length of the path. Accordingly, the {\em forward} path from the root node to the goal node can be obtained by reversing the backward path, yielding, 
$$\pathforward(\gT) = r(\gT) \rightarrow \dots \rightarrow  p^2(g(\gT))\rightarrow  p(g(\gT))  \rightarrow g(\gT).$$ 
We aim to understand whether and how transformers can be trained to discover the underlying rule for identifying both backward and forward paths in a tree $\gT$ through multi-step reasoning. Since a parent node may have multiple children, determining the forward path requires the transformer to autonomously perform two-stage reasoning: first, identifying the backward path, and then reversing it. Therefore, we consider two path-finding tasks: 
\begin{itemize}
\item {\em Backward} reasoning: find the goal-to-root path.
\item {\em Forward} reasoning: find the goal-to-root path and then reverse it to produce the root-to-goal path. 
\end{itemize}
   
\paragraph{Input embedding.} For each node $i\in[S]$, we let $a_i\in\R^{d_1}$ denote its token embedding. We embed each edge $e=(p(i),i)$ in $\gE(\gT)$ as $(x^\top,y^\top)^\top=(a_{p(i)}^\top,a_i^\top)^\top\in\R^{2d_1}$ where $x=a_{p(i)}$ and $y=a_i$ denote the parent and child node embeddings, respectively.
  We let $l(\gT)=|\gE(\gT)|$ denote the number of edges in $\gT$, 
and let $e_1,\ldots,e_{l(\gT)}$ denote the edges in $\gE(\gT)$ in a random order, and let $(x_j^\top,y_j^\top)^\top$ be the embedding of edge $e_j$.

\paragraph{Transformer architecture and multi-step reasoning.} For both reasoning tasks, we consider a one-layer transformer $f$ with $H\geq 1$ attention heads. Given an input matrix $E$, the transformer with parameter $\theta$ computes the output as
\begin{align}\label{eq:transformer}
 \hspace{-0.1in}   f(E;\theta) = W^O\begin{pmatrix}
        \text{head}_1(E)\\
        \vdots\\
        \text{head}_H(E)
    \end{pmatrix}, \,\,\text{head}_h(E) = W_h^VE\cdot\sm\left(E^\top {W_h^{K}}^\top W_h^Q E\right),\,\forall h\in[H],
\end{align}
where $\sm(\cdot)$ is the column-wise softmax function and $\theta$ denotes the trainable parameters of the transformer. For simplicity, we follow standard practice \citep{huang2023context,huang2025transformers,yang2024incontextlearningrepresentationscontextual,nichani2024transformers} to combine $W_h^{K}$ and $W_h^Q$ into a single matrix $W_h^{KQ}$. We consider {\em step-by-step} reasoning in an autoregressive manner. At each reasoning step $k\geq 0 $, given the input embedding $E^{(k)}(\gT)$, the output vector is taken as the last column of the output matrix, which is then concatenated with the input embedding to form the new input to the next reasoning step, i.e.,  
\begin{equation}\label{eq:recursion}
\widehat{o}^{(k+1)}(\gT;\theta)=f(E^{(k)}(\gT);\theta)_{:,-1}, \quad E^{(k+1)}(\gT; \theta)=(E^{(k)}(\gT; \theta), \widehat{o}^{(k+1)}(\gT;\theta)) . 
\end{equation}
Let the number of reasoning steps be $K(\gT)$, and then
$\widehat{O}(\gT;\theta) = \left(\widehat{o}^{(1)}(\gT;\theta),\ldots,\widehat{o}^{(K(\gT))}(\gT;\theta)\right)$ denote the output matrix of the model given $\gT$.

\section{Construction of Transformers}\label{sec:construction}

In this section, we show that there exists parameter setting $\theta$ such that the one-layer transformer can solve both forward and backward reasoning problems through multi-step chaining and reasoning. Whenever it is clear from the context, we drop the dependency on $\gT$ and $\theta$ in the notation.

\subsection{Construction for backward reasoning}\label{sec:construction_backward}

For the backward reasoning task, we are interested in outputting the path from the goal node to the root node (\textsf{g2r}). 
Let the input embedding of the transformer be 
    \begin{align}\label{eq:input_retrieval}
  E=       \embedretrieval(\gT)=\begin{pmatrix}
            X(\gT)\\
            Y(\gT)
        \end{pmatrix}
        =\begin{pmatrix}
            x_1 & \ldots & x_{l(\gT)} & a_0 &a_{g(\gT)}\\
            y_1 & \ldots & y_{l(\gT)} & a_{r(\gT)} & a_0
        \end{pmatrix}\in\R^{2d_1\times (l(\gT)+2)},
    \end{align}
    where   $a_0\in\RR^{d_1}$ is used to fill the empty slots.
    We set the ground label of $\gT$ as the embedding of the reversed path: 
\begin{align}\label{eq:embed_reverse}
    \gtretrieval(\gT) = \begin{pmatrix}
        a_{p(g(\gT))} & a_{p^2(g(\gT))} & \ldots & a_{r(\gT)}\\
        a_{g(\gT)} & a_{p(g(\gT))} & \ldots & a_{p^{m(\gT)-1}(g(\gT))}
    \end{pmatrix}\in\R^{2d_1\times m(\gT)}.
\end{align}

We consider a one-layer single-head transformer, where $H=1$. We impose the following parameters: let $W^O=W_{1}^V=I_{2d_1}$, and set 
$$W_1^{KQ}=\begin{pmatrix}
        0_{d_1\times d_1} & 0_{d_1\times d_1}\\
        B & 0_{d_1\times d_1} 
    \end{pmatrix}$$ 
    where $B\in\R^{d_1\times d_1}$ is trainable. In this case, $\theta=\{B\}$.

\paragraph{Multi-step reasoning.}
We set the first step $\embedretrieval^{(0)} = \embedretrieval(\gT)$, and let $X^{(k-1)}\coloneqq \embedretrieval^{(k-1)}(1:d_1,:)\in\R^{d_1\times (l+1+k)}$ and $Y^{(k-1)}\coloneqq \embedretrieval^{(k-1)}(d_1+1:2d_1,:)\in\R^{d_1\times (l+1+k)}$ obtained recursively following \eqref{eq:recursion} for $k\geq 1$. Then according to the transformer architecture described above, we have the output $\widehat{o}^{(k)}$ at reasoning step $k$ as
\begin{align}\label{eq:output_simplified}
    \widehat{o}^{(k)} = \begin{pmatrix}
     X^{(k-1)}\\
     Y^{(k-1)}
    \end{pmatrix} \cdot \sm\left(Y^{(k-1)\top} B x^{(k-1)}_{-1}\right),
\end{align}
where $x^{(k-1)}_{-1}$ is the last column of $X^{(k-1)}$. Reasoning recursively for $m(\gT)$ steps according to \eqref{eq:recursion}, we obtain the output 
$\widehat{O}_{\textsf{g2r}}(\gT;\theta) = \left(\widehat{o}^{(1)} ,\ldots,\widehat{o}^{(m(\gT))}\right).$

We make the following assumption on the node embeddings for the backward reasoning case.
\begin{asmp}[linear independent embeddings (backward reasoning)]\label{asmp:construct}
Suppose $a_0=0_{d_1}$, and $\{a_i\}_{i\in[S]}$ are linearly independent.
\end{asmp}

Let $A\coloneqq (a_1,\ldots,a_S)\in\R^{d_1\times S}$ be the embedding matrix. The following theorem provides a construction of the transformer that solves backward reasoning.
\begin{thm}[Construction for backward reasoning]\label{thm:construction_retrieval}
    Under Assumption~\ref{asmp:construct}, for any $\alpha\in\R$, there exists $B= B_\alpha \in\R^{d_1\times d_1}$ such that 
    \begin{align}\label{eq:B_construct}
        A^\top B A= \alpha I_{S}.
    \end{align}
    Let $\theta=\{B_\alpha\}$, then for any tree $\gT$, we have $\oretrieval(\gT;\theta)\rightarrow    \gtretrieval(\gT)$ as $\alpha\rightarrow +\infty$.
\end{thm}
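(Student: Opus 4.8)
The plan is to first establish existence of $B_\alpha$ satisfying \eqref{eq:B_construct}, then to show that with this choice the softmax attention at each reasoning step concentrates on the correct edge, so that the recursion \eqref{eq:recursion} reproduces the reversed-path embedding $\gtretrieval(\gT)$ in the limit $\alpha\to+\infty$. For existence: since $\{a_i\}_{i\in[S]}$ are linearly independent by Assumption~\ref{asmp:construct}, the matrix $A\in\R^{d_1\times S}$ has full column rank $S$ (so $d_1\ge S$), hence admits a left inverse $A^{+}=(A^\top A)^{-1}A^\top$. Setting $B_\alpha := \alpha\, (A^{+})^\top A^{+} = \alpha\, A(A^\top A)^{-1}(A^\top A)^{-1}A^\top$ — or more simply any $B$ with $A^\top B A=\alpha I_S$, which exists because the linear map $B\mapsto A^\top B A$ from $\R^{d_1\times d_1}$ onto $\R^{S\times S}$ is surjective when $A$ has full column rank — gives \eqref{eq:B_construct}. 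I would present the clean choice $B_\alpha=\alpha (A A^\top)^{+} \cdot(\text{appropriate factor})$ or just invoke surjectivity; the precise formula is not important.

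Next I would analyze the reasoning recursion. I claim by induction on $k$ that, as $\alpha\to+\infty$, the embedding $\embedretrieval^{(k)}(\gT;\theta)$ converges to the matrix obtained by appending the first $k$ columns of $\gtretrieval(\gT)$ to $\embedretrieval(\gT)$; in particular its last column converges to $(a_{p^{k-1}(g)}^\top, a_{p^{k-2}(g)}^\top)^\top$ for $k\ge 1$ (with the convention $p^0(g)=g$, and the $\textsf g2r$ start handled by the $(a_0,a_{g})$ slot in $\embedretrieval(\gT)$). For the inductive step, consider step $k$: the query vector is $x^{(k-1)}_{-1}$, which by the inductive hypothesis converges to $a_{p^{k-2}(g)}$ (the current ``child'' node whose parent we wish to retrieve), call it $a_{i^\star}$ with $i^\star=p^{k-2}(g)$. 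The key columns $Y^{(k-1)}$ consist of the child-node embeddings $y_1,\dots,y_{l(\gT)}$ of the original edges together with the two filler/answer columns. The attention logits are $Y^{(k-1)\top}B x^{(k-1)}_{-1}$; using \eqref{eq:B_construct} and $a_0=0$, the logit against a column whose entry is $a_j$ equals $a_j^\top B a_{i^\star}=\alpha\,\mathbf 1\{j=i^\star\}$ (after expressing $a_j,a_{i^\star}$ in terms of $A$), and the logit against any $a_0$-filler column is $0$. Since $i^\star$ appears as a child node exactly once among the edges (it is a non-root node with unique parent), the softmax puts weight $\to 1$ on that unique edge column $e_{j^\star}=(p(i^\star),i^\star)$ as $\alpha\to+\infty$, and weight $\to 0$ elsewhere. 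Reading off the corresponding column of $\binom{X^{(k-1)}}{Y^{(k-1)}}$ via \eqref{eq:output_simplified} gives $\widehat o^{(k)}\to (a_{p(i^\star)}^\top, a_{i^\star}^\top)^\top=(a_{p^{k-1}(g)}^\top,a_{p^{k-2}(g)}^\top)^\top$, which is exactly the $k$-th column of $\gtretrieval(\gT)$. This closes the induction, and after $m(\gT)$ steps we obtain $\oretrieval(\gT;\theta)\to\gtretrieval(\gT)$.

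I would be slightly careful about two points. First, the base case and the treatment of the two appended columns $(a_0,a_{r(\gT)})$ and $(a_{g(\gT)},a_0)$ in $\embedretrieval(\gT)$: the query at step $1$ must pick out the goal node's parent, so I need the start token mechanism to present $a_{g(\gT)}$ as the query — this is arranged by the last column $(a_{g(\gT)}^\top, a_0^\top)^\top$, whose top block $a_{g(\gT)}$ is the query, and I should verify the attention then lands on the edge $(p(g),g)$. Second, the filler column $(a_0^\top,a_{r(\gT)}^\top)^\top$ has child-entry $a_{r(\gT)}$; its logit against query $a_{i^\star}$ is $a_{r(\gT)}^\top B a_{i^\star}=\alpha\mathbf 1\{i^\star=r(\gT)\}$, which only becomes large when we have already reached the root — at which point the recursion should terminate (we run exactly $m(\gT)$ steps), so this does not corrupt any of the $m(\gT)$ outputs. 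The main obstacle, and the only genuinely non-mechanical part, is the bookkeeping of the softmax limit: one must show the off-target logits stay bounded (indeed equal to $0$ or a fixed finite value independent of $\alpha$) while the on-target logit is exactly $\alpha$, so that the softmax weight on the target is $1/(1+O(e^{-\alpha}))\to 1$ uniformly over trees of bounded size; since all embeddings lie in the finite set $\{a_i\}$ and $A^\top BA=\alpha I_S$ is exact, this is clean, but it requires carefully expanding every logit in the $A$-coordinates and using linear independence to rule out spurious large inner products.
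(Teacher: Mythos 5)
Your proposal is correct and follows essentially the same route as the paper: existence of $B_\alpha$ from linear independence of the embeddings (the paper invokes its Lemma~\ref{lm:matrix}, which is the same surjectivity-of-$B\mapsto A^\top BA$ fact you use), followed by an induction showing the softmax becomes one-hot on the unique edge column whose child entry matches the current query $x^{(k-1)}_{-1}$, so each step outputs the $(a_{p(i)},a_i)$ pair and the chain reproduces $\gtretrieval(\gT)$ as $\alpha\to+\infty$. The only blemish is a cosmetic off-by-one in the index labels of your induction hypothesis (the last column of $E^{(k)}$ should converge to $(a_{p^{k}(g)}^\top,a_{p^{k-1}(g)}^\top)^\top$, i.e.\ the $k$-th column of $\gtretrieval$), which does not affect the mechanism or the conclusion.
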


Theorem~\ref{thm:construction_retrieval} suggests that it is possible to learn a sharp attention pattern---when applying $\sm$ to $A^\top B A$---between the node embeddings, where the query node attends to only itself among all nodes. Figure~\ref{fig:illustration_construction} (b) provides an illustration of the chain of input-output pairs for each backward reasoning step. This also highlights the path-finding strategy that the transformer learns during training. At each step, the transformer focuses on the current node on the path (starting from the goal node) and uses its attention mechanism to assign high weight to the next node (i.e., its parent) in the tree. The identified node is then selected as the next step in the path and subsequently encoded into the input matrix for the following iteration, enabling the model to reason sequentially through the structure.

\begin{figure}[ht]
\centering
\includegraphics[width=0.75\textwidth]{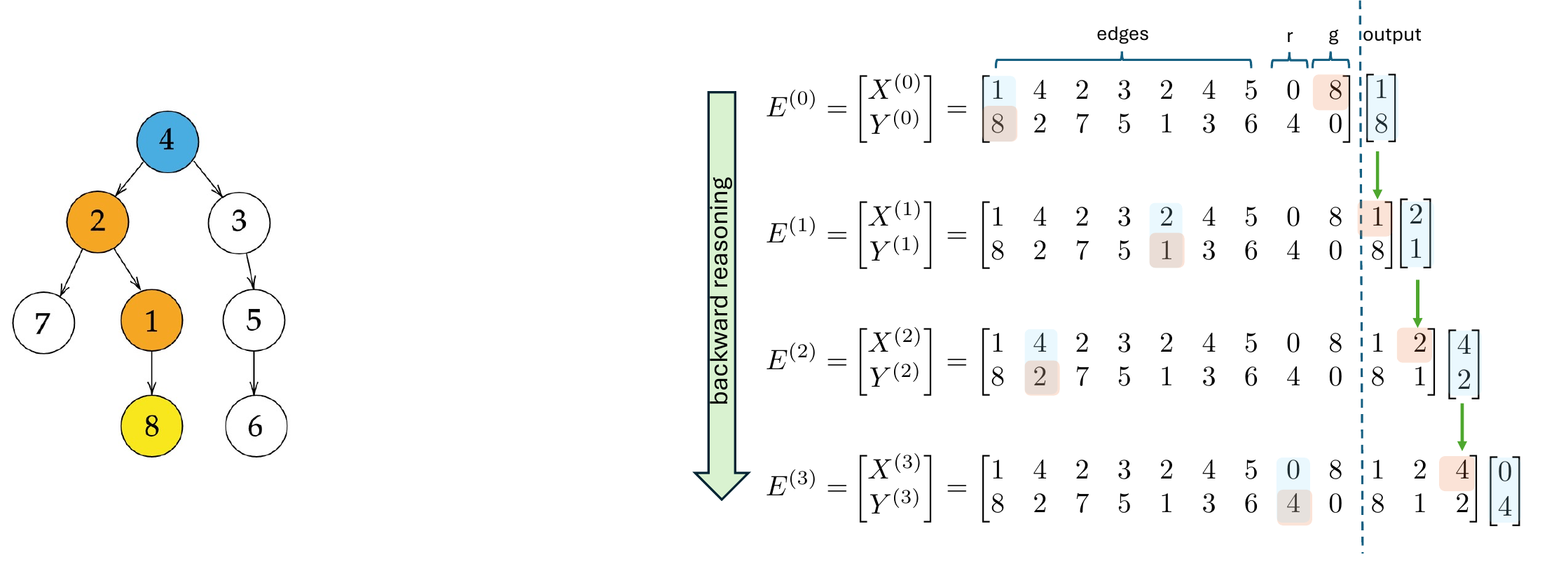} \\
\quad\quad (a) task illustration   \quad\quad  (b) multi-step reasoning for finding the goal-to-root path \\
\smallskip 
\includegraphics[width=0.99\textwidth]{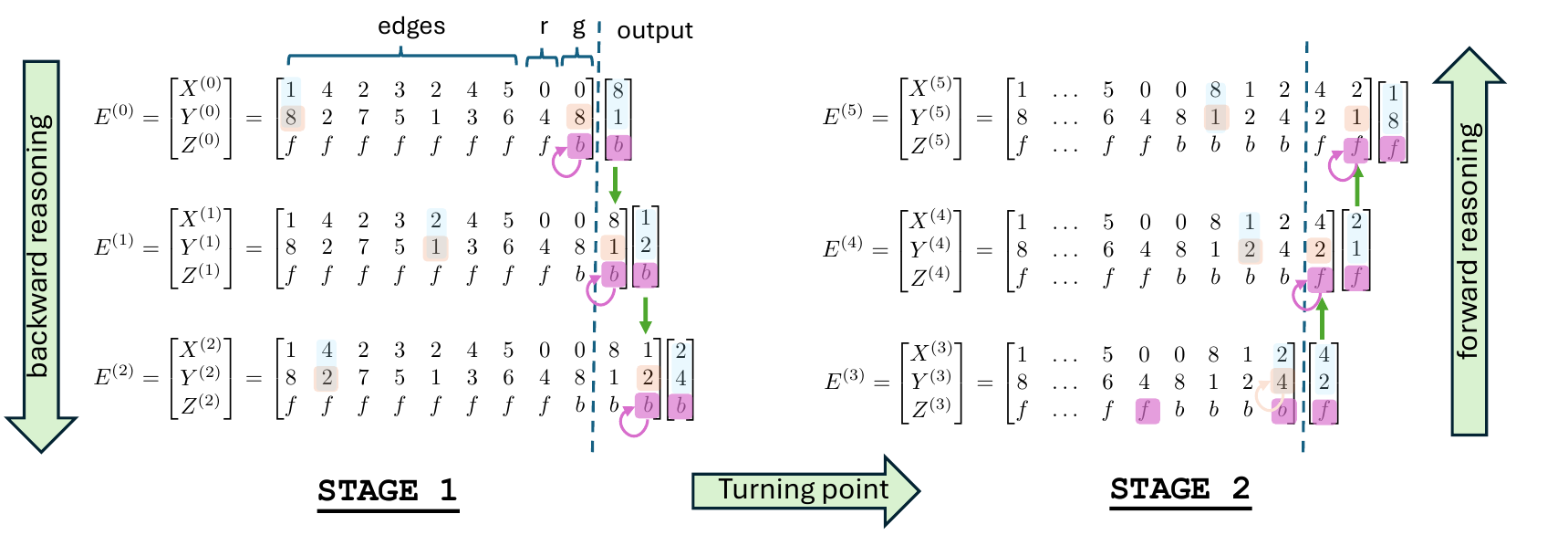}
(c) two-stage multi-step reasoning for finding the root-to-goal path 
\caption{The multi-step reasoning process of the constructed transformers for the backward and forward reasoning tasks. Color indicates the attention association and output in each step. 
} \label{fig:illustration_construction}
\end{figure}

 \subsection{Construction for forward reasoning}
 
The forward reasoning task is more challenging than backward reasoning, as it requires first identifying the path from goal to root, and then reversing it to obtain the root-to-goal (\textsf{r2g}) path. Crucially, the transformer must learn to autonomously determine when to switch to the second stage of reasoning. 

To this end, we introduce two stage token embeddings $s_{f} , s_{b} \in\R^{d_2}$ used to distinguish the two reasoning stages. 
We include the stage token embeddings in the input as follows:
    \begin{align}\label{eq:input_ext}
        \embedext(\gT)=\begin{pmatrix}
            X(\gT)\\
            Y(\gT)\\
            Z(\gT)
        \end{pmatrix}
        =\begin{pmatrix}
            x_1 & \ldots & x_{l(\gT)} & a_0 &a_0\\
            y_1 & \ldots & y_{l(\gT)} & a_{r(\gT)} & a_{g(\gT)}\\
            s_{f} & \ldots &  s_{f}  &  s_{f}  &  s_{b} 
        \end{pmatrix}\in\R^{(2d_1+d_2)\times (l(\gT)+2)},
    \end{align}
where $s_b$ marks the start of the first reasoning stage. We require the model to output the path first in the backward order, then in the forward order, along with a stage indicator at each step. This results in a total of $K(\gT) = 2m(\gT)$ steps. We set the ground truth label $\gtext(\gT)\in\R^{(2d_1+d_2)\times 2m(\gT)}$  of $\gT$ as the embedding of both the reverse path and the forward path along with their stage indicators:
\begin{align}\label{eq:output_ext}
    \gtext(\gT) 
    &=\begin{pmatrix}
        a_{g(\gT)} & a_{p(g(\gT))} & \ldots   & a_{r(\gT)} & a_{p^{m(\gT)-1}(g(\gT))} & \ldots & a_{p(g(\gT))}\\
        a_{p(g(\gT))} & a_{p^2(g(\gT))} & \ldots  & a_{p^{m(\gT)-1}(g(\gT))} & a_{p^{m(\gT)-2}(g(\gT))} & \ldots & a_{g(\gT)}\\
        s_b & s_b & \ldots  & s_f & s_f & \ldots & s_f  
    \end{pmatrix}.
\end{align}

We consider a one-layer two-head transformer, where $H=2$. We impose the following parameters: let $W^O=I_{2d_1+d_2}$, $W_{1}^V=\begin{pmatrix}
        0_{d_1\times d_1} & I_{d_1} & 0_{d_2}\\
        I_{d_1} & 0_{d_1\times d_1} & 0_{d_2}
    \end{pmatrix}$, $W_{2}^V=(0_{d_2\times 2d_1},I_{d_2})$, and set 
    $$W^{KQ}_1=\begin{pmatrix}
        0_{d_1\times d_1} & B_1 & 0_{d_1\times d_2}\\
        0_{d_1\times d_1} & B_2 & 0_{d_1\times d_2}\\
        0_{d_2\times d_1} & 0_{d_2\times d_1} & B_3
    \end{pmatrix} \qquad \mbox{and} \qquad W^{KQ}_2=\begin{pmatrix}
        0_{d_1\times d_1} & C_1 & 0_{d_1\times d_2}\\
        0_{d_1\times d_1} & C_2 & 0_{d_1\times d_2}\\
        0_{d_2\times d_1} & 0_{d_2\times d_1} & C_3
    \end{pmatrix}$$ with trainable matrices $B_1,B_2,B_3$ and $C_1,C_2,C_3$, respectively. In this case, $\theta=\{B_1,B_2,B_3,C_1,C_2,C_3\}$.

\paragraph{Multi-step reasoning.} We set the first step $\embedext^{(0)} = \embedext(\gT)$. At the reasoning step $k\geq 1$, let $X^{(k-1)}\coloneqq \embedext^{(k-1)}(1:d_1,:)\in\R^{d_1\times (l+1+k)}$, $Y^{(k-1)}\coloneqq \embedext^{(k-1)}(d_1+1:2d_1,:)\in\R^{d_1\times (l+1+k)}$, and $Z^{(k-1)}\coloneqq \embedext^{(k-1)}(2d_1+1:2d_1+d_2,:)\in\R^{d_2\times (l+1+k)}$, where $\embedext^{(k-1)}$ is obtained recursively from \eqref{eq:recursion}. Then we have
\begin{align}\label{eq:output_simplified_ext}
    \widehat{o}^{(k)} = \begin{pmatrix}
     \begin{pmatrix}
        Y^{(k-1)}\\
        X^{(k-1)}
     \end{pmatrix} \cdot \sm\left(X^{(k-1)\top}B_1 y_{-1}^{(k-1)}+Y^{(k-1)\top}B_2 y_{-1}^{(k-1)}+Z^{(k-1)\top}B_3 z_{-1}^{(k-1)}\right)  \\
     Z^{(k-1)} \cdot \sm\left(X^{(k-1)\top}C_1 y_{-1}^{(k-1)} + Y^{(k-1)\top}C_2 y_{-1}^{(k-1)}+ Z^{(k-1)\top} C_3 z_{-1}^{(k-1)}\right)
    \end{pmatrix}.
\end{align}
Here, the upper portion $\widehat{o}^{(k)}(1:2d_1,:)$ is the output of the first attention head, predicting the next node based on the learned path encoded in $X^{(k-1)}$ and $Y^{(k-1)}$, and the lower portion $\widehat{o}^{(k)}(2d_1+1:2d_1+d_2,:)$ is the output of the second attention head, indicating whether the second stage begins. 

The process of predicting $\{\widehat{o}^{(k)}\}_{k=1}^{(m(\gT))}$ is called \textit{STAGE 1}, where the model outputs the reverse path from goal to root similarly to backward reasoning. The prediction of $\widehat{o}^{(m(\gT)+1)}$ is the \textit{turning point}, because in this step the model detects the root and switches to \textit{STAGE 2} (indicated by $s_f$), where the model outputs the path in forward order from root to goal. 
The entire process includes reasoning recursively for $2m(\gT)$ steps, and the final output is given by $\widehat{O}_{\textsf{r2g}}(\gT;\theta) = \left(\widehat{o}^{(1)} ,\ldots,\widehat{o}^{(2m(\gT))}\right)$.

We make the following mild assumption on the embeddings for forward reasoning.
\begin{asmp}[linear independent embeddings (forward reasoning)]\label{asmp:construct_ext}
Suppose    (i) $a_0,a_1,\ldots,a_{S}$ are linearly independent; (ii) $s_f,s_b$ are linearly independent.
\end{asmp}

Denote $\widetilde{A}=(a_0,a_1,\ldots,a_S)\in\R^{d_1\times (S+1)}$ and $\widetilde{S} =(s_f, s_b)$. The following theorem provides a construction of a one-layer two-head transformer that solves forward reasoning.
\begin{thm}[Construction for forward reasoning]\label{thm:construction_ext}
Under Assumption~\ref{asmp:construct_ext}, there exist $\theta=\{B_i,C_i\}_{i\in[3]}$ that satisfies
\begin{subequations} \label{eq:UV_construct}
\begin{align}
    \widetilde{A}^\top B_1 A &= -a\alpha_1 \begin{pmatrix}
        1 & 1 & \cdots & 1\\
        0 & 0 & \cdots & 0\\
        \vdots & \vdots & & \vdots\\
        0 & 0 & \cdots & 0
    \end{pmatrix}, \,\,
    A^\top B_2 A = \alpha_1 I_{S},\,\,
    \widetilde{S}^\top B_3 \widetilde{S} = \alpha_1 \begin{pmatrix}
        -b_1 & b_2\\
        b_1 & -b_2
    \end{pmatrix} , \label{eq:U_construct} \\
    \widetilde{A}^\top C_1 A &= \alpha_2 \begin{pmatrix}
        1 & 1 & \cdots & 1\\
        0 & 0 & \cdots & 0\\
        \vdots & \vdots & & \vdots\\
        0 & 0 & \cdots & 0
    \end{pmatrix},\,\,
        A^\top C_2 A = \alpha_2 I_{S}, \,\,
        \widetilde{S}^\top C_3 \widetilde{S} = \alpha_2 \begin{pmatrix}
        c_1 & -c_2\\
        -c_1 & c_2
        \end{pmatrix}. \label{eq:V_construct}
    \end{align}
    \end{subequations}
When $a\in(0,1],b_1>0,b_2\in(0,a/2)$ and $c_1>0$, $c_2\in(0,1/2)$, we have $\oext(\gT;\theta)\rightarrow \gtext(\gT)$ as $\alpha_1,\alpha_2\rightarrow +\infty$.

\end{thm}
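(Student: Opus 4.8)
The plan is to verify, stage by stage, that the attention patterns induced by the prescribed $\theta$ converge to the hard-attention patterns that implement the two-stage algorithm, by tracking the last-column query at each reasoning step and checking that the softmax concentrates on the correct key. First I would establish the analogue of Theorem~\ref{thm:construction_retrieval}: under Assumption~\ref{asmp:construct_ext}, linear independence of $\{a_0,\dots,a_S\}$ and of $\{s_f,s_b\}$ guarantees that each of the prescribed Gram-type identities in \eqref{eq:UV_construct} is solvable for $B_1,B_2,B_3,C_1,C_2,C_3$ (pick the matrices on the span of the embeddings and extend arbitrarily), so the construction is well-defined. The bulk of the work is then an induction on the reasoning step $k$ showing that, as $\alpha_1,\alpha_2\to+\infty$, the recursion \eqref{eq:recursion} feeding \eqref{eq:output_simplified_ext} produces exactly the columns of $\gtext(\gT)$ in \eqref{eq:output_ext}.

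For STAGE 1 ($1\le k\le m(\gT)$), the current query in head~1 is $(x,y,z)$ with $y=a_{p^{k-1}(g)}$ (the last node emitted) and $z=s_b$. The score of the $j$-th input column $(x_j,y_j,z_j)$ is $x_j^\top B_1 y + y_j^\top B_2 y + z_j^\top B_3 z$. Using \eqref{eq:U_construct}: the $B_2$-term is $\alpha_1$ exactly on the (unique) edge whose child equals $p^{k-1}(g)$ and $0$ elsewhere among genuine node-pairs; the $B_1$-term contributes $-a\alpha_1$ uniformly whenever the parent coordinate is a genuine $a_i$ (and vanishes when $x_j=a_0$, since $a_0$ is orthogonal to the range in the relevant identity — here I would spell out how the $a_0$ columns and the query-node interactions behave, which requires care because $\widetilde A$ includes $a_0$); the $B_3$-term is the same constant $\alpha_1(-b_1)$ or $\alpha_1 b_1$ across all columns with $z_j=s_f$ resp.\ appropriately on the $s_b$ column. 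Collecting terms, the column corresponding to the edge $(p^{k}(g),p^{k-1}(g))$ wins by a margin $\Theta(\alpha_1)$, so the head-1 softmax $\to e_{\text{that column}}$, and since $W_1^V$ swaps the two node-blocks, $\widehat o^{(k)}(1{:}2d_1)\to (a_{p^{k-1}(g)}, a_{p^k(g)})$ — matching \eqref{eq:output_ext}. Simultaneously head~2, with $z=s_b$ and the identity $\widetilde S^\top C_3\widetilde S=\alpha_2\begin{pmatrix}c_1 & -c_2\\ -c_1 & c_2\end{pmatrix}$ together with $c_2\in(0,1/2)$, must concentrate on an $s_b$-column so that $W_2^V$ outputs $s_b$; I would check the sign pattern makes the $s_b$-key score dominate. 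At $k=m(\gT)$ the emitted node is $r(\gT)$, and at the \emph{turning point} $k=m(\gT)+1$ the query has $y=a_{r(\gT)}$: now the $B_2$-term has no genuine edge to land on (root has no parent edge), so the head-1 argmax shifts to the padding column $(a_0,a_{g(\gT)})$ — this is exactly where the constraints $a\in(0,1]$ and $b_2\in(0,a/2)$ are used, to guarantee the padding column's score beats every real edge once the "child $=$ query" bonus is absent. This forces head~1 to emit $(a_{r(\gT)},a_{p^{m-1}(g)})$, the first STAGE-2 column, while head~2 flips to an $s_f$-column and outputs $s_f$.

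For STAGE 2 ($m(\gT)+1 < k \le 2m(\gT)$), the invariant is that $z=s_f$ and $y$ equals the node just emitted on the forward path, i.e.\ $y=a_{p^{m+ m - (k-1)}(g)}$ roughly; I would set up the bookkeeping so that head~1 now needs to attend to the edge whose \emph{parent} is the current node (so that swapping blocks via $W_1^V$ yields the next forward node, the child). The key point is that with $z=s_f$ the $B_3$-term has a different constant than in STAGE 1, and combined with $B_1$'s uniform $-a\alpha_1$ on genuine-parent columns and $B_2$'s $\alpha_1$-bonus, the balance now favors the column indexed by the outgoing edge of the current node on the unique root-to-goal path. Because along $\pathforward$ each node has a \emph{unique} relevant child (the one on the path), this is again a clean argmax; I'd note that nodes off the path are never queried, so multiplicity of children is harmless. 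Head~2 stays locked on $s_f$ throughout STAGE 2. Finally, one concludes $\widehat o^{(k)}\to$ the $k$-th column of $\gtext(\gT)$ for all $k\le 2m(\gT)$, hence $\oext(\gT;\theta)\to\gtext(\gT)$.

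The main obstacle is the turning point and, relatedly, the precise role of the padding token $a_0$: in STAGE 1 the padding column $(a_0,a_{g(\gT)},s_f)$ must \emph{lose} to the active edge, but at step $m(\gT)+1$ the padding column $(a_0,a_{g(\gT)},s_b)$ must \emph{win}, and then in STAGE 2 it must lose again. Making all three regimes consistent is exactly what pins down the admissible ranges $a\in(0,1]$, $b_1>0$, $b_2\in(0,a/2)$, $c_2\in(0,1/2)$, so the delicate part of the proof is the inequality bookkeeping that shows these ranges are simultaneously feasible; everything else is a routine softmax-concentration-as-$\alpha\to\infty$ argument riding on the linear-independence assumption (so that distinct embeddings give distinct, separated scores). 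A secondary subtlety is handling the newly appended columns $\widehat o^{(1)},\dots,\widehat o^{(k-1)}$ inside $E^{(k-1)}$: one must check these extra columns never outscore the intended key (they will also carry $a_0$ in one block during one of the stages, or a genuine node that is already "used up"), which again follows from the same margin estimates once the $a_0$-orthogonality is in hand.
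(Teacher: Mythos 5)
Your overall skeleton (existence of the parameters via linear independence, then induction with softmax concentration as $\alpha_1,\alpha_2\to\infty$) matches the paper, but the mechanism you describe at the two non-routine steps is wrong, and the errors trace back to a misreading of the construction. First, the identity $\widetilde{A}^\top B_1 A=-a\alpha_1(\mathbf{1};0;\dots;0)$ places the $-a\alpha_1$ on the \emph{$a_0$ row}: $a_0^\top B_1 a_j=-a\alpha_1$ while $a_i^\top B_1 a_j=0$ for genuine nodes $i$, i.e.\ $B_1$ \emph{penalizes} the two padding-parent columns; you state the opposite. Building on that inversion, your turning-point analysis fails: at $k=m+1$ the query is $(a_{c(n_r)},a_{n_r},s_b)$, and there \emph{are} columns whose $Y$-block equals $a_{n_r}$ — the input column $(a_0,a_{n_r},s_f)$ with score $(-a+1+b_2)\alpha_1$ and the most recent CoT column $(a_{c(n_r)},a_{n_r},s_b)$ with score $(1-b_2)\alpha_1$. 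The condition $b_2<a/2$ is exactly what makes the latter win, and after the $W_1^V$ block swap it yields the correct output $(a_{n_r},a_{c(n_r)})$. Your claim that the argmax shifts to the padding column $(a_0,a_{g})$ is false (that column scores $(-a-b_2)\alpha_1$, among the lowest), and even if it did win, swapping it would emit $(a_g,a_0)$, not $(a_r,a_{p^{m-1}(g)})$ as you assert.

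The Stage-2 step has the same problem in a more consequential form. You have head~1 attend to the original edge whose \emph{parent} is the current node $n$; but such a column $(a_n,a_c,s_f)$ receives no $B_2$ bonus (its $Y$-block is the child, not the query node $a_n$), both children of $n$ give identical scores, so the softmax would split between them, and the swap would in any case output $(a_c,a_n)$ — the wrong order. The paper's construction instead has head~1 attend to the \emph{Stage-1 CoT column} $(a_{c(n)},a_n,s_b)$, which is the unique column carrying the on-path child; it is selected over the competing $a_n$-child columns $(a_{p(n)},a_n,s_f)$ precisely because $s_b^\top B_3 s_f=b_1\alpha_1>-b_1\alpha_1=s_f^\top B_3 s_f$, i.e.\ $b_1>0$ makes the head prefer $s_b$-tagged scratchpad columns once the stage token is $s_f$. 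This is the central point of the theorem — Stage~2 reads the reversed path off the explicitly generated Stage-1 trace — and it is what resolves the child-multiplicity issue you dismiss as harmless. You also never use $b_1$ correctly (in Stage~1 the relevant $B_3$ entries are the $b_2$ ones, in Stage~2 the $b_1$ ones), so the inequality bookkeeping you defer would not close as set up. In short, the proposal would not establish $\oext(\gT;\theta)\to\gtext(\gT)$ without redoing the case analysis along the paper's lines.
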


Specifically, let $m:=m(\gT)$ denote the path length and $n^{(k-1)}\in[S]$ denote the node with embedding $y_{-1}^{(k-1)}$ for any $k\in[2m]$. Then we can show that when $\alpha_1,\alpha_2\rightarrow +\infty$, our construction ensures that at each reasoning step $k$, the model performs the following (see \Cref{fig:illustration_construction} (c) for an illustration): 
\begin{enumerate}
    \item when $k\leq m$, i.e., the model is at \textit{STAGE 1}. Our construction ensures the first attention head attends to the embedding of $(p(n^{(k-1)}),n^{(k-1)})$, switches the parent-child order, and outputs the embedding of $(n^{(k-1)},p(n^{(k-1)}))$, and the second head attends to and outputs the stage signal $s_b$;
    \item when $k=m+1$, i.e., the model is at the {\em turning point}. Our construction ensures that the first attention head attends to $(y_{-1}^{(k-1),\top},x_{-1}^{(k-1),\top})^\top$ and outputs $(x_{-1}^{(k-1),\top},y_{-1}^{(k-1),\top})^\top$, and the second attention head attends to and outputs the stage signal $s_f$, changing the stage signal.
    \item when $k>m+1$, i.e., the model is at \textit{STAGE 2}. The first attention head attends to the embedding of $(n^{(k-1)},p(n^{(k-1)}))$ generated in \textit{STAGE 1}, switches the parent-child order, and outputs the embedding of $(p(n^{(k-1)}),n^{(k-1)})$, and the second attention head attends to and outputs $s_f$.
\end{enumerate}
This process demonstrates that the first attention head is responsible for traversing the reasoning path, while the second attention head manages the stage signal, which dictates the direction of path-finding.
 
The importance of Theorems~\ref{thm:construction_retrieval} and \ref{thm:construction_ext} lies in demonstrating that even a single-layer transformer suffices to carry out the above logical steps with enough CoT steps. This highlights the surprising expressive power of even shallow transformer architectures for multi-step reasoning, in contrast to leveraging multi-layer transformers for such a multi-step reasoning task in \citet{brinkmann2024mechanistic}.%

\section{Training Dynamics and Generalization of Transformers}\label{sec:optimization}

In \Cref{sec:construction}, we showed the existence of a one-layer transformer capable of solving both backward and forward reasoning tasks. However, those results do not address whether such transformers can be learned in practice. In this section, we analyze the training dynamics of the transformer in both settings and demonstrate that gradient descent (GD) can successfully minimize the loss to $0$. Upon convergence, the transformer performs the desired backward and forward reasoning.   
This establishes that transformers can, in fact, be trained via GD to acquire multi-step reasoning capabilities. We further show that the trained model generalizes well to unseen trees.

\paragraph{Training.} We train the transformer using standard autoregressive supervised learning. We define $o^{(k)}(\gT)$ as the $k$-th column of $O(\gT):=\gtretrieval(\gT)$ or $\gtext(\gT)$, the ground truth label for either case. For each tree $\gT\sim\Ptrain$, where $\Ptrain$ is the training distribution to be described later, at each reasoning step $k\geq 2$, the input 
$E_{\mathsf{train}}^{(k-1)}(\gT)=(E_{\mathsf{train}}^{(k-2)}(\gT),o^{(k-1)}(\gT))$
is the concatenation of the input at step $k-1$ and the $(k-1)$-th column of the ground truth output matrix $O(\gT)$. 
We then obtain the output $\widehat{o}_{\mathsf{train}}^{(k)}(\gT;\theta)=f(E_{\mathsf{train}}^{(k-1)}(\gT);\theta)_{:,-1}$ at step $k$. Let $\widehat{O}_{\mathsf{train}}(\gT;\theta) = \left(\widehat{o}^{(1)}(\gT;\theta),\ldots,\widehat{o}^{(K(\gT))}(\gT;\theta)\right)$ denote the output matrix of the model given $\gT$.

We train the model by minimizing the squared error loss between the output and the label. The training loss is given by
\begin{align}\label{eq:loss_train}
    \losstrain(\theta) = \frac{1}{2}\E_{\gT\sim \Ptrain} \norm{O(\gT)-\widehat{O}_{\mathsf{train}}(\gT;\theta)}_F^2. 
\end{align}

For both cases, we initialize all parameters to be zero, and we train the model using gradient descent with a learning rate $\eta>0$, i.e.,
\begin{align}\label{eq:gd}
    \theta^{(t+1)} = \theta^{(t)} - \eta \nabla_\theta \losstrain(\theta^{(t)}),\quad \forall t\geq 0.
\end{align}
Here, $\theta^{(t)}$ is the parameter at the $t$-th iteration.

\paragraph{Training distribution.} We make the following assumption on the training set $\Ptrain$.
\begin{asmp}[training distribution]\label{asmp:train_data}
    Let $m\geq 3$ and $S\geq 2^{m+1}-1\coloneqq N$.
    The training distribution $\Ptrain$ is the uniform distribution over all perfect binary trees $\gT = (\gV(\gT), \gE(\gT), g(\gT))$  of depth\footnote{The depth (or height) of a tree is the length of the longest path from the root down to any leaf, measured in the number of edges.} $m$  with distinct nodes chosen from $[S]$ and a goal node $g(\gT)$ being one of the leaf nodes.
\end{asmp}
We note that we fix the tree to be a perfect binary tree of depth $m$ during training only for simplicity of analysis and presentation, and our results can be easily extended to other types of trees. Also note that under Assumption~\ref{asmp:train_data}, the number of nodes in any tree sampled from $\Ptrain$ is $N=2^{m+1}-1$.
An example of a perfect binary tree generated from $\Ptrain$ is shown in Figure~\ref{fig:perfect_tree}, where we set $m=2$.

  \begin{figure}[t]
    \centering
    \includegraphics[width=0.2\textwidth]{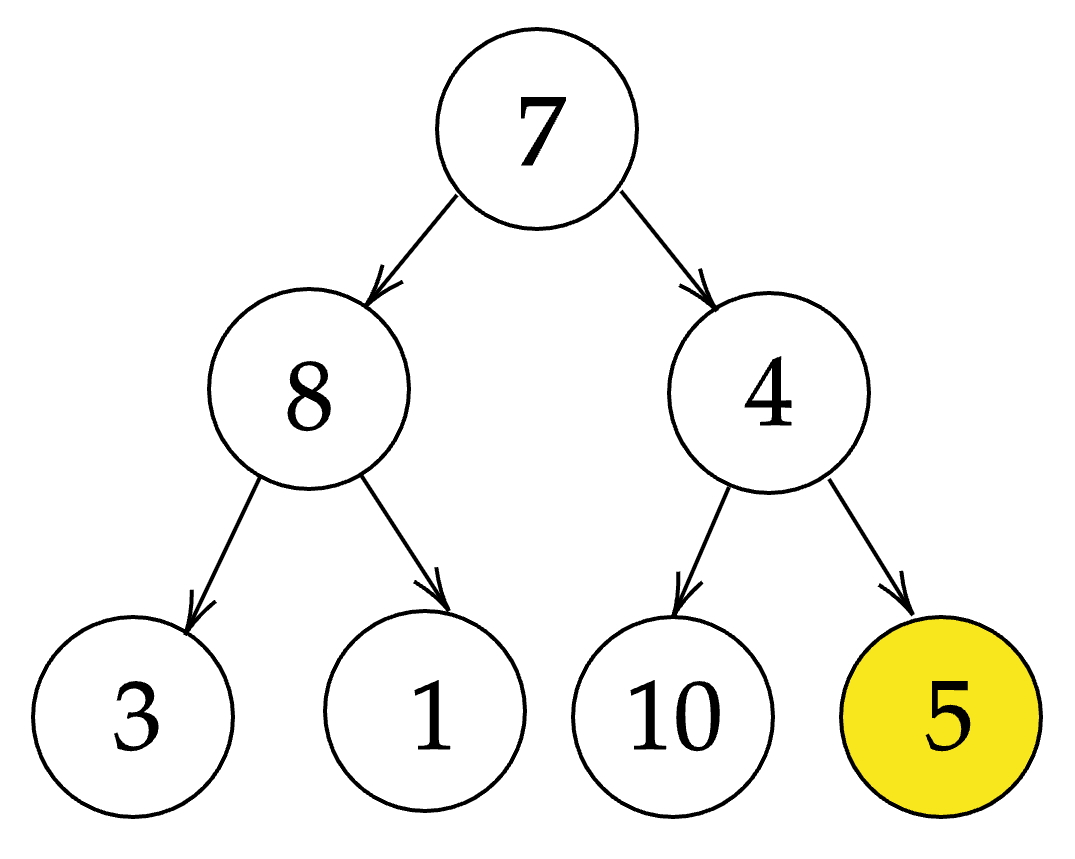}
    \caption{{\footnotesize An example of a perfect binary tree of depth $m=2$ and distinct nodes.}}
    \label{fig:perfect_tree}
\end{figure}

\paragraph{Testing.} At test time, given any tree $\gT$ (that may not be in the training set), we recursively input $$E^{(k-1)}(\gT;\theta)=(E(\gT),\widehat{O}^{(k-1)}(\gT;\theta))$$ 
into the model at the $k$-th reasoning step and obtain the output $\widehat{o}^{(k)}(\gT;\theta)=f(E^{(k-1)}(\gT;\theta);\theta)_{:,-1}$. 
The test loss on $\gT$ is given by
\begin{align}
    \losstest(\gT;\theta) = \frac{1}{2}\norm{O(\gT)-\widehat{O}(\gT;\theta)}_F^2.
\end{align}

\subsection{Optimization for backward reasoning}\label{sec:opt_retrieval}

For ease of analysis, we make the following assumption on the node embeddings for backward reasoning, which is a slightly stronger version of Assumption~\ref{asmp:construct}.
\begin{asmp}[Orthonormal embeddings (backward reasoning)]\label{asmp:orthonomal}
  Suppose (i) $a_0=0_{d_1}$, and (ii) $\{a_i\}_{i\in[S]}$ are orthonormal vectors in $\R^{d_1}$.
\end{asmp}
Note that commonly used one-hot embeddings are orthonormal~\citep{choromanski2017unreasonable}, and orthogonality of token embeddings is also assumed in~\citet{huang2023context,nichani2024transformers,chen2024training,li2024training} for analyzing the training dynamics of transformers.

\paragraph{Training dynamics.} We have the following theorem regarding the convergence of the training error for backward reasoning.
\begin{thm}[Convergence of backward reasoning]\label{thm:convergence_retrieval}
    We initialize $\theta^{(0)} = B^{(0)}=0_{d_1\times d_1}$.
    Under Assumptions~\ref{asmp:train_data} and \ref{asmp:orthonomal}, for any learning rate $\eta\in(0,1]$ and any $\epsilon\in(0,1]$, there exists $T=\gO\left(\frac{mS}{\eta\epsilon}\log\left(\frac{Nm}{\epsilon}\right)\right)$ such that for any $t\geq T$, we have $\losstrain(\theta^{(t)})\leq \epsilon$.
\end{thm}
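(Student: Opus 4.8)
The plan is to track the single trainable matrix $B^{(t)}$ under gradient descent and show it stays in a low-dimensional invariant subspace where the dynamics reduce to a scalar recursion. Under Assumption~\ref{asmp:orthonomal}, write $B$ in the basis $\{a_i a_j^\top\}_{i,j\in[S]}$ (plus components orthogonal to $\mathrm{span}\{a_i\}$, which I will show remain zero). I expect that from the zero initialization, gradient descent only ever excites the ``diagonal'' direction: $B^{(t)} = \beta^{(t)} \sum_{i\in[S]} a_i a_i^\top$ for some scalar $\beta^{(t)}\ge 0$, because the loss and its gradient are symmetric across the node labels (the training distribution is the uniform distribution over perfect binary trees with nodes relabeled arbitrarily from $[S]$) and the ground-truth attention pattern from Theorem~\ref{thm:construction_retrieval} is exactly the one induced by $A^\top B A = \alpha I_S$. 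The first step is therefore to prove this invariance claim: compute $\nabla_B \losstrain$ at a point of the form $\beta \sum_i a_i a_i^\top$ and verify it is again of that form, using orthonormality to collapse the softmax arguments $Y^{(k-1)\top} B x_{-1}^{(k-1)}$ to vectors whose entries are $\beta$ on the ``correct parent'' coordinate and $0$ elsewhere (note at training time the input columns are populated by the ground-truth labels, so all the relevant node embeddings appearing in $X^{(k-1)},Y^{(k-1)}$ are among $\{a_i\}$, with repeats).

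Second, with the reduction to a scalar in hand, I would write the per-step prediction error explicitly. At reasoning step $k$ with current query node $n^{(k-1)}$, the softmax over the $l+1+k$ columns assigns weight proportional to $e^{\beta}$ to the column(s) whose ``$Y$-part'' equals $a_{n^{(k-1)}}$ and weight $1$ to every other column; in a perfect binary tree of depth $m$ the query node $n^{(k-1)} = p^{k-1}(g)$ appears as a child in exactly one edge (its own incoming edge) plus possibly once more as the freshly appended CoT column, so the softmax weight on the target is $\Theta\!\big(e^\beta/(e^\beta + N)\big)$ and the output $\widehat o^{(k)}$ is a convex combination of the true next pair $(a_{p^k(g)}, a_{p^{k-1}(g)})$ and an ``error'' vector of unit-ish norm. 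Hence $\losstrain(\theta^{(t)})$ is, up to constants depending on $m$, of order $\big(N/(e^{\beta^{(t)}}+N)\big)^2$ summed over the $\le m$ steps, i.e. $\losstrain \asymp m N^2 e^{-2\beta}$ once $\beta \gtrsim \log N$. (I need to be a little careful because the outputs at steps $k\ge 2$ feed forward, but the training-time recursion~\eqref{eq:recursion} is teacher-forced — $E_{\mathsf{train}}^{(k-1)}$ uses the ground-truth column $o^{(k-1)}$ — so the steps decouple and this is clean.)

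Third, I would establish the scalar GD recursion $\beta^{(t+1)} = \beta^{(t)} + \eta\, G(\beta^{(t)})$ where $G(\beta) = -\partial \losstrain/\partial\beta > 0$ and, from the error expression above, $G(\beta) \asymp \frac{m}{S} N^2 e^{-2\beta}$ for $\beta$ in the relevant range (the $1/S$ coming from the normalization $\sum_i a_i a_i^\top$ having ``$S$'' diagonal entries so that the derivative with respect to the scalar $\beta$ picks up the averaging; I will pin down the exact constant during the computation — this factor is the source of the $S$ in the stated rate $T = \gO(mS\epsilon^{-1}\log(Nm/\epsilon))$). Given such a recursion with a positive, decreasing drift that behaves like $c\, e^{-2\beta}$, a standard argument shows $\beta^{(t)}$ grows: while $\beta^{(t)} \le \tfrac12\log(1/\epsilon') $ the increment is at least $\eta c'$ for an appropriate $\epsilon'$, so after $\gO(\cdot)$ steps $\beta$ reaches the regime where $\losstrain \le \epsilon$; one then checks $\beta$ is non-decreasing so the loss stays below $\epsilon$ for all larger $t$. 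Converting the ``$\losstrain \asymp m N^2 e^{-2\beta} \le \epsilon$'' threshold into a step count gives the $\log(Nm/\epsilon)$ and the $mS/(\eta\epsilon)$ factors.

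The main obstacle I anticipate is the first step — proving the invariance $B^{(t)} = \beta^{(t)}\sum_i a_i a_i^\top$ rigorously and, more delicately, getting the \emph{exact} scalar drift $G(\beta)$ rather than just its order. The subtlety is that the gradient of the softmax-composed loss produces cross terms of the form (attention weight)$\times$(embedding of a non-parent node), and one must verify that, after projecting onto $a_i a_j^\top$ and averaging over the random relabeling in $\Ptrain$, all off-diagonal ($i\ne j$) contributions and all contributions coupling different path positions cancel or sum to exactly the diagonal form — including carefully counting how many times each node appears as a ``child'' across the tree edges and the appended CoT columns at each step $k$. This bookkeeping (which edges share a parent, how the perfect-binary structure fixes multiplicities) is where the real work lies; everything after the scalar reduction is a routine one-dimensional ODE/recursion estimate.
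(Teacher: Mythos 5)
There is a genuine gap, and it sits exactly where you anticipated: the invariance claim $B^{(t)}=\beta^{(t)}\sum_{i} a_i a_i^\top$ is false. Permutation symmetry of $\Ptrain$ over node labels, applied at a label-symmetric point, only forces the expected gradient (in the basis $\{a_i a_j^\top\}$) to have \emph{equal} diagonal entries and \emph{equal} off-diagonal entries; it does not force the common off-diagonal component to vanish. Writing $H^{(t)}=A^\top B^{(t)}A$, the correct reduction is therefore to \emph{two} coupled scalars, the common diagonal value $\mu^{(t)}=H^{(t)}_{j,j}$ and the common off-diagonal value $\nu^{(t)}=H^{(t)}_{i,j}$, and the off-diagonal drift is generically nonzero: the cross terms you hoped would ``cancel or sum to exactly the diagonal form'' (sibling terms, the parent-of-query term, the $a_0$/root column, the appended CoT columns) do not cancel in expectation, they are merely small relative to the diagonal drift. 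This is precisely the content of the paper's proof: it establishes a key gradient inequality (Lemma~\ref{lm:gradient_ineq}) and a per-iteration comparison $-\delta x_{i,j}-\delta y_{i,j}\leq \tfrac{9}{N-1}\left(-\delta x_{j,j}-\delta y_{j,j}\right)$ (Lemma~\ref{lm:gradient_relation_I}), plus lower bounds $H_{i,j}^{(t)}\gtrsim -H_{j,j}^{(t)}/S$ (Lemmas~\ref{lm:lower_bound_Hij},~\ref{lm:H_ij_after_T_1}), to run a two-phase induction in which $\nu^{(t)}$ first may grow to order $\log N/N$ and later strictly \emph{decreases} to negative values while $\mu^{(t)}$ keeps increasing. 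The paper's own experiments (the $H_{1,2}$ curve) confirm the off-diagonal entry moves during training. Without such a comparison argument your scalar recursion has no justification: if $\nu^{(t)}$ were allowed to track $\mu^{(t)}$ proportionally, the attention would never sharpen and the loss would not go to zero, so the boundedness of $\nu^{(t)}$ relative to $\mu^{(t)}$ is the crux, not a routine bookkeeping step.

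Two smaller points. First, your attention estimate $\Theta\!\left(e^{\beta}/(e^{\beta}+N)\right)$ implicitly sets $\nu=0$; once $\nu$ is in play the denominator carries a factor $e^{\nu}$ on the $\gO(N)$ competing columns, which is why every quantitative statement must be made jointly in $(\mu,\nu)$. Second, your rate heuristic is roughly consistent with the theorem once the two-scalar dynamics are controlled (the paper gets the $\log$ factor by capping $H_{j,j}^{(T_2)}\lesssim \log\left((N+m)\sqrt{m/\epsilon}\right)$ and dividing by the Phase~II per-step increment $\gtrsim \epsilon/(mS)$), and your observation that teacher forcing decouples the reasoning steps at training time is correct and matches the paper. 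But as written, the proposal's single-direction invariance is the load-bearing step and it does not hold.
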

 
Theorem~\ref{thm:convergence_retrieval} establishes that to achieve $\epsilon$-accuracy, it takes at most $\widetilde{\gO}\left( 1/\epsilon \right)$ iterations, omitting the dependency with other salient parameters. Recall in Theorem~\ref{thm:construction_retrieval} (cf.~Section~\ref{sec:construction_backward}), we construct $B$ such that $H = A^{\top} B A = \alpha I_{S}$, ensuring the test loss approaches zero as $\alpha \rightarrow +\infty$. In our convergence analysis of GD, we track the dynamic of $H^{(t)}=A^\top B^{(t)} A$. Specifically, during training, the diagonal entries of $H^{(t)}$ continue to grow while the off-diagonal entries remain small, indicating $B^{(t)}$ indeed converge toward our constructed solution.

\paragraph{Generalization.} We further provide the following generalization result, which shows that the model can perform well on unseen trees. 
\begin{thm}[Generalization of backward reasoning]\label{thm:generalization_retrieval}
    Under the same setting as in Theorem~\ref{thm:convergence_retrieval}, given any tree $\gT$ with $\widetilde{N}$ distinct nodes chosen from $[S]$ and a path length $\widetilde{m}$, there exists small enough $\epsilon_0=\epsilon_0(m,\widetilde{N},\widetilde{m})>0$ such that for any $\epsilon\in(0,\epsilon_0]$, there exists $T=\widetilde{\gO}\left(\frac{mS}{\eta\epsilon}\right)$ such that after training the model for $t\geq T$ steps on the training loss, we have
    \begin{align*}
 \losstest(\gT;\theta^{(t)})\leq  4 \max\Big\{1,\left(\frac{\widetilde{N}+\widetilde{m}-1}{N+m-2}\right)^2\Big\}\cdot \frac{\widetilde{m}}{m}\epsilon.
    \end{align*}
\end{thm}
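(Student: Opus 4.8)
The plan is to transfer the training-time convergence from Theorem~\ref{thm:convergence_retrieval} into a test-time guarantee on an arbitrary tree $\gT$, by exploiting the orthonormal embedding structure (Assumption~\ref{asmp:orthonomal}) together with the explicit form of the single-head backward-reasoning recursion \eqref{eq:output_simplified}. The key observation is that the entire behaviour of the model on any input is governed only by the matrix $H^{(t)} := A^\top B^{(t)} A \in \R^{S \times S}$: since all node embeddings lie in $\mathrm{span}\{a_1,\dots,a_S\}$ and $a_0 = 0$, for any input the softmax logits $Y^{(k-1)\top} B^{(t)} x^{(k-1)}_{-1}$ reduce to reading off entries of $H^{(t)}$ indexed by the nodes appearing in the current tree. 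Thus if training drives $H^{(t)}$ toward $\alpha I_S$ with $\alpha$ large, the attention on \emph{any} tree (not just training trees) becomes sharp and localized on the correct parent.

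First I would extract from the proof of Theorem~\ref{thm:convergence_retrieval} the quantitative statement that after $t \ge T = \widetilde{\gO}(mS/(\eta\epsilon))$ steps the diagonal entries of $H^{(t)}$ are all at least some $\alpha(\epsilon)$ that grows like $\log(1/\epsilon)$ (or more precisely, such that the per-step softmax error is $\lesssim \epsilon/m$), while every off-diagonal entry of $H^{(t)}$ stays bounded by a small constant (say $\le c_0$ uniformly). This is exactly the dynamical picture described right after the theorem statement. Second, for a fixed test tree $\gT$ with $\widetilde N$ nodes and path length $\widetilde m$, I would bound the single-step error: at reasoning step $k$, with current node $n^{(k-1)}$ and true next node $p(n^{(k-1)})$, the softmax over the columns of $Y^{(k-1)}$ places weight on the column whose $Y$-entry is $a_{p(n^{(k-1)})}$, and the logit gap between this column and any competing column is $\ge \alpha(\epsilon) - c_0$, while there are at most $\widetilde N + \widetilde m - 1$ columns total. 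A standard softmax-concentration estimate then gives $\|\widehat o^{(k)}(\gT;\theta^{(t)}) - o^{(k)}(\gT)\|_2^2 \lesssim (\widetilde N + \widetilde m -1)^2 \cdot (\text{per-step error induced by } \alpha(\epsilon))$. Third, because the test recursion feeds back \emph{model} outputs rather than ground-truth outputs, I would need a short induction to control error propagation: as long as each step's output is close enough to the true embedding $a_{p^k(g)}$ that the subsequent softmax still identifies the correct node, the errors do not compound beyond a constant factor — here the orthonormality gives a clean separation margin so a perturbation of size $o(1)$ in the query vector cannot flip which node is attended to.

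Summing the $\widetilde m$ per-step squared errors and comparing with the training bound (which effectively sums $m-1$ such errors over a training tree with $N$ nodes and path length $m$) produces the stated ratio: the $\widetilde m / m$ factor comes from the different number of reasoning steps, and the $\max\{1, ((\widetilde N + \widetilde m - 1)/(N + m - 2))^2\}$ factor comes from the different number of softmax competitors (column counts $l(\gT)+2$ vs.\ the training value $N-1+2$, plus the growing CoT columns up to $\widetilde m$). Choosing $\epsilon_0 = \epsilon_0(m,\widetilde N,\widetilde m)$ small enough ensures $\alpha(\epsilon)$ is large enough that the feedback-induced margin survives for a tree of this size, which is why the theorem restricts to $\epsilon \le \epsilon_0$ with $\epsilon_0$ depending on the test tree's dimensions.

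The main obstacle I anticipate is the error-propagation control in the autoregressive test recursion: unlike the training analysis, where the input at each step is the exact ground-truth column, here an imperfect output $\widehat o^{(k)}$ is concatenated into the input and becomes the query (and a candidate key) at step $k+1$, so I must verify that (i) the imperfect query still yields near-correct attention, and (ii) the imperfect column, when it later appears among the keys, does not attract spurious attention mass. Both hinge on showing the accumulated perturbation stays within the separation margin $\alpha(\epsilon) - c_0$ guaranteed by training; making this quantitative — i.e., pinning down how small $\epsilon_0$ must be as a function of $\widetilde N, \widetilde m, m$ — is where the real work lies, and it is precisely why the bound is stated only for $\epsilon$ below a tree-dependent threshold rather than uniformly.
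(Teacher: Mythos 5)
Your overall route is the same as the paper's: reduce everything to the trained matrix $H^{(t)}=A^\top B^{(t)}A$, import from the convergence analysis that the diagonal $\mu^{(t)}$ is large and the off-diagonal $\nu^{(t)}$ is small after $T=\widetilde{\gO}(mS/(\eta\epsilon))$ steps, bound the per-step attention deficit on the test tree by comparing its column count $\widetilde{N}+\widetilde{m}-1$ with the training count $N+m-2$ (which is where the $\max\{1,\cdot\}$ factor comes from), and then run an induction over the $\widetilde{m}$ autoregressive test steps to show the feedback of imperfect outputs does not blow up the error; the $\widetilde{m}/m$ factor arises exactly as you say. So there is no disagreement about the strategy.

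The genuine gap is that the one step you defer ("the real work") is the entire proof, and the mechanism you sketch for it is too coarse to close. You argue via a fixed additive margin: the correct logit beats competitors by $\alpha(\epsilon)-c_0$, so a perturbation of size $o(1)$ in the query "cannot flip" the attention and errors "do not compound beyond a constant factor." But at test time the query and the later CoT keys are \emph{mixtures}, and the degradation of the logits is multiplicative, not additive: writing each generated column as a convex combination of true edge embeddings (this is the paper's Lemma~\ref{lm:representation_o_hat}, which your sketch uses implicitly but never states), the correct-key logit at step $k+1$ is only $\alpha^{(k)}\mu+(1-\alpha^{(k)}-\beta_r^{(k)})\nu$, and a previously generated (impure) key can score as high as $(1-\beta_r^{(k)}-\alpha^{(s)}\alpha^{(k)})\mu+\alpha^{(s)}\alpha^{(k)}\nu$, so the effective gap shrinks to roughly $(1-3\widetilde\delta^{(k)})(\mu-\nu)$. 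Since $\mu-\nu\approx\log\bigl((N+m-2)(1/\delta-1)\bigr)$ diverges as $\epsilon\to 0$, a deficit of size $\delta$ is multiplied by a growing quantity, so "small perturbation, fixed margin" does not by itself give a $k$-uniform bound; the paper's Lemma~\ref{lm:delta_k} closes the induction only because $\delta(\mu-\nu)\asymp\delta\log(1/\delta)\to 0$, i.e.\ one can choose $\epsilon_0(m,\widetilde{N},\widetilde{m})$ so that $\exp\bigl(3\widetilde\delta^{(k)}(\mu-\nu)\bigr)\le 2$, and it also needs the lower bound on $\nu^{(t)}$ (from Lemma~\ref{lm:H_ij_after_T_1}) to handle the case $\nu<0$. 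Supplying this convex-combination bookkeeping, the product bound $\alpha^{(s)}\alpha^{(k)}$ for impure keys, and the $\delta\log(1/\delta)$ closing argument is what your proposal is missing; also note your per-step claim $\|\widehat{o}^{(k)}-o^{(k)}\|_2^2\lesssim(\widetilde{N}+\widetilde{m}-1)^2\cdot(\cdot)$ is looser than what is needed — the paper instead bounds it by $4(1-\alpha^{(k)})^2$ and puts the column-count comparison inside the bound on $1-\alpha^{(k)}$ relative to the training deficit $\delta\le\sqrt{\epsilon/2m}$.
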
 
Theorem~\ref{thm:generalization_retrieval} implies that for any tree $\gT$ (including those not seen during training), the test loss converges to zero, provided the model is sufficiently trained on $\Ptrain$. This reveals an appealing insight: the transformer {\em learns a rule} for path finding, rather than memorizing training examples.

\subsection{Optimization for forward reasoning}\label{sec:opt_ext}

Similar to Assumption~\ref{asmp:construct_ext}, we make the following assumption 
for analyzing the training dynamics of forward reasoning. 
\begin{asmp}[Orthonormal embeddings (forward reasoning)]\label{asmp:orthonomal_ext}
  Suppose  (i) $a_0,a_1,\ldots,a_{S}$ are orthonormal vectors in $\R^{d_1}$; and (ii) $s_f,s_b$ are orthonormal vectors in $\R^{d_2}$.
\end{asmp}

\paragraph{Training dynamics.}
The following theorem guarantees the convergence of GD on the forward reasoning task.
 \begin{thm}[Training dynamics of forward reasoning]\label{thm:convergence_reasoning}
    We initialize $\theta^{(0)} =\{B_1^{(0)}, B_2^{(0)}, B_3^{(0)}, C_1^{(0)}, C_2^{(0)}, C_3^{(0)}\}$ all by zero.
    Suppose Assumptions~\ref{asmp:train_data} and \ref{asmp:orthonomal_ext} hold, 
    and $N$ is sufficiently large, and learning rate $\eta\lesssim\frac{1}{mN}$. Then for any $\eta\in(0,\eta_0]$ and $\epsilon\in(0,\epsilon_0]$, where $\eta_0>0$ and $0< \epsilon_0 < 1/ \poly(N)$ are sufficiently small, there exists $T=\gO\left(\frac{S}{\eta}\left(\frac{m}{\epsilon}\right)^{3/2}\log\left(\frac{N}{\epsilon}\right)\right)$ such that for any $t\geq T$, we have $\losstrain(\theta^{(t)})\leq \epsilon$.
\end{thm}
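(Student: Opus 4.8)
The plan is to analyze the GD dynamics by tracking a small set of scalar order parameters that summarize the trainable matrices $B_1,B_2,B_3,C_1,C_2,C_3$, exploiting the orthonormality of the embeddings (Assumption~\ref{asmp:orthonomal_ext}) to decouple the problem. Concretely, write $P_i^{(t)} = \widetilde{A}^\top B_i^{(t)} A$ (or the appropriate block with $\widetilde{S}$ for $i=3$) and similarly $Q_i^{(t)} = \widetilde{A}^\top C_i^{(t)} A$; because all relevant vectors are orthonormal, the softmax logits in \eqref{eq:output_simplified_ext} depend on the parameters only through the entries of these matrices, and the loss \eqref{eq:loss_train} decomposes over trees into a sum of per-step squared errors whose gradients, when projected onto the embedding basis, give closed-form ODE-like recursions for these entries. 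Since $\theta^{(0)}=0$, all softmaxes start uniform, and by symmetry of $\Ptrain$ (uniform over perfect binary trees with exchangeable node labels) most entries stay at their initial values, so the dynamics collapse onto a low-dimensional system in a handful of ``diagonal'' coordinates: the quantity governing the first head's node-matching attention (analogue of $\alpha_1$ in Theorem~\ref{thm:construction_ext}, tracked via $A^\top B_2 A$), the quantity governing the parent-indicator channel (via $\widetilde A^\top B_1 A$), and the quantities $b_1,b_2,c_1,c_2$ governing the stage-controller head via $\widetilde{S}^\top B_3\widetilde S$ and $\widetilde{S}^\top C_3\widetilde S$.

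The execution would proceed in phases, mirroring the multi-phase structure advertised in the introduction. \textbf{Phase 0 (symmetry reduction):} show by an induction on $t$ that the off-target entries of $P_i^{(t)},Q_i^{(t)}$ remain zero (or negligibly small), reducing to the scalar system. \textbf{Phase 1 (STAGE-1 node chaining):} show the first head's diagonal logit $\alpha_1^{(t)} := (A^\top B_2^{(t)}A)_{11}$ grows monotonically; here the key point is that during early training the stage-controller head is still near-uniform, so STAGE-1 and turning-point/STAGE-2 steps contribute gradients of a consistent sign that push $\alpha_1^{(t)}$ upward, and one estimates the per-step increment from below to get a $\widetilde\gO(m^{3/2}/\epsilon)$-type rate. \textbf{Phase 2 (stage-controller specialization):} once the first head is sufficiently sharp that the node predictions are accurate, analyze the $2\times2$ systems $\widetilde S^\top B_3\widetilde S$ and $\widetilde S^\top C_3\widetilde S$; show they converge toward the sign patterns in \eqref{eq:U_construct}--\eqref{eq:V_construct} with the constraints $a\in(0,1]$, $b_2\in(0,a/2)$, $c_2\in(0,1/2)$ emerging as the stable region, and that the second head learns to copy $s_b$ in STAGE 1 and $s_f$ in STAGE 2, correctly flipping at the turning point. \textbf{Phase 3 (coupled convergence):} combine the two to drive the full per-step errors — node error and stage error — to $\epsilon$, using a potential/Lyapunov argument to control the feedback between heads (an imperfect stage signal perturbs which columns the first head attends to, and vice versa), and sum the phase lengths to obtain $T=\gO\!\big(\tfrac{S}{\eta}(m/\epsilon)^{3/2}\log(N/\epsilon)\big)$.

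The main obstacle I anticipate is Phase 3: the two-headed coupling. In the single-head backward case (Theorem~\ref{thm:convergence_retrieval}) the dynamics are essentially monotone in one scalar, but here the first head's attention targets the columns $(n^{(k-1)},p(n^{(k-1)}))$ that were themselves written into the context by earlier steps, and those earlier writes depend on the stage signal produced by the second head; conversely the second head's logits $Z^{(k-1)\top}C_3 z_{-1}^{(k-1)}$ are clean, but its \emph{training signal} is contaminated whenever the first head's prediction $y_{-1}^{(k-1)}$ is still inaccurate during autoregressive rollout on the training loss. Handling this requires a careful ordering argument — showing the node head sharpens fast enough (before the stage head can be driven to a wrong fixed point), controlling error propagation along the $2m$ autoregressive steps so that an $O(\delta)$ attention error at one step does not blow up, and verifying the turning-point step $k=m+1$ in particular, where the first head must switch from the ``parent-of'' behavior to the identity/swap behavior triggered purely by the stage embedding. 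I would isolate this as the technical core; the remaining estimates (gradient computations, softmax Lipschitz bounds, summation of geometric-type increments) are routine given the orthonormality and the perfect-binary-tree symmetry, and parallel the backward-reasoning analysis.
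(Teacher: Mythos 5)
Your high-level reduction (orthonormality plus the symmetry of $\Ptrain$ collapse the dynamics onto a few scalar order parameters, tracked through a phased induction) matches the paper's strategy, but the part you single out as the technical core rests on a misreading of the training protocol, and this creates a genuine gap. The training loss $\losstrain$ is teacher-forced: at every step $k$ the input is extended with the ground-truth column $o^{(k-1)}(\gT)$, not the model's own output. Consequently $\Delta_x^{(k)}$ and $\Delta_y^{(k)}$ depend only on $B_1,B_2,B_3$ (through $U_1,U_2,U_3$) and $\Delta_z^{(k)}$ depends only on $C_1,C_2,C_3$ (through $V_1,V_2,V_3$), so the dynamics of the two heads decouple \emph{exactly} and are analyzed as two independent systems in the paper. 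Your Phase~3 — an ordering/Lyapunov argument to control ``contamination'' of the stage head's gradient by the node head's inaccurate rollout and error propagation along the $2m$ autoregressive steps — addresses an interaction that does not exist in the training dynamics (it only appears at test time, in the generalization theorem), and your Phase~1/Phase~2 sequencing (``sharpen the node head first, then the stage controller'') is likewise unnecessary and not how the argument can or should be organized. Note also that $b_1,b_2$ (from $B_3$) are not part of the stage-controller head: they sit in the first head's attention and are trained through $\Delta_x,\Delta_y$, which matters for how the turning-point switch is learned.

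Conversely, the proposal glosses over where the actual difficulty lies. The hard part is the quantitative phase-wise control of the ratios $a^{(t)},b_1^{(t)},b_2^{(t)}$ and $c_1^{(t)},c_2^{(t)}$: e.g.\ $b_2^{(t)}\mu_1^{(t)}$ is \emph{not} monotone (it first decreases, then increases), and the limits $b_2\approx a/4$, $c_2\approx 1/4$, $a\to 1$ emerge from a delicate balance between positive and negative gradient contributions coming from STAGE-1 steps, the turning-point step $k=m+1$, and STAGE-2 steps; the paper establishes these by inductively comparing the softmax denominators $r^{(t,k)}$ across steps and by contradiction arguments showing the relevant exponents (such as $\exp((1-2b_2^{(t)})\mu_1^{(t)})$ and $\exp((1-a^{(t)})\mu_1^{(t)})$) stay pinned at $\gO(N)$. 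Your ``consistent sign'' claim in Phase~1 and the assertion that off-target entries ``remain zero'' are both too strong: off-diagonal entries only stay $\widetilde\gO(\mu/N)$ and must be controlled inductively, and the sign structure changes across phases. Without engaging these balance arguments, the proposal does not yet contain a route to the stated rate $T=\gO\big(\tfrac{S}{\eta}(m/\epsilon)^{3/2}\log(N/\epsilon)\big)$.
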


Theorem~\ref{thm:convergence_reasoning} establishes that to achieve $\epsilon$-accuracy, it takes at most $\widetilde{\gO}\left( 1/\epsilon^{3/2} \right)$ iterations, omitting the dependency with other salient parameters. 
Define the following trainable matrices, which can be viewed as counterparts to those constructed in \eqref{eq:U_construct} and \eqref{eq:V_construct}: 
\begin{subequations} \label{eq:varying_UV}
\begin{align}
    U_1^{(t)}\coloneqq \widetilde{A}^\top B_1^{(t)}A\in\R^{(S+1)\times S},\,\ U_2^{(t)}\coloneqq  {A}^\top B_2^{(t)}A\in\R^{S\times S},\,\ U_3^{(t)}\coloneqq \widetilde{S}^\top B_3^{(t)}\widetilde{S}\in\R^{2\times 2},\label{eq:U}\\
    V_1^{(t)}\coloneqq \widetilde{A}^\top C_1^{(t)}A\in\R^{(S+1)\times 2},\,\ V_2^{(t)}\coloneqq A^\top C_2^{(t)}A\in\R^{S\times 2},\,\ V_3^{(t)}\coloneqq \widetilde{S}^\top C_3^{(t)}\widetilde{S}\in\R^{2\times 2}. \label{eq:V}
\end{align}
\end{subequations}
Our convergence analysis reveals that $U_l^{(t)}$ and $V_l^{(t)}$, $l=1,2,3$, converge to the matrices given in the construction \eqref{eq:U_construct} and \eqref{eq:V_construct}, respectively. In particular, we show that
\begin{itemize}
    \item For $U_1^{(t)}$ (resp.\ $V_1^{(t)}$), the first row decreases (resp.\ increases), and the other entries stay small.
    \item For both $U_2^{(t)}$ and $V_2^{(t)}$, the diagonal entries are strictly increasing, and the other entries stay small.
    \item For $U_3^{(t)}$ and $V_3^{(t)}$, we have
    \begin{align}
        U_{3,0,0}^{(t)}&=-U_{3,1,0}^{(t)},\quad U_{3,0,1}^{(t)}=-U_{3,1,1}^{(t)},\quad
        V_{3,0,0}^{(t)}=-V_{3,1,0}^{(t)},\quad V_{3,0,1}^{(t)}=-V_{3,1,1}^{(t)}.
    \end{align}
In addition, $U_{3,0,1}^{(t)}$ first decreases and then increases, $U_{3,0,0}^{(t)}$ (resp. $V_{3,0,0}^{(t)}$) is  strictly decreasing (resp. increasing), and $V_{3,1,1}^{(t)}$ also grows large, but not monotonically.
    \item At convergence, $U_k^{(t)},V_k^{(t)}$ are close to the matrices in \eqref{eq:U_construct} and \eqref{eq:V_construct} with large enough $\alpha_1,\alpha_2$, $a$ close to 1, $b_1\geq b_2\approx 1/4$, and $c_1\geq c_2\approx 1/4$.  
\end{itemize}

\paragraph{Generalization.} We further provide the generalization guarantee for forward reasoning. We show that the model performs well on unseen trees at test time, similarly to the generalization result in Theorem \ref{thm:generalization_retrieval} established for backward reasoning.

\begin{thm}[Generalization of forward reasoning]\label{thm:generalization_forward}
    Under the same setting as in Theorem~\ref{thm:convergence_reasoning}, given any tree $\gT$ at test time with $\widetilde{N}$ distinct nodes chosen from $[S]$ and a path length $\widetilde{m}$, there exists small enough $\epsilon_0=\epsilon_0(m,\widetilde{N},\widetilde{m})>0$ such that for any $\epsilon\in(0,\epsilon_0]$, there exists $T=\widetilde{\gO}\left(\frac{m^{3/2}S}{\eta\epsilon^{3/2}}\right)$ such that after training the model for $t\geq T$ steps on the training loss, we have
\begin{align}
    \losstest(\gT;\theta)\leq 4\max\left\{\left(\frac{\widetilde{N}+2\widetilde{m}-1}{N+2m-1}\right)^2,\left(\frac{\widetilde{N}}{N}\right)^2,1\right\}\epsilon.
\end{align}
\end{thm}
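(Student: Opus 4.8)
The plan is to combine the training-dynamics statement behind Theorem~\ref{thm:convergence_reasoning} with a robustness analysis of the constructed mechanism in Theorem~\ref{thm:construction_ext}, in the same spirit as the backward-reasoning generalization result (Theorem~\ref{thm:generalization_retrieval}). The high-level point is that the autoregressive computation \eqref{eq:recursion}--\eqref{eq:output_simplified_ext} on \emph{any} tree with nodes drawn from $[S]$ depends on $\theta$ only through the six reduced matrices $U_l^{(t)},V_l^{(t)}$ of \eqref{eq:varying_UV}; once GD drives these into the ``sharp'' regime identified after Theorem~\ref{thm:convergence_reasoning} (signal entries of order $\log$, all other entries $\gO(\sqrt\epsilon)$), the model implements forward reasoning correctly on every tree, with a loss that degrades only polynomially in the size mismatch between $\gT$ and the training trees.

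First I would invoke Theorem~\ref{thm:convergence_reasoning} to obtain $\losstrain(\theta^{(t)})\le\delta$ for all $t\ge T=\widetilde{\gO}(m^{3/2}S/(\eta\,\delta^{3/2}))$, where $\delta\asymp\epsilon$ is fixed at the end. Since every perfect binary tree of depth $m$ is isomorphic and $\Ptrain$ is uniform over node relabelings, the per-tree training loss is identical for all training trees, so $\losstrain(\theta^{(t)})\le\delta$ forces, at each of the $2m$ CoT steps of each training tree, that the first head places attention mass $1-\gO(\sqrt\delta)$ on the correct edge token and the second head places mass $1-\gO(\sqrt\delta)$ on the correct stage token (using that, by the orthonormality Assumption~\ref{asmp:orthonomal_ext}, the candidate output vectors for each head are pairwise separated by an $\Omega(1)$ distance, so small squared column error forces concentration). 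The same symmetry makes $U_2^{(t)},V_2^{(t)}$ have a single diagonal value and a single off-diagonal value, $U_1^{(t)},V_1^{(t)}$ a single first-row value and a single off-pattern value, and $U_3^{(t)},V_3^{(t)}$ the sign-paired $2\times2$ shape recorded after Theorem~\ref{thm:convergence_reasoning}; unwinding the softmax then yields lower bounds of order $\log(1/\sqrt\delta)$ on the relevant logit gaps (diagonal of $U_2,V_2$; first-row gap of $U_1,V_1$; the active entries of $U_3,V_3$) and $\gO(\sqrt\delta)$ upper bounds on everything else — i.e.\ $\theta^{(T)}$ sits in the basin of Theorem~\ref{thm:construction_ext} with $\alpha_1,\alpha_2\gtrsim\log(1/\sqrt\delta)$, $a\approx1$, $b_1\ge b_2\approx1/4$, $c_1\ge c_2\approx1/4$; here I would also import from the convergence proof the monotonicity facts (e.g.\ the non-monotone but persistent growth of the $U_3,V_3$ entries) ensuring the gaps do not collapse at larger $t$.

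Next I would run the deterministic recursion on an arbitrary test tree $\gT$ with $\widetilde N$ nodes and path length $\widetilde m$, proving by induction on the step $k\in[2\widetilde m]$ the invariant ``every generated column of $E^{(k-1)}(\gT;\theta^{(T)})$ is within $\xi$ in $\ell_2$ of its exact label column, and both heads' attention is concentrated on the intended token''. The inductive step is a Lipschitz/robustness estimate: the original $\widetilde N-1$ edge columns are exact, only the at-most-$2\widetilde m$ previously generated columns carry error $\le\xi$, the query is built from the last such column, and the ``correct'' logit of each head still beats every competitor by at least $\Omega(\log(1/\sqrt\delta))-\gO(\xi)$; hence each softmax places weight $\ge 1-(\widetilde N+2\widetilde m)\sqrt\delta-\gO(\xi)$ on its target, so $\widehat{o}^{(k)}$ lies within $\xi'=\gO\!\big((\widetilde N/N)\sqrt\delta+\xi\big)$ of $o^{(k)}(\gT)$ — crucially with the coefficient of $\xi$ strictly below $1$ once $\alpha_1,\alpha_2$ are large, which is exactly where the hypothesis $\epsilon\le\epsilon_0(m,\widetilde N,\widetilde m)$ enters, so errors do not compound over the $2\widetilde m$ steps and stay at $\gO\!\big((\widetilde N/N)\sqrt\delta\big)$. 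The case split follows the description after Theorem~\ref{thm:construction_ext}: STAGE~1 for $k\le\widetilde m$, the turning point $k=\widetilde m+1$, STAGE~2 for $k>\widetilde m+1$. Summing the squared errors over the $2\widetilde m$ generated columns and the stage coordinate — with the relevant column counts being $\widetilde N+2\widetilde m-1$ at test versus $N+2m-1$ at training, and the per-token softmax slack scaling with $\widetilde N/N$ — gives $\losstest(\gT;\theta^{(T)})\le 4\max\{((\widetilde N+2\widetilde m-1)/(N+2m-1))^2,(\widetilde N/N)^2,1\}\,\epsilon$ after taking $\delta\asymp\epsilon$ and absorbing constants.

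The main obstacle I anticipate is the robust analysis of the \emph{turning point} $k=\widetilde m+1$ together with the two-head coordination. There the first head must abandon parent--child chaining and instead copy the ``root slot'' with swapped coordinates, which happens only because the second head simultaneously flips the stage signal $s_b\to s_f$; making this provably sharp requires tracking how the $2\times2$ matrices $U_3^{(T)},V_3^{(T)}$ — which, per the dynamics, converge only \emph{non-monotonically} — steer the $Z$-part of the logits, and showing the flip survives the $\gO(\xi)$ perturbations accumulated during STAGE~1. A secondary difficulty is that a test tree may be larger than any training tree ($\widetilde N>N$ or $\widetilde m>m$), so each softmax now sums over strictly more distractor tokens than were ever present in training; this is absorbed by the $(\widetilde N/N)^2$ slack in the bound and the fact that the signal logit gaps were driven to order $\log(1/\sqrt\delta)$, which still dominates the extra $\widetilde N$ terms once $\epsilon$ is below the $\widetilde N,\widetilde m$-dependent threshold $\epsilon_0$.
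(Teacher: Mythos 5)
Your proposal follows essentially the same route as the paper: put the trained parameters in the post-convergence regime, then run an induction over the $2\widetilde m$ test-time CoT steps showing that both heads' attention stays concentrated on the intended tokens with non-compounding error (using $\epsilon\le\epsilon_0(m,\widetilde N,\widetilde m)$ so that the logit perturbation caused by previously accumulated error is $O(\log 2)$), and finally sum the per-step squared errors, with the $\max\{(\widetilde N+2\widetilde m-1)/(N+2m-1),\widetilde N/N,1\}$ factors coming from the mismatch in token counts. The paper's bookkeeping is the convex-combination version of your invariant: it tracks the proportions $\beta_\star^{(k)},\gamma_\star^{(k)}$ of the correct edge/stage token in each generated column, defines $\delta_1,\delta_2\le\sqrt{\epsilon/(6m)}$ via margins $0.46\mu_1$ and $0.47\mu_2$, and shows by induction that $\widetilde\delta_1^{(k)}\le 2a\delta_1$, $\widetilde\delta_2^{(k)}\le 2\max\{\widetilde N/N,1\}\delta_2$ because the exponent loss $(1-(\widetilde\beta_\star^{(k)})^2)(\mu_\star+v)\lesssim\delta_1\log(1/\delta_1)\le\log 2$.

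The one place where you deviate, and where your sketch is imprecise, is how the parameter regime is justified. The paper does not infer it from the achieved training loss; it takes $t\ge\max\{T_3^B,T_3^C\}+1$ and imports directly from its phase-II dynamics lemmas that $a^{(t)}\in[0.99,1)$, $b_2^{(t)}\in[0.24,0.26]\,a^{(t)}$, $b_1^{(t)}\ge b_2^{(t)}-\widetilde\gO(1/N)$, $c_2^{(t)}\in[0.24,0.26]$, $c_1^{(t)}\ge c_2^{(t)}-\widetilde\gO(1/N)$, and that all ``noise'' entries are $\widetilde\gO(\mu_1^{(t)}/N)$ (not $\gO(\sqrt\epsilon)$ — your stated bound on the off-pattern entries is wrong as an absolute statement, and indeed small training loss alone cannot pin these entries or the specific values $a\approx1$, $b_2\approx c_2\approx 1/4$; the loss only controls within-column logit margins on training configurations, while column-wise shifts and the precise split among $a,b_1,b_2$ are fixed only by the dynamics). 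Since your test-time induction only needs margin lower bounds of order $\mu_1,\mu_2$ together with a uniform bound $v\asymp\mu_1+\mu_2$ on all logits, and since you hedge by importing the dynamics facts anyway, this is repairable rather than fatal — but to match the paper's bound you should state the regime as the dynamics lemmas give it, which is exactly what licenses the $0.46\mu_1$/$0.47\mu_2$ margins defining $\delta_1,\delta_2$ and hence the constant $4$ in the final inequality.
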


\section{Numerical Experiments}\label{sec_app:experiment}
In this section, we provide experimental validation of our theoretical results.

\paragraph{Setup.} We construct the transformer model as in \eqref{eq:output_simplified} and \eqref{eq:output_simplified_ext} for the backward and forward reasoning task, respectively. We use one-hot embedding: we embed $a_i=e_i$ for all $i\in[S]$, where $e_i$ is the one-hot vector with the $i$-th entry being $1$. For training, since the expected loss is not available, for both backward and forward reasoning, we use stochastic gradient descent with a batch size $256$ on randomly generated perfect binary trees. The tree generation process is the same as described in Section~\ref{sec:optimization} with $m=4$, $S=31$ for backward reasoning, and $m=3$, $S=25$ for forward reasoning. We use learning rates $1$ and $0.2$ for backward and forward reasoning, respectively. To validate the generalization performance, we randomly generate $1024$ trees with various depths and number of nodes as the test set. Different nodes of every tree in the test set are assigned with a unique number chosen from $[S]$, and each node can have different numbers of children (range from 0 to 3). 

\paragraph{Experimental results for backward reasoning.} 
Figure~\ref{fig:backward_loss} shows the training and test loss curves for backward reasoning, which (i) validates the convergence of the training process, and (ii) shows the generalization error also converges to 0.
\begin{figure}[h] 
    \centering
    \includegraphics[width=0.5\textwidth]{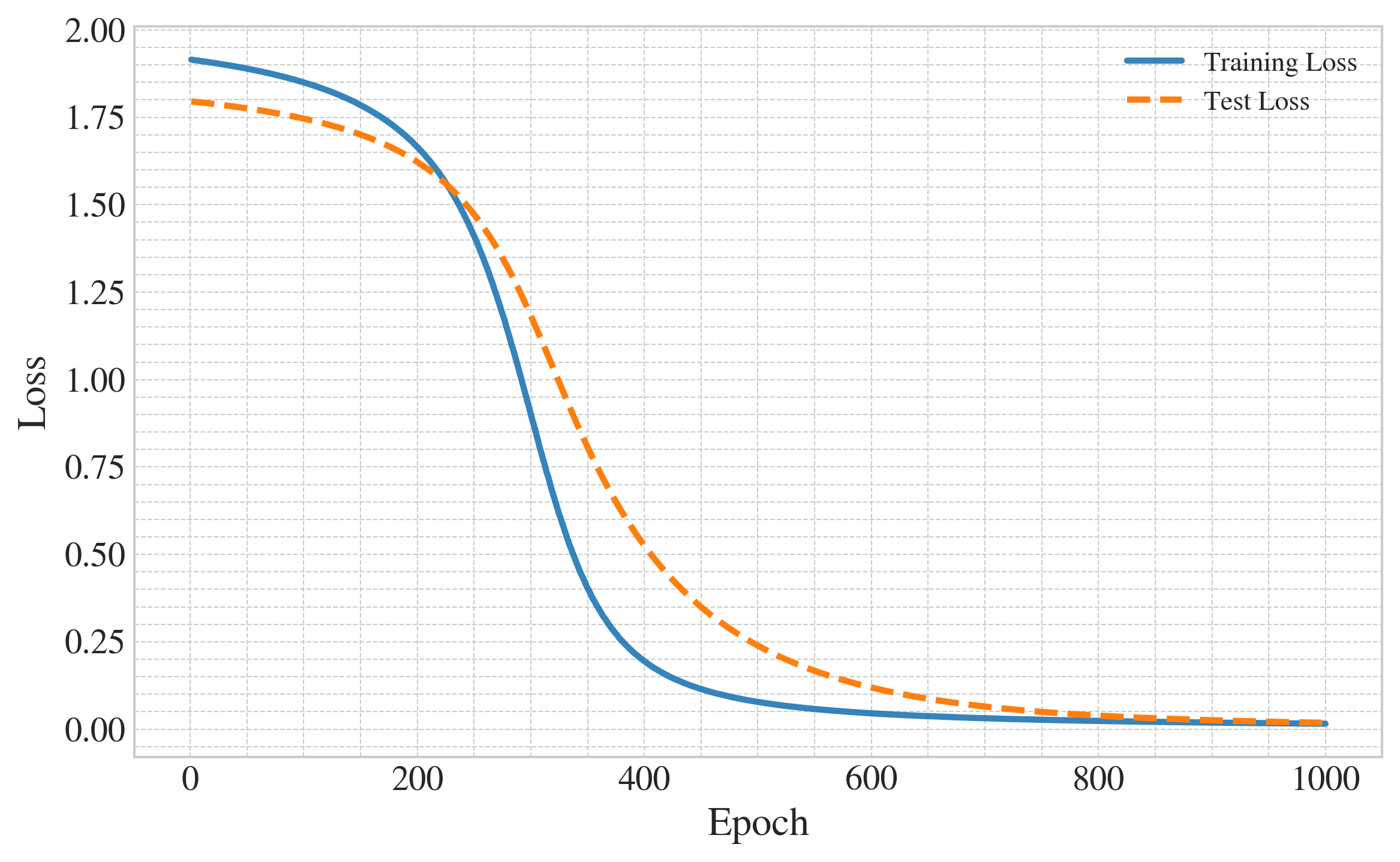}
    \caption{Training and test loss curves for backward reasoning.}
    \label{fig:backward_loss}
\end{figure}
Moreover, same as our construction and training dynamics analysis, during the training process, the diagonal entries of $H^{(t)}=A^\top B^{(t)} A$ (cf.~\eqref{eq:B_construct}) increases while the off-diagonal entries stay small. See Figure~\ref{fig:H_entries}, where we plot the training dynamics of $H_{1,1}$ and $H_{1,2}$. 
\begin{figure}[ht]
    \centering
    \includegraphics[width=0.5\textwidth]{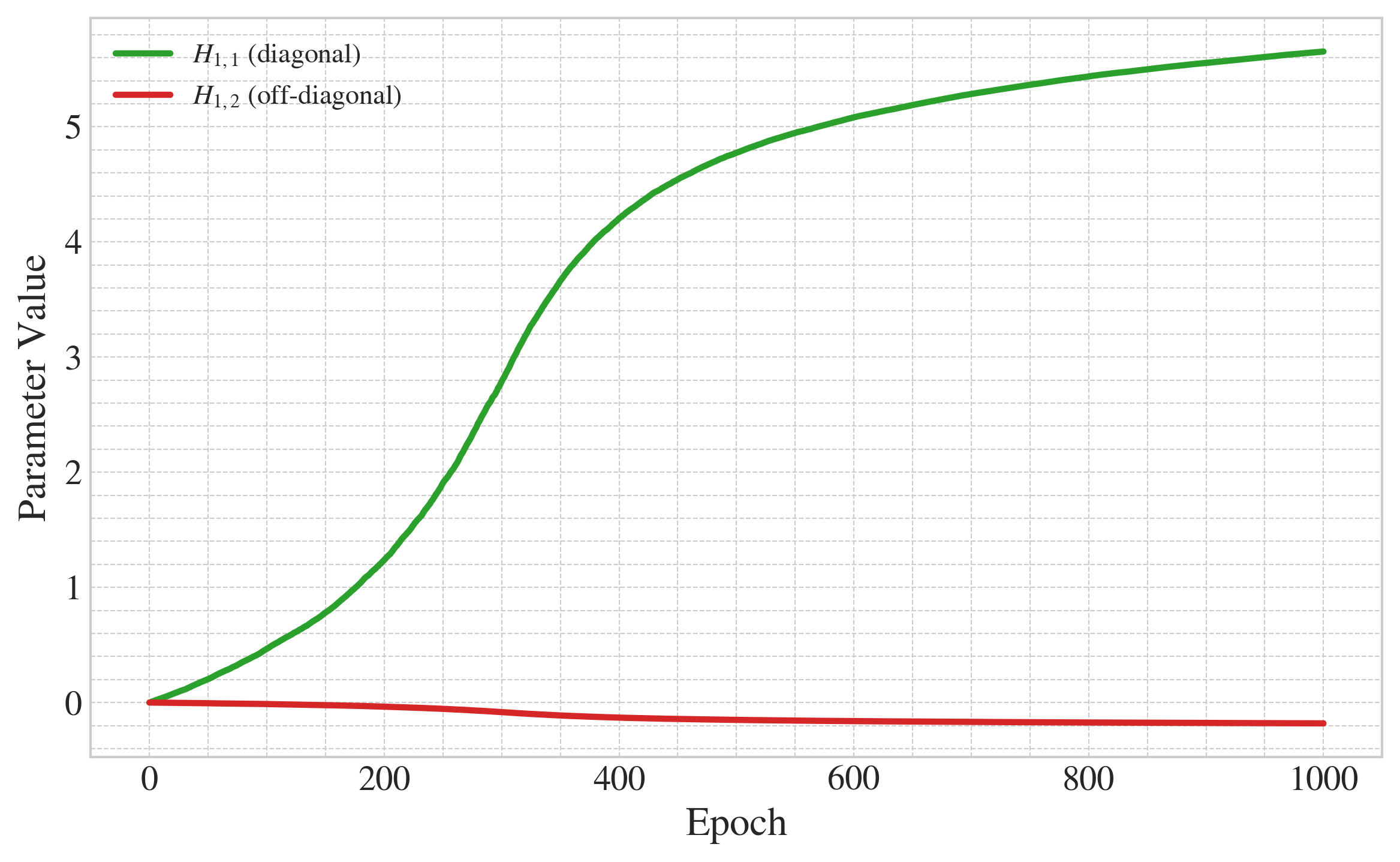}
    \caption{Training dynamics of selected entries of $H$.}
    \label{fig:H_entries}
\end{figure}

\paragraph{Experimental results for forward reasoning.}
Figure~\ref{fig:forward_loss} shows the training and test loss curves for forward reasoning, which (i) validates the convergence of the training process, and (ii) shows the generalization error also converges to 0.
\begin{figure}[ht]
    \centering
    \includegraphics[width=0.5\textwidth]{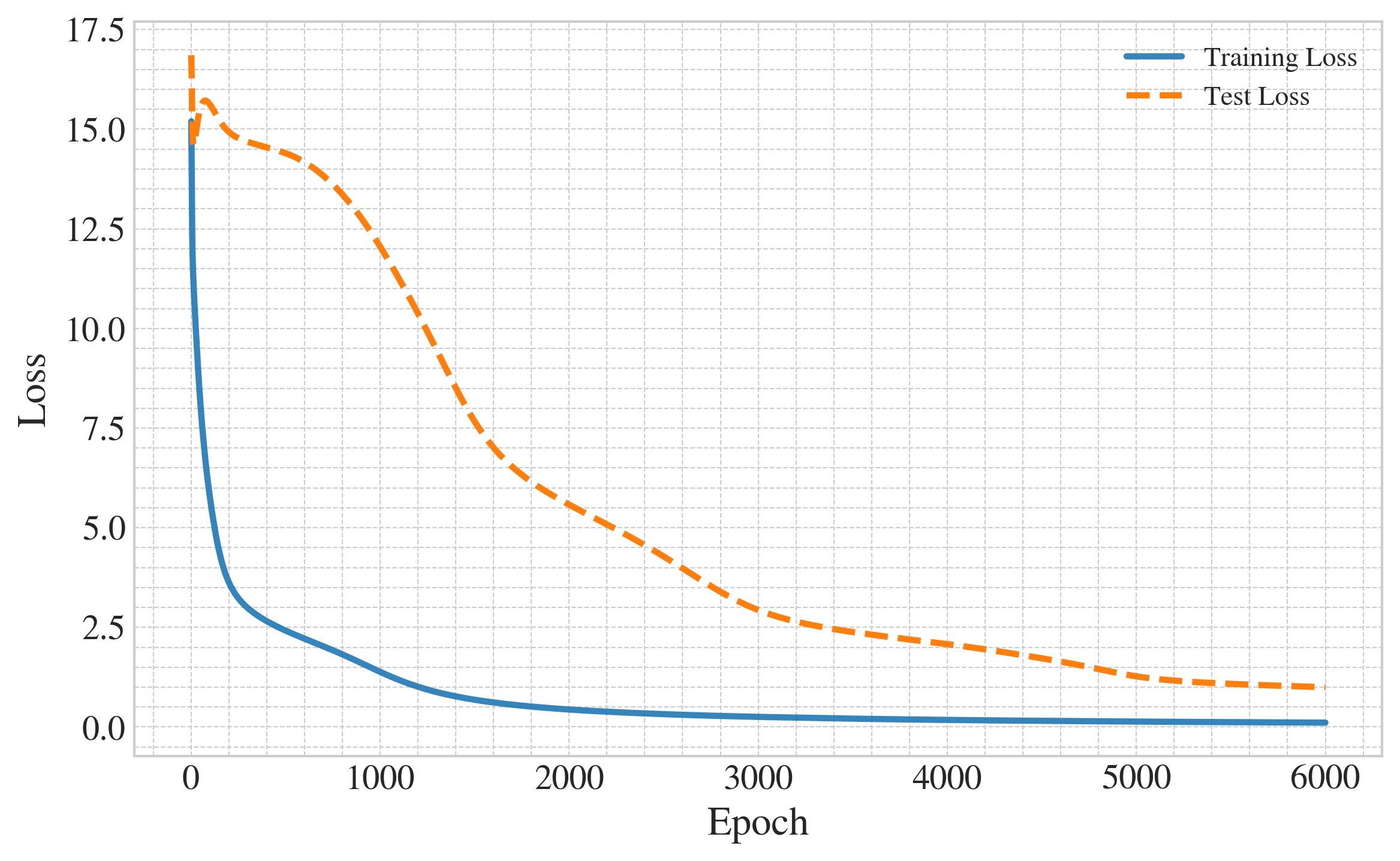}
    \caption{Training and test loss curves for forward reasoning.}
    \label{fig:forward_loss}
\end{figure}
Moreover, by tracking $U_l^{(t)}$, $V_l^{(t)}$ ($l=1,2,3$) defined in \eqref{eq:varying_UV}, we can see that they converge to our construction, and the true training dynamics also matches our theoretical analysis, see Figure~\ref{fig:selected_entries}. For example, in the right top figure, we plot the curves of the four entries of $U_3$. Identical to our analysis (c.f. Appendix~\ref{sec_app:training_dynamics_B}), $U_{3,0,0}=-U_{3,1,0}$, $U_{3,0,1}=-U_{3,1,1}$, $U_{3,1,0}$ keeps increasing while $U_{3,0,1}$ first decreases and then increases.
\begin{figure}[ht]
    \centering
    \includegraphics[width=\textwidth]{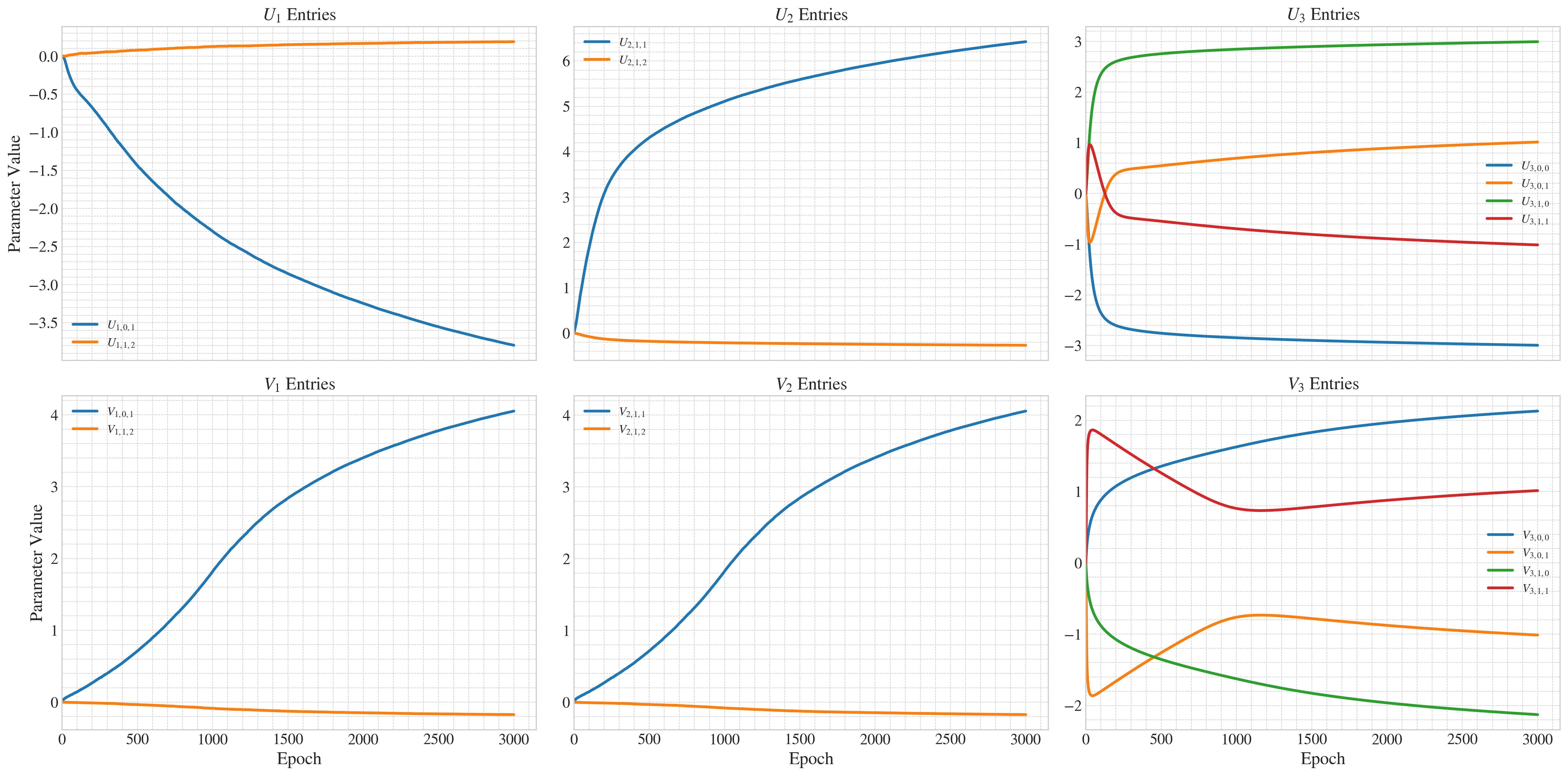}
    \caption{Training dynamics of selected entries of $U_l,V_l$ for $l=1,2,3$.}
    \label{fig:selected_entries}
\end{figure}

\section{Conclusion}
This work provides a theoretical investigation into how one-layer transformers learn symbolic multi-step reasoning via gradient descent, focusing on path-finding in trees. We demonstrate through explicit constructions that transformers can provably solve both backward (goal-to-root) and the more complex, two-stage forward (root-to-goal) reasoning tasks by employing CoT mechanisms. Our optimization analysis reveals that gradient descent successfully guides the model parameters to these effective configurations, with multi-head attention mechanisms learning to specialize and autonomously coordinate for distinct subtasks—such as path traversal and stage control—within a single autoregressive pass.
We also show that these learned abilities generalize to unseen tree structures, indicating the transformer acquires underlying algorithmic rules, rather than memorizing instances. 
These findings offer a mechanistic interpretation of how shallow transformers leverage CoT for sequential, multi-stage reasoning procedures.

\section*{Acknowledgement}

    The work of T. Yang and Y. Chi is supported in part by the grants NSF CCF-2007911, DMS-2134080 and ONR N00014-19-1-2404. T. Yang is also supported by the Wei Shen and Xuehong Zhang Presidential Fellowship at Carnegie Mellon University. The work of Y. Liang was supported in part by the U.S. National Science Foundation under the grants ECCS-2515482 and DMS-2134145.
  
{
\bibliographystyle{abbrvnat}
\bibliography{biblio}
}


\appendix




\section{Construction Analysis}

We first present the following lemma, which presents the existence of parameters for the desired attention patterns.

\begin{lm}\label{lm:matrix}
    Given $\{u_i\}_{i\in [N_1]}\subset\R^{m_1}$, $\{v_j\}_{j\in [N_2]}\subset\R^{m_2}$, where $u_1,\ldots,u_{N_1}$ are linearly independent, and $v_1,\ldots,v_{N_2}$ are linearly independent. Then for any fixed matrix $A\in\R^{N_1\times N_2}$, there exists a matrix $B\in\R^{m_1\times m_2}$ such that 
    $$ u_i^\top B v_j = A_{ij},\,\,\forall i\in[N_1],j\in[N_2].$$ 
    \end{lm}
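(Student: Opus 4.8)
The plan is to reduce the bilinear problem to a linear-algebraic existence statement by choosing coordinates adapted to the given vectors. Extend $\{u_i\}_{i\in[N_1]}$ to a basis of $\R^{m_1}$ (possible since they are linearly independent, so $N_1\le m_1$), and likewise extend $\{v_j\}_{j\in[N_2]}$ to a basis of $\R^{m_2}$. Let $P\in\R^{m_1\times N_1}$ be the matrix whose columns are $u_1,\dots,u_{N_1}$ and $Q\in\R^{m_2\times N_2}$ the matrix whose columns are $v_1,\dots,v_{N_2}$. Then the $N_1\times N_2$ matrix of bilinear values is exactly $P^\top B Q$, so the claim is equivalent to: for any $A\in\R^{N_1\times N_2}$ there is $B$ with $P^\top B Q = A$.

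The key step is to observe that $P$ has full column rank $N_1$ and $Q$ has full column rank $N_2$, hence $P$ admits a left inverse $P^{+}\in\R^{N_1\times m_1}$ with $P^{+}P=I_{N_1}$ (e.g.\ the Moore--Penrose pseudoinverse $P^{+}=(P^\top P)^{-1}P^\top$), and similarly $Q^{+}=(Q^\top Q)^{-1}Q^\top$ satisfies $Q^{+}Q=I_{N_2}$. Setting
\[
    B \;=\; (P^{+})^\top A\, Q^{+} \;=\; P(P^\top P)^{-1}A(Q^\top Q)^{-1}Q^\top,
\]
one computes directly that $P^\top B Q = (P^\top P)(P^\top P)^{-1}A(Q^\top Q)^{-1}(Q^\top Q) = A$, which is exactly $u_i^\top B v_j = A_{ij}$ for all $i,j$. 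This furnishes the desired $B$ explicitly.

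Alternatively (and perhaps cleaner to present), one may argue by dimension counting: the map $B\mapsto (u_i^\top B v_j)_{i,j}$ is linear from $\R^{m_1\times m_2}$ to $\R^{N_1\times N_2}$, and it suffices to show it is surjective. Using the bases above, pick dual functionals $\phi_i\in(\R^{m_1})^*$ with $\phi_i(u_k)=\delta_{ik}$ and $\psi_j\in(\R^{m_2})^*$ with $\psi_j(v_l)=\delta_{jl}$; then $B_{ij}$ realizing the rank-one target $e_i e_j^\top$ is given by the outer-product-type operator associated to $\phi_i\otimes\psi_j$, and linear combinations handle a general $A$. Either route is routine once the full-rank observation is made.

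There is no substantial obstacle here: the only thing to be careful about is that the linear independence hypotheses are what guarantee $N_1\le m_1$, $N_2\le m_2$ and the invertibility of $P^\top P$ and $Q^\top Q$; without them the claim would be false (the target matrix would be forced to have rank at most $\min(m_1,m_2)$ or to lie in a proper subspace). So the ``hard part,'' such as it is, is merely to state clearly why the Gram matrices $P^\top P$ and $Q^\top Q$ are invertible, after which the verification $P^\top B Q=A$ is a one-line computation.
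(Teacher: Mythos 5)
Your proof is correct, but it takes a different route from the paper. The paper views $B\mapsto U^\top B V$ through vectorization: it rewrites the condition as $\left(V^\top\otimes U^\top\right)\mathsf{vec}(B)=\mathsf{vec}(A)$, notes that $\mathrm{rank}(V^\top\otimes U^\top)=\mathrm{rank}(U)\,\mathrm{rank}(V)=N_1N_2$, and concludes the linear system is solvable — an existence argument via the rank of a Kronecker product. You instead produce an explicit solution: since $P=(u_1,\dots,u_{N_1})$ and $Q=(v_1,\dots,v_{N_2})$ have full column rank, their Gram matrices are invertible, and $B=P(P^\top P)^{-1}A(Q^\top Q)^{-1}Q^\top$ satisfies $P^\top BQ=A$ by a one-line computation (your dual-basis variant is the same idea phrased functionally). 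What your approach buys is constructiveness — a closed-form $B$ with no appeal to solvability of an abstract linear system — while the paper's Kronecker argument is slightly shorter to state and generalizes mechanically to any situation where one only needs surjectivity of the map $B\mapsto U^\top BV$. Both hinge on exactly the same hypothesis (linear independence giving full column rank), and your remark on why that hypothesis is essential is accurate.
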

     
    \begin{proof}
        We let
      \begin{align}
        U\coloneqq (u_1,\ldots,u_{N_1})\in\R^{m_1\times N_1},\quad
        V\coloneqq (v_1,\ldots,v_{N_2})\in\R^{m_2\times N_2},
      \end{align} 
        and use $\otimes$ to denote the Kronecker product.
        We have for any $B\in\R^{m_1\times m_2}$,
        \begin{align}\label{eq:kronecker_rep}
            U^\top B V = A \,\,\Leftrightarrow \,\, \left(V^\top \otimes U^\top\right) \mathsf{vec}(B) = \mathsf{vec}(A).
        \end{align}
        Note that since $u_1,\ldots,u_{N_1}$ are linearly independent, $v_1,\ldots,v_{N_2}$ are linearly independent, we have $\text{rank}(U) = N_1$ and $\text{rank}(V) = N_2$, which implies 
        $$\text{rank}(V^\top \otimes U^\top) = \text{rank}(U^\top) \text{rank}(V^\top) = N_1 N_2.$$
        Therefore, the linear system 
        \begin{align}
            \left(V^\top \otimes U^\top\right) b = \mathsf{vec}(A)
        \end{align}
        has a solution $b\in\R^{m_1 m_2}$. 
        \eqref{eq:kronecker_rep} implies reshaping $b$ to a matrix $B\in\R^{m_1\times m_2}$ gives the desired $B$.
    \end{proof}

\subsection{Proof of Theorem~\ref{thm:construction_retrieval}}

When Assumption~\ref{asmp:construct} holds,
the existence of $B$ satisfying \eqref{eq:B_construct} follows from Lemma~\ref{lm:matrix}.
Letting $\alpha\rightarrow +\infty$, it follows that 
$$ \sm(A^{\top} B A) = I_S $$
converges to an identity matrix. By \eqref{eq:embed_reverse}, we know that at the $k$-th reasoning step, $\sm\left(Y^{(k-1)\top} B x^{(k-1)}_{-1}\right)$ becomes a one-hot vector with the $j$-th entry being 1 if $y_j^{(k-1)}=x_{-1}^{(k-1)}$ and 0 otherwise (note that $x^{(k-1)}_{-1}$ will appear exactly once in $Y^{(k-1)}$), and thus the model will output $\widehat{o}^{(k)} = (x_j^{(k-1)\top},y_j^{(k-1)\top})^\top$, which is the embedding of the edge that connects the current node $x_{-1}^{(k-1)}$ with its parent $x_j^{(k-1)\top}$. Thus by induction $\widehat{o}^{(k)} = o^{(k)}$ for all $k\in[m]$.

\subsection{Proof of Theorem~\ref{thm:construction_ext}}\label{sec_app:proof_ext}

By Assumption~\ref{asmp:construct_ext}, the existence of $\theta$ satisfying \eqref{eq:UV_construct} follows   from Lemma~\ref{lm:matrix}.

For the forward reasoning task, there are $2m$ reasoning steps in total, where $m$ represents the path length. For any $k\in[2m]$, we let 
\begin{subequations} \label{eq:mu_nv}
\begin{align}
    \mu^{(k-1)}&\coloneqq X^{(k-1)\top}B_1 y_{-1}^{(k-1)}+Y^{(k-1)\top}B_2 y_{-1}^{(k-1)}+Z^{(k-1)\top}B_3 z_{-1}^{(k-1)},\label{eq:mu}\\
    \nu^{(k-1)}&\coloneqq X^{(k-1)\top}C_1 y_{-1}^{(k-1)} + Y^{(k-1)\top}C_2 y_{-1}^{(k-1)}+ Z^{(k-1)\top} C_3 z_{-1}^{(k-1)}.\label{eq:nu}
\end{align}
\end{subequations}
Let $\mu_i^{(k-1)}$ (resp. $\nu_i^{(k-1)}$) denote the $i$-th entry of $\mu^{(k-1)}$ (resp. $\nu^{(k-1)}$). We also denote $E^{(0)}(:,i)=\embedext(:,i)=(x_i^\top,y_i^\top,z_i^\top)^\top$ for $i\in[l+2]$, where $x_i\in\R^{d_1}$, $y_i\in\R^{d_1}$, and $z_i\in\R^{d_2}$, and $l$ stands for the number of edges in the tree $\gT$. 

In the following, we will prove by induction that under the assumptions in Theorem~\ref{thm:construction_ext}, 
for any $k\in[2m]$, we have
\begin{align}\label{eq:correct_output_reasoning}
    \widehat{o}^{(k)} =o^{(k)},\quad \text{as } \alpha_1,\alpha_2\rightarrow +\infty.
\end{align}

\paragraph{Base case ($k=1$).}
Without loss of generality, we can assume 
$$E^{(0)}(1:2d_1,1)\coloneqq(x_1^\top,y_1^\top)^\top=\left(a_{p(g)}^\top,a_{g}^\top\right)^\top,$$
where $a_g$ is the embedding of the goal node. We compute the attention pattern by checking the entries in \eqref{eq:mu_nv}.
\begin{itemize}
\item $i=1$. Since $ y_1 = a_{g} =y_{-1}^{(0)}$,  $z_1=s_f$, and $z_{-1}^{(0)}=s_b$, by our construction~\eqref{eq:UV_construct}, we have
\begin{subequations} \label{eq:mu_nu_1}
\begin{align}
    \forall k\in[m]:\quad\mu_1^{(0)} &= x_1^\top B_1 y_{-1}^{(0)} + y_1^\top B_2 y_{-1}^{(0)} + z_1^\top B_3 z_{-1}^{(0)} = (1+b_2)\alpha_1,\label{eq:mu_1}\\
    \nu_1^{(0)} &= x_1^\top C_1 y_{-1}^{(0)} + y_1^\top C_2 y_{-1}^{(0)} + z_1^\top C_3 z_{-1}^{(0)} = (1-c_2)\alpha_2.\label{eq:nu_1}
\end{align}
\end{subequations}
\item $i=2,3,\ldots, l$. We have
$$
x_i\neq a_0,\quad y_i\neq y_{-1}^{(0)},\quad \text{and} \quad z_i= s_f,
$$
where the second relation holds because each node embedding appears exactly once in $Y^{(0)}(:,1:l+1)$.
Thus we have
\begin{subequations} \label{eq:mu_nu_i}
\begin{align} 
    \forall k\in[m]:\quad\mu_i^{(0)} &= x_i^\top B_1 y_{-1}^{(0)} + y_i^\top B_2 y_{-1}^{(0)} + z_i^\top B_3 z_{-1}^{(0)} = b_2\alpha_1,\label{eq:mu_i}\\
    \nu_i^{(0)} &= x_i^\top C_1 y_{-1}^{(0)} + y_i^\top C_2 y_{-1}^{(0)} + z_i^\top C_3 z_{-1}^{(0)} = -c_2\alpha_2,\quad \forall i=2,3,\ldots,l.\label{eq:nu_i}
\end{align}
\end{subequations}
\item $i=l+1$. Since we have
$$
E^{(0)}(:,l+1) = (a_0^\top, a_r^\top, s_f^\top)^\top
$$
where $a_r$ is the embedding of the root node, and $a_r\neq y_{-1}^{(0)}$. We deduce
\begin{subequations} \label{eq:mu_nu_l+1}
\begin{align}
    \mu_{l+1}^{(0)} &= a_0^\top B_1 y_{-1}^{(0)} + a_r^\top B_2 y_{-1}^{(0)} + s_f^\top B_3 z_{-1}^{(0)} = (-a+b_2)\alpha_1,\label{eq:mu_l+1}\\
    \nu_{l+1}^{(0)} &= a_0^\top C_1 y_{-1}^{(0)} + a_r^\top C_2 y_{-1}^{(0)} + s_f^\top C_3 z_{-1}^{(0)} = (1-c_2)\alpha_2.\label{eq:nu_l+1}
\end{align}
\end{subequations}

\item $i=l+2$. Since
$$
E^{(0)}(:,l+2) = \left(x_{-1}^{(0)\top}, y_{-1}^{(0)\top}, z_{-1}^{(0)\top}\right)^\top
= \left(a_0^\top, a_g^\top, s_b^\top\right)^\top ,
$$
we have
\begin{align}\label{eq:mu_nu_l+2}
    \mu_{l+2}^{(0)} = (-a+1-b_2)\alpha_1, \quad
    \nu_{l+2}^{(0)} = (2+c_2)\alpha_2.
\end{align}
\end{itemize}
Combining the above relation, we see that
\begin{align}
    \sm(\mu^{(0)})\rightarrow e_1,\quad \sm(\nu^{(0)})\rightarrow e_{l+2},\quad \text{as } \alpha_1,\alpha_2\rightarrow +\infty.
\end{align}
Therefore, by \eqref{eq:output_simplified_ext} we have
\begin{align}
    \widehat{o}^{(1)} = \begin{pmatrix}
        \begin{pmatrix}
           Y^{(0)}\\
           X^{(0)}
        \end{pmatrix} \cdot \sm\left(\mu^{(0)}\right)  \\
        Z^{(0)} \cdot \sm\left(\nu(0)\right)
       \end{pmatrix}\rightarrow \begin{pmatrix}
        a_g\\
        a_{p(g)}\\
        s_b
       \end{pmatrix}\overset{\eqref{eq:output_ext}}= o^{(1)},\quad \text{as } \alpha_1,\alpha_2\rightarrow +\infty,
\end{align}
i.e., \eqref{eq:correct_output_reasoning} holds for $k=1$.

\paragraph{Induction hypothesis.} Suppose \eqref{eq:correct_output_reasoning} is true for $k=2,\ldots,p-1$ ($2\leq p\leq 2m$). Below we will prove that \eqref{eq:correct_output_reasoning} is also true for $k=p$. 
By the induction hypothesis, the input at the $p$-th reasoning step satisfies
\begin{align}
    E^{(p-1)} = (E,\widehat{O}^{(p-1)}) \rightarrow (E, O^{(p-1)})\coloneqq \overline E^{(p-1)}\coloneqq \begin{pmatrix}
        \overline{X}^{(p-1)}\\
        \overline{Y}^{(p-1)}\\
        \overline{Z}^{(p-1)}
    \end{pmatrix}
    \quad \text{as } \alpha_1,\alpha_2\rightarrow +\infty,
\end{align}
where $\overline{X}^{(p-1)}\coloneqq E^{(p-1)}(1:d_1,:)\in\R^{d_1\times (l+p+1)}$, $\overline{Y}^{(p-1)}\coloneqq E^{(p-1)}(d_1+1:2d_1,:)\in\R^{d_1\times (l+p+1)}$, and $\overline{Z}^{(p-1)}\coloneqq E^{(p-1)}(2d_1+1:2d_1+d_2,:)\in\R^{d_2\times (l+p+1)}$.
For notation simplicity, we drop the superscript $(p-1)$ and
let $x_i$, $y_i$, and $z_i$ denote the $i$-th column of $X^{(p-1)}$, $Y^{(p-1)}$, and $Z^{(p-1)}$, respectively, and let $\overline{x}_i$, $\overline{y}_i$, and $\overline{z}_i$ denote the $i$-th column of $\overline{X}^{(p-1)}$, $\overline{Y}^{(p-1)}$, and $\overline{Z}^{(p-1)}$, respectively for all $i\in[l+p+1]$. We have
\begin{subequations} \label{eq:input_limit}
\begin{align}
    (\overline{x}_i^\top, \overline{y}_i^\top, \overline{z}_i^\top)^\top &= (x_i^\top, y_i^\top, z_i^\top)^\top,\quad \forall i\in[l+2],\\
    (x_i^\top, y_i^\top, z_i^\top)^\top &\rightarrow (\overline{x}_i^\top, \overline{y}_i^\top, \overline{z}_i^\top)^\top \quad\text{as }\alpha_1,\alpha_2\rightarrow +\infty, \quad \forall i={l+3,\ldots, l+p+1}.
\end{align}
\end{subequations}
We also let $\overline y_{-1}^{(p-1)}\coloneqq \overline{y}_{l+p+1}$, $\overline z_{-1}^{(p-1)}\coloneqq \overline{z}_{l+p+1}$, and define
\begin{subequations}
\begin{align}\label{eq:mu_nu_bar}
    \overline{\mu}^{(p-1)}&\coloneqq \overline{X}^{(p-1)\top}B_1 \overline{y}_{-1}^{(p-1)}+\overline{Y}^{(p-1)\top}B_2 \overline{y}_{-1}^{(p-1)}+\overline{Z}^{(p-1)\top}B_3 \overline{z}_{-1}^{(p-1)},\\
    \overline{\nu}^{(p-1)}&\coloneqq \overline{X}^{(p-1)\top}C_1 \overline{y}_{-1}^{(p-1)}+\overline{Y}^{(p-1)\top}C_2 \overline{y}_{-1}^{(p-1)}+\overline{Z}^{(p-1)\top}C_3 \overline{z}_{-1}^{(p-1)}.
\end{align}
\end{subequations}

We let $n\in[S]$ denote the node with embedding $\overline{y}_{-1}^{(p-1)}$, i.e., $\overline{y}_{-1}^{(p-1)}=a_{n}$.

\paragraph{Case 1: $p\in\{2,\ldots,m\}$ (at Stage 1).} In this case, we have $\overline{x}_{-1}^{(p-1)}=a_{c(n)}$, $\overline{y}_{-1}^{(p-1)}=a_{n}$, and $\overline{z}_{-1}^{(p-1)}=s_b$, where we define $c(n)$ as the child of node $n$ that's on the path.
Same as the backward reasoning, without loss of generality, we can assume 
$$(x_1^\top,y_1^\top)^\top=\left(a_{p(n)}^\top,a_{n}^\top\right)^\top.$$

By a similar argument as the backward reasoning, we can compute that for any $p\in\{2,\ldots,m\}$, we have
\begin{align}\label{eq:mu_nu_stage1}
    \overline{\mu}_i^{(p-1)} &= \begin{cases}
        (1+b_2)\alpha_1, & \text{if } i=1,\\
        b_2\alpha_1, & \text{if } i=2,\ldots,l,\\
        (-a+b_2)\alpha_1, & \text{if } i=l+1,\\
        (-a-b_2)\alpha_1, & \text{if } i=l+2,\\
        -b_2\alpha_1, & \text{if } i=l+3,\ldots,l+p,\\
        (1-b_2)\alpha_1, & \text{if } i=l+p+1,
    \end{cases}
    \quad \text{and} \quad
    \overline{\nu}_i^{(p-1)} = \begin{cases}
        (1-c_2)\alpha_2, & \text{if } i=1,\\
        -c_2\alpha_2, & \text{if } i=2,\ldots,l,\\
        (1-c_2)\alpha_2, & \text{if } i=l+1,\\
        (1+c_2)\alpha_2, & \text{if } i=l+2,\\
        c_2\alpha_2, & \text{if } i=l+3,\ldots,l+p,\\
        (1+c_2)\alpha_2, & \text{if } i=l+p+1,
    \end{cases}
\end{align}
which suggests
\begin{align}
    \begin{pmatrix}
        \overline{Y}^{(p-1)}\\
        \overline{X}^{(p-1)}
     \end{pmatrix} \cdot \sm\left(\overline{\mu}^{(p-1)}\right) \rightarrow \begin{pmatrix}
        a_n\\
        a_{p(n)}
     \end{pmatrix},\quad
        \overline{Z}^{(p-1)} \cdot \sm\left(\overline{\nu}^{(p-1)}\right) \rightarrow s_b,\quad \text{as } \alpha_1,\alpha_2\rightarrow +\infty.
\end{align}
Thus by \eqref{eq:input_limit} and the above relations, we have
\begin{align}
    \widehat{o}^{(p)} = \begin{pmatrix}
        \begin{pmatrix}
           Y^{(p-1)}\\
           X^{(p-1)}
        \end{pmatrix} \cdot \sm\left(\mu^{(p-1)}\right)  \\
        Z^{(p-1)} \cdot \sm\left(\nu^{(p-1)}\right)
       \end{pmatrix}\rightarrow \begin{pmatrix}
        a_n\\
        a_{p(n)}\\
        s_b
       \end{pmatrix}\overset{\eqref{eq:output_ext}}= o^{(p)},\quad \text{as } \alpha_1,\alpha_2\rightarrow +\infty.
\end{align}

\paragraph{Case 2: $p=m+1$ (at the turning point).} In this case $n=n_r$, where $n_r$ is the root node, and we have $\overline{x}_{-1}^{(p-1)}=a_{c(n_r)}$, $\overline{y}_{-1}^{(p-1)}=a_{n_r}$, and $\overline{z}_{-1}^{(p-1)}=s_b$, where we define $c(n_r)$ as the child of node $n_r$ that's on the path.
Then analogous to the above argument, we can compute that
\begin{align}\label{eq:mu_nu_turning}
    \overline{\mu}_i^{(p-1)} &= \begin{cases}
        b_2\alpha_1, & \text{if } i\in[l]\\
        (-a+1+b_2)\alpha_1, & \text{if } i=l+1,\\
        (-a-b_2)\alpha_1, & \text{if } i=l+2,\\
        -b_2\alpha_1, & \text{if } i=l+3,\ldots,l+p,\\
        (1-b_2)\alpha_1, & \text{if } i=l+p+1,
    \end{cases}
    \quad \text{and} \quad
    \overline{\nu}_i^{(p-1)} = \begin{cases}
        -c_2\alpha_2, & \text{if } i\in[l]\\
        (2-c_2)\alpha_2, & \text{if } i=l+1,\\
        (1+c_2)\alpha_2, & \text{if } i=l+2,\\
        c_2\alpha_2, & \text{if } i=l+3,\ldots,l+p,\\
        (1+c_2)\alpha_2, & \text{if } i=l+p+1.
    \end{cases}
\end{align}

Recall that we require $a\in(0,1]$, $b_2<a/2$ and $c_2<1/2$, which guarantees that
\begin{align}
    \overline{\mu}_{l+p+1}> \overline{\mu}_{i}\,\,\forall i\neq l+p+1,\quad \text{and} \quad
    \overline{\nu}_{l+1}>\overline{\nu}_{i},\,\,\forall i\neq l+1,
\end{align}
and thus
\begin{align}
    \begin{pmatrix}
        \overline{Y}^{(p-1)}\\
        \overline{X}^{(p-1)}
     \end{pmatrix} \cdot \sm\left(\overline{\mu}^{(p-1)}\right) \rightarrow \begin{pmatrix}
        a_{n_r}\\
        a_{c(n_r)}
        \end{pmatrix},\quad
        \overline{Z}^{(p-1)} \cdot \sm\left(\overline{\nu}^{(p-1)}\right) \rightarrow s_f,\quad \text{as } \alpha_1,\alpha_2\rightarrow +\infty.
\end{align}

Thus by \eqref{eq:input_limit} and the above relations, we have
\begin{align}
    \widehat{o}^{(p)} = \begin{pmatrix}
        \begin{pmatrix}
           Y^{(p-1)}\\
           X^{(p-1)}
        \end{pmatrix} \cdot \sm\left(\mu^{(p-1)}\right)  \\
        Z^{(p-1)} \cdot \sm\left(\nu^{(p-1)}\right)
       \end{pmatrix}\rightarrow \begin{pmatrix}
        a_{n_r}\\
        a_{c(n_r)}\\
        s_f
       \end{pmatrix}\overset{\eqref{eq:output_ext}}= o^{(p)},\quad \text{as } \alpha_1,\alpha_2\rightarrow +\infty.
\end{align}

\paragraph{Case 3: $p\in\{m+2,\ldots,2m\}$ (at Stage 2).} In this case, by \eqref{eq:output_ext}, we have $\overline{x}_{-1}^{(p-1)}=a_{p(n)}$, $\overline{y}_{-1}^{(p-1)}=a_{n}$, and $\overline{z}_{-1}^{(p-1)}=s_f$, and $\overline{x}_{l+3+2m-p}^\top=a_{c(n)}^\top$,
 $\overline{y}_{l+3+2m-p}^\top=a_{n}^\top$, $\overline{z}_{l+3+2m-p}^\top=s_b^\top$.
Then by a similar argument as the backward reasoning, we can compute that for any $p\in\{m+2,\ldots,2m\}$, we have
\begin{align}
    \overline{\mu}_i^{(p-1)} &= \begin{cases}
        (1-b_1)\alpha_1, & \text{if } i=1,\\
        -b_1\alpha_1, & \text{if } i=2,\ldots,l,\\
        (-a-b_1)\alpha_1, & \text{if } i=l+1,\\
        (-a+b_1)\alpha_1, & \text{if } i=l+2,\\
        b_1\alpha_1, & \text{if } i=l+3,\ldots,l+m+2,i\neq l+3+2m-p,\\
        (1+b_1)\alpha_1, & \text{if } i=l+3+2m-p,\\
        -b_1\alpha_1, & \text{if } i=l+4+2m-p,\ldots,l+p,\\
        (1-b_1)\alpha_1, & \text{if } i=l+p+1,
    \end{cases}\label{eq:mu_stage2}\\
    \overline{\nu}_i^{(p-1)} &= \begin{cases}
        (1+c_1)\alpha_2, & \text{if } i=1,\\
        c_1\alpha_2, & \text{if } i=2,\ldots,l,\\
        (1+c_1)\alpha_2, & \text{if } i=l+1,\\
        (1-c_1)\alpha_2, & \text{if } i=l+2,\\
        -c_1\alpha_2, & \text{if } i=l+3,\ldots,l+m+2,i\neq l+3+2m-p,\\
        (1-c_1)\alpha_2, & \text{if } i=l+3+2m-p,\\
        -c_1\alpha_2, & \text{if } i=l+4+2m-p,\ldots,l+p,\\
        (1+c_1)\alpha_2, & \text{if } i=l+p+1.
    \end{cases} . \label{eq:nu_stage2}
\end{align}
This indicates
\begin{align}
    \begin{pmatrix}
        \overline{Y}^{(p-1)}\\
        \overline{X}^{(p-1)}
     \end{pmatrix} \cdot \sm\left(\overline{\mu}^{(p-1)}\right) \rightarrow \begin{pmatrix}
        a_n\\
        a_{c(n)}
     \end{pmatrix},\quad
        \overline{Z}^{(p-1)} \cdot \sm\left(\overline{\nu}^{(p-1)}\right) \rightarrow s_f,\quad \text{as } \alpha_1,\alpha_2\rightarrow +\infty.
\end{align}
Thus by \eqref{eq:input_limit} and the above relations, we have
\begin{align}
    \widehat{o}^{(p)} = \begin{pmatrix}
        \begin{pmatrix}
           Y^{(p-1)}\\
           X^{(p-1)}
        \end{pmatrix} \cdot \sm\left(\mu^{(p-1)}\right)  \\
        Z^{(p-1)} \cdot \sm\left(\nu^{(p-1)}\right)
       \end{pmatrix}\rightarrow \begin{pmatrix}
        a_n\\
        a_{c(n)}\\
        s_f
       \end{pmatrix}\overset{\eqref{eq:output_ext}}= o^{(p)},\quad \text{as } \alpha_1,\alpha_2\rightarrow +\infty.
\end{align}

\section{Convergence Analysis}\label{sec_app:convergence}

We use a multi-phase analysis to track the training dynamics of both backward reasoning and forward reasoning tasks, where the details can be found below.

\paragraph{Tree node ordering.} We let $N\coloneqq 2^{m+1}-1$ denote the number of nodes in the tree $\gT\sim\Ptrain$, and let $n_1$, $n_2$, $\cdots$, $n_{N}$ denote the nodes in the tree $\gT$. In particular, we let $n_1$ denote the root node, and for each $i\in[2^m-1]$, we let $n_{2i}$ and $n_{2i+1}$ denote the left child and right child of node $n_i$, respectively. For $i\in[N]$, we let $s(n_i)$ to denote the sibling node of $n_i$.  Without loss of generality, we assume the goal node is $n_{2^m}$, and the sibling of node $n_{2i}$ is $n_{2i+1}$ for $i\in[2^m-1]$, and thus the path from the root node to the goal node is  
\begin{align}\label{eq:path}
    n_1\rightarrow n_2\rightarrow n_4\rightarrow \cdots \rightarrow n_{2^m}.
\end{align}
See Figure~\ref{fig:tree_ordered} for an illustration of our ordering.
\begin{figure}[t]
    \centering
    \includegraphics[width=0.35\textwidth]{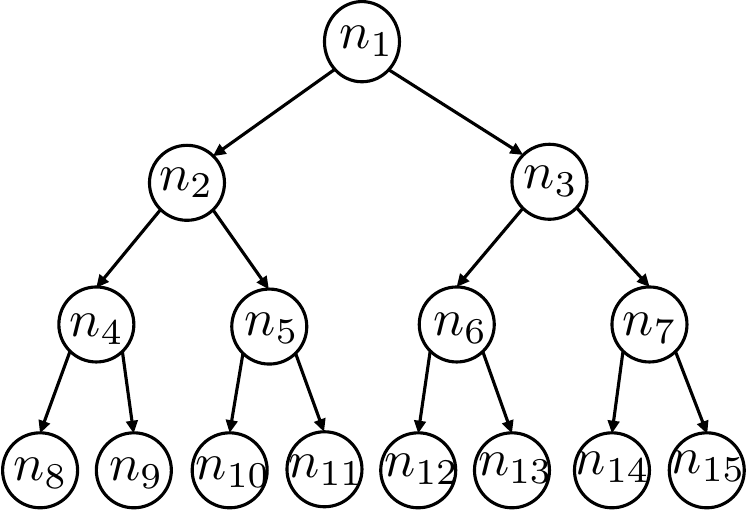}
    \caption{An illustration of the node ordering of a perfect binary tree with depth $m=3$.}
    \label{fig:tree_ordered}
\end{figure}

Let $x^{(k)}\coloneqq o^{(k)}(1:d_1)$, $y^{(k)}\coloneqq o^{(k)}(d_1+1:2d_1)$, $\widehat{x}^{(k)}\coloneqq \widehat{o}^{(k)}(1:d_1)$, $\widehat{y}^{(k)}\coloneqq \widehat{o}^{(k)}(d_1+1:2d_1)$. Under this ordering, we have
\begin{align}\label{eq:x_-1}
    x_{-1}^{(k)}=a_{n_{2^{m-k}}}.
\end{align}
Throughout, let $n_{-1}^{(k)}$ denote the node index of the embedding $x_{-1}^{(k-1)}$.

\subsection{Proof of Theorem~\ref{thm:convergence_retrieval}}\label{sec_app:proof_convergence_retrieval}

We assume Assumption~\ref{asmp:train_data} and Assumption~\ref{asmp:orthonomal} hold throughout the proof.

\paragraph{Loss simplification and gradient computation.} We first simplify the training loss and compute the gradients needed for the proof, where we drop the dependency with the parameter $\theta =\{B\}$ whenever it is clear from the context. Define
\begin{align}\label{eq:l_xy}
    \Delta_x^{(k)}\coloneqq \frac{1}{2}\E_{\gT\sim\Ptrain}\norm{x^{(k)}-\widehat{x}^{(k)}}^2_2,\quad
    \Delta_y^{(k)}\coloneqq \frac{1}{2}\E_{\gT\sim\Ptrain}\norm{y^{(k)}-\widehat{y}^{(k)}}^2_2.
\end{align}
It follows that
\begin{align}\label{eq:loss_decompose}
    \losstrain \coloneqq \sum_{k=1}^m \left(\Delta_x^{(k)}+\Delta_y^{(k)}\right).
\end{align}
Define $H_{0,j}=0$ for all $j\in\{0,1,\ldots,S\}$,    
\begin{align}\label{eq:H}
    H \coloneqq A^\top B A,
\end{align}
and $\delta x^{(k)},\delta y^{(k)}\in\R^{(S+1)\times (S+1)}$ be as follows:
\begin{align}
    \delta x_{i,j}^{(k)}\coloneqq \frac{\partial\Delta_x^{(k)}}{\partial H_{i,j}},\quad \delta y_{i,j}^{(k)}\coloneqq \frac{\partial\Delta_y^{(k)}}{\partial H_{i,j}},\quad \forall i,j=0,1,\cdots,S.
\end{align}

Given a tree $\gT$, we let $N_i^{(k)}=N_i^{(k)}(\gT)$ denote the number of times embedding $a_i$ appears in the columns of $Y^{(k)}$, for all $ i=0,1,\cdots,S$. We define the \textit{attention matrix} $\alpha^{(k)}=\alpha^{(k)}(\gT)\in\R^{(S+1)\times (S+1)}$ as follows:
\begin{align}\label{eq:alpha}
    \forall k\in[m]:\quad\alpha_{i,j}^{(k)} = \frac{N_i^{(k-1)}\exp(a_i^\top B a_j)}{\sum_{l=0}^S N_l^{(k-1)}\exp(a_l^\top B a_j)},\quad \forall i,j=0,1,\cdots,S.
\end{align}
For any $i,j\in[S]$, if $s(j)$ does not exist, we let $\alpha^{(k)}_{s(j),i}=0$. Assumption~\ref{asmp:train_data} and Assummption~\ref{asmp:orthonomal} allow us to simplify the training loss, as in Lemma~\ref{lm:loss_simplify}, whose proof is deferred to Appendix~\ref{sec_app:proof_loss}. 

\begin{lm}[Loss simplification]\label{lm:loss_simplify}
    Under Assumption~\ref{asmp:train_data}, Assumption~\ref{asmp:orthonomal}, we have for all $k\in[m]$,
    \begin{subequations}
    \begin{align}
        \Delta_x^{(k)} &=\frac{1}{2}\sum_{j\in[S]}\EE\left[\mathbbm{1}\left\{n_{-1}^{(k)}=j\right\}\bigg(\frac{1}{2}\sum_{l\in[S] \backslash \{n_1,j,s(j)\}}\left(\alpha^{(k)}_{l,j}+\alpha^{(k)}_{s(l),j} \right)^2+ \left(1-\alpha^{(k)}_{j,j}-\alpha^{(k)}_{s(j),j} \right)^2+\left(\alpha^{(k)}_{0,j}\right)^2\bigg)\right],\label{eq:loss_simplify_x}\\
        \Delta_y^{(k)} &= \frac{1}{2}\sum_{j\in[S]}\E\left[\mathbbm{1}\{n_{-1}^{(k)}=j\}\bigg(\sum_{l\in[S]\backslash\{j\} }\left(\alpha_{l,j}^{(k)}\right)^2+\left(1-\alpha_{j,j}^{(k)}\right)^2\bigg)\right],\label{eq:loss_simplify_y}
    \end{align}
    \end{subequations}
where $n_{-1}^{(k)}$ denotes the node index of the embedding $x_{-1}^{(k-1)}$.
\end{lm}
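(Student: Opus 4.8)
The plan is to prove Lemma~\ref{lm:loss_simplify} by direct computation of the autoregressive output at a generic reasoning step, using the structure of perfect binary trees and orthonormality of embeddings to collapse the squared-error loss into an expression purely in terms of the attention weights $\alpha^{(k)}_{i,j}$.

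\textbf{Step 1: Unwind the teacher-forced recursion.} Since training uses teacher forcing, at reasoning step $k$ the input matrix $E_{\mathsf{train}}^{(k-1)}(\gT)$ is the concatenation of the original embedding $\embedretrieval(\gT)$ with the ground-truth columns $o^{(1)},\ldots,o^{(k-1)}$ of $\gtretrieval(\gT)$. I would first catalog exactly which node embeddings appear in $Y^{(k-1)}$ and with what multiplicities: each child embedding $a_i$ for every non-root node appears once from the edge list, the filler $a_0$ appears once (from the $a_{g(\gT)}$-over-$a_0$ slot), plus the $(k-1)$ already-output columns each contribute one more copy of some path node. This is precisely what $N_i^{(k-1)}(\gT)$ records. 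The query vector is $x_{-1}^{(k-1)}$, which by \eqref{eq:x_-1} is the embedding $a_{n_{-1}^{(k)}}$ of the current path node. So the softmax logits are $a_l^\top B a_{n_{-1}^{(k)}}$ repeated with multiplicity $N_l^{(k-1)}$, giving the attention weight on the value-block associated to node $l$ exactly as $\alpha^{(k)}_{l,j}$ with $j=n_{-1}^{(k)}$ (summing the repeated-column softmax mass). This identifies $\widehat{x}^{(k)}=\sum_l \alpha^{(k)}_{l,j} a_{p(l)}$-type sums and $\widehat{y}^{(k)}=\sum_l \alpha^{(k)}_{l,j} a_l$ (being careful that the value block for node $l$ carries parent embedding in the top $d_1$ coordinates and $a_l$ itself in the bottom).

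\textbf{Step 2: Expand the squared norms using orthonormality.} The ground truth is $x^{(k)}=a_{p(j)}$, $y^{(k)}=a_j$ where $j=n_{-1}^{(k)}$. Because $\{a_i\}$ are orthonormal and $a_0=0$, $\norm{y^{(k)}-\widehat y^{(k)}}^2 = (1-\alpha^{(k)}_{j,j})^2 + \sum_{l\neq j}(\alpha^{(k)}_{l,j})^2$, which is \eqref{eq:loss_simplify_y} after taking expectation and splitting on which node is the current query via the indicator $\mathbbm 1\{n_{-1}^{(k)}=j\}$. For $\widehat x^{(k)}$ the subtlety is that two distinct nodes (a node and its sibling) share the same parent, so the coefficient on $a_{p(j)}$ in $\widehat x^{(k)}$ is $\alpha^{(k)}_{j,j}+\alpha^{(k)}_{s(j),j}$, and more generally the coefficient on $a_{p(l)}$ is $\alpha^{(k)}_{l,j}+\alpha^{(k)}_{s(l),j}$; the factor $\tfrac12$ in front of the sum over $l\in[S]\setminus\{n_1,j,s(j)\}$ corrects for counting each parent twice. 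The $a_0$ term $(\alpha^{(k)}_{0,j})^2$ appears because the filler slot's parent embedding is $a_0=0$ but it still draws softmax mass $\alpha^{(k)}_{0,j}$ which is ``lost'' — actually it contributes $0$ to $\widehat x$, so $a_{p(j)}$'s target coefficient $1$ must be matched by mass that excludes the $\alpha^{(k)}_{0,j}$ portion; I'd track this bookkeeping carefully. Taking expectation over $\gT\sim\Ptrain$ and decomposing over the value of $n_{-1}^{(k)}$ yields \eqref{eq:loss_simplify_x}.

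\textbf{Main obstacle.} The delicate part is the exact accounting in $\Delta_x^{(k)}$: correctly handling (i) the sibling-merging of parent coefficients and the compensating $\tfrac12$ factors, (ii) the role of the zero filler embedding $a_0$ and why it produces an isolated $(\alpha^{(k)}_{0,j})^2$ term rather than being absorbed, and (iii) verifying that in a perfect binary tree the current query $n_{-1}^{(k)}$ is always a path node whose parent is uniquely determined and whose sibling is off-path, so the sets $\{n_1,j,s(j)\}$ excluded from the sum are exactly right. I would handle this by writing $\widehat x^{(k)} = \sum_{p} \big(\sum_{l:\,p(l)=p}\alpha^{(k)}_{l,j}\big) a_p$, noting the inner sum has at most two terms, then computing $\norm{a_{p(j)} - \widehat x^{(k)}}^2$ coordinate-wise in the orthonormal basis $\{a_i\}_{i\in[S]}$, which cleanly separates the ``correct parent'' coordinate, the ``$a_0$ waste'' piece, and the ``wrong parent'' coordinates into the three groups in \eqref{eq:loss_simplify_x}. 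The summation over $k$ to recover $\losstrain$ via \eqref{eq:loss_decompose} is then immediate from the definition of the Frobenius norm as a sum over columns.
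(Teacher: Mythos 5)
Your overall route is the same as the paper's: unwind the teacher-forced input, identify the softmax mass on the $Y$-row of each column with $\alpha^{(k)}_{l,j}$ (multiplicities $N_l^{(k-1)}$ absorbed as in \eqref{eq:alpha}), expand the squared errors in the orthonormal basis, group the two siblings sharing a parent (which is exactly where the $\tfrac12$ in front of the wrong-parent sum comes from), and condition on $n_{-1}^{(k)}=j$. The $\Delta_y^{(k)}$ part of your argument is correct as stated.

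However, your handling of the $(\alpha^{(k)}_{0,j})^2$ term in $\Delta_x^{(k)}$ — the very point you flag as the main obstacle — is wrong, and as written your bookkeeping would not reproduce \eqref{eq:loss_simplify_x}. You assert that the mass $\alpha^{(k)}_{0,j}$ ``contributes $0$ to $\widehat x^{(k)}$'' because the filler's parent embedding is $a_0=0$. This conflates the two special columns of \eqref{eq:input_retrieval}. The mass $\alpha^{(k)}_{0,j}$ is attached to the column whose $Y$-row is $a_0$, namely $(a_{g(\gT)}^\top,a_0^\top)^\top$, whose $X$-row is $a_{g(\gT)}\neq 0$; hence $\widehat x^{(k)}$ contains the component $\alpha^{(k)}_{0,j}\,a_{g(\gT)}$, and since the goal is a leaf (so $a_{g(\gT)}$ is never a parent embedding and is orthogonal to $a_{p(j)}$ and to all $a_{p(l)}$), this is precisely the isolated coordinate giving $(\alpha^{(k)}_{0,j})^2$. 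The mass that is genuinely wasted is instead $\alpha^{(k)}_{n_1,j}$, attached to the root column $(a_0^\top,a_{r(\gT)}^\top)^\top$ whose $X$-row is $a_0=0$; this is why $l=n_1$ is excluded from the wrong-parent sum and why no $(\alpha^{(k)}_{n_1,j})^2$ term appears. Consequently your proposed representation $\widehat x^{(k)}=\sum_{p}\bigl(\sum_{l:\,p(l)=p}\alpha^{(k)}_{l,j}\bigr)a_p$ is incomplete: it must be augmented by the term $\alpha^{(k)}_{0,j}\,a_{g(\gT)}$ (and the root mass must be dropped), after which the coordinate-wise expansion yields exactly the three groups in \eqref{eq:loss_simplify_x}, matching the paper's computation via the grouped weights $w_i^{(k)}$.
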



The following lemma computes the gradients $\delta x^{(k)}$ and $\delta y^{(k)}$, whose proof is also deferred to Appendix~\ref{sec_app:proof_gradient}.
\begin{lm}[Gradient computation]\label{lm:gradient_compute}
    Under Assumption~\ref{asmp:train_data}, Assumption~\ref{asmp:orthonomal}, we have for all $k\in[m]$, $\delta x_{0,j}^{(k)}=0$, $\delta y_{0,j}^{(k)}=0$ for all $j=\{0,1,\ldots,S\}$, and
    \begin{subequations}
    \begin{align}
        \forall j\in[S]:\quad\delta x_{j,j}^{(k)} &= -\frac{1}{S}\E\left[\left(\alpha^{(k)}_{0,j} \right)^2\alpha^{(k)}_{j,j}\Big|n_{-1}^{(k)}=j\right]\notag\\
        &\quad-\frac{S-3}{2S(S-1)}\E\bigg[\sum_{p\in[S] \backslash \{ j,s(j),n_1\}} \left(\alpha^{(k)}_{p,j}+\alpha^{(k)}_{s(p),j} \right)^2\alpha_{j,j}^{(k)}\bigg|n_{-1}^{(k)}=j\bigg]\notag\\
        &\quad-\frac{1}{S}\E\left[ \left(1-\alpha^{(k)}_{j,j}-\alpha^{(k)}_{s(j),j} \right)^2\alpha_{j,j}^{(k)}\Big|n_{-1}^{(k)}=j\right],\label{eq:delta_x_jj}\\
        \forall i\in[S]\setminus\{j\}:\quad\delta x_{i,j}^{(k)} &= -\frac{1}{S}\E\left[\left(\alpha^{(k)}_{0,j}\right)^2\alpha^{(k)}_{i,j}\Big|n_{-1}^{(k)}=j\right]\notag\\
    &\quad+\frac{1}{S}\E\left[\left(1-\alpha^{(k)}_{j,j}-\alpha^{(k)}_{s(j),j}\right)\alpha_{i,j}^{(k)}\left(\alpha_{j,j}^{(k)}+\alpha_{s(j),j}^{(k)}\right)\Big|n_{-1}^{(k)}=j\right]\notag\\
    &\quad-\frac{S-3}{2S(S-1)}\E\bigg[\sum_{p\in[S]\atop p\notin\{ j,s(j),n_1\}}\left(\alpha^{(k)}_{p,j}+\alpha^{(k)}_{s(p),j}\right)^2\alpha_{i,j}^{(k)}\bigg|n_{-1}^{(k)}=j\bigg]\notag\\
    &\quad-\frac{1}{S(S-1)}\E\left[\left(1-\alpha^{(k)}_{j,j}-\alpha^{(k)}_{i,j}\right)\alpha_{i,j}^{(k)}\Big|n_{-1}^{(k)}=j,s(j)=i\right]\notag\\
    &\quad+\frac{S-3}{S(S-1)}\E\left[\left(\alpha^{(k)}_{i,j}+\alpha^{(k)}_{s(i),j}\right)\alpha_{i,j}^{(k)}\Big|n_{-1}^{(k)}=j,i\notin\{s(j),n_1\}\right],\label{eq:delta_x_ij}
    \end{align}
    \end{subequations}
    and
     \begin{subequations}
    \begin{align}
        \forall j\in[S]:\quad\delta y_{j,j}^{(k)} &=-\frac{1}{S}\E\left[\alpha_{j,j}^{(k)}\bigg(\sum_{p\in[S]\setminus\{j\}}\left(\alpha_{p,j}^{(k)}\right)^2+\left(1-\alpha^{(k)}_{j,j}\right)^2\bigg)\bigg|n_{-1}^{(k)}=j\right],\label{eq:delta_y_jj}\\
        \forall i\in[S]\setminus\{j\}:\quad\delta y_{i,j}^{(k)} &=-\frac{1}{S}\E\left[\alpha_{i,j}^{(k)}\bigg(\sum_{p\in[S]\setminus\{j\}}\left(\alpha_{p,j}^{(k)}\right)^2-\left(1-\alpha^{(k)}_{j,j}\right)\alpha_{j,j}^{(k)}-\alpha_{i,j}^{(k)}\bigg)\bigg|n_{-1}^{(k)}=j\right].\label{eq:delta_y_ij}
    \end{align}
     \end{subequations}
     Here, $n_{-1}^{(k)}$ denotes the node index of the embedding $x_{-1}^{(k-1)}$.
\end{lm}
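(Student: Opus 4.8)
The plan is to establish Lemma~\ref{lm:gradient_compute} by differentiating the simplified loss from Lemma~\ref{lm:loss_simplify} directly with respect to the entries $H_{i,j} = a_i^\top B a_j$, treating these as the effective parameters (legitimate under Assumption~\ref{asmp:orthonomal} since $A$ has orthonormal columns, so the map $B \mapsto H$ is a bijection on the relevant subspace). First I would record the basic calculus of the attention weights \eqref{eq:alpha}: for fixed query index $j$, the softmax weights $\alpha^{(k)}_{\cdot,j}$ depend on $H$ only through the $j$-th ``column'' $\{H_{l,j}\}_l$, and one has the standard softmax derivative
\begin{align*}
    \frac{\partial \alpha^{(k)}_{l,j}}{\partial H_{i,j}} = \alpha^{(k)}_{l,j}\left(\mathbbm{1}\{l=i\} - \alpha^{(k)}_{i,j}\right),
\end{align*}
while $\partial \alpha^{(k)}_{l,j'}/\partial H_{i,j} = 0$ for $j' \neq j$. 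The first bullet claims ($\delta x^{(k)}_{0,j} = \delta y^{(k)}_{0,j} = 0$) then follow because under Assumption~\ref{asmp:orthonomal}(i) we have $a_0 = 0$, so $H_{0,j} = a_0^\top B a_j = 0$ identically and the loss has no dependence on that (nonexistent) coordinate; equivalently, the row of $\alpha$ indexed by $0$ is $\alpha^{(k)}_{0,j} = N_0^{(k-1)}/(\sum_l N_l^{(k-1)}\exp(H_{l,j}))$ with $\exp(H_{0,j})=1$ fixed, contributing nothing differentiable through $H_{0,j}$.

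Next I would expand the per-step losses \eqref{eq:loss_simplify_x}--\eqref{eq:loss_simplify_y} and apply the chain rule, carefully organizing the sum over $l$ into three regimes relative to the query: $l=j$ (the ``correct'' parent term, appearing inside $(1-\alpha^{(k)}_{j,j}-\alpha^{(k)}_{s(j),j})^2$), $l=s(j)$ (the sibling term), and the generic $l \notin \{j,s(j),n_1\}$ (appearing in the paired quantity $(\alpha^{(k)}_{l,j}+\alpha^{(k)}_{s(l),j})^2$), plus the isolated $(\alpha^{(k)}_{0,j})^2$ term. The combinatorial prefactors $1/S$, $(S-3)/(2S(S-1))$, $1/(S(S-1))$, etc., come from the expectation over $\Ptrain$: conditioning on $n_{-1}^{(k)} = j$ fixes which node is queried at step $k$, and then averaging over the uniformly random labeling and random choice of goal leaf determines the probabilities that $s(j)$ equals a particular node $i$, that $i$ is on/off the path, that $i$ equals the root $n_1$, and how many times each embedding appears in $Y^{(k-1)}$ (the counts $N_i^{(k-1)}$) --- these are the same bookkeeping facts used to prove Lemma~\ref{lm:loss_simplify}. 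I would handle the diagonal case $i=j$ first (equations \eqref{eq:delta_x_jj}, \eqref{eq:delta_y_jj}): differentiating through $\alpha^{(k)}_{j,j}$ with the derivative $\alpha^{(k)}_{j,j}(1-\alpha^{(k)}_{j,j})$ and through every other $\alpha^{(k)}_{l,j}$ with $-\alpha^{(k)}_{l,j}\alpha^{(k)}_{j,j}$, then collect terms; the uniform factor $-\alpha^{(k)}_{j,j}$ pulls out and multiplies the whole conditional-expectation bracket, which is why \eqref{eq:delta_y_jj} has the clean form $-\tfrac1S\E[\alpha^{(k)}_{j,j}(\sum(\alpha^{(k)}_{p,j})^2 + (1-\alpha^{(k)}_{j,j})^2)]$.

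For the off-diagonal case $i \neq j$ (equations \eqref{eq:delta_x_ij}, \eqref{eq:delta_y_ij}), the subtlety is that $H_{i,j}$ now appears ``positively'' in the $\alpha^{(k)}_{i,j}$ row via $+\alpha^{(k)}_{i,j}(1-\alpha^{(k)}_{i,j})$ and ``negatively'' in every other row; moreover, which squared term in the loss the index $i$ lands in depends on whether $i = s(j)$, $i = n_1$, or $i = s(p)$ for some $p$ --- so the final expression splits into several conditional expectations with different conditioning events (e.g.\ $s(j) = i$ versus $i \notin \{s(j), n_1\}$), which accounts for the extra lines in \eqref{eq:delta_x_ij}. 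I would be methodical here: write the $H_{i,j}$-derivative of $\Delta_x^{(k)}$ as a sum of four contributions (from the $(\alpha^{(k)}_{0,j})^2$ term, the $l=j$ ``correct'' term, the $l=s(j)$ or $l=i$ cross terms, and the generic paired terms), substitute the softmax derivatives, and then translate each term into an expectation over $\Ptrain$ with the appropriate conditioning and prefactor. The main obstacle I anticipate is precisely this casework: correctly tracking, for a uniformly random perfect binary tree with a random labeling, the joint events ``$n_{-1}^{(k)} = j$ and $i$ is the sibling of $j$'' versus ``$i$ is on the path'' versus ``$i = n_1$,'' and getting the resulting probability weights $\tfrac1{S(S-1)}$ vs.\ $\tfrac{S-3}{S(S-1)}$ exactly right --- a purely combinatorial accounting step that is conceptually routine but error-prone, and where the pairing of $l$ with its sibling $s(l)$ inside the loss (from Lemma~\ref{lm:loss_simplify}) must be tracked consistently through the differentiation.
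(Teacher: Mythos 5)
Your proposal follows essentially the same route as the paper's proof: differentiate the simplified per-step losses of Lemma~\ref{lm:loss_simplify} through the chain rule using the standard softmax derivative $\partial \alpha^{(k)}_{l,j}/\partial H_{i,j} = \alpha^{(k)}_{l,j}(\mathbbm{1}\{l=i\}-\alpha^{(k)}_{i,j})$, note that $\delta x^{(k)}_{0,j}=\delta y^{(k)}_{0,j}=0$ because $a_0=0$, and then obtain the prefactors $1/S$, $1/(S(S-1))$, $(S-3)/(2S(S-1))$ from the conditioning probabilities under $\Ptrain$ together with the sibling-pairing symmetry you flag at the end. This is the correct plan and matches the paper's derivation, including the casework on $i=j$, $i=s(j)$, $i=n_1$, and generic $i$.
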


We'll repeatedly use the gradient inequality given in the following lemma, whose proof is provided in Appendix~\ref{sec_app:proof_gradient_ineq}.
\begin{lm}[Key gradient inequality]\label{lm:gradient_ineq}
    Under Assumption~\ref{asmp:train_data}, Assumption~\ref{asmp:orthonomal}, we have for all $k\in[m]$,
    \begin{align}\label{eq:gradient_ineq}
        \delta x_{i,j}^{(k)} + \delta y_{i,j}^{(k)} &\geq \frac{1}{S}\E\left[\left(1-\alpha^{(k)}_{j,j}-\alpha^{(k)}_{s(j),j}\right)\alpha_{i,j}^{(k)}\left(\frac{N-2}{N-1}\alpha_{j,j}^{(k)}-\max_{p\in[S]\setminus\{j\}}\alpha^{(k)}_{p,j}-\frac{1}{N-1}\alpha_{0,j}^{(k)}\right)\Big|n_{-1}^{(k)}=j\right]\notag\\
        &\quad + \frac{1}{S}\E\left[\left(1-\alpha^{(k)}_{j,j}\right)\alpha_{i,j}^{(k)}\left(\alpha_{j,j}^{(k)}-\max_{0\leq p\leq S\atop p\neq j}\alpha^{(k)}_{p,j}\right)\bigg|n_{-1}^{(k)}=j\right],
    \end{align}
    where $n_{-1}^{(k)}$ denotes the node index of the embedding $x_{-1}^{(k-1)}$.
\end{lm}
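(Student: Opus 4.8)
The plan is to substitute the closed-form gradients of Lemma~\ref{lm:gradient_compute} into $\delta x_{i,j}^{(k)}+\delta y_{i,j}^{(k)}$ and lower-bound the result term by term, so that --- morally --- the two summands on the right-hand side of \eqref{eq:gradient_ineq} come out of the $\delta y$ and $\delta x$ contributions, respectively. We carry out the argument for the off-diagonal case $i\neq j$ (the diagonal entry is handled the same way, with \eqref{eq:delta_x_jj} and \eqref{eq:delta_y_jj} replacing \eqref{eq:delta_x_ij} and \eqref{eq:delta_y_ij}); all expectations below are conditioned on $\{n_{-1}^{(k)}=j\}$, and we abbreviate $\alpha_{p}\coloneqq\alpha_{p,j}^{(k)}$.

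First, the $\delta y$ contribution --- the easy half, which yields the second line of \eqref{eq:gradient_ineq}. Starting from \eqref{eq:delta_y_ij}, the only quantity needing control is $\sum_{p\in[S]\setminus\{j\}}\alpha_{p}^2$, and since each column of the attention matrix sums to one, $\sum_{p\in[S]\setminus\{j\}}\alpha_{p}^2\le\big(\max_{p\in[S]\setminus\{j\}}\alpha_{p}\big)\sum_{p\in[S]\setminus\{j\}}\alpha_{p}\le\big(\max_{p\in[S]\setminus\{j\}}\alpha_{p}\big)(1-\alpha_{j})$. Plugging this in and dropping the nonnegative leftover $\tfrac1S\E[\alpha_{i}^2]$ produces
\begin{equation*}
\delta y_{i,j}^{(k)}\ \ge\ \frac1S\E\!\left[\alpha_{i}(1-\alpha_{j})\Big(\alpha_{j}-\max_{p\in[S]\setminus\{j\}}\alpha_{p}\Big)\,\Big|\,n_{-1}^{(k)}=j\right]\ \ge\ \frac1S\E\!\left[\alpha_{i}(1-\alpha_{j})\Big(\alpha_{j}-\max_{0\le p\le S,\,p\neq j}\alpha_{p}\Big)\,\Big|\,n_{-1}^{(k)}=j\right],
\end{equation*}
where the last step only weakens the bound by also allowing $p=0$ in the maximum. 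This is precisely the second line of \eqref{eq:gradient_ineq}.

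Next, the $\delta x$ contribution --- the substantial half, which must dominate the first line of \eqref{eq:gradient_ineq}. Expanding \eqref{eq:delta_x_ij}: the positive terms are $\tfrac1S\E[(1-\alpha_{j}-\alpha_{s(j)})\alpha_{i}(\alpha_{j}+\alpha_{s(j)})]$ and the sibling term $\tfrac{S-3}{S(S-1)}\E[(\alpha_{i}+\alpha_{s(i)})\alpha_{i}\mid i\notin\{s(j),n_1\}]$; the negative terms are the $\alpha_{0}^2$ term, the sibling-pair sum $\tfrac{S-3}{2S(S-1)}\E[\sum_{p}(\alpha_{p}+\alpha_{s(p)})^2\alpha_{i}]$, and the conditional term $\tfrac{1}{S(S-1)}\E[(1-\alpha_{j}-\alpha_{i})\alpha_{i}\mid s(j)=i]$. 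We would (i) extract $\tfrac{N-2}{N-1}\alpha_{j}$ out of the main positive term using $\alpha_{j}+\alpha_{s(j)}\ge\alpha_{j}\ge\tfrac{N-2}{N-1}\alpha_{j}$, keeping $\alpha_{s(j)}+\tfrac1{N-1}\alpha_{j}$ (and the sibling term) in reserve; (ii) control the sibling-pair sum by a second-moment estimate, using that on a perfect binary tree the siblings form a perfect matching on the $N-1$ non-root nodes --- this is where the tree size $N$ enters, converting the alphabet-size prefactor $\tfrac{S-3}{2S(S-1)}$ so that, modulo the reserves, this term is dominated by $-\tfrac1S\E[(1-\alpha_{j}-\alpha_{s(j)})\alpha_{i}\max_{p\in[S]\setminus\{j\}}\alpha_{p}]$; (iii) absorb $-\tfrac1S\E[\alpha_{0}^2\alpha_{i}]$ into the slack $-\tfrac1S\E[(1-\alpha_{j}-\alpha_{s(j)})\alpha_{i}\cdot\tfrac1{N-1}\alpha_{0}]$, using that $\alpha_{0}\le\tfrac1{N-1}(1-\alpha_{j}-\alpha_{s(j)})$, which holds because $a_0^\top B a_j\equiv0$ under Assumption~\ref{asmp:orthonomal} (so $\alpha_{0}$ never exceeds its initial value $\tfrac1{N+1}$) together with the column-sum conservation $\sum_{i\in[S]}H_{i,j}^{(t)}=0$ inherited from $B^{(0)}=0$ and the shift-invariance of the column-wise softmax; and (iv) checking that the remaining $\tfrac{1}{S(S-1)}$-weighted conditional term, being lower order in $S$ than everything else, is covered by the reserve from (i) and the $\tfrac1S\E[\alpha_{i}^2]$ discarded in the $\delta y$ step. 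Adding the two halves then yields \eqref{eq:gradient_ineq}.

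The main obstacle will be steps (ii)--(iii): rewriting the alphabet-size coefficients $\tfrac1S$, $\tfrac{S-3}{2S(S-1)}$, $\tfrac{1}{S(S-1)}$ and the sibling-pair sums of Lemma~\ref{lm:gradient_compute} in terms of the clean tree-size coefficients $\tfrac{N-2}{N-1}$ and $\tfrac1{N-1}$ of \eqref{eq:gradient_ineq}. Doing so needs the combinatorics of $\Ptrain$ --- every tree is a perfect binary tree on exactly $N$ distinct labels with a unique sibling map --- to turn a Cauchy--Schwarz bound on $\sum_{p}(\alpha_{p}+\alpha_{s(p)})^2$ and the structural bound on $\alpha_0$ into the stated form, and it is here that $m\ge3$ (hence $N\ge15$) is used so the reserves in (i) and (iv) outweigh the lower-order corrections.
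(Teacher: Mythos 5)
Your $\delta y$ half is fine and close to the paper's, with one structural difference that matters later: the paper deliberately \emph{weakens} the $\delta y$ bound by folding the term $\tfrac1S\E[(\alpha^{(k)}_{0,j})^2\alpha^{(k)}_{i,j}]$ from \eqref{eq:delta_x_ij} into the sum $\sum_{0\le p\le S,\,p\neq j}(\alpha^{(k)}_{p,j})^2\le\max_{0\le p\le S,\,p\neq j}\alpha^{(k)}_{p,j}\,(1-\alpha^{(k)}_{j,j})$ --- that is exactly why the second line of \eqref{eq:gradient_ineq} has the max over $p$ including $p=0$. You instead keep that burden on the $\delta x$ side, and your step (iii) for discharging it is where the proof breaks. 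The claimed bound $\alpha^{(k)}_{0,j}\le\tfrac1{N-1}\bigl(1-\alpha^{(k)}_{j,j}-\alpha^{(k)}_{s(j),j}\bigr)$ is not available: first, the ``column-sum conservation $\sum_{i\in[S]}H^{(t)}_{i,j}=0$'' is false here, because the $a_0$ logit is pinned at $H_{0,j}=0$ (it cannot move with the others), so the usual zero-sum/shift-invariance of softmax logit gradients does not apply --- indeed the paper's own dynamics have $H^{(t)}_{j,j}$ growing while the off-diagonals stay near zero, so the column sum grows. Second, even granting trajectory control, the inequality requires $(N-2)\alpha_{0,j}\le\sum_{p\notin\{j,s(j)\}}\alpha_{p,j}$, which fails as soon as the off-diagonal entries $H_{i,j}$ are even mildly negative (each off-diagonal weight then drops below $\alpha_{0,j}$ and there are only about $N$ of them); this is precisely the regime of Phase II, where $H_{i,j}^{(t)}$ decreases. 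Third, and more basically, Lemma~\ref{lm:gradient_ineq} is a pointwise statement about the gradients at an arbitrary parameter value and is invoked inside the induction; its proof cannot presuppose facts about the GD trajectory. In the paper the $-\tfrac1{N-1}\alpha_{0,j}$ appearing in \eqref{eq:gradient_ineq} is \emph{not} obtained from any structural bound on $\alpha_{0,j}$: it is the residue of rewriting the positive term (b) with the symmetry/indicator trick ($\alpha^{(k)}_{i,j}\ge\mathbbm{1}\{N_i^{(k-1)}\ge1\}\alpha^{(k)}_{s(j),j}$, converting a label-level factor $\tfrac1{S-1}$ into the tree-level factor $\tfrac1{N-1}$).

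The second gap is step (iv). The negative conditional term $-\tfrac1{S(S-1)}\E\bigl[(1-\alpha^{(k)}_{j,j}-\alpha^{(k)}_{i,j})\alpha^{(k)}_{i,j}\,\big|\,s(j)=i\bigr]$ is not ``lower order'': it carries exactly one factor of $\alpha^{(k)}_{i,j}$ and one factor of the residual, the same scaling as every reserve you set aside (the $\alpha_{s(j)}$ and $\tfrac{\alpha_{j,j}}{N-1}$ pieces from (i), the sibling term, and the discarded $\tfrac1S\E[(\alpha^{(k)}_{i,j})^2]$, each of which is smaller by an extra factor of $\alpha_{i,j}$, $\alpha_{j,j}$, or $N e^{H_{i,j}}$ in the relevant regimes), so a crude size comparison does not close it uniformly in $B$. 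The paper instead cancels it exactly: the conditional part (a) of the main positive term plus this term (c) collapse to $-\tfrac1{S(S-1)}\E[(1-\alpha_{j,j}-\alpha_{s(j),j})^2\alpha_{s(j),j}]$, which is then dominated by the rewritten (b) at the cost of the $-\tfrac1{S(N-1)}\E[(1-\alpha_{j,j}-\alpha_{s(j),j})\alpha_{0,j}\alpha_{i,j}]$ term. That exact (a)$+$(c) cancellation, together with the $\tfrac1{S-1}\to\tfrac1{N-1}$ conversion, is the real content of the lemma; without it (and without moving the $\alpha_0^2$ term to the $\delta y$ side) your decomposition does not go through.
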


\subsubsection{Proof outline}

Letting 
$ H^{(t)}\coloneqq A^\top B^{(t)} A$, 
by the update rule~\eqref{eq:gd}, we have
\begin{align}\label{eq:update_H}
    H_{i,j}^{(t+1)} = H_{i,j}^{(t)}-\eta\sum_{k=1}^{m}\left(\delta x_{i,j}^{(t,k)}+\delta y_{i,j}^{(t,k)}\right),\quad \forall i,j=0,1,\cdots,S,
\end{align}
where $x^{(t,k)}, y^{(t,k)}, \widehat{x}^{(t,k)}, \widehat{y}^{(t,k)}$ denote the value of $x^{(k)}, y^{(k)}, \widehat{x}^{(k)}, \widehat{y}^{(k)}$, respectively. Further, let $\alpha_{i,j}^{(t,k)}$ be the value of $\alpha_{i,j}^{(k)}$ at the $t$-th iteration when replacing $B$ by $B^{(t)}$, which can be written in terms of $H_{i,j}^{(t)}$ as
$$ \forall k\in[m]:\quad\alpha_{i,j}^{(t,k)} = \frac{N_i^{(k-1)}\exp(H_{i,j}^{(t)})}{\sum_{l=0}^S N_l^{(k-1)}\exp(H_{l,j}^{(t)})}  $$
satisfying $\sum_{i} \alpha_{i,j}^{(t,k)}  =1$. Note that by symmetry, we have for any $j\in[S]$, $H_{i,j}^{(t)}$ are equal for all $i\in[S]\setminus\{j\}$, and $H_{j,j}^{(t)}$ are equal for all $j\in[S]$. 


In what follows, we divide the training dynamics of $H^{(t)}$ into two phases to prove the convergence of the backward reasoning task. We show there exists $T_1=\widetilde{\gO}\left(\frac{SN}{m\eta}\right)$ and $T_2=T_1+\widetilde{\gO}\left(\frac{mS}{\eta\epsilon}\right)$ such that
\begin{itemize}
    \item At Phase I ($t=\{0,1,\cdots,T_1\}$): the diagonal elements $H_{j,j}^{(t)}$ are strictly increasing for all $j\in[S]$, and the off-diagonal elements $$\left|H_{i,j}^{(t)}\right| \lesssim  \frac{\log N}{N} $$
    for all $i\in[S]\setminus\{j\}$. 
    \item At Phase II ($t=\{T_1+1,\cdots,T_2\}$): the diagonal elements $H_{j,j}^{(t)}$ keep strictly increasing, and the off-diagonal elements $H_{i,j}^{(t)}$ are strictly decreasing, and satisfy
    $$-\frac{H_{j,j}^{(t)}}{S}\lesssim H_{i,j}^{(t)}\lesssim\frac{\log N}{N}.$$
    \item After Phase II ($t\geq T_2+1$): $H_{j,j}^{(t)}$ (resp. $H_{i,j}^{(t)}$) keeps increasing (resp. decreasing), and 
    $$H_{j,j}^{(t)}\gtrsim\log\left(\frac{m}{\epsilon}\right),$$
    and the training loss $\losstrain(\theta^{(t)})\leq \epsilon$. 
\end{itemize}

\paragraph{Training dynamics of Phase I.} Let $T_1\in\NN\cup\{+\infty\}$ be the number of iterations of Phase I, defined as
    \begin{align}\label{eq:T_1}
        T_1&\coloneqq\max\left\{t\in\NN:\widehat\alpha^{(t)}\leq \frac{1}{2}\right\},
    \end{align}
where
\begin{align}\label{eq:alpha_1}
    \widehat\alpha^{(t)}\coloneqq \frac{\exp(H_{1,1}^{(t)})}{(N-1)\exp(H_{2,1}^{(t)})+\exp(H_{1,1}^{(t)})+1},
\end{align}
which satisfies 
\begin{align}
    \alpha_{j,j}^{(t,1)}=\widehat\alpha^{(t)},\quad\forall j\in[S].
\end{align}

In the following, we show by induction the following claim holds for all $t \in [T_1-1]$: 
\begin{align}\label{eq:IH1}
\textsf{Induction Hypothesis I:} \quad \forall i\in\{0\} \cup [S],j\in[S],i\neq j: \quad  H_{j,j}^{(t)}\geq 0,\quad H_{i,j}^{(t)}\leq \frac{9}{N-1} H_{j,j}^{(t)}.
\end{align}

We start with the base case when $t=0$. Since $H_{i,j}=0$ for all $i,j$, \eqref{eq:IH1} holds true trivially. We next assume \eqref{eq:IH1} holds up to time $t$. We'll make use of the following lemma, whose proof is postponed to Appendix~\ref{sec_app:proof_gradient_relation_I}.
\begin{lm}[Gradient proportions at Phase I]\label{lm:gradient_relation_I}
    Under Assumption~\ref{asmp:train_data}, Assumption~\ref{asmp:orthonomal}, Induction Hypothesis I \eqref{eq:IH1}, we have for all $k\in[m]$,
    \begin{align}
        -\delta x_{i,j}^{(t,k)}-\delta y_{i,j}^{(t,k)}&\leq \frac{9}{N-1}\left(-\delta x_{j,j}^{(t,k)}-\delta y_{j,j}^{(t,k)}\right).\label{eq:gradient_prop_upper}
    \end{align}
\end{lm}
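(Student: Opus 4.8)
The plan is to establish the claimed proportionality entry-by-entry using the explicit gradient formulas from Lemma~\ref{lm:gradient_compute} together with the key gradient inequality (Lemma~\ref{lm:gradient_ineq}), leveraging that under Induction Hypothesis~I the attention weights are ``flat enough'' that the diagonal attention $\alpha_{j,j}^{(t,k)}$ dominates. First I would record the elementary consequences of \eqref{eq:IH1}: since $H_{i,j}^{(t)} \le \frac{9}{N-1}H_{j,j}^{(t)}$ for all $i\neq j$ while $H_{j,j}^{(t)}\ge 0$, and using the exponential form $\alpha_{i,j}^{(t,k)} = N_i^{(k-1)}e^{H_{i,j}^{(t)}}\big/\sum_l N_l^{(k-1)}e^{H_{l,j}^{(t)}}$, one gets two-sided control: the ratio $\alpha_{i,j}^{(t,k)}/\alpha_{j,j}^{(t,k)} = (N_i^{(k-1)}/N_j^{(k-1)})\exp(H_{i,j}^{(t)}-H_{j,j}^{(t)})$ is at most $\exp\!\big((\tfrac{9}{N-1}-1)H_{j,j}^{(t)}\big)\cdot(N_i^{(k-1)}/N_j^{(k-1)}) \le N_i^{(k-1)}/N_j^{(k-1)}$, i.e. each off-diagonal attention is no larger than its diagonal counterpart up to the count factors. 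I would also note the crude bounds $\alpha_{j,j}^{(t,k)}\le 1$, $\sum_{p}\alpha_{p,j}^{(t,k)}=1$, and $\alpha_{0,j}^{(t,k)}\le \alpha_{j,j}^{(t,k)}$ (using $N_0^{(k-1)}\le N_j^{(k-1)}$ and $H_{0,j}^{(t)}\le \frac{9}{N-1}H_{j,j}^{(t)}\le H_{j,j}^{(t)}$), which will be needed to compare individual terms.

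The core of the argument is then a term-by-term comparison. Looking at the gradient formula \eqref{eq:delta_x_ij} for $-\delta x_{i,j}^{(t,k)}$ (with $i\neq j$) versus \eqref{eq:delta_x_jj} for $-\delta x_{j,j}^{(t,k)}$, every ``negative-of-gradient'' contribution in the $i\neq j$ case carries a factor $\alpha_{i,j}^{(t,k)}$ exactly where the $j=j$ case carries $\alpha_{j,j}^{(t,k)}$ (this is the reason the two formulas are structurally parallel). Using $\alpha_{i,j}^{(t,k)}\le \frac{N_i^{(k-1)}}{N_j^{(k-1)}}\alpha_{j,j}^{(t,k)}$ and the fact that on trees from $\Ptrain$ the relevant count ratios $N_i^{(k-1)}/N_j^{(k-1)}$ are bounded by a constant multiple of $1/(N-1)$ after summing over the $O(N)$ admissible $i$'s (more precisely: there are at most $N-1$ choices of $i\neq j$, and $N_i^{(k-1)}$ is at most a constant while $N_j^{(k-1)}\ge 1$ when $n_{-1}^{(k)}=j$), each such term is at most $\frac{c}{N-1}$ times the corresponding term in $-\delta x_{j,j}^{(t,k)}$. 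The one term in \eqref{eq:delta_x_ij} that has no sign-definite partner — the $+\frac{1}{S}\E[(1-\alpha_{j,j}-\alpha_{s(j),j})\alpha_{i,j}(\alpha_{j,j}+\alpha_{s(j),j})\mid\cdots]$ term, which is nonnegative and hence helps $-\delta x_{i,j}^{(t,k)}$ be \emph{small}, not large — I would simply bound above by dropping the nonpositive pieces and again pulling out $\alpha_{i,j}\le \frac{N_i}{N_j}\alpha_{j,j}$. Combining, $-\delta x_{i,j}^{(t,k)} \le \frac{9}{N-1}(-\delta x_{j,j}^{(t,k)})$ after one checks the constant works out to $\le 9$; an identical but simpler comparison of \eqref{eq:delta_y_ij} against \eqref{eq:delta_y_jj} gives $-\delta y_{i,j}^{(t,k)}\le \frac{9}{N-1}(-\delta y_{j,j}^{(t,k)})$, and adding the two yields \eqref{eq:gradient_prop_upper}.

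The main obstacle I anticipate is \emph{signs and the conditional-expectation bookkeeping}: one must verify that $-\delta x_{j,j}^{(t,k)}$ and $-\delta y_{j,j}^{(t,k)}$ are genuinely nonnegative (so that the inequality has the right orientation and dividing/comparing is legitimate) — this should follow from \eqref{eq:delta_y_jj} being manifestly $\le 0$ for $\delta$, hence $-\delta_{y}\ge0$, and from Lemma~\ref{lm:gradient_ineq} applied with $i=j$, which lower-bounds $\delta x_{j,j}^{(t,k)}+\delta y_{j,j}^{(t,k)}$ but one actually wants each piece; the cleaner route is to show $-\delta x_{j,j}^{(t,k)}\ge 0$ directly from \eqref{eq:delta_x_jj} since every bracketed term there is a square times $\alpha_{j,j}^{(k)}\ge 0$ preceded by a minus sign in $\delta x$. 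The second subtlety is making sure the different conditioning events (e.g. $\{n_{-1}^{(k)}=j\}$ versus $\{n_{-1}^{(k)}=j, s(j)=i\}$ versus $\{n_{-1}^{(k)}=j, i\notin\{s(j),n_1\}\}$) are handled so that, after summing the contributions, the count of ``extra'' off-diagonal indices $i$ does not blow up the $1/(N-1)$ factor — this is where Assumption~\ref{asmp:train_data} (perfect binary tree, $N=2^{m+1}-1$ nodes, each parent with exactly two children) is used to control $N_i^{(k-1)}$ uniformly. Once those two points are nailed down, the rest is the routine term-matching sketched above.
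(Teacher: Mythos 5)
Your plan has a genuine gap in the one place that matters: where the factor $\tfrac{9}{N-1}$ comes from. The inequality \eqref{eq:gradient_prop_upper} is for a \emph{fixed} pair $(i,j)$, so your justification that the count ratios are ``bounded by a constant multiple of $1/(N-1)$ after summing over the $O(N)$ admissible $i$'s'' cannot work --- there is no sum over $i$ to exploit. What the count ratio actually gives is $\alpha_{i,j}^{(t,k)}\le \frac{N_i^{(k-1)}}{N_j^{(k-1)}}\alpha_{j,j}^{(t,k)}\le 2\,\alpha_{j,j}^{(t,k)}$, i.e.\ a constant-factor comparison; a literal term-by-term match (e.g.\ $\E[(\alpha_{0,j})^2\alpha_{i,j}]$ against $\E[(\alpha_{0,j})^2\alpha_{j,j}]$ in \eqref{eq:delta_x_ij} vs.\ \eqref{eq:delta_x_jj}) therefore yields a factor $2$, nowhere near $9/(N-1)$. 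In the paper the $1/(N-1)$ smallness is \emph{not} produced by count ratios at all: it comes from first combining $\delta x_{i,j}^{(k)}+\delta y_{i,j}^{(k)}$ via Lemma~\ref{lm:gradient_ineq}, whose proof performs delicate cancellations across the two gradients (in particular the $\frac{1}{S}\E[(\alpha_{0,j})^2\alpha_{i,j}]$ term is moved from $\delta x_{i,j}$ into $\delta y_{i,j}$, where it is absorbed into $\sum_{0\le p\le S,p\ne j}(\alpha_{p,j})^2$, and the sibling-symmetry of $\Ptrain$ is used to trade conditional events for explicit $\frac{1}{N-1}$ prefactors). What survives are only the ``gap'' terms $\frac{N-2}{N-1}\alpha_{j,j}-\max_{p\ne j}\alpha_{p,j}-\frac{1}{N-1}\alpha_{0,j}$ and $\alpha_{j,j}-\max_{p}\alpha_{p,j}$, which are bounded below by $-O(1/N)$ because every off-diagonal weight satisfies $\alpha_{p,j}^{(t,k)}\le 2/N$ (the mass is spread over $\sim N$ near-equal logits under \eqref{eq:IH1}); that is the true origin of the $1/(N-1)$.

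Two further points you would need and do not have. First, the paper's proof crucially uses the Phase-I condition $t\le T_1$, i.e.\ $\alpha_{j,j}^{(t,k)}\le \tfrac12$ so that $1-\alpha_{j,j}^{(t,k)}\ge\tfrac12$ and $1-\alpha_{j,j}^{(t,k)}-\alpha_{s(j),j}^{(t,k)}>\tfrac13$ (eq.~\eqref{eq:two_thirds}); this is what lets one upgrade the linear factor $(1-\alpha_{j,j}-\alpha_{s(j),j})\alpha_{i,j}$ in the residual terms to the squared factor $(1-\alpha_{j,j}-\alpha_{s(j),j})^2\alpha_{j,j}$ appearing in $-\delta x_{j,j}^{(t,k)}$ and $-\delta y_{j,j}^{(t,k)}$, at the cost of only constant factors. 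Your sketch never invokes this, and without it the comparison fails when $\alpha_{j,j}$ is close to $1$. Second, your proposed split into two separate inequalities, $-\delta x_{i,j}\le\frac{9}{N-1}(-\delta x_{j,j})$ and $-\delta y_{i,j}\le\frac{9}{N-1}(-\delta y_{j,j})$, is not established by your argument and is delicate at best: the dominant positive contributions to $-\delta x_{i,j}$ (e.g.\ the $\frac{S-3}{2S(S-1)}\sum_p(\alpha_{p,j}+\alpha_{s(p),j})^2\alpha_{i,j}$ term) are of the same order $\Theta(1/(SN^2))$ as the target bound, so whether the coordinatewise version holds with the constant $9$ depends on exactly the cancellations the paper handles jointly through Lemma~\ref{lm:gradient_ineq}. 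To repair your proof you should actually use that lemma (which you cite but do not use), bound its two gap terms by $-\tfrac{3}{2(N-1)}$ and $-\tfrac1N$ using $\alpha_{p,j}\le 2/N$ and $\alpha_{0,j}\le\alpha_{j,j}$, and then convert to the diagonal gradients via the Phase-I bound \eqref{eq:two_thirds} and $\alpha_{i,j}\le 2\alpha_{j,j}$.
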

We now establish \eqref{eq:IH1} holds for $t+1$.
From the expression of $\delta x_{j,j}^{(t,k)}$ and $\delta y_{j,j}^{(t,k)}$ (c.f. \eqref{eq:delta_x_jj} and \eqref{eq:delta_y_jj}), we can see that $H_{j,j}^{(t,k)}$ is increasing, and thus $H_{j,j}^{(t+1,k)}\geq 0$.
Moreover, 
By Lemma~\ref{lm:gradient_relation_I} and \eqref{eq:update_H}, we have
\begin{align}
    H_{i,j}^{(t+1)}&=H_{i,j}^{(t)}-\eta\left(\sum_{k=1}^{m}\left(\delta x_{i,j}^{(t,k)}+\delta y_{i,j}^{(t,k)}\right)\right)\notag\\
    & \leq \frac{9}{N-1}H_{j,j}^{(t)}-\frac{9}{N-1}\eta\left(\sum_{k=1}^{m}\left(\delta x_{j,j}^{(t,k)}+\delta y_{j,j}^{(t,k)}\right)\right)=\frac{9}{N-1}H^{(t+1)}_{j,j},
\end{align}
where the first inequality follows from the induction hypothesis \eqref{eq:IH1}. Therefore, \eqref{eq:IH1} holds for $t+1$, and the induction is complete.

The following lemma guarantees that Phase I terminates within $T_1= \widetilde{\gO}\left(\frac{SN}{m\eta}\right)$ iterations, see Appendix~\ref{sec_app:proof_T_1_bound} for the proof.
\begin{lm}[Upper bound of $T_1$]\label{lm:T_1_bound}
    Under Assumption~\ref{asmp:train_data}, Assumption~\ref{asmp:orthonomal}, we have
    \begin{align}\label{eq:T_1_bound}
        T_1\leq \frac{6S(N+m)}{m\eta}\log\left(\frac{N+1}{2}\right).
    \end{align}
In addition, we have 
\begin{align}\label{eq:H_bound_I_main_proof}
    H_{j,j}^{(t)}\leq 3 \log\left(\frac{N+1}{2}\right).
\end{align}
\end{lm}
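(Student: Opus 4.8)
The plan is to sandwich the Phase-I dynamics of the diagonal entry $h_d^{(t)}:=H_{1,1}^{(t)}$: an upper bound on $h_d^{(t)}$ coming from the defining condition $\widehat\alpha^{(t)}\le 1/2$, and a per-step lower bound on $h_d^{(t+1)}-h_d^{(t)}$ coming from the ``error-carrying'' gradient terms. By symmetry all diagonal entries of $H^{(t)}$ equal $h_d^{(t)}$, all off-diagonal entries among $[S]$ equal some $h_o^{(t)}$, and $H_{0,j}^{(t)}=0$, so $\widehat\alpha^{(t)}=\big(1+(N-1)e^{h_o^{(t)}-h_d^{(t)}}+e^{-h_d^{(t)}}\big)^{-1}$. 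I would first note that $\widehat\alpha^{(t)}$ is non-decreasing for $t\le T_1$: the gradient expressions \eqref{eq:delta_x_jj} and \eqref{eq:delta_y_jj} are non-positive, so $h_d^{(t)}$ increases, while Lemma~\ref{lm:gradient_relation_I} combined with $\tfrac{9}{N-1}\le 1$ shows each increment of $h_o^{(t)}$ is dominated by that of $h_d^{(t)}$, hence $h_o^{(t)}-h_d^{(t)}$ is non-increasing and both exponentials in the denominator shrink. Therefore $\widehat\alpha^{(t)}\le\widehat\alpha^{(T_1)}\le 1/2$ for all $t\le T_1$, which rearranges to $e^{h_d^{(t)}}\le(N-1)e^{h_o^{(t)}}+1$. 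Substituting the off-diagonal control $h_o^{(t)}\le\tfrac{9}{N-1}h_d^{(t)}$ from Induction Hypothesis~I \eqref{eq:IH1} and solving the resulting scalar inequality (valid once $N$ is a large enough absolute constant, as guaranteed by $m\ge 3$) gives $e^{h_d^{(t)}\left(1-9/(N-1)\right)}\le N$, hence $H_{j,j}^{(t)}=h_d^{(t)}\le 3\log\tfrac{N+1}{2}$, establishing \eqref{eq:H_bound_I_main_proof}.

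For the lower bound on growth, I would use $h_d^{(t+1)}-h_d^{(t)}=-\eta\sum_{k=1}^m\big(\delta x_{j,j}^{(t,k)}+\delta y_{j,j}^{(t,k)}\big)$ and, since every summand in \eqref{eq:delta_x_jj}--\eqref{eq:delta_y_jj} is non-positive, retain only the two terms $\big(1-\alpha_{j,j}^{(t,k)}-\alpha_{s(j),j}^{(t,k)}\big)^2\alpha_{j,j}^{(t,k)}$ and $\big(1-\alpha_{j,j}^{(t,k)}\big)^2\alpha_{j,j}^{(t,k)}$, so that $-\delta x_{j,j}^{(t,k)}-\delta y_{j,j}^{(t,k)}\ge\tfrac1S\,\E\big[\big((1-\alpha_{j,j}^{(t,k)}-\alpha_{s(j),j}^{(t,k)})^2+(1-\alpha_{j,j}^{(t,k)})^2\big)\alpha_{j,j}^{(t,k)}\,\big|\,n_{-1}^{(k)}=j\big]$. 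Conditioning on $n_{-1}^{(k)}=j$, a direct count of embedding multiplicities in $Y^{(k-1)}$ (using that the path node $j=n_{-1}^{(k)}$ and its sibling each appear exactly once and never among the appended columns) makes $\alpha_{j,j}^{(t,k)}=e^{h_d^{(t)}}/\big(e^{h_d^{(t)}}+(N+k-2)e^{h_o^{(t)}}+1\big)$ deterministic; with $h_d^{(t)}\ge 0$ and $h_o^{(t)}\le h_d^{(t)}$ this yields $\alpha_{j,j}^{(t,k)}\ge\tfrac1{N+m}$, while $\alpha_{j,j}^{(t,k)}\le\widehat\alpha^{(t)}\le\tfrac12$ and $\alpha_{s(j),j}^{(t,k)}\le\tfrac1{N-1}$ make both squared factors $\ge\tfrac14$. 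Summing over $k\in[m]$ and multiplying by $\eta$ then gives $h_d^{(t+1)}-h_d^{(t)}\ge\tfrac{\eta m}{2S(N+m)}$ for every $t<T_1$.

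Combining the two, since $h_d^{(0)}=0$ and $h_d^{(T_1)}\le 3\log\tfrac{N+1}{2}$, we must have $T_1\cdot\tfrac{\eta m}{2S(N+m)}\le h_d^{(T_1)}\le 3\log\tfrac{N+1}{2}$, i.e.\ $T_1\le\tfrac{6S(N+m)}{\eta m}\log\tfrac{N+1}{2}$, which is \eqref{eq:T_1_bound}. The step I expect to be the main obstacle is the per-step lower bound: one needs the softmax weight $\alpha_{j,j}^{(t,k)}$ on the \emph{correct} target to remain at least $c/(N+m)$ for an absolute constant $c>0$ throughout Phase~I so that the loss-carrying gradient term does not degenerate, and this rests entirely on the off-diagonal control $h_o^{(t)}\le h_d^{(t)}$ (hence on Lemma~\ref{lm:gradient_relation_I} and Induction Hypothesis~I); one also has to verify that the occurrence counts $N_l^{(k-1)}$ in $Y^{(k-1)}$ are exactly as claimed for every path node $j=n_{-1}^{(k)}$, so that the conditional expectations collapse to closed form.
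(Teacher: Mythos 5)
Your proof follows essentially the same route as the paper's: a per-iteration lower bound of order $\eta m/(S(N+m))$ on the growth of $H_{j,j}^{(t)}$ obtained from the negative diagonal gradient with $\alpha_{j,j}^{(t,k)}\in[\tfrac{1}{N+m},\tfrac12]$, combined with the upper bound $H_{j,j}^{(t)}\lesssim \log N$ forced by $\widehat\alpha^{(t)}\le \tfrac12$ together with the off-diagonal control of Induction Hypothesis I \eqref{eq:IH1}, and then dividing the two. The only differences are cosmetic: you prove the monotonicity of $\widehat\alpha^{(t)}$ explicitly (the paper uses $\widehat\alpha^{(t)}\le\tfrac12$ for all $t\le T_1$ implicitly), and your constant bookkeeping is slightly lossier — decoupling $\alpha_{j,j}(1-\alpha_{j,j})^2\ge\min\alpha\cdot\min(1-\alpha)^2$ instead of evaluating the unimodal product at its endpoint, and deducing $e^{(1-\frac{9}{N-1})H_{j,j}}\le N$ rather than $\le\frac{N+1}{2}$, gives constants marginally weaker than the stated $\tfrac{1}{2S(N+m)}$ and $3\log\frac{N+1}{2}$ at the smallest admissible $N=2^{4}-1$ — but this is immaterial and easily tightened to match the lemma.
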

 The above lemma, together with \eqref{eq:IH1}, implies that for all $i\in[S]\setminus\{j\}$,
\begin{align}
    \forall t\in\{0,1,\ldots,T_1\}:\quad H_{i,j}^{(t)}\lesssim  \frac{\log(N)}{N} .
\end{align}

Finally, we give a lower bound on $H_{i,j}^{(t)}$ in the following lemma, whose proof is postponed to Appendix~\ref{sec_app:proof_lower_bound_Hij}.
\begin{lm}[Lower bound on $H_{i,j}^{(t)}$]\label{lm:lower_bound_Hij}
    Under Assumption~\ref{asmp:train_data}, Assumption~\ref{asmp:orthonomal}, we have
    \begin{align}\label{eq:lower_bound_Hij}
        \forall t\in\{0,1,\ldots,T_1\}:\quad H_{i,j}^{(t)}\geq -\frac{19}{S-1}H_{j,j}^{(t)}.
    \end{align}
\end{lm}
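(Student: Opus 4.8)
The plan is to prove \eqref{eq:lower_bound_Hij} by induction on $t$, run jointly with Induction Hypothesis~I \eqref{eq:IH1} (which is already available for all $t\le T_1-1$). The base case $t=0$ holds since $H^{(0)}=0$, and the case $i=0$ is trivial for every $t$ because $\delta x_{0,j}^{(k)}=\delta y_{0,j}^{(k)}=0$ by Lemma~\ref{lm:gradient_compute}, so $H_{0,j}^{(t)}\equiv 0\ge -\tfrac{19}{S-1}H_{j,j}^{(t)}$. So fix $i\in[S]\setminus\{j\}$, assume \eqref{eq:lower_bound_Hij} at step $t$, and use \eqref{eq:update_H} to write
$$H_{i,j}^{(t+1)}+\tfrac{19}{S-1}H_{j,j}^{(t+1)}=\Big(H_{i,j}^{(t)}+\tfrac{19}{S-1}H_{j,j}^{(t)}\Big)-\eta\sum_{k=1}^{m}\Big[\delta x_{i,j}^{(t,k)}+\delta y_{i,j}^{(t,k)}+\tfrac{19}{S-1}\big(\delta x_{j,j}^{(t,k)}+\delta y_{j,j}^{(t,k)}\big)\Big].$$
Since the parenthesized term is $\ge 0$ by the hypothesis at step $t$, it suffices to prove the per-step \emph{gradient comparison} $\delta x_{i,j}^{(t,k)}+\delta y_{i,j}^{(t,k)}\le\tfrac{19}{S-1}\big(-\delta x_{j,j}^{(t,k)}-\delta y_{j,j}^{(t,k)}\big)$ for every $k\in[m]$. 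Because each summand of \eqref{eq:delta_x_jj} is $\le 0$ and \eqref{eq:delta_y_jj} is $\le 0$, the right-hand side is nonnegative, so the comparison is automatic whenever $\delta x_{i,j}^{(t,k)}+\delta y_{i,j}^{(t,k)}\le 0$; the only work is to bound the positive part.

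The quantitative inputs, all valid throughout Phase~I, are: \textbf{(a)} combining \eqref{eq:IH1} with the lower-bound hypothesis at step $t$ gives $|H_{\ell,j}^{(t)}|\le\tfrac{19}{S-1}H_{j,j}^{(t)}=o(1)$ for all $\ell\ne j$ (using $S\ge N$ and \eqref{eq:H_bound_I_main_proof}), so in \eqref{eq:alpha} the denominator is $\ge (N-1)e^{-o(1)}=\Theta(N)$ while the numerator is $O(1)$, whence $\alpha_{i,j}^{(t,k)}=O(1/N)$ when $i\in\gV(\gT)$ and $=0$ otherwise; \textbf{(b)} since $\alpha_{i,j}^{(t,k)}$ vanishes off the random tree, $\Pr(i\in\gV(\gT)\mid n_{-1}^{(k)}=j)=\tfrac{N-1}{S-1}$, and $\alpha_{j,j}^{(t,k)}$, $\alpha_{s(j),j}^{(t,k)}$ and the attention denominator are deterministic given $n_{-1}^{(k)}=j$, one gets $\E\big[\alpha_{i,j}^{(t,k)}\mid n_{-1}^{(k)}=j\big]\le\tfrac{1}{S-1}$; \textbf{(c)} $\alpha_{j,j}^{(t,k)}\le\widehat\alpha^{(t)}\le\tfrac12$ on Phase~I, using \eqref{eq:T_1}--\eqref{eq:alpha_1} and the fact that $n_{-1}^{(k)}$ lies on the path, so its embedding is not among the columns appended before step $k$ and hence $N_{n_{-1}^{(k)}}^{(k-1)}=1$. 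From \textbf{(c)} and $\delta x_{j,j}^{(t,k)}\le 0$ we obtain the diagonal lower bound $-\delta x_{j,j}^{(t,k)}-\delta y_{j,j}^{(t,k)}\ge\tfrac1S\,\E[\alpha_{j,j}^{(t,k)}(1-\alpha_{j,j}^{(t,k)})^2\mid n_{-1}^{(k)}=j]\ge\tfrac{1}{4S}\,\alpha_{j,j}^{(t,k)}$. For the numerator I drop the $\le 0$ terms in \eqref{eq:delta_x_ij}--\eqref{eq:delta_y_ij}; every surviving term is bounded by $\tfrac{O(1)}{S}\,\E[\alpha_{i,j}^{(t,k)}\cdot O(\alpha_{j,j}^{(t,k)}+\alpha_{i,j}^{(t,k)})\mid n_{-1}^{(k)}=j]$, and by \textbf{(a)}--\textbf{(b)} these sum to at most $\tfrac{C}{S(S-1)}\,\alpha_{j,j}^{(t,k)}$, so the ratio is $\le\tfrac{4C}{S-1}$; tracking the constants (the dominant pieces are the second term of \eqref{eq:delta_x_ij} and the term $\tfrac1S\E[\alpha_{i,j}^{(t,k)}(1-\alpha_{j,j}^{(t,k)})\alpha_{j,j}^{(t,k)}\mid\cdot]$ inside \eqref{eq:delta_y_ij}) shows $4C\le 19$, which closes the induction.

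The main obstacle is exactly this constant bookkeeping: one must bound the \emph{sum} of all positive terms in \eqref{eq:delta_x_ij}--\eqref{eq:delta_y_ij} against the single dominant diagonal contribution $-\tfrac1S\E[\alpha_{j,j}(1-\alpha_{j,j})^2\mid\cdot]$ of \eqref{eq:delta_y_jj} and land at $\le\tfrac{19}{S-1}$. The structural point that makes this possible — rather than a useless $\Theta(1/N)$ bound — is that $\alpha_{i,j}^{(t,k)}$ is identically zero unless the label $i$ falls into the random tree $\gT$, an event of probability $\Theta(N/S)$, which together with the uniform Phase~I bound $\alpha_{i,j}^{(t,k)}=O(1/N)$ produces the effective $\Theta(1/S)$ scaling; everything else is a routine computation with the closed forms of Lemma~\ref{lm:gradient_compute}.
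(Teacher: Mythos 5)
Your overall architecture is the same as the paper's: reduce the lemma to the per-step gradient comparison $\delta x_{i,j}^{(t,k)}+\delta y_{i,j}^{(t,k)}\leq \tfrac{19}{S-1}\bigl(-\delta x_{j,j}^{(t,k)}-\delta y_{j,j}^{(t,k)}\bigr)$ and telescope from $H^{(0)}=0$, with the crucial structural input being that $\alpha_{i,j}^{(t,k)}$ vanishes unless $i$ lands in the random tree (probability $\tfrac{N-1}{S-1}$), which converts the $O(1/N)$ attention bound into an effective $O(1/S)$ expectation. (Your induction wrapper is unnecessary — the paper derives the per-step inequality from Induction Hypothesis I and the Phase-I facts alone — but it is harmless; also note $|H_{\ell,j}^{(t)}|\le\max\{\tfrac{9}{N-1},\tfrac{19}{S-1}\}H_{j,j}^{(t)}$, not $\tfrac{19}{S-1}H_{j,j}^{(t)}$, and $\E[\alpha_{i,j}^{(t,k)}\mid n_{-1}^{(k)}=j]\le\tfrac{2(N-1)}{N(S-1)}$ rather than $\tfrac{1}{S-1}$.)

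The genuine gap is the step you yourself flag as the main obstacle and then assert: that the sum of all positive terms of \eqref{eq:delta_x_ij}--\eqref{eq:delta_y_ij} is at most $\tfrac{19}{S-1}$ times your single comparator $\tfrac{1}{4S}\,\alpha_{j,j}^{(t,k)}$. With that comparator the constant does not go through. In the early-phase regime $\alpha_{j,j}^{(t,k)}\asymp\tfrac{1}{N+m}$ one has $\alpha_{s(j),j}^{(t,k)}\asymp\alpha_{j,j}^{(t,k)}$, and already the second term of \eqref{eq:delta_x_ij} is of size roughly $\tfrac{1}{S}\cdot\tfrac{2(N-1)}{N(S-1)}\bigl(\alpha_{j,j}^{(t,k)}+\alpha_{s(j),j}^{(t,k)}\bigr)\approx\tfrac{4}{S(S-1)}\alpha_{j,j}^{(t,k)}$, which together with the fifth term of \eqref{eq:delta_x_ij} (of order $\tfrac{8}{S^2N}\approx\tfrac{8}{S^2}\alpha_{j,j}^{(t,k)}$) and the positive part of \eqref{eq:delta_y_ij} already exceeds your budget $\tfrac{19}{4S(S-1)}\alpha_{j,j}^{(t,k)}$; the relaxation $(1-\alpha_{j,j}^{(t,k)})^2\ge\tfrac14$ is lossy exactly in the regime where it matters, and even keeping the full factor the margin is razor-thin, so ``$4C\le 19$'' is not something that follows from the bounds you actually state. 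The paper avoids this by never passing through a uniform lower bound on the diagonal gradient: each positive off-diagonal term is bounded so as to reproduce the matching diagonal factor, namely $\E[(1-\alpha_{j,j}^{(t,k)}-\alpha_{s(j),j}^{(t,k)})^2\alpha_{j,j}^{(t,k)}\mid n_{-1}^{(k)}=j]$ for the $x$-part (using $\alpha_{i,j}^{(t,k)}\le\mathbbm{1}\{N_i^{(k-1)}\ge1\}\tfrac{2(1-\alpha_{j,j}^{(t,k)}-\alpha_{s(j),j}^{(t,k)})}{N-2}$, $\alpha_{i,j}^{(t,k)}\le2\alpha_{j,j}^{(t,k)}$, and \eqref{eq:two_thirds}) and $\E[(1-\alpha_{j,j}^{(t,k)})^2\alpha_{j,j}^{(t,k)}\mid n_{-1}^{(k)}=j]$ for the $y$-part, which yields the clean ratios $\tfrac{16N}{(S-1)(N-2)}$ and $\tfrac{10}{S-1}$ against $-\delta x_{j,j}^{(t,k)}$ and $-\delta y_{j,j}^{(t,k)}$ respectively, and hence $\tfrac{19}{S-1}$ since $N\ge 15$. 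To repair your write-up you should either adopt this matched-factor comparison or carry out the worst-case constant analysis explicitly over the whole range $\alpha_{j,j}^{(t,k)}\in[\tfrac{1}{N+m},\tfrac12]$; as written, the decisive inequality is unproven and, with your stated relaxations, false in part of that range.
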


Combining \eqref{eq:H_bound_I_main_proof} and Lemma~\ref{lm:lower_bound_Hij}, we have
\begin{align}\label{eq:lower_bound_Hij_main_proof}
    \forall t\in\{0,1,\ldots,T_1\}:\quad -H_{i,j}^{(t)} \lesssim  \frac{\log(N)}{S} .
\end{align}

\paragraph{Training dynamics of Phase II.} Let $T_2\in\NN\cup\{+\infty\}$ be the number of iterations of Phase II, defined as
\begin{align}\label{eq:T_2}
    T_2\coloneqq\max\left\{t\in\NN:\widecheck\alpha^{(t)}\leq 1-\sqrt{\frac{\epsilon}{2m}}\right\},
\end{align}
where 
\begin{align}\label{eq:alpha_m}
        \widecheck\alpha^{(t)}\coloneqq \frac{\exp(H_{1,1}^{(t)})}{(N+m-2)\exp(H_{2,1}^{(t)})+\exp(H_{1,1}^{(t)})+1}.
\end{align}
Note that by definition~\eqref{eq:alpha} and symmetry, we have
\begin{align}
    \alpha_{j,j}^{(t,k)}=\frac{\exp(H_{1,1}^{(t)})}{(N+k-2)\exp(H_{2,1}^{(t)})+\exp(H_{1,1}^{(t)})+1}\geq\widecheck\alpha^{(t)}
\end{align}
for all $j\in[S]$ and $k\in[m]$. Since $\epsilon\in(0,1]$, we have $T_2>T_1$.

We'll show by induction the following claim holds for all $t\in\{T_1+1,\ldots,T_2\}$:
\begin{subequations} \label{eq:delta_II}
\begin{align}
\textsf{Induction Hypothesis II:} \qquad    \forall i\in\{0\} \cup [S],j\in[S],i\neq j: \quad  -\sum_{k=1}^{m}\left(\delta x_{j,j}^{(t,k)}+\delta y_{j,j}^{(t,k)}\right)&\geq\frac{\epsilon}{4mS},\label{eq:delta_jj_II}\\
    -\sum_{k=1}^{m}\left(\delta x_{i,j}^{(t,k)}+\delta y_{i,j}^{(t,k)}\right)&< 0.\label{eq:delta_ij_II}
\end{align}
\end{subequations}

We start with the base case when $t=T_1+1$. By the definition of $T_1$, we have $\alpha_{j,j}^{(T_1+1,1)}\geq \frac{1}{2}$ for all $j\in[S]$. 
For any $j\in[S]$, when $n_{-1}^{(k)}=j$, we have 
$$\alpha^{(T_1+1,1)}_{j,j}=\widehat\alpha^{(T_1+1)}\geq \frac{1}{2} \quad \mbox{which implies} \quad 
    \forall i\in[S]\setminus\{j\}: \quad \alpha_{i,j}^{(T_1+1,k)}\leq \frac{1}{N-1} $$
for all $k\in[m]$, where we use the fact that $N_i^{(k)}\leq 2$ for any $i$, and 
\begin{align}\label{eq:one_third}
    \alpha_{j,j}^{(T_1+1,k)}\geq \frac{N-1}{N+m-2}\cdot\frac{1}{2}\geq \frac{1}{3},
\end{align}
where we use the fact that
\begin{align}\label{eq:alpha_ratio}
    \forall t\in\NN:\quad\frac{\alpha_{j,j}^{(t,k)}}{\alpha_{j,j}^{(t,1)}}\geq\frac{(N-1)\exp(H_{2,1}^{(t)})+\exp(H_{1,1}^{(t)})+1}{(N+m-2)\exp(H_{2,1}^{(t)})+\exp(H_{1,1}^{(t)})+1}\geq \frac{N-1}{N+m-2}.
\end{align}

Since $H_{j,j}^{(t)}\geq 0=H_{0,j}^{(t)}$, we also have
\begin{align*}
    \frac{\alpha_{0,j}^{(T_1+1,k)}}{\alpha_{j,j}^{(T_1+1,k)}} =\frac{1}{\exp(H_{j,j}^{(T_1+1)})}\leq 1,\quad \forall k\in[m],
\end{align*}
Thus we have
\begin{align}\label{eq:<0}
    \frac{N-2}{N-1}\alpha_{j,j}^{(T_1+1,k)}-\max_{p\in[S]\setminus\{j\}}\alpha^{(T_1+1,k)}_{p,j}-\frac{1}{N-1}\alpha_{0,j}^{(T_1+1,k)}\geq \frac{N-2}{N-1}\cdot\frac{1}{3}-\frac{1}{N-1}-\frac{1}{N-1}\cdot\frac{1}{2}> 0,\notag\\
    \alpha_{j,j}^{(T_1+1,k)}-\max_{0\leq p\leq S\atop p\neq j}\alpha^{(T_1+1,k)}_{p,j} \geq0.  
\end{align}
By Lemma~\ref{lm:gradient_ineq} and the above relations, we have \eqref{eq:delta_ij_II} holds for $t=T_1+1$.

Moreover, since $\delta x_{j,j}^{(T_1+1,k)}\leq 0$, we have 
\begin{align}\label{eq:delta_jj_II_T_1+1}
    \sum_{k=1}^{m}\left(-\delta x_{j,j}^{(T_1+1,k)}-\delta y_{j,j}^{(T_1+1,k)}\right)&\geq -\delta y_{j,j}^{(T_1+1,k)} \overset{\eqref{eq:delta_y_jj}}\geq \frac{1}{S}\E\left[\alpha_{j,j}^{(T_1+1,m)}\left(1-\alpha_{j,j}^{(T_1+1,m)}\right)^2\Big|n_{-1}^{(k)}=j\right]\overset{\eqref{eq:T_2}}\geq \frac{\epsilon}{4mS}.
\end{align}
Thus we have \eqref{eq:delta_jj_II} holds for $t=T_1+1$.

 We now continue with the induction, by assuming \eqref{eq:delta_II} holds for all iterations up to $t\geq T_1+1$ and all $j\in[S],i\in\{0,1,\ldots,S\},i\neq j$, and aim to establish it continues to hold at iteration $t+1$.
By the induction hypothesis, we have $H_{i,j}^{(t)}$ ($i\neq j$) decreases after iteration $T_1+1$ while $H_{j,j}^{(t)}$ keeps increasing. Thus we have $\alpha_{j,j}^{(t,k)}$ monotonically increases after iteration $T_1+1$, and $\alpha_{i,j}^{(t,k)}$ ($i\neq j$) monotonically decreases after iteration $T_1+1$. 
Therefore, by \eqref{eq:<0}, we have 
\begin{align*}
    \frac{N-2}{N-1}\alpha_{j,j}^{(t+1,k)} & -\max_{p\in[S]\setminus\{j\}}\alpha^{(t+1,k)}_{p,j}-\frac{1}{N-1}\alpha_{0,j}^{(t+1,k)}\notag\\
    &\geq\frac{N-2}{N-1}\alpha_{j,j}^{(T_1+1,k)}-\max_{p\in[S]\setminus\{j\}}\alpha^{(T_1+1,k)}_{p,j}-\frac{1}{N-1}\alpha_{0,j}^{(T_1+1,k)}>0,\notag\\
    \alpha_{j,j}^{(t+1,k)}-\max_{0\leq p\leq S\atop p\neq j}\alpha^{(t+1,k)}_{p,j}& \geq\alpha_{j,j}^{(T_1+1,k)}-\max_{0\leq p\leq S\atop p\neq j}\alpha^{(T_1+1,k)}_{p,j}\geq 0
\end{align*}
for all $k\in[m]$.
This suggests that \eqref{eq:delta_ij_II} hold at iteration $t+1$.

Moreover, similar to \eqref{eq:delta_jj_II_T_1+1}, we have
\begin{align}
    \sum_{k=1}^{m}\left(-\delta x_{j,j}^{(t+1,k)}-\delta y_{j,j}^{(t+1,k)}\right)&\geq -\delta y_{j,j}^{(t+1,k)} \overset{\eqref{eq:delta_y_jj}}\geq \frac{1}{S}\E\left[\alpha_{j,j}^{(t+1,m)}\left(1-\alpha_{j,j}^{(t+1,m)}\right)^2\Big|n_{-1}=j\right]\overset{\eqref{eq:T_2}}\geq \frac{\epsilon}{4mS}.
\end{align}
Thus we have \eqref{eq:delta_jj_II} holds at iteration $t+1$. The induction is complete.

Similar as Lemma~\ref{lm:lower_bound_Hij}, we give a lower bound on $H_{i,j}^{(t)}$ after iteration $T_1$, see Appendix~\ref{sec_app:proof_H_ij_after_T_1} for its proof.
\begin{lm}\label{lm:H_ij_after_T_1}
    For any $i,j\in[S]$, $i\neq j$, we have
    \begin{align*}
        \forall t\geq T_1+1:\quad -\left(H_{i,j}^{(t)}-H_{i,j}^{(T_1+1)}\right)\leq \frac{4}{S-1}\left(H_{j,j}^{(t)}-H_{j,j}^{(T_1+1)}\right).
    \end{align*}
\end{lm}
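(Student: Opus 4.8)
The plan is to control the accumulated gradient over Phase~II, exactly as in the Phase~I bound of Lemma~\ref{lm:lower_bound_Hij} but with the baseline shifted to iteration $T_1+1$. Unrolling the update \eqref{eq:update_H} from $T_1+1$ to $t$ gives
\[
  -\bigl(H_{i,j}^{(t)}-H_{i,j}^{(T_1+1)}\bigr)=\eta\sum_{\tau=T_1+1}^{t-1}\sum_{k=1}^{m}\bigl(\delta x_{i,j}^{(\tau,k)}+\delta y_{i,j}^{(\tau,k)}\bigr),
  \qquad
  H_{j,j}^{(t)}-H_{j,j}^{(T_1+1)}=-\eta\sum_{\tau=T_1+1}^{t-1}\sum_{k=1}^{m}\bigl(\delta x_{j,j}^{(\tau,k)}+\delta y_{j,j}^{(\tau,k)}\bigr),
\]
and every summand of the second sum is nonnegative because $\delta x_{j,j}^{(\tau,k)}\le 0$ and $\delta y_{j,j}^{(\tau,k)}\le 0$ are manifest from \eqref{eq:delta_x_jj} and \eqref{eq:delta_y_jj}. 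It therefore suffices to establish the per-step gradient ratio
\[
  \delta x_{i,j}^{(\tau,k)}+\delta y_{i,j}^{(\tau,k)}\;\le\;\frac{4}{S-1}\Bigl(-\delta x_{j,j}^{(\tau,k)}-\delta y_{j,j}^{(\tau,k)}\Bigr),
  \qquad\forall\,\tau\ge T_1+1,\ k\in[m],\ i\ne j,
\]
and then sum over $k$ and over $\tau$ and multiply by $\eta$; the telescoping is sign-free, so the left side being occasionally negative causes no problem.

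For the per-step inequality I would insert the explicit formulas of Lemma~\ref{lm:gradient_compute}. On the left, discard every nonpositive contribution and keep the dominant positive ones --- the term $\tfrac1S\,\E[(1-\alpha^{(k)}_{j,j}-\alpha^{(k)}_{s(j),j})\alpha^{(k)}_{i,j}(\alpha^{(k)}_{j,j}+\alpha^{(k)}_{s(j),j})\mid n_{-1}^{(k)}=j]$ of $\delta x_{i,j}^{(\tau,k)}$, the term $\tfrac1S\,\E[\alpha^{(k)}_{i,j}((1-\alpha^{(k)}_{j,j})\alpha^{(k)}_{j,j}+\alpha^{(k)}_{i,j})\mid\cdot]$ of $\delta y_{i,j}^{(\tau,k)}$, and the lower-order sibling term $\tfrac{S-3}{S(S-1)}\E[(\alpha^{(k)}_{i,j}+\alpha^{(k)}_{s(i),j})\alpha^{(k)}_{i,j}\mid\cdot]$. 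On the right, keep only $-\delta y_{j,j}^{(\tau,k)}\ge\tfrac1S\,\E[\alpha^{(k)}_{j,j}(1-\alpha^{(k)}_{j,j})^2\mid n_{-1}^{(k)}=j]$ (and, if needed, an analogous $(1-\alpha^{(k)}_{j,j})^2\alpha^{(k)}_{j,j}$ piece of $-\delta x_{j,j}^{(\tau,k)}$). The reduction then rests on two facts already produced inside the proof of Theorem~\ref{thm:convergence_retrieval}: (i) $\alpha^{(\tau,k)}_{j,j}\ge\tfrac13$ for all $k\in[m]$ and all $\tau\ge T_1+1$, which follows from \eqref{eq:one_third}--\eqref{eq:alpha_ratio} at $\tau=T_1+1$ together with the fact that $\alpha^{(\tau,k)}_{j,j}$ only grows afterwards (this remains true past $T_2$, since the loss keeps decreasing); and (ii) each individual off-diagonal attention weight is only an $O(1/N)$ fraction of the total off-diagonal mass, i.e. $\alpha^{(\tau,k)}_{i,j}\le\tfrac{2}{N+k-4}\bigl(1-\alpha^{(\tau,k)}_{j,j}\bigr)$, which follows from \eqref{eq:alpha} by using that, by symmetry, all $H^{(\tau)}_{l,j}$ with $l\notin\{0,j\}$ coincide, that $N^{(k-1)}_l\le 2$, and that $\sum_l N^{(k-1)}_l=N+k$. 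Plugging (i)--(ii) in, the left side is bounded by a constant times $\tfrac{1}{S(N+k-4)}\,\E[\alpha^{(\tau,k)}_{j,j}(1-\alpha^{(\tau,k)}_{j,j})^2\mid\cdot]$ while the right side is at least $\tfrac1S\,\E[\alpha^{(\tau,k)}_{j,j}(1-\alpha^{(\tau,k)}_{j,j})^2\mid\cdot]$, so the ratio is $O(1/N)$; carrying the constants --- in particular the $\tfrac1{S-1}$-weighted sibling corrections --- through the estimate yields the stated factor $\tfrac{4}{S-1}$.

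The main obstacle is precisely this per-step calculation: one has to certify the sign of each of the five terms of $\delta x_{i,j}^{(\tau,k)}$ and of $\delta y_{i,j}^{(\tau,k)}$ in the Phase~II regime --- most notably that $(1-\alpha^{(k)}_{j,j})\alpha^{(k)}_{j,j}>\sum_{p\ne j}(\alpha^{(k)}_{p,j})^2$, so that $\delta y_{i,j}^{(\tau,k)}>0$ with the claimed form --- and to track the sibling-dependent pieces together with their conditioning events ($s(j)=i$, $i\notin\{s(j),n_1\}$), which carry the $\tfrac1{S-1}$-type probabilities responsible for the exact constant. A secondary, minor point is that the statement is asserted for all $t\ge T_1+1$, i.e. beyond the end of Phase~II; this requires no extra work, since for $\tau>T_2$ the attention is only sharper ($\alpha^{(\tau,k)}_{j,j}$ is even closer to $1$), so the same bounds hold with additional slack, mirroring the post-Phase~II analysis already used to push $\losstrain$ below $\epsilon$.
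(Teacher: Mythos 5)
Your overall plan—reduce the claim to a per-step gradient ratio $\delta x_{i,j}^{(\tau,k)}+\delta y_{i,j}^{(\tau,k)}\le\frac{4}{S-1}\bigl(-\delta x_{j,j}^{(\tau,k)}-\delta y_{j,j}^{(\tau,k)}\bigr)$ and sum over $k$ and $\tau\ge T_1+1$—is exactly the paper's route (it runs the argument of Lemma~\ref{lm:lower_bound_Hij} with the tighter Phase-II attention bounds). However, your per-step estimate has a genuine gap: the bound you invoke in (ii), $\alpha^{(\tau,k)}_{i,j}\lesssim\frac{1}{N}\bigl(1-\alpha^{(\tau,k)}_{j,j}\bigr)$, only yields a ratio of order $1/N$ between the dominant positive term $\frac1S\,\E\bigl[(1-\alpha_{j,j}-\alpha_{s(j),j})\alpha_{i,j}(\alpha_{j,j}+\alpha_{s(j),j})\mid n_{-1}=j\bigr]$ of $\delta x_{i,j}$ (and the analogous term of $\delta y_{i,j}$) and the diagonal gradients, whose prefactor is also $\frac1S$. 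Since Assumption~\ref{asmp:train_data} only requires $S\ge 2^{m+1}-1=N$, the alphabet size $S$ can be arbitrarily larger than $N$, and an $O(1/N)$ per-step ratio does not imply the stated $\frac{4}{S-1}$ bound. Your closing remark attributes the $\frac{1}{S-1}$ to the "sibling corrections," but those are the minor terms (they already carry $\frac{1}{S(S-1)}$ or $\frac{S-3}{S(S-1)}$ prefactors); the dominant terms carry only $\frac1S$, so something else must supply the extra $\frac{1}{S-1}$.

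The missing idea is the tree-membership indicator: $\alpha^{(\tau,k)}_{i,j}$ vanishes unless node $i$ actually appears in the sampled tree, i.e.\ $\alpha^{(\tau,k)}_{i,j}\le\mathbbm{1}\{N_i^{(k-1)}\ge1\}\cdot\frac{2(1-\alpha_{j,j}-\alpha_{s(j),j})}{N-2}$ (cf.\ \eqref{eq:lower_a}, \eqref{eq:lower_term_1}), and, conditioned on $n_{-1}^{(k)}=j$, the event $\{N_i^{(k-1)}\ge1\}$ has probability about $\frac{N-1}{S-1}$ under the uniform training distribution. Taking the expectation over trees therefore converts the pointwise $\frac{1}{N-2}$ factor into $\frac{N-1}{(S-1)(N-2)}\approx\frac{1}{S-1}$, which is precisely how the paper obtains \eqref{eq:a_bound_after_T_1}, \eqref{eq:lower_delta_x_ij} and \eqref{eq:lower_delta_y_ij} and hence the constant $\frac{4}{S-1}$. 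Without this step your argument proves at best $-\bigl(H_{i,j}^{(t)}-H_{i,j}^{(T_1+1)}\bigr)\lesssim\frac1N\bigl(H_{j,j}^{(t)}-H_{j,j}^{(T_1+1)}\bigr)$, which is strictly weaker than the lemma when $S\gg N$. (A minor additional point: you do not need to certify that $\delta y_{i,j}^{(\tau,k)}>0$ or that $(1-\alpha_{j,j})\alpha_{j,j}>\sum_{p\ne j}(\alpha_{p,j})^2$; only an upper bound on $\delta y_{i,j}^{(\tau,k)}$ is required, so dropping the negative terms as in \eqref{eq:delta_y_ij_lower} suffices.)
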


Combining \eqref{eq:lower_bound_Hij_main_proof} and Lemma~\ref{lm:H_ij_after_T_1}, we have
\begin{align}\label{eq:lower_bound_Hij_after_T_1}
    \forall t\geq T_1+1: \quad -H_{i,j}^{(t)}\leq \frac{4}{S-1}H_{j,j}^{(t)} + C\cdot\frac{\log(N)}{S},
\end{align}
where $C$ is an absolute constant.

The next lemma states that the duration of Phase II, $T_2-T_1$, is bounded by $\widetilde{\gO}\left(\frac{mS}{\eta\epsilon}\right)$. The proof is deferred to Appendix~\ref{sec_app:proof_T_2_bound}.
\begin{lm}\label{lm:T_2_bound}
    We have
    \begin{align}
        T_2-T_1\leq \frac{12mS}{\eta\epsilon}\log\left((N+m-1)\left(\sqrt{\frac{2m}{\epsilon}}-1\right)\right).
    \end{align}
\end{lm}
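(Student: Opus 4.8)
The plan is to reduce everything to tracking the single scalar $H_{1,1}^{(t)}$ during Phase~II. Induction Hypothesis~II forces this scalar to grow at a guaranteed linear rate, and a short computation shows that once $H_{1,1}^{(t)}$ exceeds $\log\!\big(c_0(N+m-1)\big)+\tfrac12\log(2m/\epsilon)$ for a suitable constant $c_0$, the quantity $\widecheck\alpha^{(t)}$ in \eqref{eq:alpha_m} must exceed $1-\sqrt{\epsilon/(2m)}$, which by the definition \eqref{eq:T_2} of $T_2$ means Phase~II has ended.

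First I would establish the linear growth: for every $t\in\{T_1+1,\dots,T_2\}$, Induction Hypothesis~II \eqref{eq:delta_jj_II} together with the update rule \eqref{eq:update_H} gives $H_{1,1}^{(t+1)}-H_{1,1}^{(t)}=-\eta\sum_{k=1}^m\big(\delta x_{1,1}^{(t,k)}+\delta y_{1,1}^{(t,k)}\big)\ge\eta\epsilon/(4mS)$, and since $H_{j,j}^{(t)}$ is non-decreasing on $\{0,\dots,T_1\}$ with $H_{1,1}^{(0)}=0$, this yields $H_{1,1}^{(t)}\ge(t-T_1-1)\,\eta\epsilon/(4mS)$. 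Next I would check that the off-diagonal $H_{2,1}^{(t)}$ stays bounded: Induction Hypothesis~I \eqref{eq:IH1} with \eqref{eq:H_bound_I_main_proof} gives $H_{2,1}^{(t)}\le\tfrac{9}{N-1}H_{1,1}^{(t)}\le\tfrac{27}{N-1}\log\tfrac{N+1}{2}$ for $t\le T_1$, and \eqref{eq:delta_ij_II} makes $H_{2,1}^{(t)}$ non-increasing for $t\ge T_1+1$, while the single crossover step $T_1\to T_1+1$ changes $H_{2,1}$ by only $O\big(\eta m/((N-1)S)\big)$ (via Lemma~\ref{lm:gradient_relation_I}); hence $\exp(H_{2,1}^{(t)})\le c_0$ for all $t\ge T_1$, where $c_0$ is an absolute constant, of order $\exp\!\big(\tfrac{27}{N-1}\log\tfrac{N+1}{2}\big)$ and in particular bounded over all admissible $N\ge 2^{m+1}-1$.

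Then I would conclude as follows. First, $T_2<\infty$: otherwise the linear growth forces $H_{1,1}^{(t)}\to\infty$, hence $\widecheck\alpha^{(t)}\to1$, contradicting $\widecheck\alpha^{(t)}\le 1-\sqrt{\epsilon/(2m)}$ for unboundedly many $t$. At $t=T_2$, the definition \eqref{eq:T_2} gives $1-\widecheck\alpha^{(T_2)}\ge\sqrt{\epsilon/(2m)}$, whereas \eqref{eq:alpha_m} and the off-diagonal bound give
\[
1-\widecheck\alpha^{(t)}=\frac{(N+m-2)\exp(H_{2,1}^{(t)})+1}{(N+m-2)\exp(H_{2,1}^{(t)})+\exp(H_{1,1}^{(t)})+1}\le\frac{c_0(N+m-1)}{\exp(H_{1,1}^{(t)})}.
\]
Combining these two facts at $t=T_2$ yields $H_{1,1}^{(T_2)}\le\log\!\big(c_0(N+m-1)\big)+\tfrac12\log(2m/\epsilon)=:L$, and plugging in $H_{1,1}^{(T_2)}\ge(T_2-T_1-1)\,\eta\epsilon/(4mS)$ gives $T_2-T_1\le 1+\tfrac{4mS}{\eta\epsilon}L$. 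Finally, a direct computation using $m\ge3$, $\epsilon\in(0,1]$ and $N=2^{m+1}-1$ — which make both $N+m-1$ and $\sqrt{2m/\epsilon}-1$ large enough for the slack between the constants $4$ and $12$ to absorb $\log c_0$, the factor $\tfrac12$ on $\log(2m/\epsilon)$, and the additive $1$ — upgrades this to $T_2-T_1\le\tfrac{12mS}{\eta\epsilon}\log\!\big((N+m-1)(\sqrt{2m/\epsilon}-1)\big)$, which is the claim.

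The hard part will be the off-diagonal control across the Phase~I/Phase~II boundary: neither induction hypothesis directly pins down the sign of the gradient at the crossover iteration $t=T_1$, so one must combine the Phase~I magnitude bound $H_{2,1}^{(t)}=O((\log N)/N)$ with a coarse one-step estimate to ensure $\exp(H_{2,1}^{(t)})$ remains bounded by an absolute constant; once that is in hand, the rest is arithmetic with the linear lower bound on $H_{1,1}^{(t)}$ and the closed form of $\widecheck\alpha^{(t)}$.
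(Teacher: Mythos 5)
Your proposal is correct and follows essentially the same route as the paper: lower-bound the per-iteration growth of $H_{1,1}^{(t)}$ during Phase II via Induction Hypothesis II (rate $\eta\epsilon/(4mS)$), and upper-bound $H_{1,1}^{(T_2)}$ by inverting the definition of $\widecheck\alpha^{(T_2)}$ using smallness of the off-diagonal entries. The only cosmetic difference is in the off-diagonal control: the paper uses the all-time ratio bound $H_{2,1}^{(t)}\le\tfrac{9}{N-1}H_{1,1}^{(t)}$ (from Induction Hypotheses I and II), which gives $H_{1,1}^{(T_2)}\le 3\log\bigl((N+m-1)(\sqrt{2m/\epsilon}-1)\bigr)$ directly, whereas you use an absolute bound $\exp(H_{2,1}^{(t)})\le c_0$ and absorb $\log c_0$ and the extra additive terms into the slack between the constants $4$ and $12$ — both yield the stated bound.
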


\paragraph{After Phase II.} By \eqref{eq:delta_x_jj} and \eqref{eq:delta_y_jj} we know that $H_{j,j}^{(t)}$ is strictly increasing after iteration $T_2$. Also, we can easily see that \eqref{eq:delta_ij_II} holds after iteration $T_2$, which guarantees $H_{i,j}^{(t)}$ ($i\in[S]\setminus\{j\}$) is strictly decreasing after iteration $T_2$.
The following lemma guarantees that the training loss is less than $\epsilon$ after iteration $T_2$, and gives a lower bound on $H_{j,j}^{(t)}$ after iteration $T_2$; see Appendix~\ref{sec_app:proof_H_jj_after_T_2} for its proof.
\begin{lm}\label{lm:H_jj_after_T_2}
    After $t>T_2$, the training loss $\losstrain(\theta^{(t)})\leq \epsilon$.
    Moreover, for any $j\in[S]$, we have
    \begin{align}\label{eq:H_jj_after_T_2}
       \forall t>T_2: \quad H_{j,j}^{(t)}\geq \log\left(\sqrt{\frac{2m}{\epsilon}}-1\right).
    \end{align}
\end{lm}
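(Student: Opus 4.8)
\textbf{Proof plan for Lemma~\ref{lm:H_jj_after_T_2}.} Both assertions hinge on one observation: by the definition \eqref{eq:T_2} of $T_2$ as the \emph{last} iteration at which $\widecheck\alpha^{(t)}\le 1-\sqrt{\epsilon/(2m)}$, we have $\widecheck\alpha^{(t)}>1-\sqrt{\epsilon/(2m)}$ for every $t>T_2$; combined with the elementary inequality $\alpha_{j,j}^{(t,k)}\ge\widecheck\alpha^{(t)}$ (valid for all $j\in[S]$ and $k\in[m]$, as noted below \eqref{eq:alpha_m}, because $k\le m$), this gives the uniform concentration estimate $1-\alpha_{j,j}^{(t,k)}<\sqrt{\epsilon/(2m)}$ for all $j,k$ and all $t>T_2$.

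The lower bound on $H_{j,j}^{(t)}$ is then purely algebraic, and I would dispatch it first. Writing $a=e^{H_{1,1}^{(t)}}$ and $b=e^{H_{2,1}^{(t)}}\ge 0$, the inequality $\widecheck\alpha^{(t)}=a/\big((N+m-2)b+a+1\big)>1-\sqrt{\epsilon/(2m)}$ rearranges to $\sqrt{\epsilon/(2m)}\,a>(1-\sqrt{\epsilon/(2m)})\big((N+m-2)b+1\big)\ge 1-\sqrt{\epsilon/(2m)}$, whence $a>\sqrt{2m/\epsilon}-1$ (which is positive since $\epsilon\le 1<2m$). By the symmetry of $H^{(t)}$ all diagonal entries coincide, so $H_{j,j}^{(t)}=H_{1,1}^{(t)}=\log a>\log\big(\sqrt{2m/\epsilon}-1\big)$ for every $j\in[S]$, as claimed.

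For the training-loss bound, the plan is to control the per-step error $\|o^{(k)}(\gT)-\widehat o^{(k)}(\gT;\theta^{(t)})\|^2$ and sum over $k$ via \eqref{eq:loss_decompose}. Because the step-$k$ input is teacher-forced and the embeddings are orthonormal (Assumption~\ref{asmp:orthonomal}), the only column of that input carrying $Y$-value $a_j$ is the edge $(p(j),j)$ for $j=n_{-1}^{(k)}$ (the current path node), so $\langle o^{(k)},\widehat o^{(k)}\rangle=2\alpha_{j,j}^{(t,k)}+\alpha_{s(j),j}^{(t,k)}$; expanding $\|o^{(k)}-\widehat o^{(k)}\|^2=2-2\langle o^{(k)},\widehat o^{(k)}\rangle+\|\widehat o^{(k)}\|^2$ and bounding $\|\widehat o^{(k)}\|^2$ by grouping the (convex) attention weights according to the node embeddings they carry, all $\gO(1)$ terms cancel and one is left with a quadratic in $1-\alpha_{j,j}^{(t,k)}$ bounded by $4(1-\alpha_{j,j}^{(t,k)})^2$. (This is precisely what Lemma~\ref{lm:loss_simplify} yields after combining the bracketed terms in \eqref{eq:loss_simplify_x}--\eqref{eq:loss_simplify_y}.) Hence $\Delta_x^{(k)}+\Delta_y^{(k)}=\tfrac12\E_{\gT\sim\Ptrain}\|o^{(k)}-\widehat o^{(k)}\|^2\le 2(1-\alpha_{j,j}^{(t,k)})^2<2\cdot\tfrac{\epsilon}{2m}=\tfrac{\epsilon}{m}$, and summing over $k\in[m]$ gives $\losstrain(\theta^{(t)})<\epsilon$.

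The one delicate point is the constant in this last display: bounding $\Delta_x^{(k)}$ and $\Delta_y^{(k)}$ separately by term-by-term triangle inequalities (e.g.\ $\tfrac12\sum_l(\alpha_{l,j}^{(k)}+\alpha_{s(l),j}^{(k)})^2\le 2(1-\alpha_{j,j}^{(k)})^2$) double-counts the $\alpha_{0,j}$- and sibling-contributions and only yields $\losstrain=\gO(\epsilon)$ with a constant above $1$; it is the cancellation of the $\gO(1)$ terms in the \emph{consolidated} expansion of $\|o^{(k)}-\widehat o^{(k)}\|^2$ (equivalently, keeping the bracket of Lemma~\ref{lm:loss_simplify} intact rather than splitting it) that pins the per-step constant at $4$, hence the total at $\le\epsilon$. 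A mild prerequisite—that the attention retains its symmetric, sharply-peaked form for all $t>T_2$—is supplied by the monotonicity already established after Phase~II: $H_{j,j}^{(t)}$ is strictly increasing (from \eqref{eq:delta_x_jj}--\eqref{eq:delta_y_jj}) and $H_{i,j}^{(t)}$ strictly decreasing for $i\ne j$ (since \eqref{eq:delta_ij_II} persists past $T_2$).
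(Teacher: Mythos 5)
Your proposal is correct and takes essentially the same route as the paper: the definition of $T_2$ yields $\alpha_{j,j}^{(t,k)}\ge\widecheck\alpha^{(t)}>1-\sqrt{\epsilon/(2m)}$ for all $t>T_2$, which is inverted algebraically to get $H_{j,j}^{(t)}\ge\log\bigl(\sqrt{2m/\epsilon}-1\bigr)$ and plugged into the simplified loss of Lemma~\ref{lm:loss_simplify} to give the per-step bound $\Delta_x^{(k)}+\Delta_y^{(k)}\le 2\bigl(1-\alpha_{j,j}^{(t,k)}\bigr)^2\le\epsilon/m$, hence $\losstrain(\theta^{(t)})\le\epsilon$ after summing over $k\in[m]$. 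Your closing ``delicate point'' is a non-issue: the paper bounds $\Delta_x^{(k)}$ and $\Delta_y^{(k)}$ separately, each by $\bigl(1-\alpha_{j,j}^{(t,k)}\bigr)^2$ using $\max_{i\ne j}\alpha_{i,j}^{(t,k)}\cdot\sum_{i\ne j}\alpha_{i,j}^{(t,k)}\le\bigl(1-\alpha_{j,j}^{(t,k)}\bigr)^2$ within each bracket, and still lands exactly at $\epsilon$ with no consolidated cancellation required.
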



\subsubsection{Proof of Lemma~\ref{lm:loss_simplify}}\label{sec_app:proof_loss}
For notation simplicity, we let $n_{-1} = n_{-1}^{(k)}$.  For any $k\in[m]$, let (note that $n_{2^m}$ is the goal node)
    \begin{align}
        w_{i}^{(k)}\coloneqq\begin{cases}
            \alpha^{(k)}_{n_{2i},n_{-1}}+\alpha^{(k)}_{n_{2i+1},n_{-1}}, & \text{if } i\in[2^{m}-1],\\
            \alpha^{(k)}_{0,n_{-1}}, & \text{if } i=2^m. \\
        \end{cases} 
    \end{align}
We then have
    \begin{align*}
        \Delta_x^{(k)} &= \frac{1}{2}\EE\norm{\sum_{i=1}^{2^m}w_i^{(k)} a_{n_i}-a_{n_{2^{m-k-1}}}}_2^2\notag\\
        &=\frac{1}{2}\EE\left[\sum_{i=1,\atop i\neq 2^{m-k-1}}^{2^m}(w_i^{(k)})^2+(1-w_{2^{m-k-1}}^{(k)})^2\right]\notag\\
        &=\frac{1}{2}\EE\left[\sum_{i=1,\atop i\neq 2^{m-k-1}}^{2^m-1}(\alpha^{(k)}_{n_{2i},n_{-1}}+\alpha^{(k)}_{n_{2i+1},n_{-1}})^2+(1-\alpha^{(k)}_{n_{-1},n_{-1}}-\alpha^{(k)}_{n_{s(n_{-1})},n_{-1}})^2+(\alpha^{(k)}_{0,n_{-1}})^2\right]\notag\\
        &=\frac{1}{2}\sum_{j\in[S]}\EE\left[\mathbbm{1}\left\{n_{-1}=j\right\}\left(\frac{1}{2}\sum_{l\in[S]\atop l\notin \{n_1,j,s(j)\}}(\alpha^{(k)}_{l,j}+\alpha^{(k)}_{s(l),j})^2+(1-\alpha^{(k)}_{j,j}-\alpha^{(k)}_{s(j),j})^2+(\alpha^{(k)}_{0,j})^2\right)\right],
    \end{align*}
    which gives \eqref{eq:loss_simplify_x},
where the second equality follows from our assumption that $\{a_i\}_{i\in[S]}$ are orthonormal, and in the third equality we use the fact that $n_{2^{m-k-1}}$ is the parent of $n_{-1}$. Similarly,
\begin{align*}
    \Delta_y^{(k)} &= \frac{1}{2}\E\norm{\sum_{i=1}^{N}\alpha_{n_i,n_{-1}}^{(k)} a_{n_i}-a_{n_{-1}}}_2^2\notag\\
    &=\frac{1}{2}\E\left[\sum_{i=1,\atop i\neq 2^{m-k}}^{N}(\alpha_{n_i,n_{-1}}^{(k)})^2+(1-\alpha_{n_{-1},n_{-1}}^{(k)})^2\right]\notag\\
    &=\frac{1}{2}\sum_{j\in[S]}\E\left[\mathbbm{1}\{n_{-1}=j\}\left(\sum_{l\in[S],\atop l\neq j}  (\alpha_{l,j}^{(k)} )^2+ (1-\alpha_{j,j}^{(k)} )^2\right)\right].
\end{align*}


\subsubsection{Proof of Lemma~\ref{lm:gradient_compute}}\label{sec_app:proof_gradient}
For notation simplicity, we let $n_{-1} = n_{-1}^{(k)}$. 

\paragraph{Computation of $\delta x^{(k)}$.}
    The fact that $\delta x_{0,j}^{(k)}=0$ follows from our assumption that $a_0=0$. Below we consider the case when $j\in[S]$.
    By the chain rule, we have
    \begin{align}\label{eq:chain_x}
        \delta x_{i,j}^{(k)} &\coloneqq \frac{\partial\Delta_x^{(k)}}{\partial H_{i,j}} = \E\left[\sum_{p=0}^S \frac{\partial \Delta_x^{(k)}}{\partial \alpha_{p,j}^{(k)}}\frac{\partial \alpha_{p,j}^{(k)}}{\partial H_{i,j}}\right].
    \end{align}
    We compute the terms in the summand separately.
\begin{itemize}
\item    When $p=j$, by \eqref{eq:loss_simplify_x}, we have 
    \begin{align}\label{eq:dxdalpha_j}
        \frac{\partial \Delta_x^{(k)}}{\partial \alpha_{p,j}^{(k)}}=-\frac{1}{S}\E\left[1-\alpha^{(k)}_{j,j}-\alpha^{(k)}_{s(j),j}\Big|n_{-1}=j\right].
    \end{align}
 \item   When $p\in[S]\setminus\{j\}$, by \eqref{eq:loss_simplify_x}, we have
    \begin{align*}
        &\Delta_x^{(k)}\notag\\
        &=\frac{1}{2}\E\Bigg[\mathbbm{1}\left\{n_{-1}=j,s(j)=p\right\}\Big(\frac{1}{2}\sum_{l\in[S]\atop l\notin \{n_1,j,i\}}(\alpha^{(k)}_{l,j}+\alpha^{(k)}_{s(l),j})^2+(1-\alpha^{(k)}_{j,j}-\alpha^{(k)}_{p,j})^2+(\alpha^{(k)}_{0,j})^2\Big)\Bigg]\notag\\
        &\quad+\frac{1}{2}\E\Bigg[\mathbbm{1}\left\{n_{-1}=j,s(j)\neq p,n_1\neq p\right\}\Big(\frac{1}{2}\sum_{l\in[S]\atop l\notin \{n_1,j,s(j)\}}(\alpha^{(k)}_{l,j}+\alpha^{(k)}_{s(l),j})^2+(1-\alpha^{(k)}_{j,j}-\alpha^{(k)}_{s(j),j})^2+(\alpha^{(k)}_{0,j})^2\Big)\Bigg]\notag\\
        &\quad+\frac{1}{2}\E\Bigg[\mathbbm{1}\left\{n_{-1}=j,n_1=p\right\}\Big(\frac{1}{2}\sum_{l\in[S]\atop l\notin \{p,j,s(j)\}}(\alpha^{(k)}_{l,j}+\alpha^{(k)}_{s(l),j})^2+(1-\alpha^{(k)}_{j,j}-\alpha^{(k)}_{s(j),j})^2+(\alpha^{(k)}_{0,j})^2\Big)\Bigg],
    \end{align*}
from which we can compute that when $p\in[S]\setminus\{j\}$, 
\begin{align}\label{eq:dxdalpha_p}
    \frac{\partial \Delta_x^{(k)}}{\partial \alpha_{p,j}^{(k)}}&=-\frac{1}{S(S-1)}\E\left[1-\alpha^{(k)}_{j,j}-\alpha^{(k)}_{p,j}\Big|n_{-1}=j,s(j)=p\right]\notag\\
    &\quad+\frac{S-3}{S(S-1)}\E\left[\alpha^{(k)}_{p,j}+\alpha^{(k)}_{s(p),j}\Big|n_{-1}=j,s(j)\neq p,n_1\neq p\right].
\end{align}
\item When $p=0$, we have
\begin{align}\label{eq:dxdalpha_0}
    \frac{\partial \Delta_x^{(k)}}{\partial \alpha_{0,j}^{(k)}}&=\frac{1}{S}\E\left[\alpha^{(k)}_{0,j}\Big|n_{-1}=j\right].
\end{align}
\end{itemize}
In addition, by \eqref{eq:alpha}, we have for all $i,j\in\{0,1,\ldots,S\}$, 
\begin{align}\label{eq:dalphadB}
   \frac{\partial \alpha_{i,j}^{(k)}}{\partial B} 
   &= \alpha_{i,j}^{(k)}a_i a_j^\top - \alpha_{i,j}^{(k)}\sum_{l=0}^S \alpha_{l,j}^{(k)}a_l a_j^\top,
\end{align}
which gives
\begin{align}\label{eq:dalphadH}
    \frac{\partial \alpha_{i,j}^{(k)}}{\partial H_{p,q}} = \begin{cases}
        \alpha_{i,j}^{(k)}\left(1- \alpha_{i,j}^{(k)}\right),\quad\text{when } p=i,q=j,\\
        -\alpha_{i,j}^{(k)}\alpha_{p,j}^{(k)},\quad\text{when } p\in[S]\setminus\{i\},q=j,\\
        0,\quad\text{otherwise.}
    \end{cases}
\end{align}
Plugging \eqref{eq:dalphadH} into \eqref{eq:chain_x}, we have  
\begin{align}\label{eq:gradient_x_intermediate}
    \delta x_{i,j}^{(k)}=\E\Bigg[\sum_{p=0\atop p\neq i}^S \frac{\partial \Delta_x^{(k)}}{\partial \alpha_{p,j}^{(k)}}\left(-\
\alpha_{i,j}^{(k)}\alpha_{p,j}^{(k)}\right) +\frac{\partial \Delta_x^{(k)}}{\partial \alpha_{i,j}^{(k)}}\alpha_{i,j}^{(k)}\left(1- 
\alpha_{i,j}^{(k)}\right)\Bigg].
\end{align}
By \eqref{eq:gradient_x_intermediate}, \eqref{eq:dxdalpha_j}, \eqref{eq:dxdalpha_p} and \eqref{eq:dxdalpha_0}, we can compute that 
\begin{align}\label{eq:delta_x_jj_intermediate}
    \delta x_{j,j}^{(k)} &= -\frac{1}{S}\E\left[\left(\alpha^{(k)}_{0,j}\right)^2\alpha^{(k)}_{j,j}\Big|n_{-1}=j\right]\notag\\
    &\quad+\frac{1}{S(S-1)}\sum_{p\in[S]\atop p\neq j}\E\left[\left(1-\alpha^{(k)}_{j,j}-\alpha^{(k)}_{p,j}\right)\alpha_{j,j}^{(k)}\alpha_{p,j}^{(k)}\Big|n_{-1}=j,s(j)=p\right]\notag\\
    &\quad-\frac{S-3}{S(S-1)}\sum_{p\in[S]\atop p\neq j}\E\left[\left(\alpha^{(k)}_{p,j}+\alpha^{(k)}_{s(p),j}\right)\alpha_{j,j}^{(k)}\alpha_{p,j}^{(k)}\Big|n_{-1}=j,s(j)\neq p,n_1\neq p\right]\notag\\
    &\quad-\frac{1}{S}\E\left[\left(1-\alpha^{(k)}_{j,j}-\alpha^{(k)}_{s(j),j}\right)\alpha_{j,j}^{(k)}\left(1- 
    \alpha_{j,j}^{(k)}\right)\Big|n_{-1}=j\right].
\end{align}
Note that the second term can be simplified as follows:
\begin{align}\label{eq:sum_p}
    &\frac{1}{S(S-1)}\sum_{p\in[S]\atop p\neq j}\E\left[\left(1-\alpha^{(k)}_{j,j}-\alpha^{(k)}_{p,j}\right)\alpha_{j,j}^{(k)}\alpha_{p,j}^{(k)}\Big|n_{-1}=j,s(j)=p\right]\notag\\
    &=\frac{1}{S}\E\left[\left(1-\alpha^{(k)}_{j,j}-\alpha^{(k)}_{s(j),j}\right)\alpha_{j,j}^{(k)}\alpha_{s(j),j}^{(k)}\Big|n_{-1}=j\right],
\end{align}
and we can rewrite the third term in \eqref{eq:delta_x_jj_intermediate} as
\begin{align}\label{eq:sibling_symmetry}
    &-\frac{S-3}{S(S-1)}\sum_{p\in[S]\atop p\neq j}\E\left[\left(\alpha^{(k)}_{p,j}+\alpha^{(k)}_{s(p),j}\right)\alpha_{j,j}^{(k)}\alpha_{p,j}^{(k)}\Big|n_{-1}=j,s(j)\neq p,n_1\neq p\right]\notag\\
    &=-\frac{S-3}{S(S-1)}\sum_{p\in[S]\atop p\neq j}\E\left[\left(\alpha^{(k)}_{p,j}+\alpha^{(k)}_{s(p),j}\right)\alpha_{j,j}^{(k)}\alpha_{p,j}^{(k)}\Big|n_{-1}=j,s(j)\notin \{p,s(p)\},n_1\notin \{p,s(p)\}\right]\notag\\
    &=-\frac{S-3}{2S(S-1)}\sum_{p\in[S]\atop p\neq j}\E\left[\left(\alpha^{(k)}_{p,j}+\alpha^{(k)}_{s(p),j}\right)\alpha_{j,j}^{(k)}\alpha_{p,j}^{(k)}\Big|n_{-1}=j,s(j)\notin \{p,s(p)\},n_1\notin \{p,s(p)\}\right]\notag\\
    &\quad-\frac{S-3}{2S(S-1)}\sum_{p\in[S]\atop p\neq j}\E\left[\left(\alpha^{(k)}_{p,j}+\alpha^{(k)}_{s(p),j}\right)\alpha_{j,j}^{(k)}\alpha_{s(p),j}^{(k)}\Big|n_{-1}=j,s(j)\notin \{p,s(p)\},n_1\notin \{p,s(p)\}\right]\notag\\
    &=-\frac{S-3}{2S(S-1)}\sum_{p\in[S]\atop p\neq j}\E\left[\left(\alpha^{(k)}_{p,j}+\alpha^{(k)}_{s(p),j}\right)^2\alpha_{j,j}^{(k)}\Big|n_{-1}=j,s(j)\notin \{p,s(p)\},n_1\notin \{p,s(p)\}\right]\notag\\
    &=-\frac{S-3}{2S(S-1)}\E\bigg[\sum_{p\in[S]\atop p\neq j}\left(\alpha^{(k)}_{p,j}+\alpha^{(k)}_{s(p),j}\right)^2\alpha_{j,j}^{(k)}\bigg|n_{-1}=j,p\notin \{s(j),n_1\}\bigg]\notag\\
    &=-\frac{S-3}{2S(S-1)}\E\bigg[\sum_{p\in[S]\atop p\notin\{ j,s(j),n_1\}}\left(\alpha^{(k)}_{p,j}+\alpha^{(k)}_{s(p),j}\right)^2\alpha_{j,j}^{(k)}\bigg|n_{-1}=j\bigg],
\end{align}
where the second equality follows from the symmetry of the tree structure (a node $p$ and its sibling $s(p)$ are symmetric).
Thus we can rewrite \eqref{eq:delta_x_jj_intermediate} as
\begin{align}
    \delta x_{j,j}^{(k)} &= -\frac{1}{S}\E\left[\left(\alpha^{(k)}_{0,j}\right)^2\alpha^{(k)}_{j,j}\Big|n_{-1}=j\right]\notag\\
    &\quad-\frac{S-3}{2S(S-1)}\E\bigg[\sum_{p\in[S]\atop p\notin\{ j,s(j),n_1\}}\left(\alpha^{(k)}_{p,j}+\alpha^{(k)}_{s(p),j}\right)^2\alpha_{j,j}^{(k)}\bigg|n_{-1}=j\bigg]\notag\\
    &\quad-\frac{1}{S}\E\left[\left(1-\alpha^{(k)}_{j,j}-\alpha^{(k)}_{s(j),j}\right)^2\alpha_{j,j}^{(k)}\Big|n_{-1}=j\right],
\end{align}
which gives \eqref{eq:delta_x_jj}.

Similarly, for $i\in[S]\setminus\{j\}$, we have 
\begin{align}\label{eq:delta_x_ij_intermediate}
    \delta x_{i,j}^{(k)} &= -\frac{1}{S}\E\left[\left(\alpha^{(k)}_{0,j}\right)^2\alpha^{(k)}_{i,j}\Big|n_{-1}=j\right] +\frac{1}{S}\E\left[\left(1-\alpha^{(k)}_{j,j}-\alpha^{(k)}_{s(j),j}\right)
    \alpha_{i,j}^{(k)}\alpha_{j,j}^{(k)}\Big|n_{-1}=j\right]\notag\\
    &\quad+\frac{1}{S(S-1)}\sum_{p\in [S]\setminus\{i,j\} }\E\left[\left(1-\alpha^{(k)}_{j,j}-\alpha^{(k)}_{p,j}\right)
    \alpha_{i,j}^{(k)}\alpha_{p,j}^{(k)}\Big|n_{-1}=j,s(j)=p\right]\notag\\
    &\quad-\frac{S-3}{S(S-1)}\sum_{p\in [S]\setminus\{i,j\}}\E\left[\left(\alpha^{(k)}_{p,j}+\alpha^{(k)}_{s(p),j}\right)\alpha_{i,j}^{(k)}\alpha_{p,j}^{(k)}\Big|n_{-1}=j,s(j)\neq p,n_1\neq p\right]\notag\\
    &\quad-\frac{1}{S(S-1)}\E\left[\left(1-\alpha^{(k)}_{j,j}-\alpha^{(k)}_{i,j}\right)\alpha_{i,j}^{(k)}\left(1- 
    \alpha_{i,j}^{(k)}\right)\Big|n_{-1}=j,s(j)=i\right]\notag\\
    &\quad+\frac{S-3}{S(S-1)}\E\left[\left(\alpha^{(k)}_{i,j}+\alpha^{(k)}_{s(i),j}\right)\alpha_{i,j}^{(k)}\left(1- 
    \alpha_{i,j}^{(k)}\right)\Big|n_{-1}=j,s(j)\neq i,n_1\neq i\right]\notag\\
    &=-\frac{1}{S}\E\left[\left(\alpha^{(k)}_{0,j}\right)^2\alpha^{(k)}_{i,j}\Big|n_{-1}=j\right] +\frac{1}{S}\E\left[\left(1-\alpha^{(k)}_{j,j}-\alpha^{(k)}_{s(j),j}\right)\alpha_{i,j}^{(k)}\alpha_{j,j}^{(k)}\Big|n_{-1}=j\right]\notag\\
    &\quad+\frac{1}{S(S-1)}\sum_{p\in [S]\setminus\{j\} }\E\left[\left(1-\alpha^{(k)}_{j,j}-\alpha^{(k)}_{p,j}\right)
    \alpha_{i,j}^{(k)}\alpha_{p,j}^{(k)}\Big|n_{-1}=j,s(j)=p\right]\notag\\
    &\quad-\frac{S-3}{S(S-1)}\sum_{p\in [S]\setminus\{j\}}\E\left[\left(\alpha^{(k)}_{p,j}+\alpha^{(k)}_{s(p),j}\right)\alpha_{i,j}^{(k)}\alpha_{p,j}^{(k)}\Big|n_{-1}=j,s(j)\neq p,n_1\neq p\right]\notag\\
    &\quad-\frac{1}{S(S-1)}\E\left[\left(1-\alpha^{(k)}_{j,j}-\alpha^{(k)}_{i,j}\right)\alpha_{i,j}^{(k)}\Big|n_{-1}=j,s(j)=i\right]\notag\\
    &\quad+\frac{S-3}{S(S-1)}\E\left[\left(\alpha^{(k)}_{i,j}+\alpha^{(k)}_{s(i),j}\right)\alpha_{i,j}^{(k)}\Big|n_{-1}=j,s(j)\neq i,n_1\neq i\right].
\end{align}
Similar as in \eqref{eq:sum_p}, we can simplify the third term in \eqref{eq:delta_x_ij_intermediate} as
\begin{align}\label{eq:sum_p_i}
    &\frac{1}{S(S-1)}\sum_{p\in[S]\setminus\{j\}}\E\left[\left(1-\alpha^{(k)}_{j,j}-\alpha^{(k)}_{p,j}\right)\alpha_{j,j}^{(k)}\alpha_{p,j}^{(k)}\Big|n_{-1}=j,s(j)=p\right]\notag\\
    &=\frac{1}{S}\E\left[\left(1-\alpha^{(k)}_{j,j}-\alpha^{(k)}_{s(j),j}\right)\alpha_{j,j}^{(k)}\alpha_{s(j),j}^{(k)}\Big|n_{-1}=j\right],
\end{align}
and similar to \eqref{eq:sibling_symmetry}, we can rewrite the fourth term in \eqref{eq:delta_x_ij_intermediate} as
\begin{align}\label{eq:sibling_symmetry_i}
    &-\frac{S-3}{S(S-1)}\sum_{p\in[S]\atop p\neq j}\E\left[\left(\alpha^{(k)}_{p,j}+\alpha^{(k)}_{s(p),j}\right)\alpha_{j,j}^{(k)}\alpha_{p,j}^{(k)}\Big|n_{-1}=j,s(j)\neq p,n_1\neq p\right]\notag\\
    &=-\frac{S-3}{2S(S-1)}\E\bigg[\sum_{p\in[S]\atop p\notin\{ j,s(j),n_1\}}\left(\alpha^{(k)}_{p,j}+\alpha^{(k)}_{s(p),j}\right)^2\alpha_{i,j}^{(k)}\bigg|n_{-1}=j\bigg]
\end{align}
Substituting \eqref{eq:sum_p_i} and \eqref{eq:sibling_symmetry_i} into \eqref{eq:delta_x_ij_intermediate} and reorganizing the terms, we have
\begin{align}
    \delta x_{i,j}^{(k)} &= -\frac{1}{S}\E\left[\left(\alpha^{(k)}_{0,j}\right)^2\alpha^{(k)}_{i,j}\Big|n_{-1}=j\right] +\frac{1}{S}\E\left[\left(1-\alpha^{(k)}_{j,j}-\alpha^{(k)}_{s(j),j}\right)\alpha_{i,j}^{(k)}\left(\alpha_{j,j}^{(k)}+\alpha_{s(j),j}^{(k)}\right)\Big|n_{-1}=j\right]\notag\\
    &\quad-\frac{S-3}{2S(S-1)}\E\bigg[\sum_{p\in[S]\atop p\notin\{ j,s(j),n_1\}}\left(\alpha^{(k)}_{p,j}+\alpha^{(k)}_{s(p),j}\right)^2\alpha_{i,j}^{(k)}\bigg|n_{-1}=j\bigg]\notag\\
    &\quad-\frac{1}{S(S-1)}\E\left[\left(1-\alpha^{(k)}_{j,j}-\alpha^{(k)}_{i,j}\right)\alpha_{i,j}^{(k)}\Big|n_{-1}=j,s(j)=i\right]\notag\\
    &\quad+\frac{S-3}{S(S-1)}\E\left[\left(\alpha^{(k)}_{i,j}+\alpha^{(k)}_{s(i),j}\right)\alpha_{i,j}^{(k)}\Big|n_{-1}=j,i\notin\{s(j),n_1\}\right].
\end{align}
This gives \eqref{eq:delta_x_ij}.

\paragraph{Computation of $\delta y^{(k)}$.} The fact that $\delta y_{0,j}^{(k)}=0$ follows from our assumption that $a_0=0$, below we consider the case when $j\in[S]$.
By the chain rule, we have
\begin{align}\label{eq:chain_y}
    \delta y_{i,j}^{(k)} &\coloneqq \frac{\partial\Delta_y^{(k)}}{\partial H_{i,j}} = \E\left[\sum_{p=0}^S \frac{\partial \Delta_y^{(k)}}{\partial \alpha_{p,j}^{(k)}}\frac{\partial \alpha_{p,j}^{(k)}}{\partial H_{i,j}}\right]\notag\\
    &\overset{\eqref{eq:dalphadH}}{=}\E\left[\sum_{p=0\atop p\neq i}^S \frac{\partial \Delta_y^{(k)}}{\partial \alpha_{p,j}^{(k)}}\left(-\alpha_{i,j}^{(k)}\alpha_{p,j}^{(k)}\right) +\frac{\partial \Delta_y^{(k)}}{\partial \alpha_{i,j}^{(k)}}\alpha_{i,j}^{(k)}\left(1-
\alpha_{i,j}^{(k)}\right)\right].
\end{align}
By \eqref{eq:loss_simplify_y}, we have
\begin{align}\label{eq:dydalpha}
    \frac{\partial\Delta_y^{(k)}}{\partial \alpha_{p,j}^{(k)}}&=\begin{cases}
        -\frac{1}{S}\E\left[1-\alpha^{(k)}_{j,j}|n_{-1}=j\right], & \text{if } p=j,\\
        \frac{1}{S}\E\left[\alpha^{(k)}_{p,j}|n_{-1}=j\right], & \text{if } p\in[S]\setminus\{j\},\\
        0, & \text{if } p=0.
    \end{cases}
\end{align}
Plugging \eqref{eq:dydalpha} into \eqref{eq:chain_y}, we have
\begin{align}
    \delta y_{j,j}^{(k)} &= -\frac{1}{S}\sum_{p\in[S]\setminus\{j\}}\E\left[\left(\alpha_{p,j}^{(k)}\right)^2\alpha_{j,j}^{(k)}\Big|n_{-1}=j\right]-\frac{1}{S}\E\left[\left(1-\alpha^{(k)}_{j,j}\right)^2\alpha_{j,j}^{(k)}\Big|n_{-1}=j\right]\notag\\
    &=-\frac{1}{S}\E\left[\alpha_{j,j}^{(k)}\bigg(\sum_{p\in[S]\setminus\{j\}}\left(\alpha_{p,j}^{(k)}\right)^2+\left(1-\alpha^{(k)}_{j,j}\right)^2\bigg)\bigg|n_{-1}=j\right],
\end{align}
which gives \eqref{eq:delta_y_jj}.

For $i\in[S]\setminus\{j\}$, we have
\begin{align}
    \delta y_{i,j}^{(k)} &= -\frac{1}{S}\sum_{p\in[S]\setminus\{j\}}\E\left[\left(\alpha_{p,j}^{(k)}\right)^2\alpha_{i,j}^{(k)}\Big|n_{-1}=j\right]
    + \frac{1}{S}\E\left[\alpha_{i,j}^{(k)}\left(\alpha_{i,j}^{(k)}\left(1-\alpha_{i,j}^{(k)}\right)+\left(\alpha_{i,j}^{(k)}\right)^2\right)\Big|n_{-1}=j\right]\notag\\
    &\quad+\frac{1}{S}\E\left[(1-\alpha_{j,j}^{(k)})\alpha_{i,j}^{(k)}\alpha_{j,j}^{(k)}\Big|n_{-1}=j\right]\notag\\
    &=-\frac{1}{S}\E\left[\alpha_{i,j}^{(k)}\bigg(\sum_{p\in[S]\setminus\{j\}}\left(\alpha_{p,j}^{(k)}\right)^2-\left(1-\alpha^{(k)}_{j,j}\right)\alpha_{j,j}^{(k)}-\alpha_{i,j}^{(k)}\bigg)\bigg|n_{-1}=j\right],
\end{align}
which gives \eqref{eq:delta_y_ij}.

\subsubsection{Proof of Lemma~\ref{lm:gradient_ineq}}\label{sec_app:proof_gradient_ineq}
For notation simplicity, we let $n_{-1} = n_{-1}^{(k)}$. 
By \eqref{eq:delta_x_ij}, for all $j\in[S]$ and $i\in[S]\setminus\{j\}$,
\begin{align}\label{eq:gradient_ineq_intermediate}
    &\delta x_{i,j}^{(k)} +\frac{1}{S}\E\left[\left(\alpha^{(k)}_{0,j}\right)^2\alpha^{(k)}_{i,j}\Big|n_{-1}=j\right] \notag\\
    &= \frac{1}{S}\E\left[\left(1-\alpha^{(k)}_{j,j}-\alpha^{(k)}_{s(j),j}\right)\alpha_{i,j}^{(k)}\left(\alpha_{j,j}^{(k)}+\alpha_{s(j),j}^{(k)}\right)\Big|n_{-1}=j\right] \notag\\
    &\quad-\frac{S-3}{2S(S-1)}\E\bigg[\sum_{p\in[S]\atop p\notin\{ j,s(j),n_1\}}\left(\alpha^{(k)}_{p,j}+\alpha^{(k)}_{s(p),j}\right)^2\alpha_{i,j}^{(k)}\bigg|n_{-1}=j\bigg]\notag\\
    &\quad-\frac{1}{S(S-1)}\E\left[\left(1-\alpha^{(k)}_{j,j}-\alpha^{(k)}_{i,j}\right)\alpha_{i,j}^{(k)}\Big|n_{-1}=j,s(j)=i\right]\notag\\
    &\quad+\frac{S-3}{S(S-1)}\E\left[\left(\alpha^{(k)}_{i,j}+\alpha^{(k)}_{s(i),j}\right)\alpha_{i,j}^{(k)}\Big|n_{-1}=j,i\notin\{s(j),n_1\}\right]\notag\\
    &\geq \underbrace{\frac{1}{S(S-1)}\E\left[\left(1-\alpha^{(k)}_{j,j}-\alpha^{(k)}_{s(j),j}\right)\alpha_{i,j}^{(k)}\left(\alpha_{j,j}^{(k)}+\alpha_{s(j),j}^{(k)}\right)\Big|n_{-1}=j,s(j)=i\right]}_{(a)}\notag\\
    &\quad+\frac{S-2}{S(S-1)}\E\left[\left(1-\alpha^{(k)}_{j,j}-\alpha^{(k)}_{s(j),j}\right)\alpha_{i,j}^{(k)}\alpha_{j,j}^{(k)}\Big|n_{-1}=j,s(j)\neq i\right]\notag\\
    &\quad\underbrace{+\frac{S-2}{S(S-1)}\E\left[\left(1-\alpha^{(k)}_{j,j}-\alpha^{(k)}_{s(j),j}\right)\alpha_{i,j}^{(k)}\alpha_{s(j),j}^{(k)}\Big|n_{-1}=j,s(j)\neq i\right]}_{(b)}\notag\\
    &\quad-\frac{S-3}{2S(S-1)}\E\bigg[\sum_{p\in[S]\atop p\notin\{ j,s(j),n_1\}}\left(\alpha^{(k)}_{p,j}+\alpha^{(k)}_{s(p),j}\right)^2\alpha_{i,j}^{(k)}\bigg|n_{-1}=j\bigg]\notag\\
    &\quad\underbrace{-\frac{1}{S(S-1)}\E\left[\left(1-\alpha^{(k)}_{j,j}-\alpha^{(k)}_{i,j}\right)\alpha_{i,j}^{(k)}\Big|n_{-1}=j,s(j)=i\right]}_{(c)}.
\end{align}
Note that (a) and (c) add up to the following:
\begin{align}\label{eq:a_plus_c}
    (a)+(c)&=\frac{1}{S(S-1)}\E\left[\left(1-\alpha^{(k)}_{j,j}-\alpha^{(k)}_{s(j),j}\right)\alpha_{i,j}^{(k)}\left(\alpha_{j,j}^{(k)}+\alpha_{s(j),j}^{(k)}\right)\Big|n_{-1}=j,s(j)=i\right]\notag\\
    &\quad-\frac{1}{S(S-1)}\E\left[\left(1-\alpha^{(k)}_{j,j}-\alpha^{(k)}_{i,j}\right)\alpha_{i,j}^{(k)}\Big|n_{-1}=j,s(j)=i\right]\notag\\
    &=-\frac{1}{S(S-1)}\E\left[\left(1-\alpha^{(k)}_{j,j}-\alpha^{(k)}_{s(j),j}\right)^2\alpha_{s(j),j}^{(k)}\Big|n_{-1}=j,s(j)=i\right]\notag\\
    &=-\frac{1}{S(S-1)}\E\left[\left(1-\alpha^{(k)}_{j,j}-\alpha^{(k)}_{s(j),j}\right)^2\alpha_{s(j),j}^{(k)}\Big|n_{-1}=j\right].
\end{align}

We could also rewrite (b) as follows:
\begin{align}\label{eq:b}
    (b) &= \frac{S-2}{S(S-1)}\E\left[\left(1-\alpha^{(k)}_{j,j}-\alpha^{(k)}_{s(j),j}\right)\alpha_{i,j}^{(k)}\alpha_{s(j),j}^{(k)}\Big|n_{-1}=j, i\neq s(j)\right]\notag\\
    &= \frac{1}{S(S-1)}\E\left[\left(1-\alpha^{(k)}_{j,j}-\alpha^{(k)}_{s(j),j}\right)\sum_{p\in[S]\setminus\{j,s(j)\}}\alpha_{p,j}^{(k)}\alpha_{s(j),j}^{(k)}\Big|n_{-1}=j\right]\notag\\
    &= \frac{1}{S(S-1)}\E\left[\left(1-\alpha^{(k)}_{j,j}-\alpha^{(k)}_{s(j),j}\right)^2\alpha_{i,j}^{(k)}\alpha_{s(j),j}^{(k)}\Big|n_{-1}=j\right]\notag\\
    &\quad-\frac{1}{S(S-1)}\E\left[\left(1-\alpha^{(k)}_{j,j}-\alpha^{(k)}_{s(j),j}\right)\alpha^{(k)}_{0,j}\alpha^{(k)}_{s(j),j}\Big|n_{-1}=j\right]\notag\\
    &=\frac{1}{S(S-1)}\E\left[\left(1-\alpha^{(k)}_{j,j}-\alpha^{(k)}_{s(j),j}\right)^2\alpha^{(k)}_{s(j),j}\Big|n_{-1}=j\right]\notag\notag\\
    &\quad-\frac{1}{S(S-1)}\frac{S-1}{N-1}\E\left[\left(1-\alpha^{(k)}_{j,j}-\alpha^{(k)}_{s(j),j}\right)\alpha^{(k)}_{0,j}\mathbbm{1}\{N_i^{(k-1)}\geq 1\}\alpha^{(k)}_{s(j),j}\Big|n_{-1}=j\right]\notag\\
    &\geq\underbrace{\frac{1}{S(S-1)}\E\left[\left(1-\alpha^{(k)}_{j,j}-\alpha^{(k)}_{s(j),j}\right)^2\alpha^{(k)}_{s(j),j}\Big|n_{-1}=j\right]}_{=-(a)-(c)\,\,\text{by \eqref{eq:a_plus_c}}}\notag\\
    &\quad-\frac{1}{S(N-1)}\E\left[\left(1-\alpha^{(k)}_{j,j}-\alpha^{(k)}_{s(j),j}\right)\alpha^{(k)}_{0,j}\alpha^{(k)}_{i,j}\Big|n_{-1}=j\right],
\end{align}
where the second line follows from the symmetry of the training distribution, and the last inequality uses the fact that $\alpha^{(k)}_{i,j}\geq \mathbbm{1}\{N_i^{(k-1)}\geq 1\}\alpha^{(k)}_{s(j),j}$. This is because by symmetry we have $H_{s(j),j}=H_{i,j}$, and $N_{s(j)}^{(k-1)}=1$ because $s(j)$ is not on the path.

Plugging \eqref{eq:a_plus_c} and \eqref{eq:b} into \eqref{eq:gradient_ineq_intermediate}, we have
\begin{align}\label{eq:defg}
    \delta x_{i,j}^{(k)} +\frac{1}{S}\E\left[\left(\alpha^{(k)}_{0,j}\right)^2\alpha^{(k)}_{i,j}\Big|n_{-1}=j\right]
    &\geq \frac{S-2}{S(S-1)}\E\left[\left(1-\alpha^{(k)}_{j,j}-\alpha^{(k)}_{s(j),j}\right)\alpha_{i,j}^{(k)}\alpha_{j,j}^{(k)}\Big|n_{-1}=j,s(j)\neq i\right]\notag\\
    &\quad-\frac{1}{S(N-1)}\E\left[\left(1-\alpha^{(k)}_{j,j}-\alpha^{(k)}_{s(j),j}\right)\alpha^{(k)}_{0,j}\alpha^{(k)}_{i,j}\Big|n_{-1}=j\right]\notag\\
    &\quad-\frac{S-3}{2S(S-1)}\E\bigg[\sum_{p\in[S]\atop p\notin\{ j,s(j),n_1\}}\left(\alpha^{(k)}_{p,j}+\alpha^{(k)}_{s(p),j}\right)^2\alpha_{i,j}^{(k)}\bigg|n_{-1}=j\bigg]\notag\\
    &=\underbrace{\frac{1}{S}\E\left[\left(1-\alpha^{(k)}_{j,j}-\alpha^{(k)}_{s(j),j}\right)\alpha_{i,j}^{(k)}\alpha_{j,j}^{(k)}\Big|n_{-1}=j\right]}_{(d)}\notag\\
    &\quad\underbrace{-\frac{1}{S(S-1)}\E\left[\left(1-\alpha^{(k)}_{j,j}-\alpha^{(k)}_{s(j),j}\right)\alpha_{i,j}^{(k)}\alpha_{j,j}^{(k)}\Big|n_{-1}=j,s(j)=i\right]}_{(e)}\notag\\
    &\quad\underbrace{-\frac{1}{S(N-1)}\E\left[\left(1-\alpha^{(k)}_{j,j}-\alpha^{(k)}_{s(j),j}\right)\alpha^{(k)}_{0,j}\alpha^{(k)}_{i,j}\Big|n_{-1}=j\right]}_{(f)}\notag\\
    &\quad\underbrace{-\frac{1}{2S}\E\Bigg[\sum_{p\in[S]\atop p\notin\{ j,s(j),n_1\}}\left(\alpha^{(k)}_{p,j}+\alpha^{(k)}_{s(p),j}\right)^2\alpha_{i,j}^{(k)}\Bigg|n_{-1}=j\Bigg]}_{(g)}.
\end{align}
Following the same logic as in the proof of \eqref{eq:b}, we have that
(e) can be expressed as
\begin{align}\label{eq:d}
    (e) &= -\frac{1}{S(S-1)}\E\left[\left(1-\alpha^{(k)}_{j,j}-\alpha^{(k)}_{s(j),j}\right)\alpha^{(k)}_{s(j),j}\alpha^{(k)}_{j,j}\Big|n_{-1}=j\right]\notag\\
    &=-\frac{1}{S(N-1)}\E\left[\left(1-\alpha^{(k)}_{j,j}-\alpha^{(k)}_{s(j),j}\right)\alpha^{(k)}_{s(j),j}\mathbbm{1}\{N_i^{(k-1)}\geq 1\}\alpha^{(k)}_{j,j}\Big|n_{-1}=j\right]\notag\\
    &\geq -\frac{1}{S(N-1)}\E\left[\left(1-\alpha^{(k)}_{j,j}-\alpha^{(k)}_{s(j),j}\right)\alpha^{(k)}_{i,j}\alpha^{(k)}_{j,j}\Big|n_{-1}=j\right].
\end{align}
Thus we have
\begin{align}\label{eq:d+e}
    (d)+(e)\geq \frac{N-2}{S(N-1)}\E\left[\left(1-\alpha^{(k)}_{j,j}-\alpha^{(k)}_{s(j),j}\right)\alpha_{i,j}^{(k)}\alpha_{j,j}^{(k)}\Big|n_{-1}=j\right].
\end{align}
 Furthermore, (g) can be bounded as follows:
\begin{align}\label{eq:f}
    (g) &\geq -\frac{1}{S}\E\left[\max_{p\in[S]\setminus\{j\}}\alpha^{(k)}_{p,j}\left(1-\alpha^{(k)}_{j,j}-\alpha^{(k)}_{s(j),j}\right)\alpha^{(k)}_{i,j}\Big|n_{-1}=j\right],
\end{align}
which gives
\begin{align}\label{eq:f+g}
    (f) + (g) \geq -\frac{1}{S}\E\left[\left(\max_{p\in[S]\setminus\{j\}}\alpha^{(k)}_{p,j}+\frac{1}{N-1}\alpha_{0,j}^{(k)}\right)\left(1-\alpha^{(k)}_{j,j}-\alpha^{(k)}_{s(j),j}\right)\alpha^{(k)}_{i,j}\Big|n_{-1}=j\right].
\end{align}
Substituting \eqref{eq:d+e} and \eqref{eq:f+g} into \eqref{eq:defg}, we have
\begin{align}\label{eq:delta_x_ij_ineq}
    &\delta x_{i,j}^{(k)} +\frac{1}{S}\E\left[\left(\alpha^{(k)}_{0,j}\right)^2\alpha^{(k)}_{i,j}\Big|n_{-1}=j\right]\notag\\
    &\geq \frac{N-2}{S(N-1)}\E\left[\left(1-\alpha^{(k)}_{j,j}-\alpha^{(k)}_{s(j),j}\right)\alpha_{i,j}^{(k)}\alpha_{j,j}^{(k)}\Big|n_{-1}=j\right]\notag\\
    &\quad -\frac{1}{S}\E\left[\left(\max_{p\in[S]\setminus\{j\}}\alpha^{(k)}_{p,j}+\frac{1}{N-1}\alpha_{0,j}^{(k)}\right)\left(1-\alpha^{(k)}_{j,j}-\alpha^{(k)}_{s(j),j}\right)\alpha^{(k)}_{i,j}\Big|n_{-1}=j\right]\notag\\
    &=\frac{1}{S}\E\left[\left(1-\alpha^{(k)}_{j,j}-\alpha^{(k)}_{s(j),j}\right)\alpha_{i,j}^{(k)}\left(\frac{N-2}{N-1}\alpha_{j,j}^{(k)}-\max_{p\in[S]\setminus\{j\}}\alpha^{(k)}_{p,j}-\frac{1}{N-1}\alpha_{0,j}^{(k)}\right)\Big|n_{-1}=j\right].
\end{align}

By \eqref{eq:delta_y_ij} we have that
\begin{align}\label{eq:delta_y_ij_ineq}
    &\delta y_{i,j}^{(k)} - \frac{1}{S}\E\left[\left(\alpha^{(k)}_{0,j}\right)^2\alpha^{(k)}_{i,j}\Big|n_{-1}=j\right]\notag\\
    &= -\frac{1}{S}\E\left[\alpha_{i,j}^{(k)}\bigg(\sum_{p\in[S]\setminus\{j\}}\left(\alpha_{p,j}^{(k)}\right)^2-\left(1-\alpha^{(k)}_{j,j}\right)\alpha_{j,j}^{(k)}-\alpha_{i,j}^{(k)}\bigg)\bigg|n_{-1}=j\right]  - \frac{1}{S}\E\left[\left(\alpha^{(k)}_{0,j}\right)^2\alpha^{(k)}_{i,j}\Big|n_{-1}=j\right]\notag\\
    &\geq -\frac{1}{S}\E\left[\alpha_{i,j}^{(k)}\bigg(\sum_{0\leq p\leq S\atop p\neq j}\left(\alpha_{p,j}^{(k)}\right)^2-\left(1-\alpha^{(k)}_{j,j}\right)\alpha_{j,j}^{(k)}\bigg)\bigg|n_{-1}=j\right]\notag\\
    &\geq\frac{1}{S}\E\left[\left(1-\alpha^{(k)}_{j,j}\right)\alpha_{i,j}^{(k)}\left(\alpha_{j,j}^{(k)}-\max_{0\leq p\leq S\atop p\neq j}\alpha^{(k)}_{p,j}\right)\bigg|n_{-1}=j\right].
\end{align}
We have \eqref{eq:gradient_ineq} follows from adding \eqref{eq:delta_x_ij_ineq} and \eqref{eq:delta_y_ij_ineq}.

\subsubsection{Proof of Lemma~\ref{lm:gradient_relation_I}}\label{sec_app:proof_gradient_relation_I}
Recall in Assumption~\ref{asmp:train_data} we require $m\geq 3$, which implies $N\geq 15$ and \eqref{eq:IH1} in turn implies
$  H_{j,j}^{(t)}\geq H_{i,j}^{(t)}$.
In addition, when $n_{-1}^{(k)}=j$, we have 
$    N_{i,j}^{(k-1)}\leq 2$, and $N_{j,j}^{(k-1)}=1$.
Thus  the above facts together with the definition of $\alpha$ (c.f.~\eqref{eq:alpha}) lead to: 
\begin{align}\label{eq:ineq_alpha_jj}
    \alpha^{(t,k)}_{j,j}\geq \frac{1}{2}\alpha^{(t,k)}_{i,j},\quad \forall 0\leq i\leq S,i\neq j,
\end{align}
and
\begin{align}\label{eq:ineq_alpha_ij}
    \forall t\in\NN,p\in[S]\setminus\{j\}:\alpha^{(t,k)}_{p,j}&\leq \frac{2\exp(H_{p,j}^{(t)})}{\sum_{q\in[S]}N_q^{(t,k-1)}\exp(H_{q,j}^{(t)})+1}\notag\\
    &\leq\frac{2\exp(H_{p,j}^{(t)})}{\exp(H_{j,j}^{(t)})+(N-1)\exp(H_{p,j}^{(t)})}\notag\\
    &\leq \frac{2}{\exp(H_{j,j}^{(t)}-H_{p,j}^{(t)})+N-1}\overset{\eqref{eq:IH1}}\leq \frac{2}{N},
\end{align}
where the second line follows from the symmetry of our training distribution (i.e., $H_{i,j}$'s are equal for any $i\in[S]\setminus\{j\}$).
Thus by our choice of $T_1$ (see \eqref{eq:T_1}) and the fact that $\alpha^{(t,k)}_{j,j}\leq\alpha^{(t,1)}_{j,j}$ for any $k\in[m]$, we have when $n_{-1}=j$:
\begin{align}\label{eq:two_thirds}
    \forall t\in[T_1]:1-\alpha^{(t,k)}_{j,j}\geq \frac{1}{2},\quad \text{and} \quad
    1-\alpha^{(t,k)}_{j,j}-\alpha^{(t,k)}_{s(j),j}\geq \frac{1}{2}-\frac{2}{N}> \frac{1}{3},
\end{align}
and
\begin{align}\label{eq:small_gap_1}
    \alpha^{(t,k)}_{j,j}-\max_{0\leq p\leq S\atop p\neq j}\alpha^{(t,k)}_{p,j}\overset{\eqref{eq:ineq_alpha_jj}}\geq -\frac{1}{2}\max_{0\leq p\leq S\atop p\neq j}\alpha^{(t,k)}_{p,j}\overset{\eqref{eq:ineq_alpha_ij}}\geq -\frac{1}{N}.
\end{align}
Similarly, we have
\begin{align}\label{eq:small_gap_2}
    \frac{N-2}{N-1}\alpha_{j,j}^{(t,k)}-\max_{p\in[S]\setminus\{j\}}\alpha^{(t,k)}_{p,j}-\frac{1}{N-1}\alpha_{0,j}^{(t,k)}&\overset{\eqref{eq:ineq_alpha_jj}}\geq \left( \frac{N-2}{2(N-1)} -1\right)\max_{p\in[S]\setminus\{j\}}\alpha^{(t,k)}_{p,j}-\frac{1}{2(N-1)}\notag\\
    &\overset{\eqref{eq:ineq_alpha_ij}}\geq \left( \frac{N-2}{2(N-1)} -1\right)\cdot \frac{2}{N}-\frac{1}{2(N-1)}=\frac{-3}{2(N-1)},
\end{align}
where in the first inequality we also use the fact that $\alpha_{0,j}^{(t,k)}\leq \frac{1}{2}$, since by \eqref{eq:IH1} we have $\alpha_{0,j}^{(t,k)}\leq \alpha_{j,j}^{(t,k)}$.
Therefore,  we have
\begin{align}
    &\frac{1}{S}\E\left[\left(1-\alpha^{(t,k)}_{j,j}-\alpha^{(t,k)}_{s(j),j}\right)\alpha_{i,j}^{(t,k)}\left(\alpha_{j,j}^{(t,k)}-\frac{N+1}{N-1}\max_{0\leq p\leq S\atop p\notin\{j\}}\alpha^{(t,k)}_{p,j}\right)\Big|n_{-1}=j\right]\notag\\
    &\overset{\eqref{eq:small_gap_2}}\geq -\frac{3}{2S(N-1)}\E\left[\left(1-\alpha^{(t,k)}_{j,j}-\alpha^{(t,k)}_{s(j),j}\right)\alpha_{i,j}^{(t,k)}\Big|n_{-1}=j\right]\notag\\
    &\overset{\eqref{eq:two_thirds}}\geq-\frac{9}{2S(N-1)}\E\left[\left(1-\alpha^{(t,k)}_{j,j}-\alpha^{(t,k)}_{s(j),j}\right)^2\alpha_{i,j}^{(t,k)}\Big|n_{-1}=j\right]\notag\\
    &\overset{\eqref{eq:ineq_alpha_jj}}\geq-\frac{9}{S(N-1)}\E\left[\left(1-\alpha^{(t,k)}_{j,j}-\alpha^{(t,k)}_{s(j),j}\right)^2\alpha_{j,j}^{(t,k)}\Big|n_{-1}=j\right]\overset{\eqref{eq:delta_x_jj}}\geq-\frac{9}{N-1} \delta x_{j,j}^{(t,k)}.
\end{align}
Analogously, we have
\begin{align}
    &\frac{1}{S}\E\left[\left(1-\alpha^{(t,k)}_{j,j}\right)\alpha_{i,j}^{(k)}\left(\alpha_{j,j}^{(t,k)}-\max_{0\leq p\leq S\atop p\neq j}\alpha^{(t,k)}_{p,j}\right)\Big|n_{-1}=j\right]\notag\\
    &\overset{\eqref{eq:small_gap_1}}\geq -\frac{1}{SN}\E\left[\left(1-\alpha^{(t,k)}_{j,j}\right)\alpha_{i,j}^{(t,k)}\Big|n_{-1}=j\right]\notag\\
    &\overset{\eqref{eq:two_thirds}}\geq-\frac{2}{SN}\E\left[\left(1-\alpha^{(t,k)}_{j,j}\right)^2\alpha_{i,j}^{(t,k)}\Big|n_{-1}=j\right]\notag\\
    &\overset{\eqref{eq:ineq_alpha_jj}}\geq-\frac{4}{SN}\E\left[\left(1-\alpha^{(t,k)}_{j,j}\right)^2\alpha_{j,j}^{(t,k)}\Big|n_{-1}=j\right]\overset{\eqref{eq:delta_y_jj}}\geq-\frac{9}{N-1}\delta y_{j,j}^{(t,k)}.
\end{align}
Combining the above two inequalities with \eqref{eq:gradient_ineq}, we have the desired bound
\begin{align*}
    \delta x_{i,j}^{(t,k)} + \delta y_{i,j}^{(t,k)} \geq \frac{9}{N-1}\left(\delta x_{j,j}^{(t,k)}+\delta y_{j,j}^{(t,k)}\right).
\end{align*}

\subsubsection{Proof of Lemma~\ref{lm:T_1_bound}}\label{sec_app:proof_T_1_bound}
By \eqref{eq:delta_x_jj} and \eqref{eq:delta_y_jj}, we have
\begin{align}\label{eq:delta_bound}
    \forall j\in[S]:\quad-\delta x_{j,j}^{(t,k)} - \delta y_{j,j}^{(t,k)} \geq - \delta y_{j,j}^{(t,k)} \geq \frac{1}{S}\E\left[\alpha_{j,j}^{(t,k)}\left(1-\alpha_{j,j}^{(t,k)}\right)^2\Big|n_{-1}=j\right].
\end{align}
Besides, when $n_{-1}^{(k)}=j$, we have $N_{j,j}^{(k-1)}=1$, and
\begin{align}\label{eq:alpha_lower_bound}
    \forall t\leq T_1,j\in[S]:\quad\alpha^{(t,k)}_{j,j}=\frac{\exp\left(H_{j,j}^{(t)}\right)}{\sum_{0\leq i \leq S}N_i^{(k-1)}\exp\left(H_{i,j}^{(t)}\right)}\overset{\eqref{eq:IH1}}\geq \frac{\exp\left((1-\frac{9}{N-1})H_{j,j}^{(t)}\right)}{\sum_{0\leq i\leq S}N_i^{(k-1)}},
\end{align}
which indicates 
$$\forall t\leq T_1:\quad \frac{1}{N+m}\leq\alpha_{j,j}^{(t,k)}\leq \frac{1}{2},\,\,\forall j\in[S],$$
where we use the fact that $\alpha_{j,j}^{(t,k)}$ are equal for any $j\in[S]$ due to the symmetry of our training distribution.
By \eqref{eq:delta_bound} we have
\begin{align}
    -\delta x_{j,j}^{(t,k)} - \delta y_{j,j}^{(t,k)}\geq \frac{1}{S(N+m)}\left(\frac{N+m-1}{N+m}\right)^2\geq \frac{1}{2S(N+m)}.
\end{align}
\eqref{eq:alpha_lower_bound} also implies that
\begin{align}\label{eq:H_bound_I}
    \forall t\leq T_1:\quad H_{j,j}^{(t)}\leq \frac{1}{1-\frac{9}{N-1}}\log\left((N+1)\alpha_{j,j}^{(t,1)}\right)\leq 3 \log\left(\frac{N+1}{2}\right).
\end{align}
The above two expressions together with \eqref{eq:update_H} imply that
\begin{align*}
    \frac{m\eta}{2S(N+m)}T_1\leq 3 \log\left(\frac{N+1}{2}\right),
\end{align*}
which gives the desired result.

\subsubsection{Proof of Lemma~\ref{lm:lower_bound_Hij}}\label{sec_app:proof_lower_bound_Hij}
For notation simplicity, we let $n_{-1} = n_{-1}^{(k)}$. 
By \eqref{eq:delta_x_ij}, we know that
\begin{align}\label{eq:delta_x_ij_lower}
    \delta x_{i,j}^{(t,k)}&\leq \underbrace{\frac{1}{S}\E\left[\left(1-\alpha^{(t,k)}_{j,j}-\alpha^{(k)}_{s(j),j}\right)\alpha_{i,j}^{(t,k)}\left(\alpha_{j,j}^{(t,k)}+\alpha_{s(j),j}^{(k)}\right)\Big|n_{-1}=j\right]}_{(a)}\notag\\
    &\quad+\underbrace{\frac{S-3}{S(S-1)}\E\left[\left(\alpha^{(t,k)}_{i,j}+\alpha^{(t,k)}_{s(i),j}\right)\alpha_{i,j}^{(t,k)}\Big|n_{-1}=j,i\notin\{s(j),n_1\}\right]}_{(b)}.
\end{align}
By symmetry and the fact that $N_i^{(k-1)}\leq 2$ for all $i\in[S]$, we have 
$$\alpha_{i,j}^{(t,k)}\leq \mathbbm{1}\left\{N_i^{(k-1)}\geq 1\right\}\frac{2\left(1-\alpha^{(t,k)}_{j,j}-\alpha^{(t,k)}_{s(j),j}\right)}{N-2}  \quad \mbox{and} \quad \alpha_{s(j),j}^{(t,k)}\leq \alpha_{j,j}^{(t,k)}$$
since $N_{s(j)}^{(k-1)}=1$.
Thus we have
\begin{align}\label{eq:lower_a}
(a)&\leq \frac{4}{S(N-2)}\E\left[\mathbbm{1}\left\{N_i^{(k-1)}\geq 1\right\}\left(1-\alpha^{(t,k)}_{j,j}-\alpha^{(t,k)}_{s(j),j}\right)^2\alpha_{j,j}^{(t,k)}\Big|n_{-1}=j\right]\notag\\
&\leq \frac{4(N-1)}{S(S-1)(N-2)}\E\left[\left(1-\alpha^{(t,k)}_{j,j}-\alpha^{(t,k)}_{s(j),j}\right)^2\alpha_{j,j}^{(t,k)}\Big|n_{-1}=j\right].
\end{align}
Similarly, we have
$$\alpha_{i,j}^{(t,k)}+\alpha_{s(i),j}^{(t,k)}\leq \mathbbm{1}\left\{N_i^{(k-1)}\geq 1\right\}\frac{3\left(1-\alpha^{(t,k)}_{j,j}-\alpha^{(t,k)}_{s(j),j}\right)}{N-2}  \quad \mbox{and} \quad 
  \alpha_{i,j}^{(t,k)}\leq 2\alpha_{j,j}^{(t,k)}.$$
Further, by \eqref{eq:two_thirds} we have when $t\leq T_1$,
\begin{align*}
    1-\alpha^{(t,k)}_{j,j}-\alpha^{(t,k)}_{s(j),j}\geq \frac{1}{2}-\frac{2}{N}.
\end{align*}
Thus we have
\begin{align}\label{eq:lower_b}
    (b)&\leq \frac{12(S-3)N}{S(S-1)(N-2)^2}\E\left[\mathbbm{1}\left\{N_i^{(k-1)}\geq 1\right\}\left(1-\alpha^{(t,k)}_{j,j}-\alpha^{(t,k)}_{s(j),j}\right)^2\alpha_{j,j}^{(k)}\Big|n_{-1}=j,i\notin\{s(j),n_1\}\right]\notag\\
    &\leq \frac{12N(N-3)}{S(S-1)(N-2)^2}\E\left[\left(1-\alpha^{(t,k)}_{j,j}-\alpha^{(t,k)}_{s(j),j}\right)^2\alpha_{j,j}^{(t,k)}\Big|n_{-1}=j,i\notin\{s(j),n_1\}\right]\notag\\
    &\leq \frac{12N}{S(S-1)(N-2)}\E\left[\left(1-\alpha^{(t,k)}_{j,j}-\alpha^{(t,k)}_{s(j),j}\right)^2\alpha_{j,j}^{(t,k)}\Big|n_{-1}=j\right].
\end{align}
Plugging \eqref{eq:lower_a} and \eqref{eq:lower_b} into \eqref{eq:delta_x_ij_lower}, we have
\begin{align}\label{eq:delta_x_ij_lower_bound}
    \delta x_{i,j}^{(t,k)}&\leq \frac{16N}{S(S-1)(N-2)}\E\left[\left(1-\alpha^{(t,k)}_{j,j}-\alpha^{(t,k)}_{s(j),j}\right)^2\alpha_{j,j}^{(t,k)}\Big|n_{-1}=j\right]\overset{\eqref{eq:delta_x_jj}}\leq -\frac{16N}{(S-1)(N-2)}\delta x_{j,j}^{(t,k)}.
\end{align}

Analogously, by \eqref{eq:delta_y_ij} we have
\begin{align}\label{eq:delta_y_ij_lower}
    \delta y_{i,j}^{(t,k)} &\leq\frac{1}{S}\E\left[\alpha_{i,j}^{(t,k)}\left(\left(1-\alpha^{(t,k)}_{j,j}\right)\alpha_{j,j}^{(t,k)}+\alpha_{i,j}^{(t,k)}\right)\bigg|n_{-1}=j\right].
\end{align}
Note that
\begin{align}\label{eq:lower_term_1}
    \alpha_{i,j}^{(t,k)}\left(1-\alpha^{(t,k)}_{j,j}\right)\alpha_{j,j}^{(t,k)}\leq \mathbbm{1}\left\{N_i^{(k-1)}\geq 1\right\}\cdot\frac{2}{N-1}\left(1-\alpha^{(t,k)}_{j,j}\right)^2\alpha_{j,j}^{(t,k)},
\end{align}
and when $t\leq T_1$,
\begin{align}
    \left(\alpha_{i,j}^{(t,k)}\right)^2\leq \mathbbm{1}\left\{N_i^{(k-1)}\geq 1\right\}\cdot 4\alpha_{j,j}^{(t,k)}\frac{\left(1-\alpha^{(t,k)}_{j,j}\right)}{N-1}\overset{\eqref{eq:two_thirds}}\leq \mathbbm{1}\left\{N_i^{(k-1)}\geq 1\right\}\cdot\frac{8}{N-1}\left(1-\alpha^{(t,k)}_{j,j}\right)^2\alpha_{j,j}^{(t,k)}.
\end{align}
Plugging the above two expressions into \eqref{eq:delta_y_ij_lower}, we have
\begin{align}\label{eq:delta_y_ij_lower_bound}
    \delta y_{i,j}^{(t,k)} &\leq\frac{1}{S}\E\left[\mathbbm{1}\left\{N_i^{(k-1)}\geq 1\right\}\cdot\frac{10}{N-1}\left(1-\alpha^{(t,k)}_{j,j}\right)^2\alpha_{j,j}^{(t,k)}\bigg|n_{-1}=j\right]\notag\\
    &\leq \frac{10}{S(S-1)}\E\left[\left(1-\alpha^{(t,k)}_{j,j}\right)^2\alpha_{j,j}^{(t,k)}\bigg|n_{-1}=j\right]\overset{\eqref{eq:delta_y_jj}}\leq -\frac{10}{S-1}\delta y_{j,j}^{(t,k)}.
\end{align}
Combining \eqref{eq:delta_x_ij_lower_bound} and \eqref{eq:delta_y_ij_lower_bound}, we have
\begin{align}\label{eq:delta_ij_lower_bound}
    \delta x_{i,j}^{(t,k)}+\delta y_{i,j}^{(t,k)}\leq -\frac{16N}{(S-1)(N-2)}\left(\delta x_{j,j}^{(t,k)}+\delta y_{j,j}^{(t,k)}\right)\leq -\frac{19}{S-1}\left(\delta x_{j,j}^{(t,k)}+\delta y_{j,j}^{(t,k)}\right).
\end{align}
This indicates that for all $i,j\in[S]$, $i\neq j$,
\begin{align*}
    \forall t\leq T_1:\quad H_{i,j}^{(t)}\geq -\frac{19}{S-1}H_{j,j}^{(t)}.
\end{align*}

\subsubsection{Proof of Lemma~\ref{lm:H_ij_after_T_1}}\label{sec_app:proof_H_ij_after_T_1}
For notation simplicity, we let $n_{-1} = n_{-1}^{(k)}$. The proof is similar to that of Lemma~\ref{lm:lower_bound_Hij}, except that after iteration $T_1$, we have the following tighter bound:
\begin{align}\label{eq:lower_tighter}
    \alpha_{s(j),j}^{(t,k)}\leq \frac{1}{2(N-1)}\overset{\eqref{eq:one_third}}\leq \frac{3\alpha_{j,j}^{(t,k)}}{2(N-1)},\quad\text{and}\quad \alpha_{i,j}^{(t,k)}\leq \frac{3\alpha_{j,j}^{(t,k)}}{N-1},
\end{align}
and thus
\begin{align}\label{eq:a_bound_after_T_1}
    (a)\leq \frac{2N+1}{S(S-1)(N-2)}\E\left[\left(1-\alpha^{(t,k)}_{j,j}-\alpha^{(t,k)}_{s(j),j}\right)^2\alpha_{j,j}^{(t,k)}\Big|n_{-1}=j\right],
\end{align}
where (a) is defined in \eqref{eq:delta_x_ij_lower}.
When $t\leq T_1+1$, we also have
\begin{align}
    \alpha_{i,j}^{(t,k)}\leq \frac{2\left(1-\alpha^{(t,k)}_{j,j}-\alpha^{(t,k)}_{s(j),j}\right)}{N-1}\overset{\eqref{eq:one_third}}\leq \frac{6\alpha_{j,j}^{(t,k)}\left(1-\alpha^{(t,k)}_{j,j}-\alpha^{(t,k)}_{s(j),j}\right)}{N-1},
\end{align}
where the second inequality also uses the fact that $\alpha_{j,j}^{(t,k)}$ increases  after $t>T_1$. Therefore, we have
\begin{align}\label{eq:b_bound_after_T_1}
    (b)&\leq \frac{18(S-3)}{S(S-1)^2(N-2)}\E\left[\left(1-\alpha^{(t,k)}_{j,j}-\alpha^{(t,k)}_{s(j),j}\right)^2\alpha_{j,j}^{(t,k)}\Big|n_{-1}=j,i\notin\{s(j),n_1\}\right]\notag\\
    &\leq \frac{18}{S(S-1)(N-2)}\E\left[\left(1-\alpha^{(t,k)}_{j,j}-\alpha^{(t,k)}_{s(j),j}\right)^2\alpha_{j,j}^{(t,k)}\Big|n_{-1}=j,i\notin\{s(j),n_1\}\right],
\end{align}
where (b) is also defined in \eqref{eq:delta_x_ij_lower}.

Plugging \eqref{eq:a_bound_after_T_1} and \eqref{eq:b_bound_after_T_1} into \eqref{eq:delta_x_ij_lower_bound}, we have
\begin{align}\label{eq:lower_delta_x_ij}
    \delta x_{i,j}^{(t,k)}&\leq \frac{2N+19}{S(S-1)(N-2)}\E\left[\left(1-\alpha^{(t,k)}_{j,j}-\alpha^{(t,k)}_{s(j),j}\right)^2\alpha_{j,j}^{(t,k)}\Big|n_{-1}=j\right]\notag\\
    &\leq \frac{4}{S(S-1)}\E\left[\left(1-\alpha^{(t,k)}_{j,j}-\alpha^{(t,k)}_{s(j),j}\right)^2\alpha_{j,j}^{(t,k)}\Big|n_{-1}=j\right]\notag\\
    &\overset{\eqref{eq:delta_x_jj}}\leq -\frac{4}{S-1}\delta x_{j,j}^{(t,k)}.
\end{align}

Moreover, when $t\geq T_1+1$, \eqref{eq:lower_term_1} still holds, and we have
\begin{align}\label{eq:lower_term_2}
    \left(\alpha_{i,j}^{(t,k)}\right)^2\leq \mathbbm{1}\left\{N_i^{(k-1)}\geq 1\right\}\cdot \frac{4\left(1-\alpha^{(t,k)}_{j,j}\right)^2}{(N-1)^2}\overset{\eqref{eq:one_third}}\leq \mathbbm{1}\left\{N_i^{(k-1)}\geq 1\right\}\cdot\frac{12\left(1-\alpha^{(t,k)}_{j,j}\right)^2}{(N-1)^2}\alpha_{j,j}^{(t,k)},
\end{align}
where the second inequality also uses the fact that $\alpha_{j,j}^{(t,k)}$ increases  after $t>T_1$.
Plugging \eqref{eq:lower_term_1} and \eqref{eq:lower_term_2} into \eqref{eq:delta_y_ij_lower}, we have
\begin{align}\label{eq:lower_delta_y_ij}
    \delta y_{i,j}^{(t,k)}&\leq \frac{2N+10}{S(S-1)(N-1)}\E\left[\left(1-\alpha^{(t,k)}_{j,j}\right)^2\alpha_{j,j}^{(t,k)}\bigg|n_{-1}=j\right]\notag\\
    &\leq \frac{4}{S(S-1)}\E\left[\left(1-\alpha^{(t,k)}_{j,j}\right)^2\alpha_{j,j}^{(t,k)}\bigg|n_{-1}=j\right]  \overset{\eqref{eq:delta_y_jj}}\leq -\frac{4}{S-1}\delta y_{j,j}^{(t,k)}.
\end{align}
Combining \eqref{eq:lower_delta_x_ij} and \eqref{eq:lower_delta_y_ij}, we have
\begin{align*}
    \delta x_{i,j}^{(t,k)}+\delta y_{i,j}^{(t,k)}\leq -\frac{4}{S-1}\left(\delta x_{j,j}^{(t,k)}+\delta y_{j,j}^{(t,k)}\right).
\end{align*}
This combining with \eqref{eq:lower_bound_Hij} indicates that for all $i,j\in[S]$, $i\neq j$,
\begin{align*}
    \forall t\geq T_1+1:\quad -\left(H_{i,j}^{(t)}-H_{i,j}^{(T_1+1)}\right)\leq \frac{4}{S-1}\left(H_{j,j}^{(t)}-H_{j,j}^{(T_1+1)}\right).
\end{align*}


\subsubsection{Proof of Lemma~\ref{lm:T_2_bound}}\label{sec_app:proof_T_2_bound}
By our choice of $T_2$ (see \eqref{eq:T_2}), we have 
\begin{align*}
    \widecheck\alpha^{(T_2)}\coloneqq \frac{\exp(H_{1,1}^{(T_2)})}{(N+m-2)\exp(H_{2,1}^{(T_2)})+\exp(H_{1,1}^{(T_2)})+1}\leq 1-\sqrt{\frac{\epsilon}{2m}}.
\end{align*}
By Induction Hypothesis I and II we know that for all $j\in[S]$ and $i\in\{0,\ldots,S\}\setminus\{j\}$:
\begin{align*}
    \forall t\in\NN:\quad H_{i,j}^{(t)}\leq \frac{9}{N-1}H_{j,j}^{(t)}.
\end{align*}
Combining the above two expressions, we have
\begin{align*}
    \frac{\exp(H_{1,1}^{(T_2)})}{(N+m-1)\exp(\frac{9}{N-1}H_{1,1}^{(T_2)})+\exp(H_{1,1}^{(T_2)})}\leq 1-\sqrt{\frac{\epsilon}{2m}},
\end{align*}
which gives
\begin{align*}
    H_{1,1}^{(T_2)}&\leq \frac{1}{1-\frac{9}{N-1}}\log\left((N+m-1)\left(\sqrt{\frac{2m}{\epsilon}}-1\right)\right) \leq 3 \log\left((N+m-1)\left(\sqrt{\frac{2m}{\epsilon}}-1\right)\right).
\end{align*}
By \eqref{eq:delta_jj_II}, we have
\begin{align*}
    H_{1,1}^{(T_2)}-H_{1,1}^{(T_1)}\geq \frac{\eta\epsilon (T_2-T_1)}{4mS}.
\end{align*}
Combining the above two expressions, we have
$$    T_2-T_1\leq \frac{12mS}{\eta\epsilon}\log\left((N+m-1)\left(\sqrt{\frac{2m}{\epsilon}}-1\right)\right).$$

\subsubsection{Proof of Lemma~\ref{lm:H_jj_after_T_2}}\label{sec_app:proof_H_jj_after_T_2}
 
For notation simplicity, we let $n_{-1} = n_{-1}^{(k)}$. From \eqref{eq:loss_simplify_x}, it gives that 
\begin{align*}
    \Delta_x^{(t,k)} &=\frac{1}{2S}\sum_{j\in[S]}\EE\left[\left(\frac{1}{2}\sum_{l\in[S]\atop l\notin \{n_1,j,s(j)\}}(\alpha^{(t,k)}_{l,j}+\alpha^{(t,k)}_{s(l),j})^2+(1-\alpha^{(t,k)}_{j,j}-\alpha^{(t,k)}_{s(j),j})^2+(\alpha^{(t,k)}_{0,j})^2\right)\Bigg|n_{-1}=j\right]\notag\\
    &\leq \frac{1}{2S}\sum_{j\in[S]}\EE\left[\left(\max_{0\leq i\leq S,\atop i\neq j}\alpha^{(t,k)}_{i,j}\left(\sum_{0\leq i\leq S,\atop l\notin \{n_1,j,s(j)\}}\alpha^{(t,k)}_{i,j}\right)+(1-\alpha^{(t,k)}_{j,j}-\alpha^{(t,k)}_{s(j),j})^2\right)\Bigg|n_{-1}=j\right]\notag\\
    &\leq \frac{1}{S}\sum_{j\in[S]}\EE\left[\left(1-\alpha_{j,j}^{(t,k)}\right)^2\Bigg|n_{-1}=j\right],
\end{align*}
and similarly, \eqref{eq:loss_simplify_y} gives that
\begin{align*}
    \Delta_y^{(t,k)} &= \frac{1}{2S}\sum_{j\in[S]}\E\left[\left(\sum_{l\in[S],\atop l\neq j}\left(\alpha_{l,j}^{(t,k)}\right)^2+\left(1-\alpha_{j,j}^{(t,k)}\right)^2\right)\Bigg|n_{-1}=j\right] \\
    &\leq \frac{1}{S}\sum_{j\in[S]}\EE\left[\left(1-\alpha_{j,j}^{(t,k)}\right)^2\Bigg|n_{-1}=j\right].
\end{align*}
Thus we have when $t\geq T_2+1$:
$$    \losstrain(\theta^{(t)})=\sum_{k=1}^m\left(\Delta_x^{(t,k)}+\Delta_y^{(t,k)}\right)\leq \frac{2}{S}\sum_{k=1}^m\sum_{j\in[S]}\EE\left[\left(1-\alpha_{j,j}^{(t,k)}\right)^2\Bigg|n_{-1}=j\right]\leq \epsilon, $$
where the last inequality follows from our choice of $T_2$ (see \eqref{eq:T_2}).

Since $H_{j,j}^{(t)}$ is strictly increasing after iteration $T_2$, we have
$$     \forall t\geq T_2+1:\quad \frac{\exp(H_{j,j}^{(t)})}{\exp(H_{j,j}^{(t)})+1}\geq \alpha_{j,j}^{(t,k)}\geq 1-\sqrt{\frac{\epsilon}{2m}}, $$
from which we deduce \eqref{eq:H_jj_after_T_2}.


\subsection{Proof of Theorem~\ref{thm:convergence_reasoning}}\label{sec_app:proof_convergence_reasoning}

For any non-leaf node $i$, we let $c_1(i)$ and $c_2(i)$ denote the left child and right child of node $i$, respectively. Let $\widetilde{c}_1(i)=c_1(i)$ for any $i$ that's not a leaf node, and we define $\widetilde{c}_1(g)=\widetilde{c}_1(n_{2^m})=0$. We also define $\widetilde{p}(i)=p(i)$ for any $i$ that's not the root, and $\widetilde{p}(r)=\widetilde{p}(n_1)=0$.

Throughout the proof, we suppose Assumption~\ref{asmp:train_data} and Assumption~\ref{asmp:orthonomal_ext} hold, $N$ is a sufficiently large constant, $\eta\lesssim \frac{1}{Nm}$, and $\epsilon\lesssim 1/\poly(N)$ is sufficiently small.

\paragraph{Loss simplification and gradient computation.} 
Letting
$$x^{(k)}\coloneqq o^{(k)}(1:d_1),\quad y^{(k)}\coloneqq o^{(k)}(d_1+1:2d_1),\quad z^{(k)}\coloneqq o^{(k)}(2d_1+1:2d_1+d_2),$$ 
$$\widehat{x}^{(k)}\coloneqq \widehat{o}^{(k)}(1:d_1),\quad \widehat{y}^{(k)}\coloneqq \widehat{o}^{(k)}(d_1+1:2d_1),\quad \widehat{z}^{(k)}\coloneqq \widehat{o}^{(k)}(2d_1+1:2d_1+d_2),$$
and   
\begin{align}
    \Delta_x^{(k)}\coloneqq \frac{1}{2}\E_{\gT\sim\Ptrain}\norm{x^{(k)}-\widehat{x}^{(k)}}_2^2,\; \Delta_y^{(k)}\coloneqq \frac{1}{2}\E_{\gT\sim\Ptrain}\norm{y^{(k)}-\widehat{y}^{(k)}}_2^2,\; \Delta_z^{(k)}\coloneqq \frac{1}{2}\E_{\gT\sim\Ptrain}\norm{z^{(k)}-\widehat{z}^{(k)}}_2^2,
\end{align}
we can rewrite the loss function as
\begin{align}\label{eq:loss_decompose_reasoning}
    \losstrain  = \sum_{k=1}^{2m}\left(\Delta_x^{(k)}+\Delta_y^{(k)}+\Delta_z^{(k)}\right).
\end{align}
Recalling $\widetilde{A}\coloneqq (a_0,a_1,\ldots,a_S)\in\mathbb{R}^{d_1\times (S+1)}$ and $\widetilde{S} =(s_f, s_b)$, we further define the following matrices:
\begin{equation}\label{eq:gradient_reasoning_matrix}
    \begin{split}
    U_l &\coloneqq \widetilde{A}^\top B_l \widetilde{A}\in \R^{(S+1)\times (S+1)},\quad V_l \coloneqq \widetilde{A}^\top C_l \widetilde{A}\in \R^{(S+1)\times (S+1)},\quad l\in\{1,2\},\\
    U_3 &\coloneqq  \widetilde{S}^\top B_3 \widetilde{S} \in\R^{2\times 2},\quad\quad V_3 \coloneqq \widetilde{S}^\top C_3 \widetilde{S} \in\R^{2\times 2},
    \end{split}
\end{equation}
whose entries are denoted by $U_{l,i,j}$ and $V_{l,i,j}$, respectively. 
Define the following gradients for all $k\in[2m]$ and $l\in\{1,2,3\}$ as
\begin{equation}\label{eq:gradient_reasoning}
    \begin{split}
    \delta x_{l,i,j}^{(k)} &\coloneqq \frac{\partial \Delta_x^{(k)}}{\partial U_{l,i,j}},\quad\quad \delta y_{l,i,j}^{(t,k)} \coloneqq \frac{\partial \Delta_y^{(k)}}{\partial U_{l,i,j}},\quad\quad \delta z_{l,i,j}^{(k)} \coloneqq \frac{\partial \Delta_z^{(k)}}{\partial V_{l,i,j}},
    \end{split}
\end{equation}
since $\Delta_x^{(k)}$ and $\Delta_y^{(k)}$ depend only on $U_{l}$, and $\Delta_z^{(k)}$ depends only on $V_l$.

We define the following attention weights:
\begin{subequations}
\begin{align}
     \alpha^{(k)}_{(p,q,u),(j,v)}&\coloneqq \frac{N_{p,q,u}^{(k-1)}\exp\left(a_p^\top B_1 a_j + a_q^\top B_2 a_j + s_u^\top B_3 s_v\right)}{\sum_{p',q',u'}N_{p',q',u'}^{(k-1)}\exp\left(a_{p'}^\top B_1 a_j + a_{q'}^\top B_2 a_j + s_{u'}^\top B_3s_v\right)},\label{eq:alpha_ext}\\
    \beta^{(k)}_{(p,q,u),(j,v)} &\coloneqq \frac{N_{p,q,u}^{(k-1)}\exp\left(a_p^\top C_1 a_j + a_q^\top C_2 a_j + s_u^\top C_3 s_v\right)}{\sum_{p',q',u'}N_{p',q',u'}^{(k-1)}\exp\left(a_{p'}^\top C_1 a_j + a_{q'}^\top C_2 a_j + s_{u'}^\top C_3 s_v\right)},\label{eq:beta_ext}
\end{align}
\end{subequations}
where $u,v\in\{0,1\}$, $p,q,j\in\{0,1,\ldots,S\}$, and $N_{p,q,u}^{(k)}=N_{p,q,u}^{(k)}(\gT)$ denotes the number of times $(a_p,a_q,s_u)^\top$ appearing in the columns of $E^{(k-1)}=E^{(k-1)}(\gT)$ (set $N_{p,q,u}^{(k)}=0$ if the combination $(a_p,a_q,s_u)^\top$ does not exist in $E^{(k-1)}$). Then we have
$$\forall k\in[2m]:\quad N_{p,q,u}^{(k-1)}\in\{0,1,2\},$$
and 
\begin{align}
    \sum_{p,q,u}\alpha^{(k)}_{(p,q,u),(j,v)}=1,\quad \sum_{p,q,u}\beta^{(k)}_{(p,q,u),(j,v)}=1.
\end{align}

We simplify the loss function as in the following lemma, whose proof is given in Appendix~\ref{sec_app:proof_loss_simplification_reasoning}.  
\begin{lm}[Loss simplification]\label{lm:loss_simplification_reasoning}
Suppose Assumption~\ref{asmp:train_data} and Assumption~\ref{asmp:orthonomal_ext} hold.
    For $\Delta_x^{(k)}$, we have
    \begin{subequations}
    \begin{align}
        \forall k\in[m]:\quad \Delta_x^{(k)} &= \frac{1}{2}\sum_{j=1}^S\E\bigg[\mathbbm{1}\{n_{-1}^{(k)}=j\}\bigg\{\left(1-\alpha^{(k)}_{(p(j),j,0),(j,1)}-\alpha^{(k)}_{(\widetilde{c}_1(j),j,1),(j,1)}\right)^2\notag\\
        &\hspace{2cm}+\sum_{q\in[S]\setminus\{j\}}\left(
            \alpha^{(k)}_{(p(q),q,0),(j,1)}+\alpha^{(k)}_{(\widetilde{c}_1(q),q,1),(j,1)}\right)^2\bigg\}\bigg],\label{eq:loss_x_1}\\
  k=m+1:\quad \Delta_x^{(m+1)} &= \frac{1}{2}\sum_{j=1}^S\E\bigg[\mathbbm{1}\{n_{1}=j\}\bigg\{\left(1-\alpha^{(k)}_{(0,j,0),(j,1)}-\alpha^{(k)}_{(0,j,1),(j,1)}\right)^2\notag\\
            &\hspace{2cm}+\sum_{q\in[S]\setminus\{j\}}\left(
                \alpha^{(k)}_{(p(q),q,0),(j,1)}+\alpha^{(k)}_{(\widetilde{c}_1(q),q,1),(j,1)}\right)^2\bigg\}\bigg],\label{eq:loss_x_turning}\\
         m+2 \leq k\leq 2m: \quad\Delta_x^{(k)} &= \frac{1}{2}\sum_{j=1}^S\E\bigg[\mathbbm{1}\{n_{-1}^{(k)}=j\}\bigg\{\left(1-\alpha^{(k)}_{(p(j),j,0),(j,0)}-\alpha^{(k)}_{(\widetilde{c}_1(j),j,1),(j,0)}\right)^2\notag\\
            &\hspace{2cm}+\sum_{q\in[S]\setminus\{j\}}\left(
                \alpha^{(k)}_{(p(q),q,0),(j,0)}+\alpha^{(k)}_{(\widetilde{c}_1(q),q,1),(j,0)}\right)^2\bigg\}\bigg].\label{eq:loss_x_2}
    \end{align}
    \end{subequations}
    For $\Delta_y^{(k)}$, we have
    \begin{subequations}    
    \begin{align}
        \forall k\in[m]:\; \Delta_y^{(k)} &= \frac{1}{2}\sum_{j=1}^S\E\bigg[\mathbbm{1}\{n_{-1}^{(k)}=j\}\bigg\{\left(1-\alpha^{(k)}_{(p(j),j,0),(j,1)}-\alpha^{(k)}_{(p(j),s(j),0),(j,1)}\right)^2\notag\\
        &\quad+ \left(\alpha^{(k)}_{(0,n_1,0),(n_{-1},1)}+\alpha^{(k)}_{(0,n_{2^m},1),(n_{-1},1)}\right)^2\notag\\
        &\quad+\sum_{q\in[S]\atop q\neq p(j)}\left(
            \alpha^{(k)}_{(q,c_1(q),0),(j,1)}+\alpha^{(k)}_{(q,c_2(q),0),(j,1)}+\alpha^{(k)}_{(q,p(q),1),(j,1)}\right)^2\bigg\}\bigg],
            \label{eq:loss_y_1} \\
k=m+1:\;
            \Delta_y^{(m+1)} &= \frac{1}{2}\sum_{j=1}^S\E\bigg[\mathbbm{1}\{n_{1}=j\}\bigg\{\left(1-\alpha^{(k)}_{(n_2,c_1(n_2),0),(j,1)}-\alpha^{(k)}_{(n_2,c_2(n_2),0),(j,1)}-\alpha^{(k)}_{(n_2,j,1),(j,1)}\right)^2\notag\\
        &\quad + \left(\alpha^{(k)}_{(0,j,0),(j,1)}+\alpha^{(k)}_{(0,n_{2^m},1),(j,1)}\right)^2\notag\\
        &\quad +\sum_{q\in[S]\atop q\neq n_2}\left(
            \alpha^{(k)}_{(q,c_1(q),0),(j,1)}+\alpha^{(k)}_{(q,c_2(q),0),(j,1)}+\alpha^{(k)}_{(q,p(q),1),(j,1)}\right)^2\bigg\}\bigg],
            \label{eq:loss_y_turning} \\
m+2 \leq k\leq 2m:\;
            \Delta_y^{(k)} &= \frac{1}{2}\sum_{i,j\in[S]}\E\bigg[\mathbbm{1}\{n_{-1}^{(k)}=j,c_1(j)=i\}\bigg\{\left(1-\alpha^{(k)}_{(i,c_1(i),0),(j,0)}-\alpha^{(k)}_{(i,c_2(i),0),(j,0)}-\alpha^{(k)}_{(i,j,1),(j,0)}\right)^2\notag\\
        &\quad + \left(\alpha^{(k)}_{(0,n_1,0),(j,0)}+\alpha^{(k)}_{(0,n_{2^m},1),(j,0)}\right)^2\notag\\
        &\quad +\sum_{q\in[S]\atop q\neq i}\left(
            \alpha^{(k)}_{(q,c_1(q),0),(j,0)}+\alpha^{(k)}_{(q,c_2(q),0),(j,0)}+\alpha^{(k)}_{(q,p(q),1),(j,0)}\right)^2\bigg\}\bigg].
            \label{eq:loss_y_2}
        \end{align}
    \end{subequations}
          For $\Delta_z^{(k)}$, we have  
          \begin{subequations}
        \begin{align}
        k\in[m]: \;    \Delta_z^{(k)} &= \frac{1}{2}\sum_{j=1}^S\E\bigg[\mathbbm{1}\{n_{-1}^{(k)}=j\}\bigg\{\bigg(1-\sum_{q\in[S]}\beta^{(k)}_{(\widetilde{c}_1(q),q,1),(j,1)}\bigg)^2+\bigg(\sum_{q\in[S]}\beta^{(k)}_{(\widetilde{p}(q),q,0),(j,1)}\bigg)^2\bigg\}\bigg],\label{eq:loss_z_1} \\
     k=m+1: \;        \Delta_z^{(m+1)} &= \frac{1}{2}\sum_{j=1}^S\E\bigg[\mathbbm{1}\{n_{1}=j\}\bigg\{\bigg(1-\sum_{q\in[S]}\beta^{(k)}_{(\widetilde{p}(q),q,0),(j,1)}\bigg)^2+\bigg(\sum_{q\in[S]}\beta^{(k)}_{(\widetilde{c}_1(q),q,1),(j,1)}\bigg)^2\bigg\}\bigg],\label{eq:loss_z_turning} \\
m+2 \leq k \leq 2m: \;
            \Delta_z^{(k)} &= \frac{1}{2}\sum_{j=1}^S\E\bigg[\mathbbm{1}\{n_{1}=j\}\bigg\{\bigg(1-\sum_{q\in[S]}\beta^{(k)}_{(\widetilde{p}(q),q,0),(j,0)}\bigg)^2+\bigg(\sum_{q\in[S]}\beta^{(k)}_{(\widetilde{c}_1(q),q,1),(j,0)}\bigg)^2\bigg\}\bigg].\label{eq:loss_z_2}
        \end{align}
        \end{subequations}
        Here,  $n_{-1}^{(k)}$ denotes the node index of the embedding $x_{-1}^{(k-1)}$.
\end{lm}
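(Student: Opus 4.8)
The plan is to establish Lemma~\ref{lm:loss_simplification_reasoning} by unwinding the autoregressive construction step by step and exploiting the orthonormality of the embeddings. Recall that under the teacher-forcing training scheme, at reasoning step $k$ the input $E_{\mathsf{train}}^{(k-1)}(\gT)$ is the concatenation of the original input $\embedext(\gT)$ with the ground-truth columns $o^{(1)}(\gT),\ldots,o^{(k-1)}(\gT)$ of $\gtext(\gT)$. Therefore the column set of $E_{\mathsf{train}}^{(k-1)}$ is completely known and deterministic given $\gT$ and $k$: it consists of the $l(\gT)$ edge tokens $(x_i^\top,y_i^\top,s_f^\top)^\top$, the two ``fill'' tokens $(a_0^\top,a_r^\top,s_f^\top)^\top$ and $(a_0^\top,a_g^\top,s_b^\top)^\top$, plus the first $k-1$ output tokens given by \eqref{eq:output_ext}. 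The first task is to tabulate, for each phase ($k\le m$, $k=m+1$, $m+2\le k\le 2m$), exactly which triples $(a_p,a_q,s_u)$ appear as columns and with what multiplicity $N_{p,q,u}^{(k-1)}$; this is a bookkeeping step driven by the node-ordering convention \eqref{eq:path} and the fact that $x_{-1}^{(k-1)}$ is a node embedding appearing with known multiplicity.

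Next I would plug the query vector at step $k$ into the softmax. Using \eqref{eq:output_simplified_ext}, the first head's logit vector is $\mu^{(k-1)}=X^{(k-1)\top}B_1 y_{-1}^{(k-1)}+Y^{(k-1)\top}B_2 y_{-1}^{(k-1)}+Z^{(k-1)\top}B_3 z_{-1}^{(k-1)}$, and by the definitions \eqref{eq:gradient_reasoning_matrix} of $U_1,U_2,U_3$ together with orthonormality (Assumption~\ref{asmp:orthonomal_ext}), each logit entry depends only on the relevant entries of $U_1,U_2,U_3$ — hence the attention weight of a column of type $(a_p,a_q,s_u)$ against query $(a_j,s_v)$ is exactly $\alpha^{(k)}_{(p,q,u),(j,v)}$ as defined in \eqref{eq:alpha_ext}. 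Consequently $\widehat{x}^{(k)}$, $\widehat{y}^{(k)}$, $\widehat{z}^{(k)}$ are explicit linear combinations, with coefficients summing to one, of the token embeddings present in $E_{\mathsf{train}}^{(k-1)}$: for $\widehat y^{(k)}$ the weight on $a_q$ aggregates $\alpha$ over the parent-edge token $(q,c_1(q),0)$, the other parent-edge token $(q,c_2(q),0)$, and the already-produced reverse token $(q,p(q),1)$ (and symmetrically $a_p$ for $\widehat x^{(k)}$, where the produced forward token flips the parent/child order — this is exactly why $\widetilde c_1$ and $\widetilde p$, with the boundary conventions $\widetilde c_1(g)=0$, $\widetilde p(r)=0$, enter). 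The third head's output $\widehat z^{(k)}$ is an affine combination of $s_f,s_b$ with weights $\beta$ grouped according to whether the contributing column carries stage bit $0$ or $1$.

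Then, since the target columns $o^{(k)}(\gT)$ from \eqref{eq:output_ext} are themselves single orthonormal embeddings (e.g.\ $x^{(k)}=a_n$, $y^{(k)}=a_{p(n)}$, $z^{(k)}=s_b$ in Stage~1, with the analogous values in Stage~2 and at the turning point), the squared-error $\norm{x^{(k)}-\widehat x^{(k)}}_2^2=1-2\langle a_{\text{target}},\widehat x^{(k)}\rangle+\norm{\widehat x^{(k)}}_2^2$ expands, by Parseval, into $(1-\text{(weight on target)})^2+\sum_{\text{other } q}(\text{weight on }a_q)^2$. Substituting the grouped $\alpha$/$\beta$ weights and taking expectation over $\gT\sim\Ptrain$ — splitting on the event $\{n_{-1}^{(k)}=j\}$ (and, in Stage~2 for $\Delta_y^{(k)}$, also on $\{c_1(j)=i\}$ because the flipped forward token is indexed by the child on the path) — yields precisely formulas \eqref{eq:loss_x_1}--\eqref{eq:loss_z_2}. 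Summing over $k$ recovers the decomposition \eqref{eq:loss_decompose_reasoning}. I expect the main obstacle to be the careful case analysis at the \emph{turning point} $k=m+1$ and in Stage~2: one must track that the fill token $(a_0,a_g,s_b)$ and the fill token $(a_0,a_r,s_f)$ change their role relative to the query (the query's stage bit flips from $s_b$ to $s_f$), that the already-emitted reverse-path tokens now collide in node label with the soon-to-be-emitted forward-path tokens but differ in parent/child order and stage bit, and that the boundary nodes $r$ and $g$ must be handled via $\widetilde p$ and $\widetilde c_1$. Getting these multiplicities and groupings exactly right — so that every term in the three target displays is accounted for with no double counting — is the delicate part; the rest is Parseval and linearity of expectation.
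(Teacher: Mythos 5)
Your proposal is correct and follows essentially the same route as the paper's own proof: enumerate the columns of the teacher-forced input at each step, identify the softmax weights with the grouped attention coefficients $\alpha^{(k)}_{(p,q,u),(j,v)}$ and $\beta^{(k)}_{(p,q,u),(j,v)}$, expand the squared error of each output block via orthonormality of the embeddings, and take expectation split on $\{n_{-1}^{(k)}=j\}$ (and on $\{c_1(j)=i\}$ in Stage~2 for $\Delta_y^{(k)}$), handling the three phases $k\in[m]$, $k=m+1$, $m+2\le k\le 2m$ separately with the boundary conventions $\widetilde p(r)=0$, $\widetilde c_1(g)=0$. The paper's proof is exactly this computation, carried out explicitly for $k\in[m]$ and noted to extend analogously to the turning point and Stage~2.
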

We also provide a simplified expression of $\delta x_{l,i,j}^{(k)}$, $\delta y_{l,i,j}^{(k)}$, $\delta z_{l,i,j}^{(k)}$ in the following lemma, whose proof is given in Appendix~\ref{sec_app:proof_gradient_reasoning}.
\begin{lm}[Gradient expression]\label{lm:gradient_reasoning}
    We have for all $i\in\{0,\ldots,S\}$, $j\in[S]$, $k\in[2m]$, $\xi\in\{x,y\}$:
    \begin{subequations}
\begin{align}
    \delta \xi_{1,i,j}^{(k)}=\frac{\partial \Delta_{\xi}^{(k)}}{\partial U_{1,i,j}} &= \sum_{(p,q,u,v)\atop p\neq i} \frac{\partial \Delta_{\xi}^{(k)}}{\partial \alpha^{(k)}_{(p,q,u),(j,v)}}\bigg(-\alpha^{(k)}_{(p,q,u),(j,v)}\sum_{s\in[S]\atop w\in\{0,1\}}\alpha^{(k)}_{(i,s,w),(j,v)}\bigg)\notag\\
    &\quad +\sum_{(q,u,v)} \frac{\partial \Delta_{\xi}^{(k)}}{\partial \alpha^{(k)}_{(i,q,u),(j,v)}}\alpha^{(k)}_{(i,q,u),(j,v)}\bigg(1-\sum_{s\in[S]\atop w\in\{0,1\}}\alpha^{(k)}_{(i,s,w),(j,v)}\bigg), \label{eq:delta_x_1} \\
    \delta \xi_{2,i,j}^{(k)}=\frac{\partial \Delta_{\xi}^{(k)}}{\partial U_{2,i,j}} &= \sum_{(p,q,u,v)\atop q\neq i} \frac{\partial \Delta_{\xi}^{(k)}}{\partial \alpha^{(k)}_{(p,q,u),(j,v)}}\bigg(-\alpha^{(k)}_{(p,q,u),(j,v)}\sum_{r\in\{0,\ldots,S\}\atop w\in\{0,1\}}\alpha^{(k)}_{(r,i,w),(j,v)}\bigg)\notag\\
    &\quad +\sum_{(p,u,v)} \frac{\partial \Delta_{\xi}^{(k)}}{\partial \alpha^{(k)}_{(p,i,u),(j,v)}}\alpha^{(k)}_{(p,i,u),(j,v)}\bigg(1-\sum_{r\in\{0,\ldots,S\}\atop w\in\{0,1\}}\alpha^{(k)}_{(r,i,w),(j,v)}\bigg),  \label{eq:delta_x_2}
\end{align}
and for any $u,v\in\{0,1\}$, we have 
\begin{align}\label{eq:delta_x_3}
    \delta \xi_{3,u,v}^{(k)}=\frac{\partial \Delta_{\xi}^{(k)}}{\partial U_{3,u,v}} &= \sum_{(p,q,j)} \frac{\partial \Delta_{\xi}^{(k)}}{\partial \alpha^{(k)}_{(p,q,u^c),(j,v)}}\bigg(-\alpha^{(k)}_{(p,q,u^c),(j,v)}\sum_{r\in\{0,\ldots,S\}\atop s\in[S]}\alpha^{(k)}_{(r,s,u^c),(j,v)}\bigg)\notag\\
    &\quad +\sum_{(p,q,j)} \frac{\partial \Delta_{\xi}^{(k)}}{\partial \alpha^{(k)}_{(p,q,u),(j,v)}}\alpha^{(k)}_{(p,q,u),(j,v)}\bigg(1-\sum_{r\in\{0,\ldots,S\}\atop s\in[S]}\alpha^{(k)}_{(r,s,u),(j,v)}\bigg),
\end{align}
\end{subequations}
where 
$u_c=1 -u $. Similarly, we have 
\begin{subequations}
\begin{align}
    \delta z_{1,i,j}^{(k)}=\frac{\partial \Delta_{z}^{(k)}}{\partial V_{1,i,j}} &= \sum_{(p,q,u,v)\atop p\neq i} \frac{\partial \Delta_{z}^{(k)}}{\partial \beta^{(k)}_{(p,q,u),(j,v)}}\bigg(-\beta^{(k)}_{(p,q,u),(j,v)}\sum_{s\in[S]\atop w\in\{0,1\}}\beta^{(k)}_{(i,s,w),(j,v)}\bigg)\notag\\
    &\quad +\sum_{(q,u,v)} \frac{\partial \Delta_{z}^{(k)}}{\partial \beta^{(k)}_{(i,q,u),(j,v)}}\beta^{(k)}_{(i,q,u),(j,v)}\bigg(1-\sum_{s\in[S]\atop w\in\{0,1\}}\beta^{(k)}_{(i,s,w),(j,v)}\bigg), \label{eq:delta_z_1} \\
    \delta z_{2,i,j}^{(k)}=\frac{\partial \Delta_{z}^{(k)}}{\partial V_{2,i,j}} &= \sum_{(p,q,u,v)\atop q\neq i} \frac{\partial \Delta_{z}^{(k)}}{\partial \beta^{(k)}_{(p,q,u),(j,v)}}\bigg(-\beta^{(k)}_{(p,q,u),(j,v)}\sum_{r\in\{0,\ldots,S\}\atop w\in\{0,1\}}\beta^{(k)}_{(r,i,w),(j,v)}\bigg)\notag\\
    &\quad +\sum_{(p,u,v)} \frac{\partial \Delta_{z}^{(k)}}{\partial \beta^{(k)}_{(p,i,u),(j,v)}}\beta^{(k)}_{(p,i,u),(j,v)}\bigg(1-\sum_{r\in\{0,\ldots,S\}\atop w\in\{0,1\}}\beta^{(k)}_{(r,i,w),(j,v)}\bigg), \label{eq:delta_z_2}
\end{align}
and for any $u,v\in\{0,1\}$, we have 
\begin{align}\label{eq:delta_z_3}
    \delta z_{3,u,v}^{(k)}=\frac{\partial \Delta_{z}^{(k)}}{\partial V_{3,u,v}} &= \sum_{(p,q,j)} \frac{\partial \Delta_{z}^{(k)}}{\partial \beta^{(k)}_{(p,q,u^c),(j,v)}}\bigg(-\beta^{(k)}_{(p,q,u^c),(j,v)}\sum_{r\in\{0,\ldots,S\}\atop s\in[S]}\beta^{(k)}_{(r,s,u^c),(j,v)}\bigg)\notag\\
    &\quad +\sum_{(p,q,j)} \frac{\partial \Delta_{z}^{(k)}}{\partial \beta^{(k)}_{(p,q,u),(j,v)}}\beta^{(k)}_{(p,q,u),(j,v)}\bigg(1-\sum_{r\in\{0,\ldots,S\}\atop s\in[S]}\beta^{(k)}_{(r,s,u),(j,v)}\bigg).
\end{align}
\end{subequations}
\end{lm}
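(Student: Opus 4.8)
The plan is to obtain each gradient by the chain rule, composing the derivative of the simplified loss pieces with respect to the attention weights with the derivative of the attention weights with respect to the entries of the effective parameter matrices $U_l,V_l$. Two structural facts make this clean. First, by the orthonormality in Assumption~\ref{asmp:orthonomal_ext} together with the loss decomposition of Lemma~\ref{lm:loss_simplification_reasoning}, each of $\Delta_x^{(k)},\Delta_y^{(k)}$ is an explicit polynomial in the weights $\{\alpha^{(k)}_{(p,q,u),(j,v)}\}$ and $\Delta_z^{(k)}$ an explicit polynomial in $\{\beta^{(k)}_{(p,q,u),(j,v)}\}$, with $\Delta_x^{(k)},\Delta_y^{(k)}$ not involving the $C_l$'s and $\Delta_z^{(k)}$ not involving the $B_l$'s. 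Second, each softmax logit in \eqref{eq:alpha_ext} equals $a_p^\top B_1 a_j + a_q^\top B_2 a_j + s_u^\top B_3 s_v = U_{1,p,j}+U_{2,q,j}+U_{3,u,v}$ exactly, so $\alpha^{(k)}_{(p,q,u),(j,v)}$ is a weighted softmax --- with fixed, parameter-free multiplicities $N^{(k-1)}_{p,q,u}$ --- whose logits are linear in the entries $U_{l,\cdot,\cdot}$ (and symmetrically for $\beta,V$). Consequently $\delta x_{l,i,j}^{(k)} \coloneqq \partial\Delta_x^{(k)}/\partial U_{l,i,j}$ and its counterparts are well-defined, and the chain rule reads $\delta\xi_{l,i,j}^{(k)} = \sum_{(p,q,u,v)} \frac{\partial\Delta_\xi^{(k)}}{\partial\alpha^{(k)}_{(p,q,u),(j,v)}}\,\frac{\partial\alpha^{(k)}_{(p,q,u),(j,v)}}{\partial U_{l,i,j}}$, with the analogous identity for $\delta z$ in terms of $\beta$ and $V$.

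Next I would compute the weighted-softmax Jacobian $\partial\alpha^{(k)}_{(p,q,u),(j',v)}/\partial U_{l,i,j}$, being careful that each entry $U_{l,i,j}$ is conjugate not to a single logit but to an entire \emph{group} of them. For $l=1$, the entry $U_{1,i,j}$ enters the logits exactly for key triples with first coordinate $i$ and for queries with node index $j$ (for either query stage $v$), so $\partial\alpha^{(k)}_{(p,q,u),(j',v)}/\partial U_{1,i,j} = \mathbbm{1}\{j'=j\}\,\alpha^{(k)}_{(p,q,u),(j,v)}\bigl(\mathbbm{1}\{p=i\}-\sum_{s\in[S],w\in\{0,1\}}\alpha^{(k)}_{(i,s,w),(j,v)}\bigr)$; the $l=2$ case is identical with the second coordinate $q$ replacing $p$, and the $l=3$ case has the same shape except that $U_{3,u,v}$ affects only queries whose stage equals $v$ and the relevant key group is the stage-$u$ block. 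Substituting these Jacobians into the chain rule, then splitting each resulting sum into the term where the key's perturbed coordinate matches the perturbation index (which produces the ``$1-\sum$'' factor) and the remaining terms (which produce the pure ``$-\sum$'' factor), and relabeling dummy indices, yields exactly \eqref{eq:delta_x_1}--\eqref{eq:delta_x_3} and \eqref{eq:delta_z_1}--\eqref{eq:delta_z_3}; the asymmetry in the index ranges ($i\in\{0,\dots,S\}$ but $j\in[S]$) reflects that a query column always carries a genuine node embedding $a_j$ while a key column may carry the filler $a_0$.

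The main obstacle --- really the only nontrivial point --- is the $l=3$ case of the Jacobian: with only two stages, the group of logits shifted by $U_{3,u,v}$ and its complement are each a single stage-block, so the generic grouped-softmax identity degenerates, and one must track precisely which stage block ($u$ versus $u^c = 1-u$) the partition-function contribution lands in; this is the source of the $u^c$ indices in \eqref{eq:delta_x_3} and \eqref{eq:delta_z_3}. Everything else is bookkeeping over the ranges $p,q\in\{0,\dots,S\}$, $u,v\in\{0,1\}$, $j\in[S]$, and is insensitive to which of the three regimes ($k\le m$, $k=m+1$, $k\ge m+2$) the step $k$ falls in, since the chain-rule identity is stated abstractly in terms of $\partial\Delta_\xi^{(k)}/\partial\alpha^{(k)}_{\cdot}$ and only those partial derivatives --- not the Jacobian --- see the regime.
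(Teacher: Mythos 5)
Your proposal is correct and follows essentially the same route as the paper: the paper likewise computes the grouped-softmax Jacobians $\partial\alpha^{(k)}_{(p,q,u),(j,v)}/\partial\,(a_b^\top B_l a_c)$ and $\partial\alpha^{(k)}_{(p,q,u),(j,v)}/\partial\,(s_b^\top B_3 s_c)$ (which coincide with your $\partial/\partial U_{l,i,j}$ formulas, including the vanishing unless the query node matches $j$ and the stage-block structure for $l=3$), and then concludes by the chain rule, exactly as you do.
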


Define 
\begin{align}
    w_0^{(k)} & \coloneqq\begin{cases}
        \sum_{q\in[S]}\beta^{(k)}_{(\widetilde{p}(q),q,0),(n_{-1},1)}, & k\in[m+1]\\
        \sum_{q\in[S]}\beta^{(k)}_{(\widetilde{p}(q),q,0),(n_{-1},0)}, & k\in\{m+2,\ldots,2m\}
    \end{cases}, \label{eq:w_0} \\
    w_1^{(k)}& \coloneqq\begin{cases}
        \sum_{q\in[S]}\beta^{(k)}_{(\widetilde{c}_1(q),q,1),(n_{-1},1)}, & k\in[m+1]\\
        \sum_{q\in[S]}\beta^{(k)}_{(\widetilde{c}_1(q),q,1),(n_{-1},0)}, & k\in\{m+2,\ldots,2m\}
    \end{cases}, \label{eq:w_1}
\end{align}
which represent the attention put on the tokens with stage embedding $s_f$ and $s_b$ respectively at the $k$-th reasoning step. We have
\begin{align}
    \forall k\in[2m]:\quad w_0^{(k)}+w_1^{(k)}=1.
\end{align}
We also define
\begin{align}\label{eq:p_0_q_0}
    p_0^{(k)}\coloneqq\begin{cases}
        \beta^{(k)}_{(0,n_1,0),(n_{-1},1)}, & k\in[m+1]\\
        \beta^{(k)}_{(0,n_1,0),(n_{-1},0)}, & k\in\{m+2,\ldots,2m\}
    \end{cases},\quad
    q_0^{(k)}\coloneqq\begin{cases}
        \beta^{(k)}_{(0,n_{2^m},1),(n_{-1},1)}, & k\in[m+1]\\
        \beta^{(k)}_{(0,n_{2^m},1),(n_{-1},0)}, & k\in\{m+2,\ldots,2m\}
    \end{cases},
\end{align}
for all $i\in\{0,\ldots,S\}$, we define
\begin{align}
    p_{1,i}^{(k)}\coloneqq\begin{cases}
        \beta^{(k)}_{(i,\widetilde{c}_1(i),0),(n_{-1},1)}, & k\in[m+1]\\
        \beta^{(k)}_{(i,\widetilde{c}_1(i),0),(n_{-1},0)}, & k\in\{m+2,\ldots,2m\}
    \end{cases},\;
    p_{2,i}^{(k)}\coloneqq\begin{cases}
        \beta^{(k)}_{(i,c_2(i),0),(n_{-1},1)}, & k\in[m+1]\\
        \beta^{(k)}_{(i,c_2(i),0),(n_{-1},0)}, & k\in\{m+2,\ldots,2m\}
    \end{cases},
\end{align}
and for any $i\in[S]$, we define
\begin{align}\label{eq:p_c}
    p_i^{(k)} \coloneqq\begin{cases}
        \beta^{(k)}_{(\widetilde{p}(i),i,0),(n_{-1},1)}, & k\in[m+1]\\
        \beta^{(k)}_{(\widetilde{p}(i),i,0),(n_{-1},0)}, & k\in\{m+2,\ldots,2m\}
    \end{cases},\;
    q_i^{(k)}\coloneqq\begin{cases}
        \beta^{(k)}_{(\widetilde{c}_1(i),i,1),(n_{-1},1)}, & k\in[m+1]\\
        \beta^{(k)}_{(\widetilde{c}_1(i),i,1),(n_{-1},0)}, & k\in\{m+2,\ldots,2m\}
    \end{cases}.
\end{align}
We further simplify $\delta z_{l}^{(k)}$ ($l\in\{1,2,3\}$) in the following lemma, whose  proof  is given in Appendix~\ref{sec_app:proof_delta_z_simplification}.
\begin{lm}\label{lm:delta_z_simplification}
    We have
    \begin{align}
        \forall j\in[S]:\quad\delta z_{1,0,j}^{(k)} &=\begin{cases}
            \frac{2}{S}\E\left[\left(w_0^{(k)}\right)^2 w_1^{(k)}\left(-\frac{q_0^{(k)}}{w_1^{(k)}}+\frac{p_0^{(k)}}{w_0^{(k)}}\right)\Big|n_{-1}=j\right], & k\in[m]\\
            \frac{2}{S}\E\left[\left(w_1^{(k)}\right)^2 w_0^{(k)}\left(-\frac{p_0^{(k)}}{w_0^{(k)}}+\frac{q_0^{(k)}}{w_1^{(k)}}\right)\Big|n_{-1}=j\right], & k\in\{m+1,\ldots,2m\}
        \end{cases}; \label{eq:delta_z_simplification_1}\\
        \forall i,j\in[S]:\quad\delta z_{1,i,j}^{(k)} &=\begin{cases}
            \frac{2}{S}\E\left[\left(w_0^{(k)}\right)^2 w_1^{(k)}\left(-\frac{q_{p(i)}^{(k)}}{w_1^{(k)}}+\frac{p_{1,i}^{(k)}+p_{2,i}^{(k)}}{w_0^{(k)}}\right)\Big|n_{-1}=j\right], & k\in[m]\\
            \frac{2}{S}\E\left[\left(w_1^{(k)}\right)^2 w_0^{(k)}\left(-\frac{p_{1,i}^{(k)}+p_{2,i}^{(k)}}{w_0^{(k)}}+\frac{q_{p(i)}^{(k)}}{w_1^{(k)}}\right)\Big|n_{-1}=j\right], & k\in\{m+1,\ldots,2m\}
        \end{cases};\label{eq:delta_z_simplification_1i} \\
        \forall i,j\in[S]:\quad\delta z_{2,i,j}^{(k)} &=\begin{cases}
            \frac{2}{S}\E\left[\left(w_0^{(k)}\right)^2 w_1^{(k)}\left(-\frac{q_i^{(k)}}{w_1^{(k)}}+\frac{p_{i}^{(k)}}{w_0^{(k)}}\right)\Big|n_{-1}=j\right], & k\in[m]\\
            \frac{2}{S}\E\left[\left(w_1^{(k)}\right)^2 w_0^{(k)}\left(-\frac{p_{i}^{(k)}}{w_0^{(k)}}+\frac{q_i^{(k)}}{w_1^{(k)}}\right)\Big|n_{-1}=j\right], & k\in\{m+1,\ldots,2m\} \label{eq:delta_z_simplification_2}
        \end{cases};
    \end{align}
    and 
    \begin{align}
        \delta z_{3,0,1}^{(k)}=-\delta z_{3,1,1}^{(k)}&=\begin{cases}
            2\E\left[\left(w_0^{(k)}\right)^2 w_1^{(k)}\right], & k\in[m]\\
            2\E\left[-\left(w_1^{(k)}\right)^2 w_0^{(k)}\right], & k=m+1\\
            0, & k\in\{m+2,\ldots,2m\}
        \end{cases},\label{eq:delta_z_simplification_3_*1}\\
        \delta z_{3,0,0}^{(k)}=-\delta z_{3,1,0}^{(k)}&=\begin{cases}
            0, & k\in[m+1]\\
            2\E\left[-\left(w_1^{(k)}\right)^2 w_0^{(k)}\right], & k\in\{m+2,\ldots,2m\}
        \end{cases}.\label{eq:delta_z_simplification_3_*0}
    \end{align}
    We also have
    \begin{align}\label{eq:delta_z_0}
        \forall i,j\in\{0,\ldots,S\}:\quad\delta z_{1,i,0}^{(k)} = \delta z_{2,0,j}^{(k)}=0.
    \end{align}
\end{lm}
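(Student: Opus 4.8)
The plan is to derive these identities by specializing the chain-rule formula of Lemma~\ref{lm:gradient_reasoning} to the second (stage) head and feeding in the loss expressions of Lemma~\ref{lm:loss_simplification_reasoning}; the computation runs closely parallel to that of $\delta y^{(k)}$ in the backward case (Lemma~\ref{lm:gradient_compute}).

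First I would compute the pointwise partials $\partial\Delta_z^{(k)}/\partial\beta^{(k)}_{(p,q,u),(j,v)}$. Fixing one of the three regimes ($k\le m$, $k=m+1$, or $m+2\le k\le 2m$), Lemma~\ref{lm:loss_simplification_reasoning} expresses $\Delta_z^{(k)}$ entirely through the two aggregate masses $w_0^{(k)}=\sum_{q\in[S]}\beta^{(k)}_{(\widetilde p(q),q,0),(n_{-1},v)}$ and $w_1^{(k)}=\sum_{q\in[S]}\beta^{(k)}_{(\widetilde c_1(q),q,1),(n_{-1},v)}$ --- the attention placed on stage-$s_f$ and stage-$s_b$ columns respectively, with $v=1$ for $k\le m+1$ and $v=0$ otherwise. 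Hence, up to the $\mathbbm{1}\{n_{-1}^{(k)}=j\}/S$ factor produced by the expectation over $\Ptrain$, the partial $\partial\Delta_z^{(k)}/\partial\beta^{(k)}_{(p,q,u),(j,v)}$ equals $+w_0^{(k)}$ or $-w_0^{(k)}$ (in STAGE 1 and at the turning point) --- or $\pm w_1^{(k)}$ in STAGE 2 --- depending only on whether that column carries stage $s_f$ or $s_b$, and it vanishes whenever the column's query stage differs from $v$. In particular this already yields \eqref{eq:delta_z_0}: $V_{1,i,0}$ perturbs only $\beta$'s with query node $a_0$, and $V_{2,0,j}$ only $\beta$'s whose column $y$-coordinate is $a_0$, neither of which ever arises since path nodes always lie in $[S]$.

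Next I would substitute these partials into \eqref{eq:delta_z_1}--\eqref{eq:delta_z_3}. The structural input is that every column of $E^{(k-1)}$ has an $(x,y,\text{stage})$ triple either of the form $(a_{\widetilde p(q)},a_q,s_f)$ (original edges and, in STAGE 2, regenerated edges, with the filler $(a_0,a_r,s_f)$ the case $q=r$) or $(a_{\widetilde c_1(q)},a_q,s_b)$ (STAGE 1 outputs, with the filler $(a_0,a_g,s_b)$ the case $q=g$), so that the marginal attention masses appearing in Lemma~\ref{lm:gradient_reasoning} --- such as $\sum_{s,w}\beta^{(k)}_{(i,s,w),(j,v)}$, $\sum_{r,w}\beta^{(k)}_{(r,i,w),(j,v)}$ and $\sum_{r,s}\beta^{(k)}_{(r,s,u),(j,v)}$ --- rewrite themselves in the $p_0,q_0,p_{1,i},p_{2,i},p_i,q_i$ notation of \eqref{eq:p_0_q_0}--\eqref{eq:p_c}. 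Carrying out the finite sums and then repeatedly invoking the normalization $w_0^{(k)}+w_1^{(k)}=1$ --- concretely the identities $1-w_0^{(k)}+w_1^{(k)}=2w_1^{(k)}$ and $1+w_0^{(k)}-w_1^{(k)}=2w_0^{(k)}$ --- collapses each expression to the stated form. For the stage head the same normalization makes the cross-stage term in \eqref{eq:delta_z_3} contribute equally to the self term, which is the source of the factor $2$ in \eqref{eq:delta_z_simplification_3_*1}--\eqref{eq:delta_z_simplification_3_*0}; and the vanishing of $\delta z_{3,0,0}^{(k)}$ for $k\le m+1$ (resp.\ of $\delta z_{3,0,1}^{(k)}$ for $k\ge m+2$) is merely the fact that, in those regimes, $\Delta_z^{(k)}$ does not involve query stage $v=0$ (resp.\ $v=1$).

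The main difficulty is the bookkeeping rather than any single clever step. One must (i) track exactly which triples $(a_p,a_q,s_u)$ occur in $E^{(k-1)}$ at each reasoning step and with what multiplicity $N^{(k-1)}_{p,q,u}\in\{0,1,2\}$ --- especially the doubled occurrences in STAGE 2, where a path edge appears both as an original edge and as a regenerated column, and the idiosyncratic structure of the turning-point step $k=m+1$ --- and (ii) differentiate $\Delta_z^{(k)}$ as a genuine function of the \emph{unnormalized} attention scores \emph{before} imposing $w_0^{(k)}+w_1^{(k)}=1$, since substituting $w_1^{(k)}=1-w_0^{(k)}$ prematurely would corrupt $\partial\Delta_z^{(k)}/\partial\beta$. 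Once the correct inventory of columns and partials is assembled, the remaining algebra is mechanical.
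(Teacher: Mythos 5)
Your proposal follows essentially the same route as the paper's proof: plug the loss expressions of Lemma~\ref{lm:loss_simplification_reasoning} (which depend on the attention only through $w_0^{(k)}$ and $w_1^{(k)}$) into the chain-rule formulas \eqref{eq:delta_z_1}--\eqref{eq:delta_z_3}, identify the marginal masses with the $p$'s and $q$'s of \eqref{eq:p_0_q_0}--\eqref{eq:p_c}, and collapse via $w_0^{(k)}+w_1^{(k)}=1$, exactly as the paper does for $\delta z_{1,0,j}^{(k)}$ before declaring the remaining cases analogous. One small slip in your bookkeeping: at the turning point $k=m+1$ the loss is $(1-w_0)^2+w_1^2$ (cf.\ \eqref{eq:loss_z_turning}), so the pointwise partials there are $\pm w_1^{(k)}$ rather than $\pm w_0^{(k)}$, which is precisely why $k=m+1$ groups with STAGE 2 in \eqref{eq:delta_z_simplification_1}--\eqref{eq:delta_z_simplification_3_*1}; since you flag the turning point as requiring special care, this does not affect the soundness of the plan.
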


We next give a lemma that simplifies the gradient expressions of $\delta x_l^{(k)}$ and $\delta y_l^{(k)}$, for $l=1,2,3$.

\begin{lm}\label{lm:gradient_compute_B}
    For any $j\in[S]$, we have
    \small
    \begin{align}
        \delta x_{1,0,j}^{(1)}
        &=\frac{1}{S}\E\bigg[
            -\left(1-\alpha^{(1)}_{(p(j),j,0),(j,1)}-\alpha^{(1)}_{(0,j,1),(j,1)}\right)\alpha^{(1)}_{(0,j,1),(j,1)}\left(1-\alpha^{(1)}_{(0,n_1,0),(j,1)}-\alpha^{(1)}_{(0,j,1),(j,1)}\right)\notag\\
        &\quad + \left(\alpha^{(1)}_{(0,n_1,0),(j,1)}\right)^2\left(1-\alpha^{(1)}_{(0,n_1,0),(j,1)}-\alpha^{(1)}_{(0,j,1),(j,1)}\right)\notag\\
        &\quad +\left(1-\alpha^{(1)}_{(p(j),j,0),(j,1)}-\alpha^{(1)}_{(0,j,1),(j,1)}\right)\alpha^{(1)}_{(p(j),j,0),(j,1)}\left(\alpha^{(1)}_{(0,n_1,0),(j,1)}+\alpha^{(1)}_{(0,j,1),(j,1)}\right)\notag\\
        &\quad - \sum_{q\in[S]\setminus\{j,n_1\}}\left(\alpha^{(1)}_{(p(q),q,0),(j,1)}+\alpha^{(1)}_{(\widetilde{c}_1(q),q,1),(j,1)}\right)^2\left(\alpha^{(1)}_{(0,n_1,0),(j,1)}+\alpha^{(1)}_{(0,j,1),(j,1)}\right)\bigg| n_{2^m}=j\bigg]. \label{eq:delta_x_1_0_j_1}
    \end{align}
    \normalsize
    For $k\in\{2,\ldots,m\}$, we have
    \small
    \begin{align}
        \delta x_{1,0,j}^{(k)}  
        &=\frac{1}{S}\E\bigg[\left(\alpha^{(k)}_{(0,n_1,1),(j,1)}\right)^2\left(1-\alpha^{(k)}_{(0,n_1,1),(j,1)}-\alpha^{(k)}_{(0,n_{2^m},1),(j,1)}\right)\notag\\
        &\quad +\left(\alpha^{(k)}_{(p(n_{2^m}),n_{2^m},1),(j,1)}+\alpha^{(k)}_{(0,n_{2^m},1),(j,1)}\right)\alpha^{(k)}_{(p(n_{2^m}),n_{2^m},1),(j,1)}\left(1-\alpha^{(k)}_{(0,n_1,0),(j,1)}-\alpha^{(k)}_{(0,n_{2^m},1),(j,1)}\right)\notag\\
        &\quad - \left(\alpha^{(k)}_{(p(n_{2^m}),n_{2^m},0),(j,1)}+\alpha^{(k)}_{(0,n_{2^m},1),(j,1)}\right)\alpha^{(k)}_{(p(n_{2^m}),n_{2^m},0),(j,1)}\left(\alpha^{(k)}_{(0,n_1,0),(j,1)}+\alpha^{(k)}_{(0,n_{2^m},1),(j,1)}\right)\notag\\
        &\quad +\left(1-\alpha^{(k)}_{(p(j),j,0),(j,1)}-\alpha^{(k)}_{(\widetilde{c}_1(j), j,1),(j,1)}\right)\left(\alpha^{(k)}_{(p(j),j,0),(j,1)}+\alpha^{(k)}_{(\widetilde{c}_1(j), j,1),(j,1)}\right)\left(\alpha^{(k)}_{(0,n_1,0),(j,1)}+\alpha^{(k)}_{(0,n_{2^m},1),(j,1)}\right)\notag\\
        &\quad-\sum_{q\in[S]\atop q\neq n_1,n_{2^m},j}
        \left(\alpha^{(k)}_{(p(q),q,0),(j,1)}+\alpha^{(k)}_{(\widetilde{c}_1(q),q,1),(j,1)}\right)^2\left(\alpha^{(k)}_{(0,n_1,0),(j,1)}+\alpha^{(k)}_{(0,n_{2^m},1),(j,1)}\right)\bigg| n_{-1}=j\bigg],
    \end{align}
    \normalsize
    for $k=m+1$, we have
    \small
    \begin{align}\label{eq:delta_x_1_0_j_m+1}
        \delta x_{1,0,j}^{(m+1)}
       &= \frac{1}{S}\E\Bigg[-\left(1-\alpha^{(m+1)}_{(0,j,0),(j,1)}-\alpha^{(m+1)}_{(\widetilde{c}_1(j),j,1),(j,1)}\right)\alpha^{(m+1)}_{(0,j,0),(j,1)}\left(1-\alpha^{(m+1)}_{(0,j,0),(j,1)}-\alpha^{(m+1)}_{(0,n_{2^m},1),(j,1)}\right)\notag\\
        &\quad +\left(\alpha^{(m+1)}_{(p(n_{2^m}),n_{2^m},0),(j,1)}+\alpha^{(m+1)}_{(0,n_{2^m},1),(j,1)}\right)\alpha^{(m+1)}_{(p(n_{2^m}),n_{2^m},0),(j,1)}\left(1-\alpha^{(m+1)}_{(0,j,0),(j,1)}-\alpha^{(m+1)}_{(0,n_{2^m},1),(j,1)}\right)\notag\\
        &\quad + \left(1-\alpha^{(m+1)}_{(0,j,0),(j,1)}-\alpha^{(m+1)}_{(\widetilde{c}_1(j),j,1),(j,1)}\right)\alpha^{(m+1)}_{(\widetilde{c}_1(j),j,1),(j,1)}\left(\alpha^{(m+1)}_{(0,j,0),(j,1)}+\alpha^{(m+1)}_{(0,n_{2^m},1),(j,1)}\right)\notag\\
        &\quad -\left(\alpha^{(m+1)}_{(p(n_{2^m}),n_{2^m},0),(j,1)}+\alpha^{(m+1)}_{(0,n_{2^m},1),(j,1)}\right)\alpha^{(m+1)}_{(p(n_{2^m}),n_{2^m},0),(j,1)}\left(\alpha^{(m+1)}_{(0,j,0),(j,1)}+\alpha^{(m+1)}_{(0,n_{2^m},1),(j,1)}\right)\notag\\
        &\quad -\sum_{q\in[S]\atop q\neq n_{2^m},j}
        \left(\alpha^{(m+1)}_{(p(q),q,0),(j,1)}+\alpha^{(m+1)}_{(\widetilde{c}_1(q),q,1),(j,1)}\right)^2\left(\alpha^{(m+1)}_{(0,j,0),(j,1)}+\alpha^{(m+1)}_{(0,n_{2^m},1),(j,1)}\right)\bigg| n_{-1}=j\bigg],
    \end{align}
    \normalsize
    for $k\in\left\{m+2,\ldots,2m\right\}$, we have
\small   
 \begin{align}
    \delta x_{1,0,j}^{(k)}   
        &= \frac{1}{S}\E\Bigg[
            \left(\alpha^{(k)}_{(0,n_1,0),(j,0)}+\alpha^{(k)}_{(n_2,n_1,1),(j,0)}\right)\alpha^{(k)}_{(0,n_1,0),(j,0)}\left(1-\alpha^{(k)}_{(0,n_1,0),(j,0)}-\alpha^{(k)}_{(0,n_{2^m},1),(j,0)}\right)\notag\\
            &\quad+ \left(\alpha^{(k)}_{(0,n_{2^m},1),(j,0)}+\alpha^{(k)}_{(p(n_{2^m}),n_{2^m},0),(j,0)}\right)\alpha^{(k)}_{(0,n_{2^m},1),(j,0)}\left(1-\alpha^{(k)}_{(0,n_1,0),(j,0)}-\alpha^{(k)}_{(0,n_{2^m},1),(j,0)}\right)\notag\\
            &\quad+\left(1-\alpha^{(k)}_{(p(j),j,0),(j,0)}-\alpha^{(k)}_{(\widetilde{c}_1(j),j,1),(j,0)}\right)\left(\alpha^{(k)}_{(p(j),j,0),(j,0)}+\alpha^{(k)}_{(\widetilde{c}_1(j),j,1),(j,0)}\right)\left(\alpha^{(k)}_{(0,n_1,0),(j,0)}+\alpha^{(k)}_{(0,n_{2^m},1),(j,0)}\right)\notag\\
            &\quad-\left(\alpha^{(k)}_{(0,n_1,0),(j,0)}+\alpha^{(k)}_{(n_2,n_1,1),(j,0)}\right)\alpha^{(k)}_{(n_2,n_1,1),(j,0)}\left(\alpha^{(k)}_{(0,n_1,0),(j,0)}+\alpha^{(k)}_{(0,n_{2^m},1),(j,0)}\right)\notag\\
            &\quad-\left(\alpha^{(k)}_{(0,n_{2^m},1),(j,0)}+\alpha^{(k)}_{(p(n_{2^m}),n_{2^m},0),(j,0)}\right)\alpha^{(k)}_{(p(n_{2^m}),n_{2^m},0),(j,0)}\left(\alpha^{(k)}_{(0,n_1,0),(j,0)}+\alpha^{(k)}_{(0,n_{2^m},1),(j,0)}\right)\notag\\
            &\quad-\sum_{q\in[S]\atop q\neq n_1,n_{2^m},j}
            \left(\alpha^{(k)}_{(p(q),q,0),(j,0)}+\alpha^{(k)}_{(\widetilde{c}_1(q),q,1),(j,0)}\right)^2\left(\alpha^{(k)}_{(0,n_1,0),(j,0)}+\alpha^{(k)}_{(0,n_{2^m},1),(j,0)}\right)\bigg| n_{-1}=j\Bigg].
    \end{align}
    \normalsize
    For all $k\in[m]$, we have
    \small
    \begin{align}
      \delta y_{1,0,j}^{(k)}  
        &= \frac{1}{S}\E\Bigg[
            \left(\alpha^{(k)}_{(0,n_1,0),(j,1)}+\alpha^{(k)}_{(0,n_{2^m},1),(j,1)}\right)^2\left(1-\alpha^{(k)}_{(0,n_1,0),(j,1)}-\alpha^{(k)}_{(0,n_{2^m},1),(j,1)}\right)\notag\\
            &\quad + \left(1-\alpha^{(k)}_{(p(j),j,0),(j,1)}-\alpha^{(k)}_{(p(j),s(j),0),(j,1)}\right)\left(\alpha^{(k)}_{(p(j),j,0),(j,1)}+\alpha^{(k)}_{(p(j),s(j),0),(j,1)}\right)\left(\alpha^{(k)}_{(0,n_1,0),(j,1)}+\alpha^{(k)}_{(0,n_{2^m},1),(j,1)}\right)\notag\\
            &\quad-\sum_{q\in[S]\atop q\neq p(j)}
            \left(\alpha^{(k)}_{(q,c_1(q),0),(j,1)}+\alpha^{(k)}_{(q,\widetilde{c}_2(q),0),(j,1)}+\alpha^{(k)}_{(q,p(q),1),(j,1)}\right)^2\left(\alpha^{(k)}_{(0,n_1,0),(j,1)}+\alpha^{(k)}_{(0,n_{2^m},1),(j,1)}\right)\bigg| n_{-1}=j\Bigg],
    \end{align}
    \normalsize
    for $k=m+1$, we have
    \small
    \begin{align}\label{eq:delta_y_1_0_j_m+1}
       \delta y_{1,0,j}^{(m+1)} 
        &= \frac{1}{S}\E\Bigg[
            \left(\alpha^{(m+1)}_{(0,j,0),(j,1)}+\alpha^{(m+1)}_{(0,n_{2^m},1),(j,1)}\right)^2\left(1-\alpha^{(m+1)}_{(0,j,0),(j,1)}-\alpha^{(m+1)}_{(0,n_{2^m},1),(j,1)}\right)\notag\\
            &\quad + \left(1-\alpha^{(m+1)}_{(n_2,c_1(n_2),0),(j,1)}-\alpha^{(m+1)}_{(n_2,c_1(n_2),0),(j,1)}-\alpha^{(m+1)}_{(n_2,j,1),(j,1)}\right)\notag\\
            &\qquad\cdot\left(\alpha^{(m+1)}_{(n_2,c_1(n_2),0),(j,1)}+\alpha^{(m+1)}_{(n_2,c_2(n_2),0),(j,1)}+\alpha^{(m+1)}_{(n_2,j,1),(j,1)}\right)\left(\alpha^{(m+1)}_{(0,j,0),(j,1)}+\alpha^{(m+1)}_{(0,n_{2^m},1),(j,1)}\right)\notag\\
            &\quad-\sum_{q\in[S]\atop q\neq n_2}
            \left(\alpha^{(m+1)}_{(q,c_1(q),0),(j,1)}+\alpha^{(m+1)}_{(q,c_2(q),0),(j,1)}+\alpha^{(m+1)}_{(q,p(q),1),(j,1)}\right)^2\left(\alpha^{(m+1)}_{(0,j,0),(j,1)}+\alpha^{(m+1)}_{(0,n_{2^m},1),(j,1)}\right)\bigg| n_{-1}=j\Bigg],
    \end{align}
    \normalsize
    and for all $k\in\{m+2,\cdots,2m\}$, we have
    \small
    \begin{align}
          \delta y_{1,0,j}^{(k)}   
        &= \frac{1}{S}\sum_{i\in[S]}\E\Bigg[\mathbbm{1}\{n_{2^{k-m}}=i\}\bigg\{
            \left(\alpha^{(k)}_{(0,n_1,0),(j,0)}+\alpha^{(k)}_{(0,n_{2^m},1),(j,0)}\right)^2\left(1-\alpha^{(k)}_{(0,n_1,0),(j,0)}-\alpha^{(k)}_{(0,n_{2^m},1),(j,0)}\right)\notag\\
            &\quad + \left(1-\alpha^{(k)}_{(i,c_1(i),0),(j,0)}-\alpha^{(k)}_{(i,c_2(i),0),(j,0)}-\alpha^{(k)}_{(i,j,1),(j,0)}\right)\notag\\
            &\qquad\cdot\left(\alpha^{(k)}_{(i,c_1(i),0),(j,0)}+\alpha^{(k)}_{(i,c_2(i),0),(j,0)}+\alpha^{(k)}_{(i,j,1),(j,0)}\right)\left(\alpha^{(k)}_{(0,n_1,0),(j,0)}+\alpha^{(k)}_{(0,n_{2^m},1),(j,0)}\right)\notag\\
            &\quad-\sum_{q\in[S]\atop q\neq n_1,n_{2^m}}
            \left(\alpha^{(k)}_{(q,c_1(q),0),(j,0)}+\alpha^{(k)}_{(q,c_2(q),0),(j,0)}+\alpha^{(k)}_{(q,p(q),1),(j,0)}\right)^2\left(\alpha^{(k)}_{(0,n_1,0),(j,0)}+\alpha^{(k)}_{(0,n_{2^m},1),(j,0)}\right)\bigg\}\bigg| n_{-1}=j\Bigg].
    \end{align}
    \normalsize
    For all $k\in[m]$, we have
    \small
    \begin{align}
        \delta x_{2,j,j}^{(k)}&= \frac{1}{S}\E\Bigg[
            -\left(1-\alpha^{(k)}_{(p(j),j,0),(j,1)}-\alpha^{(k)}_{(\widetilde{c}_1(j),j,1),(j,1)}\right)^2\left(\alpha^{(k)}_{(p(j),j,0),(j,1)}+\alpha^{(k)}_{(\widetilde{c}_1(j),j,1),(j,1)}\right)\notag\\
            &\quad-\sum_{q\in[S]\atop q\neq j}
            \left(\alpha^{(k)}_{(p(q),q,0),(j,1)}+\alpha^{(k)}_{(\widetilde{c}_1(q),q,1),(j,1)}\right)^2 \left(\alpha^{(k)}_{(p(j),j,0),(j,1)}+\alpha^{(k)}_{(\widetilde{c}_1(j),j,1),(j,1)}\right)\Bigg| n_{-1}=j\Bigg],
    \end{align}
    \normalsize
    for $k=m+1$, we have
    \small
    \begin{align}\label{eq:delta_x_2_j_j_m+1}
        \delta x_{2,j,j}^{(m+1)}&= \frac{1}{S}\E\Bigg[
            -\left(1-\alpha^{(m+1)}_{(0,j,0),(j,1)}-\alpha^{(m+1)}_{(\widetilde{c}_1(j),j,1),(j,1)}\right)^2\left(\alpha^{(m+1)}_{(0,j,0),(j,1)}+\alpha^{(m+1)}_{(\widetilde{c}_1(j),j,1),(j,1)}\right)\notag\\
            &\quad-\sum_{q\in[S]\atop q\neq j}
            \left(\alpha^{(m+1)}_{(p(q),q,0),(j,1)}+\alpha^{(m+1)}_{(\widetilde{c}_1(q),q,1),(j,1)}\right)^2 \left(\alpha^{(m+1)}_{(0,j,0),(j,1)}+\alpha^{(m+1)}_{(\widetilde{c}_1(j),j,1),(j,1)}\right)\Bigg| n_{-1}=j\Bigg],
    \end{align}
    \normalsize
    and for all $k\in\{m+2,\cdots,2m\}$, we have
    \small
    \begin{align}
        \delta x_{2,j,j}^{(k)}&= \frac{1}{S}\E\Bigg[-\left(1-\alpha^{(k)}_{(p(j),j,0),(j,0)}-\alpha^{(k)}_{(\widetilde{c}_1(j),j,1),(j,0)}\right)^2\left(\alpha^{(k)}_{(p(j),j,0),(j,0)}+\alpha^{(k)}_{(\widetilde{c}_1(j),j,1),(j,0)}\right)\notag\\
        &\quad -\sum_{q\in[S]\atop q\neq j}
        \left(\alpha^{(k)}_{(p(q),q,0),(j,0)}+\alpha^{(k)}_{(\widetilde{c}_1(q),q,1),(j,0)}\right)^2 \left(\alpha^{(k)}_{(p(j),j,0),(j,0)}+\alpha^{(k)}_{(\widetilde{c}_1(j),j,1),(j,0)}\right)\Bigg| n_{-1}=j\Bigg].
    \end{align}
    \normalsize
    For $k=1$, we have
    \small
    \begin{align}
        \delta y_{2,j,j}^{(1)}
        &= \frac{1}{S}\E\Bigg[
            -\left(1-\alpha^{(1)}_{(p(j),j,0),(j,1)}-\alpha^{(1)}_{(p(j),s(j),1),(j,1)}\right)\alpha^{(1)}_{(p(j),j,0),(j,1)}\left(1-\alpha^{(1)}_{(p(j),j,0),(j,1)}-\alpha^{(1)}_{(0,j,1),(j,1)}\right)\notag\\
            &\quad+\left(\alpha^{(1)}_{(0,n_1,0),(j,1)}+\alpha^{(1)}_{(0,j,1),(j,1)}\right)\alpha^{(1)}_{(0,j,0),(j,1)}\left(1-\alpha^{(1)}_{(p(j),j,0),(j,1)}-\alpha^{(1)}_{(0,j,1),(j,1)}\right)\notag\\
            &\quad+\left(1-\alpha^{(1)}_{(p(j),j,0),(j,1)}-\alpha^{(1)}_{(p(j),s(j),0),(j,1)}\right)\alpha^{(1)}_{(p(j),j,0),(j,1)}\left(\alpha^{(1)}_{(p(j),j,0),(j,1)}+\alpha^{(1)}_{(0,j,1),(j,1)}\right)\notag\\
            &\quad-\left(\alpha^{(1)}_{(0,n_1,0),(j,1)}+\alpha^{(1)}_{(0,j,1),(j,1)}\right)\alpha^{(1)}_{(0,n_1,0),(j,1)}\left(\alpha^{(1)}_{(p(j),j,0),(j,1)}+\alpha^{(1)}_{(0,j,1),(j,1)}\right)\notag\\
            &\quad-\sum_{q\in[S]\atop q\neq p(j)}
            \left(\alpha^{(1)}_{(q,c_1(q),0),(j,1)}+\alpha^{(1)}_{(q,c_2(q),0),(j,1)}\right)^2\left(\alpha^{(1)}_{(p(j),j,0),(j,1)}+\alpha^{(1)}_{(0,j,1),(j,1)}\right)\Bigg| n_{-1}=j\Bigg],
    \end{align}
    \normalsize
    for all $k\in\{2,\cdots,m\}$, we have
    \small
    \begin{align}\label{eq:delta_y_2_j_j_k}
        \delta y_{2,j,j}^{(k)} 
        &= \frac{1}{S}\E\Bigg[
            -\left(1-\alpha^{(k)}_{(p(j),j,0),(j,1)}-\alpha^{(k)}_{(p(j),s(j),0),(j,1)}\right)\alpha^{(k)}_{(p(j),j,0),(j,1)}\left(1-\alpha^{(k)}_{(p(j),j,0),(j,1)}-\alpha^{(k)}_{(\widetilde{c}_1(j),j,1),(j,1)}\right)\notag\\
            &\quad + \left(\alpha^{(k)}_{(c_1(j),j,1),(j,1)}+\alpha^{(k)}_{(c_1(j),c_1^2(j),0),(j,1)}+\alpha^{(k)}_{(c_1(j),c_2(c_1(j)),0),(j,1)}\right)\notag\\
            &\qquad \cdot \alpha^{(k)}_{(c_1(j),j,1),(j,1)}\left(1-\alpha^{(k)}_{(p(j),j,0),(j,1)}-\alpha^{(k)}_{(\widetilde{c}_1(j),j,1),(j,1)}\right)\notag\\
            &\quad + \left(1-\alpha^{(k)}_{(p(j),j,0),(j,1)}-\alpha^{(k)}_{(p(j),s(j),0),(j,1)}\right)\alpha^{(k)}_{(p(j),s(j),0),(j,1)}\left(\alpha^{(k)}_{(p(j),j,0),(j,1)}+\alpha^{(k)}_{(\widetilde{c}_1(j),j,1),(j,1)}\right)\notag\\
            &\quad - \left(\alpha^{(k)}_{(0,n_1,0),(j,1)}+\alpha^{(k)}_{(0,n_{2^m},1),(j,1)}\right)^2\left(\alpha^{(k)}_{(p(j),j,0),(j,1)}+\alpha^{(k)}_{(\widetilde{c}_1(j),j,1),(j,1)}\right)\notag\\
            &\quad - \left(\alpha^{(k)}_{(c_1(j),j,1),(j,1)}+\alpha^{(k)}_{(c_1(j),c_1^2(j),0),(j,1)}+\alpha^{(k)}_{(c_1(j),c_2(c_1(j)),0),(j,1)}\right)\notag\\
            &\qquad \cdot \left(\alpha^{(k)}_{(c_1(j),c_1^2(j),0),(j,1)}+\alpha^{(k)}_{(c_1(j),c_2(c_1(j)),0),(j,1)}\right)\left(\alpha^{(k)}_{(p(j),j,0),(j,1)}+\alpha^{(k)}_{(\widetilde{c}_1(j),j,1),(j,1)}\right)\notag\\
            &\quad - \sum_{q\in[S]\atop q\neq p(j),c_1(j)}\left(\alpha^{(k)}_{(q,c_1(q),0),(j,1)}+\alpha^{(k)}_{(q,c_2(q),0),(j,1)}+\alpha^{(k)}_{(q,p(q),1),(j,1)}\right)^2\notag\\
            &\qquad \cdot \left(\alpha^{(k)}_{(p(j),j,0),(j,1)}+\alpha^{(k)}_{(\widetilde{c}_1(j),j,1),(j,1)}\right)\Bigg| n_{-1}=j\Bigg],
        \end{align}
        \normalsize
     for $k=m+1$, we have
     \small
    \begin{align}\label{eq:delta_y_2_j_j_m+1}
        \delta y_{2,j,j}^{(m+1)}
        &= \frac{1}{S}\E\Bigg[
            -\left(1-\alpha^{(m+1)}_{(n_2,c_1(n_2),0),(j,1)}-\alpha^{(m+1)}_{(n_2,c_2(n_2),0),(j,1)}-\alpha^{(m+1)}_{(n_2,j,1),(j,1)}\right)\alpha^{(m+1)}_{(n_2,j,1),(j,1)}\notag\\
            &\hspace{7cm}\cdot \left(1-\alpha^{(m+1)}_{(0,n_2,0),(j,1)}-\alpha^{(m+1)}_{(n_2,j,1),(j,1)}\right)\notag\\
            &+ \left(\alpha^{(m+1)}_{(0,j,0),(j,1)}+\alpha^{(m+1)}_{(0,n_{2^m},1),(j,1)}\right)\alpha^{(m+1)}_{(0,j,0),(j,1)}\left(1-\alpha^{(m+1)}_{(0,n_2,0),(j,1)}-\alpha^{(m+1)}_{(n_2,j,1),(j,1)}\right)\notag\\
            &+ \left(1-\alpha^{(m+1)}_{(n_2,c_1(n_2),0),(j,1)}-\alpha^{(m+1)}_{(n_2,c_2(n_2),0),(j,1)}-\alpha^{(m+1)}_{(n_2,j,1),(j,1)}\right)\left(\alpha^{(m+1)}_{(n_2,c_1(n_2),0),(j,1)}+\alpha^{(m+1)}_{(n_2,c_2(n_2),0),(j,1)}\right)\notag\\
            &\hspace{9cm}\cdot\left(\alpha^{(m+1)}_{(0,j,0),(j,1)}+\alpha^{(m+1)}_{(n_2,j,1),(j,1)}\right)\notag\\
            &- \left(\alpha^{(m+1)}_{(0,j,0),(j,1)}+\alpha^{(m+1)}_{(0,n_{2^m},1),(j,1)}\right)\alpha^{(m+1)}_{(0,n_{2^m},1),(j,1)}
            \left(\alpha^{(m+1)}_{(0,j,0),(j,1)}+\alpha^{(m+1)}_{(n_2,j,1),(j,1)}\right)\notag\\
            &- \sum_{q\in[S]\atop q\neq n_2}\left(\alpha^{(m+1)}_{(q,c_1(q),0),(j,1)}+\alpha^{(m+1)}_{(q,c_2(q),0),(j,1)}+\alpha^{(m+1)}_{(q,p(q),1),(j,1)}\right)^2\left(\alpha^{(m+1)}_{(0,j,0),(j,1)}+\alpha^{(m+1)}_{(n_2,j,1),(j,1)}\right)\Bigg| n_{-1}=j\Bigg],
        \end{align}
        \normalsize
    and for $k\in\{m+2,\cdots,2m\}$, we have
    \small
    \begin{align}
        &\delta y_{2,j,j}^{(k)}\notag\\
        &= \frac{1}{S}\sum_{i\in[S]}\E\Bigg[\mathbbm{1}\{n_{2^{k-m}}=i\}\Bigg\{
            -\left(1-\alpha^{(k)}_{(i,c_1(i),0),(j,0)}-\alpha^{(k)}_{(i,c_2(i),0),(j,0)}-\alpha^{(k)}_{(i,j,1),(j,0)}\right)\alpha^{(k)}_{(i,j,1),(j,0)}\notag\\
            &\hspace{7cm}\cdot \left(1-\alpha^{(k)}_{(i,j,1),(j,0)}-\alpha^{(k)}_{(p(j),j,0),(j,0)}\right)\notag\\
            &\quad + \left(\alpha^{(k)}_{(p(j),j,0),(j,0)}+\alpha^{(k)}_{(p(j),s(j),0),(j,0)}+\alpha^{(k)}_{(p(j),p^2(j),1),(j,0)}\right)\alpha^{(k)}_{(p(j),j,0),(j,0)}\left(1-\alpha^{(k)}_{(i,j,1),(j,0)}-\alpha^{(k)}_{(p(j),j,1),(j,0)}\right)\notag\\
            &\quad + \left(1-\alpha^{(k)}_{(i,c_1(i),0),(j,0)}-\alpha^{(k)}_{(i,c_2(i),0),(j,0)}-\alpha^{(k)}_{(i,j,1),(j,0)}\right)\left(\alpha^{(k)}_{(i,c_1(i),0),(j,0)}+\alpha^{(k)}_{(i,c_2(i),0),(j,0)}\right)\notag\\
            &\qquad \cdot \left(\alpha^{(k)}_{(i,j,1),(j,0)}+\alpha^{(k)}_{(p(j),j,0),(j,0)}\right)\notag\\
            &\quad - \left(\alpha^{(k)}_{(0,n_1,0),(j,0)}+\alpha^{(k)}_{(0,n_{2^m},1),(j,0)}\right)^2 
            \left(\alpha^{(k)}_{(i,j,1),(j,0)}+\alpha^{(k)}_{(p(j),j,0),(j,0)}\right)\notag\\
            &\quad - \left(\alpha^{(k)}_{(p(j),j,0),(j,0)}+\alpha^{(k)}_{(p(j),s(j),0),(j,0)}+\alpha^{(k)}_{(p(j),p^2(j),1),(j,0)}\right)  \left(\alpha^{(k)}_{(i,j,1),(j,0)}+\alpha^{(k)}_{(p(j),j,0),(j,0)}\right)\notag\\
            &\quad - \sum_{q\in[S]\atop q\neq p(j)}\left(\alpha^{(k)}_{(q,c_1(q),0),(j,0)}+\alpha^{(k)}_{(q,c_2(q),0),(j,0)}+\alpha^{(k)}_{(q,p(q),1),(j,0)}\right)^2 \left(\alpha^{(k)}_{(i,j,1),(j,0)}+\alpha^{(k)}_{(p(j),j,0),(j,0)}\right)\Bigg\}\Bigg| n_{-1}=j\Bigg].
    \end{align}
    For $k\in[m]$, we have
    \begin{align}\label{eq:delta_x_3_0_0}
        \delta x_{3,0,0}^{(k)} &= - \delta x_{3,1,0}^{(k)} = 0,
    \end{align}
    and
    \begin{align}\label{eq:delta_x_3_1_0}
        \delta x_{3,0,1}^{(k)} &= - \delta x_{3,1,1}^{(k)} \notag\\
        &=\frac{1}{S}\E\Bigg[-\left(1-\alpha^{(k)}_{(p(j),j,0),(j,1)}-\alpha^{(k)}_{(c_1(j),j,1),(j,1)}\right)\alpha^{(k)}_{(p(j),j,0),(j,1)}\left(1-\sum_{q\in[S]}\alpha^{(k)}_{(\widetilde{p}(q),q,0),(j,1)}\right)\notag\\
        &\quad +\sum_{q\in[S]\setminus\{j\}}\left(\alpha^{(k)}_{(\widetilde{p}(q),q,0),(j,1)}+\alpha^{(k)}_{(\widetilde{c}_1(q),q,1),(j,1)}\right)\alpha^{(k)}_{(\widetilde{p}(q),q,0),(j,1)}\alpha^{(k)}_{(\widetilde{c}_1(q),q,1),(j,1)}\left(1-\sum_{q\in[S]}\alpha^{(k)}_{(\widetilde{p}(q),q,0),(j,1)}\right)\notag\\
        &\quad + \left(1-\alpha^{(k)}_{(p(j),j,0),(j,1)}-\alpha^{(k)}_{(c_1(j),j,1),(j,1)}\right)\alpha^{(k)}_{(c_1(j),j,1),(j,1)}\left(\sum_{q\in[S]}\alpha^{(k)}_{(\widetilde{p}(q),q,0),(j,1)}\right)\notag\\
        &\quad - \sum_{q\in[S]\setminus\{j\}}\left(\alpha^{(k)}_{(p(q),q,0),(j,1)}+\alpha^{(k)}_{(c_1(q),q,1),(j,1)}\right)\alpha^{(k)}_{(c_1(j),j,1),(j,1)}\left(\sum_{q\in[S]}\alpha^{(k)}_{(\widetilde{p}(q),q,0),(j,1)}\right)\Bigg| n_{-1}=j\Bigg].
    \end{align}
    \normalsize
    For $k=m+1$, we have
    \begin{align}
        \delta x_{3,0,0}^{(m+1)} &= - \delta x_{3,1,0}^{(m+1)} = 0,
    \end{align}
    and
    \small
    \begin{align}\label{eq:delta_x_3_*_1_m+1}
        \delta x_{3,0,1}^{(m+1)} &= - \delta x_{3,1,1}^{(m+1)} \notag\\
        &=\frac{1}{S}\E\Bigg[-\left(1-\alpha^{(m+1)}_{(0,j,0),(j,1)}-\alpha^{(m+1)}_{(c_1(j),j,1),(j,1)}\right)\alpha^{(m+1)}_{(0,j,0),(j,1)}\left(1-\sum_{q\in[S]}\alpha^{(m+1)}_{(\widetilde{p}(q),q,0),(j,1)}\right)\notag\\
        &\quad + \sum_{q\in[S]\setminus\{j\}}\left(\alpha^{(m+1)}_{(\widetilde{p}(q),q,0),(j,1)}+\alpha^{(m+1)}_{(\widetilde{c}_1(q),q,1),(j,1)}\right)\alpha^{(m+1)}_{(\widetilde{p}(q),q,0),(j,1)}\alpha^{(m+1)}_{(\widetilde{c}_1(q),q,1),(j,1)}\left(1-\sum_{q\in[S]}\alpha^{(m+1)}_{(\widetilde{p}(q),q,0),(j,1)}\right)\notag\\
        &\quad + \left(1-\alpha^{(m+1)}_{(0,j,0),(j,1)}-\alpha^{(m+1)}_{(c_1(j),j,1),(j,1)}\right)\alpha^{(m+1)}_{(c_1(j),j,1),(j,1)}\left(\sum_{q\in[S]}\alpha^{(m+1)}_{(\widetilde{p}(q),q,0),(j,1)}\right)\notag\\
        &\quad - \sum_{q\in[S]\setminus\{j\}}\left(\alpha^{(m+1)}_{(p(q),q,0),(j,1)}+\alpha^{(m+1)}_{(c_1(q),q,1),(j,1)}\right)\alpha^{(m+1)}_{(c_1(q),q,1),(j,1)}\left(\sum_{q\in[S]}\alpha^{(m+1)}_{(\widetilde{p}(q),q,0),(j,1)}\right)\Bigg| n_{-1}=j\Bigg].
    \end{align}
    \normalsize
    For $k\in\{m+2,\cdots,2m\}$, we have
    \begin{align}\label{eq:delta_x_3_*_1_k>=m+2}
        \delta x_{3,0,1}^{(k)} &= - \delta x_{3,1,1}^{(k)} = 0,
    \end{align}
    and
    \small
    \begin{align}\label{eq:delta_x_3_*_0_k>=m+2}
        \delta x_{3,0,0}^{(k)} &= - \delta x_{3,1,0}^{(k)}\notag\\
        &= \frac{1}{S}\E\Bigg[-\left(1-\alpha^{(k)}_{(p(j),j,0),(j,0)}-\alpha^{(k)}_{(c_1(j),j,1),(j,0)}\right)\alpha^{(k)}_{(p(j),j,0),(j,0)}\left(1-\sum_{q\in[S]}\alpha^{(k)}_{(\widetilde{p}(q),q,0),(j,0)}\right)\notag\\
        &\quad + \sum_{q\in[S]\setminus\{j\}}\left(\alpha^{(k)}_{(\widetilde{p}(q),q,0),(j,0)}+\alpha^{(k)}_{(\widetilde{c}_1(q),q,1),(j,0)}\right)\alpha^{(k)}_{(\widetilde{p}(q),q,0),(j,0)}\alpha^{(k)}_{(\widetilde{c}_1(q),q,1),(j,0)}\left(1-\sum_{q\in[S]}\alpha^{(k)}_{(\widetilde{p}(q),q,0),(j,0)}\right)\notag\\
        &\quad + \left(1-\alpha^{(k)}_{(p(j),j,0),(j,0)}-\alpha^{(k)}_{(c_1(j),j,1),(j,0)}\right)\alpha^{(k)}_{(c_1(j),j,1),(j,0)}\left(\sum_{q\in[S]}\alpha^{(k)}_{(\widetilde{p}(q),q,0),(j,0)}\right)\notag\\
        &\quad - \sum_{q\in[S]\setminus\{j\}}\left(\alpha^{(k)}_{(p(q),q,0),(j,0)}+\alpha^{(k)}_{(c_1(q),q,1),(j,0)}\right)\alpha^{(k)}_{(c_1(q),q,1),(j,0)}\left(\sum_{q\in[S]}\alpha^{(k)}_{(\widetilde{p}(q),q,0),(j,0)}\right)\Bigg| n_{-1}=j\Bigg].
    \end{align}
    \normalsize
    For $k\in[m]$, we have
    \begin{align}\label{eq:delta_y_3_*_0_k<=m}
        \delta y_{3,0,0}^{(k)} &= - \delta y_{3,1,0}^{(k)} = 0,
    \end{align}
    and
    \small
    \begin{align}\label{eq:delta_y_3_*_1_k<=m}
       & \delta y_{3,0,1}^{(k)} = - \delta y_{3,1,1}^{(k)} \notag\\
        &= \frac{1}{S}\E\Bigg[-\left(1-\alpha^{(k)}_{(p(j),j,0),(j,1)}-\alpha^{(k)}_{(p(j),s(j),0),(j,1)}\right)\left(\alpha^{(k)}_{(p(j),j,0),(j,1)}+\alpha^{(k)}_{(p(j),s(j),0),(j,1)}\right) \left(1-\sum_{q\in[S]}\alpha^{(k)}_{(\widetilde{p}(q),q,0),(j,1)}\right)\notag\\
        &\quad + \left(\alpha^{(k)}_{(0,n_1,0),(j,1)}+\alpha^{(k)}_{(0,n_{2^m},1),(j,1)}\right)\alpha^{(k)}_{(0,n_1,1),(j,1)}\left(1-\sum_{q\in[S]}\alpha^{(k)}_{(\widetilde{p}(q),q,0),(j,1)}\right)\notag\\
        &\quad + \sum_{q\in[S]\setminus\{p(j)\}}\left(\alpha^{(k)}_{(q,c_1(q),0),(j,1)}+\alpha^{(k)}_{(q,c_2(q),0),(j,1)}+\alpha^{(k)}_{(q,p(q),1),(j,1)}\right)
        \left(\alpha^{(k)}_{(q,c_1(q),0),(j,1)}+\alpha^{(k)}_{(q,c_2(q),0),(j,1)}\right)\notag\\
        &\hspace{10cm}\cdot\left(1-\sum_{q\in[S]}\alpha^{(k)}_{(\widetilde{p}(q),q,0),(j,1)}\right)\notag\\
        &\quad - \left(\alpha^{(k)}_{(0,n_1,0),(j,1)}+\alpha^{(k)}_{(0,n_{2^m},1),(j,1)}\right)\alpha^{(k)}_{(0,n_{2^m},1),(j,1)}\left(\sum_{q\in[S]}\alpha^{(k)}_{(\widetilde{p}(q),q,0),(j,1)}\right)\notag\\
        &\quad - \sum_{q\in[S]\setminus\{p(j)\}}\left(\alpha^{(k)}_{(q,c_1(q),0),(j,1)}+\alpha^{(k)}_{(q,c_2(q),0),(j,1)}+\alpha^{(k)}_{(q,p(q),1),(j,1)}\right)\alpha^{(k)}_{(q,p(q),1),(j,1)} \left(\sum_{q\in[S]}\alpha^{(k)}_{(\widetilde{p}(q),q,0),(j,1)}\right)\Bigg| n_{-1}=j\Bigg].
    \end{align}
    \normalsize
    For $k=m+1$, we have
    \begin{align}
        \delta y_{3,0,0}^{(m+1)} &= - \delta y_{3,1,0}^{(m+1)} = 0,
    \end{align}
    and
    \small
    \begin{align}\label{eq:delta_y_3_*_1_m+1}
     &   \delta y_{3,0,1}^{(m+1)} = - \delta y_{3,1,1}^{(m+1)} \notag\\
        &= \frac{1}{S}\E\Bigg[-\left(1-\alpha^{(m+1)}_{(n_2,j,1),(j,1)}-\alpha^{(m+1)}_{(n_2,c_1(n_2),0),(j,1)}-\alpha^{(m+1)}_{(n_2,c_2(n_2),0),(j,1)}\right)\notag\\
        &\hspace{4cm}\cdot\left(\alpha^{(m+1)}_{(n_2,c_1(n_2),0),(j,1)}+\alpha^{(m+1)}_{(n_2,c_2(n_2),0),(j,1)}\right)\left(1-\sum_{q\in[S]}\alpha^{(m+1)}_{(\widetilde{p}(q),q,0),(j,1)}\right)\notag\\
        &\quad + \left(\alpha^{(m+1)}_{(0,j,0),(j,1)}+\alpha^{(m+1)}_{(0,n_{2^m},1),(j,1)}\right)\alpha^{(m+1)}_{(0,j,0),(j,1)}\left(1-\sum_{q\in[S]}\alpha^{(m+1)}_{(\widetilde{p}(q),q,0),(j,1)}\right)\notag\\
        &\quad + \sum_{q\in[S]\setminus\{n_2\}}\left(\alpha^{(m+1)}_{(q,c_1(q),0),(j,1)}+\alpha^{(m+1)}_{(q,c_2(q),0),(j,1)}+\alpha^{(m+1)}_{(q,p(q),1),(j,1)}\right)
        \left(\alpha^{(m+1)}_{(q,c_1(q),0),(j,1)}+\alpha^{(m+1)}_{(q,c_2(q),0),(j,1)}\right)\notag\\
        &\hspace{10cm}\cdot\left(1-\sum_{q\in[S]}\alpha^{(m+1)}_{(\widetilde{p}(q),q,0),(j,1)}\right)\notag\\
        &\quad + \left(1-\alpha^{(m+1)}_{(n_2,j,1),(j,1)}-\alpha^{(m+1)}_{(n_2,c_1(n_2),0),(j,1)}-\alpha^{(m+1)}_{(n_2,c_2(n_2),0),(j,1)}\right)\alpha^{(m+1)}_{(n_2,j,1),(j,1)}\left(\sum_{q\in[S]}\alpha^{(m+1)}_{(\widetilde{p}(q),q,0),(j,1)}\right)\notag\\
        &\quad - \sum_{q\in[S]\setminus\{n_2\}}\left(\alpha^{(m+1)}_{(q,c_1(q),0),(j,1)}+\alpha^{(m+1)}_{(q,c_2(q),0),(j,1)}+\alpha^{(m+1)}_{(q,p(q),1),(j,1)}\right)\alpha^{(m+1)}_{(q,p(q),1),(j,1)} \left(\sum_{q\in[S]}\alpha^{(m+1)}_{(\widetilde{p}(q),q,0),(j,1)}\right)\Bigg| n_{-1}=j\Bigg].
    \end{align}
    \normalsize
    For $k\in\{m+2,\cdots,2^m\}$, we have
    \begin{align}
        \delta y_{3,0,1}^{(k)} &= - \delta y_{3,1,1}^{(k)} = 0,
    \end{align}
    and
    \small
    \begin{align}
   &     \delta y_{3,0,0}^{(k)} = - \delta y_{3,1,0}^{(k)} \notag\\
        &= \frac{1}{S}\sum_{i\in[S]}\E\Bigg[\mathbbm{1}\{n_{2^{k-m}}=i\}\bigg\{
            -\left(1-\alpha^{(k)}_{(i,c_1(i),0),(j,0)}-\alpha^{(k)}_{(i,c_2(i),0),(j,0)}-\alpha^{(k)}_{(i,j,1),(j,0)}\right)\notag\\
            &\hspace{4cm}\cdot\left(\alpha^{(k)}_{(i,c_1(i),0),(j,0)}+\alpha^{(k)}_{(i,c_2(i),0),(j,0)}\right)\left(1-\sum_{q\in[S]}\alpha^{(k)}_{(\widetilde{p}(q),q,0),(j,0)}\right)\notag\\
            &\quad + \left(\alpha^{(k)}_{(0,n_1,0),(j,0)}-\alpha^{(k)}_{(0,n_{2^m},1),(j,0)}\right)\alpha^{(k)}_{(0,n_1,0),(j,0)}\left(1-\sum_{q\in[S]}\alpha^{(k)}_{(\widetilde{p}(q),q,0),(j,0)}\right)\notag\\
            &\quad +\sum_{q\in[S]\setminus\{i\}}\left(\alpha^{(k)}_{(q,c_1(q),0),(j,0)}+\alpha^{(k)}_{(q,c_2(q),0),(j,0)}+\alpha^{(k)}_{(q,p(q),1),(j,0)}\right)\notag\\
            &\hspace{4cm}\cdot\left(\alpha^{(k)}_{(q,c_1(q),0),(j,0)}+\alpha^{(k)}_{(q,c_2(q),0),(j,0)}\right)\left(1-\sum_{q\in[S]}\alpha^{(k)}_{(\widetilde{p}(q),q,0),(j,0)}\right)\notag\\
            &\quad + \left(1-\alpha^{(k)}_{(i,c_1(i),0),(j,0)}-\alpha^{(k)}_{(i,c_2(i),0),(j,0)}-\alpha^{(k)}_{(i,j,1),(j,0)}\right)\alpha^{(k)}_{(i,j,1),(j,0)}\left(\sum_{q\in[S]}\alpha^{(k)}_{(\widetilde{p}(q),q,0),(j,0)}\right)\notag\\
            &\quad - \left(\alpha^{(k)}_{(0,n_1,0),(j,0)}-\alpha^{(k)}_{(0,n_{2^m},1),(j,0)}\right)\alpha^{(k)}_{(0,n_{2^m},1),(j,0)}\left(\sum_{q\in[S]}\alpha^{(k)}_{(\widetilde{p}(q),q,0),(j,0)}\right)\notag\\
            &\quad - \sum_{q\in[S]\setminus\{i\}}\left(\alpha^{(k)}_{(q,c_1(q),0),(j,0)}+\alpha^{(k)}_{(q,c_2(q),0),(j,0)}+\alpha^{(k)}_{(q,p(q),1),(j,0)}\right)\alpha^{(k)}_{(q,p(q),1),(j,0)}\left(\sum_{q\in[S]}\alpha^{(k)}_{(\widetilde{p}(q),q,0),(j,0)}\right)\Bigg\}\bigg| n_{-1}=j\Bigg].
    \end{align}
    \normalsize
\end{lm}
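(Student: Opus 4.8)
The plan is to derive every displayed identity by specializing the chain-rule formulas of Lemma~\ref{lm:gradient_reasoning}, which express $\delta x_{l,i,j}^{(k)}$, $\delta y_{l,i,j}^{(k)}$, $\delta x_{3,u,v}^{(k)}$, $\delta y_{3,u,v}^{(k)}$ in terms of the partial derivatives $\partial\Delta_\xi^{(k)}/\partial\alpha^{(k)}_{(p,q,u),(j,v)}$, once (i) those partial derivatives are computed from the explicit loss formulas of Lemma~\ref{lm:loss_simplification_reasoning} and (ii) we identify, for a perfect binary tree under the node ordering of the ``Tree node ordering'' paragraph, exactly which attention weights $\alpha^{(k)}_{(p,q,u),(j,v)}$ are nonzero at reasoning step $k$. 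Step (i) is mechanical: each $\Delta_x^{(k)},\Delta_y^{(k)}$ in Lemma~\ref{lm:loss_simplification_reasoning} is a sum of terms $(1-\sum\alpha)^2$ and $(\sum\alpha)^2$, so each $\partial\Delta_\xi^{(k)}/\partial\alpha^{(k)}_{(p,q,u),(j,v)}$ is, up to the overall $1/S$ and the indicator $\mathbbm{1}\{n_{-1}^{(k)}=j\}$, a signed combination of the ``residual'' factor $1-\alpha^{(k)}_{(p(j),j,0),(j,v)}-\alpha^{(k)}_{(\widetilde{c}_1(j),j,1),(j,v)}$ (and its turning-point and Stage-2 analogues) and of ``cross'' factors $\alpha^{(k)}_{(p(q),q,0),(j,v)}+\alpha^{(k)}_{(\widetilde{c}_1(q),q,1),(j,v)}$.

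Step (ii) is the substantive part. By \eqref{eq:alpha_ext}, $\alpha^{(k)}_{(p,q,u),(j,v)}\neq0$ exactly when the triple $(a_p,a_q,s_u)$ occurs as a column of the autoregressively grown input $E^{(k-1)}$, with multiplicity $N_{p,q,u}^{(k-1)}\in\{1,2\}$ entering the softmax normalizer. I would pin this down stage by stage. In \emph{Stage~1} ($k\le m$) the columns are the $l$ original edges $(a_{p(q)},a_q,s_f)$, the two fillers $(a_0,a_r,s_f)$ and $(a_0,a_g,s_b)$, and the $k-1$ already-generated backward tokens $(a_n,a_{p(n)},s_b)$ along the goal-to-root path. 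At the \emph{turning point} ($k=m+1$) the query node carried in the model state is the root $n_1$ while its stage signal is still $s_b$. In \emph{Stage~2} ($k\ge m+2$) the generated forward tokens $(a_n,a_{c(n)},s_f)$ also appear, so each on-path node embedding now occurs more than once --- once in a parent slot, once in a child slot --- which changes both the multiplicities $N_{p,q,u}^{(k-1)}$ and which terms in the loss derivative are selected. With the inventory at step $k$ fixed, substituting into Lemma~\ref{lm:gradient_reasoning}, restricting to $i=0$ (for $\delta x_{1,0,j},\delta y_{1,0,j}$) or $i=j$ (for $\delta x_{2,j,j},\delta y_{2,j,j}$), and merging the conditional expectations over a node $p$ and its sibling $s(p)$ using the sibling symmetry of $\Ptrain$ --- exactly as in the backward-reasoning computation in Appendix~\ref{sec_app:proof_gradient} --- collapses the sums to the stated formulas.

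The $U_3$ gradients I would treat separately but cheaply. Since $U_{3,u,v}$ enters the softmaxes of Lemma~\ref{lm:loss_simplification_reasoning} only through queries with stage index $v$, and since $\Delta_x^{(k)},\Delta_y^{(k)}$ for $k\le m+1$ involve only weights $\alpha^{(k)}_{\cdot,(j,1)}$ while for $k\ge m+2$ they involve only $\alpha^{(k)}_{\cdot,(j,0)}$, we immediately get $\delta x_{3,\cdot,0}^{(k)}=\delta y_{3,\cdot,0}^{(k)}=0$ for $k\le m+1$ and $\delta x_{3,\cdot,1}^{(k)}=\delta y_{3,\cdot,1}^{(k)}=0$ for $k\ge m+2$. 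The exact antisymmetry $\delta\xi_{3,0,v}^{(k)}=-\delta\xi_{3,1,v}^{(k)}$ follows from the elementary identity $\partial\alpha^{(k)}_{(p,q,u'),(j,v)}/\partial U_{3,0,v}=-\,\partial\alpha^{(k)}_{(p,q,u'),(j,v)}/\partial U_{3,1,v}$, valid for every $(p,q,u')$, because perturbing one of the two logits $U_{3,0,v},U_{3,1,v}$ of the softmax by $+\epsilon$ is to first order the same as perturbing the other by $-\epsilon$. The single surviving $U_3$ entry in each stage is then expanded through Lemma~\ref{lm:gradient_reasoning} with the step-$k$ inventory as above.

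The main obstacle is precisely the bookkeeping in Step~(ii): because forward reasoning lengthens the prompt at every step and because Stage~2 carries two copies of each on-path node embedding, the set of nonzero $\alpha^{(k)}_{(p,q,u),(j,v)}$, their multiplicities, and hence the surviving terms of $\partial\Delta_\xi^{(k)}/\partial\alpha^{(k)}$ all genuinely change with $k$ and with the stage, and the duplication must be tracked carefully since it alters the normalizer in \eqref{eq:alpha_ext} and therefore every weight in the softmax. Beyond that the proof is long but routine: for each displayed $(l,i,j)$ or $(u,v)$ and each of the three stages, fix the token inventory, read off $\partial\Delta_\xi^{(k)}/\partial\alpha^{(k)}$, substitute into Lemma~\ref{lm:gradient_reasoning}, and simplify using orthonormality and sibling symmetry.
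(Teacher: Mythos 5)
Your proposal is correct and follows exactly the route the paper intends: the paper omits this proof as a ``straightforward calculation,'' meaning precisely the specialization of the chain-rule formulas of Lemma~\ref{lm:gradient_reasoning} with the loss expressions of Lemma~\ref{lm:loss_simplification_reasoning}, together with the stage-by-stage bookkeeping of which triples $(a_p,a_q,s_u)$ occur (and with what multiplicity) in the teacher-forced input at step $k$, plus sibling symmetry of $\Ptrain$. Your shift-invariance argument for $\delta\xi_{3,0,v}^{(k)}=-\delta\xi_{3,1,v}^{(k)}$ and the observation that the step-$k$ losses only involve query stage $v=1$ for $k\le m+1$ and $v=0$ for $k\ge m+2$ are both valid and reproduce the stated vanishing and antisymmetry patterns, so no gap remains beyond carrying out the (admittedly lengthy) term-by-term expansion.
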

Lemma~\ref{lm:gradient_compute_B} can be verified by straightforward calculation, whose proof is thus omitted for conciseness. 

\subsubsection{Proof outline}

Let the iteration-varying versions of \eqref{eq:gradient_reasoning_matrix} be  
\begin{subequations} \label{eq:varying_UV1}
\begin{align}
    U_1^{(t)}\coloneqq \widetilde{A}^\top B_1^{(t)}A\in\R^{(S+1)\times S},\,\ U_2^{(t)}\coloneqq A^\top\widetilde{B}_2^{(t)}A\in\R^{S\times S},\,\ U_3^{(t)}\coloneqq \widetilde{S}^\top\widetilde{B}_3^{(t)}\widetilde{S}\in\R^{2\times 2},\label{eq:U1}\\
    V_1^{(t)}\coloneqq \widetilde{A}^\top C_1^{(t)}A\in\R^{(S+1)\times 2},\,\ V_2^{(t)}\coloneqq A^\top C_2^{(t)}A\in\R^{S\times 2},\,\ V_3^{(t)}\coloneqq \widetilde{S}^\top C_3^{(t)}\widetilde{S}\in\R^{2\times 2}, \label{eq:V1}
\end{align}
\end{subequations}
 and the index starts from $0$ for $U^{(t)}_l$ and $V^{(t)}_l$ when $l=1,2$ (i.e., $U_{l,0,j}=a_0^\top B_l^{(t)} a_j$, $V_{l,0,j}=a_0^\top C_l^{(t)} a_j$).
Then we have for all $l\in[3]$:
\begin{subequations} \label{eq:update_UV}
\begin{align}
    \forall i\in\{0,\ldots,S\},j\in[S]: \quad U_{l,i,j}^{(t+1)}&=U_{l,i,j}^{(t)}-\eta\sum_{k=1}^{2m}\left(\delta x_{l,i,j}^{(t,k)}+\delta y_{l,i,j}^{(t,k)}\right),\label{eq:update_U}\\
    V_{l,i,j}^{(t+1)}&=V_{l,i,j}^{(t)}-\eta\sum_{k=1}^{2m}\delta z_{l,i,j}^{(t,k)}.\label{eq:update_V}
\end{align}
\end{subequations}
Therefore, it is possible to analyze the dynamics of $U_l^{(t)}$ and $V_l^{(t)}$ separately.

Similar as the backward case, by symmetry of the training distribution, we have for any $t\in\NN$, $i,j\in[S]$, $i\neq j$, $U_{l,i,j}^{(t)}$ ($l=1,2$) are equal to each other, $U_{l,j,j}^{(t)}$ ($l=1,2$) are equal to each other, and $U_{l,0,j}^{(t)}$ are equal to each other. Note that $a_0$ only appears in $X^{(k)}$ but never appears in $Y^{(k)}$, thus $U_{1,j,0}^{(t)}=U_{2,j,0}^{(t)}=U_{2,0,j}^{(t)}=0$ for all $t\in\NN$. The same holds for $V_{l,i,j}^{(t)}$ ($l=1,2$).
Therefore, we could define the following pattern: 
\begin{align}\label{eq:U_matrix}
    U_{1}^{(t)}&=\begin{pmatrix}
        0& -a^{(t)}\mu_1^{(t)} & -a^{(t)}\mu_1^{(t)} & \cdots & -a^{(t)}\mu_1^{(t)}\\
        0& \nu_{1}^{(t)} & \nu_{1,1}^{(t)} & \cdots & \nu_{1,1}^{(t)}\\
        0 & \nu_{1,1}^{(t)} & \nu_{1}^{(t)} & \cdots & \nu_{1,1}^{(t)}\\
        \vdots& \vdots & \vdots & \ddots & \vdots\\
        0& \nu_{1,1}^{(t)} & \nu_{1,1}^{(t)} & \cdots & \nu_{1}^{(t)}
    \end{pmatrix},\quad
    U_{2}^{(t)}=\begin{pmatrix}
        0 & 0 & 0 &\cdots & 0\\
        0& \mu_1^{(t)} &\nu_{1,2}^{(t)} &\cdots &\nu_{1,2}^{(t)}\\
        0 & \nu_{1,2}^{(t)} & \mu_1^{(t)} &\cdots &\nu_{1,2}^{(t)}\\
        \vdots & \vdots & \vdots & \ddots & \vdots\\
        0 & \nu_{1,2}^{(t)} & \nu_{1,2}^{(t)} & \cdots & \mu_1^{(t)}
    \end{pmatrix},
\end{align}
and 
\begin{align}\label{eq:V_matrix}
    V_{1}^{(t)}&=\begin{pmatrix}
        0 & \mu_2^{(t)} & \mu_2^{(t)} & \cdots & \mu_2^{(t)}\\
        0 & \nu_{2}^{(t)} & \nu_{2,1}^{(t)} & \cdots & \nu_{2,1}^{(t)}\\
        0 & \nu_{2,1}^{(t)} & \nu_{2}^{(t)} & \cdots & \nu_{2,1}^{(t)}\\
        \vdots & \vdots & \vdots & \ddots & \vdots\\
        0 & \nu_{2,1}^{(t)} & \nu_{2,1}^{(t)} & \cdots & \nu_{2}^{(t)}
    \end{pmatrix},\quad
    V_{2}^{(t)}=\begin{pmatrix}
        0 & 0 & 0 &\cdots & 0\\
        0 & b^{(t)}\mu_2^{(t)} & \nu_{2,2}^{(t)} &\cdots & \nu_{2,2}^{(t)}\\
        0 & \nu_{2,2}^{(t)} & b^{(t)}\mu_2^{(t)} &\cdots & \nu_{2,2}^{(t)}\\
        \vdots & \vdots & \vdots & \ddots & \vdots\\
        0 & \nu_{2,2}^{(t)} & \nu_{2,2}^{(t)} & \cdots & b^{(t)}\mu_2^{(t)}
    \end{pmatrix},
\end{align}
for some $a^{(t)},b^{(t)},\mu_1^{(t)},\mu_2^{(t)},\nu_{1}^{(t)},\nu_{2}^{(t)},\nu_{1,1}^{(t)},\nu_{1,2}^{(t)},\nu_{2,1}^{(t)},\nu_{2,2}^{(t)}\in\RR$. We set $a^{(0)}=b^{(0)}=1$.

Regarding $U_3^{(t)}$ and $V_3^{(t)}$, by Lemma~\ref{lm:delta_z_simplification} and Lemma~\ref{lm:gradient_compute_B}, it follows for $\xi\in\{x,y,z\}$ 
\begin{align*}
    \forall t\in\NN:\quad&\delta \xi_{3,0,1}^{(t)}=-\delta \xi_{3,1,1}^{(t)},\quad\text{and}\quad\delta \xi_{3,0,0}^{(t)}=-\delta \xi_{3,1,0}^{(t)}.
\end{align*}
Thus by our initialization ($U_3,V_3=0$), we have
\begin{align*}
    \forall t\in\NN:\quad &U_{3,0,1}^{(t)}=-U_{3,1,1}^{(0)},\quad\text{and}\quad U_{3,0,0}^{(t)}=-U_{3,1,0}^{(0)},\\
    &V_{3,0,1}^{(t)}=-V_{3,1,1}^{(0)},\quad\text{and}\quad V_{3,0,0}^{(t)}=-V_{3,1,0}^{(0)}.
\end{align*}
Therefore, we could define 
\begin{align} \label{eq:UV_3_matrix}
    U_{3}^{(t)} = \mu_1^{(t)}\begin{pmatrix}
        -b_1^{(t)} & b_2^{(t)}\\
        b_1^{(t)} & -b_2^{(t)}
    \end{pmatrix},\quad
    V_{3}^{(t)} = \mu_2^{(t)}\begin{pmatrix}
        c_1^{(t)} & -c_2^{(t)}\\
        -c_1^{(t)} & c_2^{(t)}
    \end{pmatrix}
\end{align}
for some $b_1^{(t)},b_2^{(t)},c_1^{(t)},c_2^{(t)}\in\RR$, where $\mu_1^{(t)},\mu_2^{(t)}$ is defined in \eqref{eq:U_matrix} and \eqref{eq:V_matrix}.

\subsubsection{Training dynamics of $C_1,C_2,C_3$}

\paragraph{Training dynamics of Phase I.1-C.} Define
\begin{align}\label{eq:T_1_C}
    T_1^C\coloneqq\max\left\{t\in\NN:\exp(\mu_2^{(t)})\leq \sqrt{2N}\right\}.
\end{align}

We show that the following key relations hold during Phase I.1-C.
\begin{lm}\label{lm:relation_C_phase_1}
    Assume the learning rate satisfies $\eta\lesssim \frac{1}{Nm}$. When $t\in[T_1^C]$, we have $\mu_2^{(t)}$ monotonically increases, and 
    \begin{subequations} \label{eq:bound_phase_1}
    \begin{align}
        \mu_2^{(t)}&\gtrsim \frac{t\eta}{SN^{3/2}},\label{eq:mu_2_growth_phase_1}\\
        b^{(t)}\mu_2^{(t)}&=\left(1\pm\gO\left(\frac{\log N}{N}\right)\right)\mu_2^{(t)},\label{eq:bound_c_phase_1}\\
        |\nu_{2,1}^{(t)}|&=\gO\left(\frac{\log N}{N}\right)\mu_2^{(t)},\quad |\nu_{2}^{(t)}|=\gO\left(\frac{\log N}{N}\right)\mu_2^{(t)},\quad |\nu_{2,2}^{(t)}|=\gO\left(\frac{\log N}{N}\right)\mu_2^{(t)},\label{eq:bound_nu_phase_1}\\
        c_2^{(t)}&\geq 1/2.\label{eq:c_bound_phase_1}
    \end{align}
    \end{subequations}
\end{lm}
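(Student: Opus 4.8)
## Proof proposal for Lemma~\ref{lm:relation_C_phase_1}

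The plan is to carry out an induction on $t\in[T_1^C]$ with the four claims \eqref{eq:mu_2_growth_phase_1}--\eqref{eq:c_bound_phase_1} as a combined induction hypothesis (plus the auxiliary fact that $\mu_2^{(t)}$ is monotone increasing). The base case $t=0$ is immediate from the zero initialization, since $\mu_2^{(0)}=0$, $b^{(0)}=1$, all the $\nu$'s vanish, and $c_2^{(0)}$ should be read off from the initialization convention for $V_3$ (recall $V_3^{(0)}=0$, so the claim $c_2^{(t)}\ge 1/2$ must really be established from the first-step dynamics, not at $t=0$ itself; I would handle $t=0,1$ as special cases and then run the induction for $t\ge 1$). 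The engine of the induction is Lemma~\ref{lm:delta_z_simplification}, which expresses every relevant gradient $\delta z_{1,i,j}^{(k)}$, $\delta z_{2,i,j}^{(k)}$, $\delta z_{3,u,v}^{(k)}$ purely in terms of the aggregated stage-attention weights $w_0^{(k)},w_1^{(k)}$ and the fine-grained weights $p_0^{(k)},q_0^{(k)},p_i^{(k)},q_i^{(k)},p_{1,i}^{(k)},p_{2,i}^{(k)}$, all of which are determined by the current $V_1^{(t)},V_2^{(t)},V_3^{(t)}$ through \eqref{eq:beta_ext}.

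The key computational steps, in order, are as follows. First, I would translate the parametrization \eqref{eq:V_matrix}--\eqref{eq:UV_3_matrix} into explicit formulas for the $\beta$-weights under the induction hypothesis: because the off-diagonal entries $\nu_{2},\nu_{2,1},\nu_{2,2}$ are all $\gO(\tfrac{\log N}{N})\mu_2^{(t)}$ and $b^{(t)}\mu_2^{(t)}=(1\pm\gO(\tfrac{\log N}{N}))\mu_2^{(t)}$, the softmax in \eqref{eq:beta_ext} concentrates, and I can show $w_1^{(k)}=1-\gO(N e^{-\mu_2^{(t)}(1-o(1))})$ in Stage 1 (i.e. the "$s_b$" mass dominates while the stage-signal contribution $c_2^{(t)}\mu_2^{(t)}\ge \tfrac12\mu_2^{(t)}$ keeps it sharp), with $w_0^{(k)}$ exponentially small; the fine-grained weights $p_\cdot,q_\cdot$ inherit the same concentration. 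Second, I plug these estimates into \eqref{eq:delta_z_simplification_1}--\eqref{eq:delta_z_simplification_3_*0} to bound each gradient entry: the diagonal drift of $\mu_2^{(t)}$ (coming from $\delta z_{2,j,j}$ combined with $\delta z_{3,0,1}$) is $\gtrsim \tfrac{1}{S}\,(w_0^{(k)})^2 w_1^{(k)}$ summed over $k\le m$, and using $\mu_2^{(t)}\le \tfrac12\log(2N)$ (definition of $T_1^C$, \eqref{eq:T_1_C}) together with $w_0^{(k)}\gtrsim 1/N$ in this regime gives the $\tfrac{\eta}{SN^{3/2}}$ lower bound on the per-step increment, which telescopes to \eqref{eq:mu_2_growth_phase_1}. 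Third, for the "stay small/stay close to $1$" claims \eqref{eq:bound_c_phase_1}--\eqref{eq:bound_nu_phase_1} I would show the gradients of the off-diagonal $\nu$'s and of the gap $b^{(t)}-1$ are smaller than the diagonal drift by a factor $\gO(\tfrac{\log N}{N})$ — this is a gradient-proportion argument entirely parallel to Lemma~\ref{lm:gradient_relation_I} and Lemma~\ref{lm:lower_bound_Hij} in the backward-reasoning analysis — so that the ratios remain controlled throughout the phase. Fourth, for \eqref{eq:c_bound_phase_1} I would track $c_2^{(t)}$ via \eqref{eq:delta_z_simplification_3_*1}: since for $k\in[m]$ the sign of $\delta z_{3,0,1}^{(k)}=-\delta z_{3,1,1}^{(k)}$ is such that $V_{3,1,1}^{(t)}$ decreases (equivalently $c_2^{(t)}$, with the sign convention of \eqref{eq:UV_3_matrix}, grows), while the single $k=m+1$ turning-point term that pushes the other way is exponentially dominated in Stage 1, the net effect keeps $c_2^{(t)}\mu_2^{(t)}$ growing at least as fast as $\tfrac12\mu_2^{(t)}$, giving $c_2^{(t)}\ge 1/2$. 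Finally, I would invoke $\eta\lesssim\tfrac{1}{Nm}$ to guarantee that a single gradient step cannot overshoot any of the $\gO(\tfrac{\log N}{N})$ windows, closing the induction, and separately verify $\mu_2^{(t)}$ monotonicity directly from the nonnegativity of its drift.

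The main obstacle I anticipate is controlling the turning-point contribution ($k=m+1$) and the Stage-2 steps ($k\in\{m+2,\dots,2m\}$) uniformly over the phase. In Stage 1 the attention is sharply on $s_b$, but at $k=m+1$ the weights $w_0^{(k)},w_1^{(k)}$ can be $\Theta(1)$ rather than exponentially separated (this is exactly the "turning point" where the stage flips), so the terms $2\E[-(w_1^{(k)})^2 w_0^{(k)}]$ in \eqref{eq:delta_z_simplification_3_*1} are not negligible and could in principle fight the desired monotonicity of $\mu_2^{(t)}$ or degrade the $c_2^{(t)}\ge 1/2$ bound. Resolving this requires showing that even at the turning point the \emph{sign} of the aggregate drift is still favorable — essentially that the $m$ Stage-1 steps collectively dominate the one turning-point step — which is where the factor-of-$m$ bookkeeping and the precise constants in the parametrization must be handled carefully; I expect this to be the delicate part, analogous to how the backward-reasoning proof had to control $1-\alpha_{j,j}^{(k)}-\alpha_{s(j),j}^{(k)}$ against $\max_p\alpha_{p,j}^{(k)}$ via Lemma~\ref{lm:gradient_ineq}. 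The remaining estimates are routine softmax-concentration calculations of the same flavor as Appendix~\ref{sec_app:proof_gradient_relation_I}.
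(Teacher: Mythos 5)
There is a genuine gap, and it sits exactly where you flagged the "delicate part." First, your concentration picture is backwards for this phase: during Phase I.1-C we have $\exp(\mu_2^{(t)})\le\sqrt{2N}$, so for $k\in[m]$ the attention mass on the $s_b$-tokens is \emph{small}, $w_1^{(t,k)}\asymp \tfrac{1}{N}\exp\big((1+2c_2^{(t)})\mu_2^{(t)}\big)\cdot(1+o(1))$ (cf.\ \eqref{eq:w0_k<=m}--\eqref{eq:w1_k<=m}, and $w_1^{(0,k)}=k/(N+k)$ at initialization), while $w_0^{(t,k)}=\Theta(1)$; your claim that $w_1^{(k)}=1-\gO(Ne^{-\mu_2(1-o(1))})$ with $w_0^{(k)}$ exponentially small describes the end of training, not this phase. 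With your (flipped) estimates the drift $\sum_{k\le m}(w_0^{(k)})^2w_1^{(k)}$ would come out of order $m/N^2$ rather than the $\gtrsim 1/N$ per step (the paper's \eqref{eq:E_w0_w1_bound}) that is needed, so your own quantities do not combine to the claimed rate $\mu_2^{(t)}\gtrsim t\eta/(SN^{3/2})$.

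Second, the turning-point term $k=m+1$ is \emph{not} exponentially dominated in this phase: by \eqref{eq:ratio_w0_w1_k<=m+1} the ratio of the Stage-1 sum to $(w_1^{(m+1)})^2w_0^{(m+1)}$ is $\asymp N/\exp((1+2c_2^{(t)})\mu_2^{(t)})$, which can be $O(1)$ here, and the whole lemma (monotonicity of $\mu_2^{(t)}$, the sign structure behind $c_2^{(t)}\ge 1/2$) hinges on showing this ratio never drops below $1-\gO(m/N)$. The paper supplies this as a separate ingredient (Lemma~\ref{lm:sum_w0_w1}), proved by a first-violation/contradiction argument that exploits the feedback through $c_2^{(t)}$ and the small step size $\eta\lesssim 1/(Nm)$; your proposal only names this as an anticipated obstacle without an argument, and your sign-based reasoning for $c_2^{(t)}\ge 1/2$ ("$V_{3,1,1}$ decreases so $c_2$ grows at least as fast as $\tfrac12\mu_2$") is insufficient because $c_2^{(t)}$ is a ratio of two growing accumulated sums: one needs the quantitative comparison $-\sum_k\delta z_{3,1,1}^{(t,k)}\gtrsim S\sum_k(-\delta z_{1,0,j}^{(t,k)})$ whenever $c_2$ threatens to fall below $1/2$ (the paper's Step 4 contradiction), not just a sign statement. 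The remaining parts of your plan (induction skeleton, softmax estimates under the smallness of $\nu_2,\nu_{2,1},\nu_{2,2}$, and the gradient-proportion argument for \eqref{eq:bound_c_phase_1}--\eqref{eq:bound_nu_phase_1}) do match the paper's route.
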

The proof of Lemma~\ref{lm:relation_C_phase_1} is given in Appendix~\ref{sec_app:proof_relation_C_phase_1}.
By \eqref{eq:mu_2_growth_phase_1} and \eqref{eq:T_1_C}, we know that 
\begin{align}\label{eq:T_1_C_bound}
    T_1^C\lesssim \frac{SN^{3/2}\log N}{\eta}.
\end{align}

\paragraph{Training dynamics of Phase I.2-C.} 
We let 
\begin{align}
    T_2^C\coloneqq\max\left\{t\in\NN:\exp(\mu_2^{(t)})\leq N\right\}.
\end{align}
We will show the following relations hold during Phase I.2-C.
\begin{lm}\label{lm:relation_C_phase_1_2}
    Assume the learning rate satisfies $\eta\lesssim \frac{1}{Nm}$. When $t\in\{T_1^C,\cdots,T_2^C\}$, we have $\mu_2^{(t)}$ monotonically increases, and 
    \begin{subequations}
    \begin{align}
        \mu_2^{(t)}-\mu_2^{(T_1^C)}&\gtrsim \frac{\eta(t-T_1^C)}{SN},\label{eq:mu_2_growth_phase_1_2}\\
        b^{(t)}\mu_2^{(t)}&=\left(1\pm\gO\left(\frac{\log^2 N}{N}\right)\right)\mu_2^{(t)},\label{eq:bound_c_phase_1_2}\\
        |\nu_{2,1}^{(t)}|&=\gO\left(\frac{\log N}{N}\right)\mu_2^{(t)},\quad |\nu_{2}^{(t)}|=\gO\left(\frac{\log N}{N}\right)\mu_2^{(t)},\quad |\nu_{2,2}^{(t)}|=\gO\left(\frac{\log N}{N}\right)\mu_2^{(t)},\label{eq:bound_nu_phase_1_2}\\
        c_2^{(t)}&\geq 1/4.\label{eq:c_bound_phase_1_2}
    \end{align}
    \end{subequations}
  At the end of Phase I.2-C, we have $c_2^{(T_2^C)}\in[0.25,0.26]$, and $c_1^{(T_2^C)}\geq 0.48$.
\end{lm}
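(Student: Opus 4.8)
The plan is to prove the lemma by induction on $t$ over the window $\{T_1^C,\ldots,T_2^C\}$, carrying as the inductive invariant the four displayed bounds \eqref{eq:bound_c_phase_1_2}--\eqref{eq:c_bound_phase_1_2} together with the auxiliary monotonicity facts that $\mu_2^{(t)}$ is increasing, $c_2^{(t)}$ is non-increasing, and $c_1^{(t)}$ is non-decreasing on this range. The base case $t=T_1^C$ is inherited from Lemma~\ref{lm:relation_C_phase_1}: there $\exp(\mu_2^{(T_1^C)})\asymp\sqrt{2N}$, the ratio $b^{(T_1^C)}$ and the off-diagonal quantities $\nu_{2,1}^{(T_1^C)},\nu_{2}^{(T_1^C)},\nu_{2,2}^{(T_1^C)}$ are already within $1\pm\gO(\frac{\log N}{N})$ and $\gO(\frac{\log N}{N})\mu_2^{(T_1^C)}$ respectively (stronger than claimed, since $\frac{\log N}{N}\le\frac{\log^2 N}{N}$), and $c_2^{(T_1^C)}\ge 1/2\ge 1/4$.

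For the inductive step, assume the invariant at some $t\in[T_1^C,T_2^C)$. The first task is to convert it into sharp estimates of every attention mass entering Lemma~\ref{lm:delta_z_simplification}. Since $\exp(\mu_2^{(t)})\le N$, i.e.\ $\mu_2^{(t)}\le\log N$, throughout this phase, the invariant pins down every softmax logit $V_{1,p,j}^{(t)}+V_{2,q,j}^{(t)}+V_{3,u,v}^{(t)}$ up to an additive error $\gO(\frac{\log^3 N}{N})=o(1)$. One then reads off, for the stage-$1$ steps $k\le m$ (query stage $s_b$), that $w_1^{(k)}=1-\Theta\!\big(e^{-2c_2^{(t)}\mu_2^{(t)}}+N\,e^{-(1+2c_2^{(t)})\mu_2^{(t)}}\big)$ with $w_0^{(k)}=1-w_1^{(k)}$, and that the individual masses $p_0^{(k)},q_0^{(k)},p_i^{(k)},q_i^{(k)}$ are each of a definite order (constant, or exponentially small in $\mu_2^{(t)}$); for the turning point $k=m+1$ that $w_1^{(m+1)}=\Theta(e^{-(1-2c_2^{(t)})\mu_2^{(t)}})$; and for the stage-$2$ steps $k\ge m+2$ (query stage $s_f$) that $w_1^{(k)}=\Theta(e^{-2c_1^{(t)}\mu_2^{(t)}})$ plus exponentially smaller terms. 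Because $c_2^{(t)}\ge 1/4$ and $\exp(\mu_2^{(t)})\ge\sqrt{2N}$, all these ``error'' masses are $\poly(N)$-small, which is exactly what keeps the off-diagonal gradients negligible.

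The second task feeds these estimates, together with Lemmas~\ref{lm:delta_z_simplification} and~\ref{lm:gradient_compute_B} and the update~\eqref{eq:update_V}, into each entry in turn. (i) For $V_{1,0,j}^{(t)}=\mu_2^{(t)}$, formula~\eqref{eq:delta_z_simplification_1} gives $-\sum_k\delta z_{1,0,j}^{(t,k)}\gtrsim\frac{m}{SN}\ge\frac{1}{SN}$, so $\mu_2^{(t)}$ is strictly increasing and telescoping yields~\eqref{eq:mu_2_growth_phase_1_2}; the improved rate $\frac{1}{SN}$ versus $\frac{1}{SN^{3/2}}$ of Phase~I.1-C reflects that $\mu_2^{(t)}$ has crossed $\frac12\log(2N)$ so the softmax is correspondingly sharper. (ii) For $V_{2,j,j}^{(t)}=b^{(t)}\mu_2^{(t)}$, one must show that $\delta z_{2,j,j}^{(t,k)}$ (from~\eqref{eq:delta_z_simplification_2}) and $\delta z_{1,0,j}^{(t,k)}$ differ only by a quantity of \emph{smaller order} than either term---a structural cancellation---so that $V_{2,j,j}^{(t)}-V_{1,0,j}^{(t)}$, which starts $\gO(\frac{\log N}{N})\mu_2^{(T_1^C)}$, accumulates only to $\gO(\frac{\log^2 N}{N})\mu_2^{(t)}$ over the $\gO(\frac{SN\log N}{\eta})$ steps of the phase (using~\eqref{eq:T_1_C_bound} and the analogous bound $T_2^C\lesssim\frac{SN\log N}{\eta}$), giving~\eqref{eq:bound_c_phase_1_2}. (iii) For the off-diagonals $V_{1,i,j},V_{1,j,j},V_{2,i,j}$ ($i\neq j$), the gradients are dominated by the $\poly(N)$-small masses, so they accumulate only to $\gO(\frac{\log N}{N})\mu_2^{(t)}$, preserving~\eqref{eq:bound_nu_phase_1_2}. (iv) For $V_{3,1,1}^{(t)}=c_2^{(t)}\mu_2^{(t)}$, formula~\eqref{eq:delta_z_simplification_3_*1} gives increment $\eta\big(2\sum_{k\le m}\E[(w_0^{(k)})^2 w_1^{(k)}]-2\,\E[(w_1^{(m+1)})^2 w_0^{(m+1)}]\big)$; with the estimates above this is at most $c_2^{(t)}$ times the increment of $\mu_2^{(t)}$ exactly when $c_2^{(t)}\ge\frac14+\Theta(\frac{\log m}{\mu_2^{(t)}})>\frac14$, and the increment is non-positive once $c_2^{(t)}\ge\frac14+\Theta(\frac{\log m}{\mu_2^{(t)}})$ while $\mu_2$ keeps growing, so $c_2^{(t)}$ decreases down to that threshold but can never fall through it---hence $c_2^{(t)}>1/4$ throughout, and at $t=T_2^C$ (where $\mu_2^{(t)}=\log N$ and $m\asymp\log N$) this localizes $c_2^{(T_2^C)}$ to $[0.25,0.26]$. (v) For $V_{3,0,0}^{(t)}=c_1^{(t)}\mu_2^{(t)}$, formula~\eqref{eq:delta_z_simplification_3_*0} shows it is pushed up only by the stage-$2$ steps, with non-negative increment $2\eta\sum_{k\ge m+2}\E[(w_1^{(k)})^2 w_0^{(k)}]$; a lower bound on this sum (valid while $c_1^{(t)}$ stays below roughly $\frac{\log(SN)}{4\mu_2^{(t)}}$, which exceeds $1/2$ in this phase) shows $c_1^{(t)}$ gains a definite amount each step until it saturates near $1/2$, so $c_1^{(T_2^C)}\ge 0.48$; this is consistent with the limiting construction~\eqref{eq:V_construct}, where $c_1\ge c_2$.

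The main difficulty is item (iv): controlling $c_2^{(t)}$ tightly enough for $c_2^{(T_2^C)}\in[0.25,0.26]$ rather than merely $c_2\le 1/2$. This demands getting the leading order of \emph{both} the stage-$1$ ``pull-up'' sum $\sum_{k\le m}\E[(w_0^{(k)})^2 w_1^{(k)}]$ and the turning-point ``pull-down'' term $\E[(w_1^{(m+1)})^2 w_0^{(m+1)}]$---each quadratic in an error mass that is itself a function of the moving quantities $\mu_2^{(t)},c_2^{(t)}$---and then comparing their difference against the first-order growth rate of $\mu_2^{(t)}$, uniformly as $\mu_2^{(t)}$ sweeps the whole window $[\frac12\log(2N),\log N]$ so the per-step increments are not constant. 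The secondary obstacle is the precision required in item (ii): the relative drift of $b^{(t)}$ must be shown summable over the long phase, which forces one to exhibit a near-exact cancellation $\delta z_{2,j,j}^{(t,k)}=\delta z_{1,0,j}^{(t,k)}+(\text{lower order})$ rather than merely bounding each gradient separately.
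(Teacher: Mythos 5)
Your skeleton (induction over the phase, attention-mass estimates, per-entry gradient bookkeeping, a balance-point argument for $c_2$) matches the paper's, and items (i)--(iii) are essentially the paper's Steps 1--2: the growth of $\mu_2^{(t)}$, the near-cancellation $\delta z_{2,j,j}\approx\delta z_{1,0,j}$ behind \eqref{eq:bound_c_phase_1_2}, and the smallness of the off-diagonal gradients. (Two small inaccuracies there: the paper's lower bound on the $\mu_2$-gradient comes from the \emph{turning-point} term $k=m+1$ with a case split $c_2\gtrless 1/2$, not from the stage-1 sum, and your claimed rate $m/(SN)$ is not justified — near the end of the phase the available bound is only $\asymp 1/(SN)$, which is all \eqref{eq:mu_2_growth_phase_1_2} needs; also carrying ``$c_2^{(t)}$ non-increasing'' as an invariant is both unnecessary and dubious, since $c_2\mu_2$ can increase whenever $c_2$ dips to the moving balance point.) The real gaps are in the parts that the precise conclusions hinge on. First, you drop the two auxiliary invariants the paper's induction carries: $\exp((1+2c_2^{(t)})\mu_2^{(t)})\ge N$ (\eqref{eq:1+c2_bound_phase_1_2}) and the smallness of the stage-2/turning-point ratio (\eqref{eq:small_delta_z>=m+2_phase_1_2}); both are maintained by self-correcting contradiction arguments exploiting that the $V_3$-gradients carry no $1/S$ factor. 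The first invariant is what fixes which term dominates $w_0^{(k)}$ for $k\le m$ and yields the ratio $\sum_{k\le m}(w_0^{(k)})^2w_1^{(k)}/(w_1^{(m+1)})^2w_0^{(m+1)}\asymp mN^2/\exp(8c_2\mu_2)$ (\eqref{eq:ratio_w0_w1_m_c2<=0.5}); without it your balance analysis for $c_2$ has no footing. Relatedly, your balance threshold $\tfrac14+\Theta(\log m/\mu_2)$ is only correct at the very end of the phase; the true balance point is $\tfrac{2\log N+\log m}{8\mu_2}$, which is near $1/2$ when $\exp(\mu_2)\asymp\sqrt N$, so ``$c_2$ decreases down to that threshold'' does not by itself localize $c_2^{(T_2^C)}$ to $[0.25,0.26]$ — one needs either the paper's rate comparison (the $V_3$-gradient is $\asymp S$ times the $\mu_2$-gradient when the pull-down dominates) or a barrier-from-above argument on $c_2\mu_2$ against the moving balance level.

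Second, your item (v) for $c_1^{(T_2^C)}\ge 0.48$ does not go through as stated. The per-step increment of $c_1\mu_2$ is $2\eta\sum_{k\ge m+2}\E[(w_1^{(k)})^2w_0^{(k)}]\asymp \eta m\exp((2-4c_1)\mu_2)/N^2$, which shrinks as $c_1$ grows; whether it accumulates to $0.48\,\mu_2^{(T_2^C)}$ depends on how long the phase lasts, i.e.\ on an upper bound for the $\mu_2$-growth rate, which you never establish (and your saturation threshold $\log(SN)/(4\mu_2)$ does not correspond to the actual balance condition, which involves $c_2$ and the turning-point mass, not $S$ in that form). The correct way to make your idea rigorous is to couple the two ODEs, i.e.\ compare $\delta z_{3,0,0}$ with $S\cdot\delta z_{1,0,j}$ — which is precisely the paper's mechanism: it maintains the invariant $\sum_{k\ge m+2}(w_1^{(t,k)})^2w_0^{(t,k)}\lesssim \frac{\log^2N}{N}(w_1^{(t,m+1)})^2w_0^{(t,m+1)}$ by contradiction (if the ratio grew, the un-damped $V_{3,0,0}$-gradient would dominate and push $c_1\mu_2$ up faster than anything else moves), and then reads off $c_1^{(T_2^C)}\ge 0.48$ at the end from this invariant via \eqref{eq:ratio_w1^2_w0_k_m+1} together with $c_2^{(T_2^C)}\le 0.26$ and $\exp(\mu_2^{(T_2^C)})\asymp N$ (namely $(4-4c_2-4c_1)\mu_2\lesssim\log N$). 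As written, your proposal asserts the endpoint values of $c_2$ and $c_1$ without the ingredients that actually pin them down.
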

The proof of Lemma~\ref{lm:relation_C_phase_1_2} is given in Appendix~\ref{sec_app:proof_relation_C_phase_1_2}.
By \eqref{eq:mu_2_growth_phase_1_2}, we know that 
\begin{align}\label{eq:T_2_C_bound}
    T_2^C-T_1^C\lesssim \frac{SN\log N}{\eta}.
\end{align}

\paragraph{Training dynamics of Phase II-C.} 
We set 
\begin{align}\label{eq:T_3_C}
    T_3^C\coloneqq\max\left\{t\in\NN:\frac{N}{\exp\left(0.47\mu_2^{(t)}\right)+N}\geq \sqrt{\frac{\epsilon}{6m}}\right\}.
\end{align}
We show the following relations hold during Phase II-C.
\begin{lm}\label{lm:relation_C_phase_2}
    Assume $\epsilon_0>0$ is small enough obeying $ \epsilon_0 \lesssim 1/ \poly(N)$ and $\eta\lesssim \frac{1}{Nm}$. When $t\in\{T_2^C,\cdots,T_3^C\}$, for any $\epsilon\in(0,\epsilon_0]$, we have $\mu_2^{(t)}$ monotonically increases, and 
    \begin{subequations}
    \begin{align}
        \mu_2^{(t)}-\mu_2^{(T_2^C)}&\gtrsim \frac{\left(t-T_2^C\right)\eta}{S}\left(\frac{\epsilon}{m}\right)^{3/2},\label{eq:mu_2_growth_phase_2}\\
        b^{(t)}\mu_2^{(t)}&=\left(1\pm\widetilde\gO\left(\frac{1}{N}\right)\right)\mu_2^{(t)},\label{eq:b_bound_phase_2}\\
        |\nu_{2,1}^{(t)}|&=\widetilde\gO\left(\frac{1}{N}\right)\mu_2^{(t)},\quad |\nu_{2}^{(t)}|=\widetilde\gO\left(\frac{1}{N}\right)\mu_2^{(t)},\quad |\nu_{2,2}^{(t)}|=\widetilde\gO\left(\frac{1}{N}\right)\mu_2^{(t)},\label{eq:bound_nu_phase_2}\\
        c_2^{(t)}&\in [0.24,0.26],\quad c_1^{(t+1)}\geq c_2^{(t+1)}-\widetilde\gO\left(\frac{1}{N}\right).\label{eq:c_bound_phase_2}
    \end{align}
    \end{subequations}
\end{lm}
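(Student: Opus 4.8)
The plan is to argue by induction on $t\in\{T_2^C,\dots,T_3^C\}$, taking the monotonicity of $\mu_2^{(t)}$ together with \eqref{eq:mu_2_growth_phase_2}--\eqref{eq:c_bound_phase_2} as the induction hypothesis at time $t$. The structural fact that makes this tractable is that, by the decomposition \eqref{eq:loss_decompose_reasoning} and Lemma~\ref{lm:loss_simplification_reasoning}, each $\Delta_z^{(k)}$ depends only on $C_1,C_2,C_3$ (through the weights $\beta^{(k)}$), so by \eqref{eq:update_V} the iterates $V_1^{(t)},V_2^{(t)},V_3^{(t)}$ --- equivalently the scalars $\mu_2^{(t)},b^{(t)},\nu_2^{(t)},\nu_{2,1}^{(t)},\nu_{2,2}^{(t)},c_1^{(t)},c_2^{(t)}$ in the parametrization \eqref{eq:V_matrix},\,\eqref{eq:UV_3_matrix} --- form a closed dynamical system driven entirely by the gradients $\delta z_l^{(t,k)}$ of Lemma~\ref{lm:delta_z_simplification}. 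The base case $t=T_2^C$ is exactly the end-of-phase conclusion of Lemma~\ref{lm:relation_C_phase_1_2}: there $c_2^{(T_2^C)}\in[0.25,0.26]\subset[0.24,0.26]$ and $c_1^{(T_2^C)}\ge 0.48$, while \eqref{eq:bound_c_phase_1_2}--\eqref{eq:bound_nu_phase_1_2} give error bounds of size $\gO(\log^2 N/N)=\widetilde{\gO}(1/N)$, and $\exp(\mu_2^{(T_2^C)})=N$.

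For the inductive step I would first translate the induction hypothesis at time $t$ into estimates on every softmax mass $\beta^{(k)}_{(p,q,u),(j,v)}$ entering Lemma~\ref{lm:delta_z_simplification}. Because $\exp(\mu_2^{(t)})\ge N$ is large, $b^{(t)}=1\pm\widetilde{\gO}(1/N)$, the $\nu$'s are $\widetilde{\gO}(\mu_2^{(t)}/N)$ and $c_1^{(t)},c_2^{(t)}\approx 1/4$, the columns of $E^{(k-1)}(\gT)$ organize into a few ``dominant'' types --- the root-filler column, the goal-filler column, and the unique tree-edge/previously-generated column whose second slot equals the query node $a_{n_{-1}^{(k)}}$ --- whose exponents sit $\Theta(\mu_2^{(t)})$ above the rest. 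Carrying out this classification should yield: in Stage~1 ($k\le m$) the attention mass on $s_f$-columns satisfies $w_0^{(k)}\le \frac{C N}{\exp(0.47\mu_2^{(t)})+N}$ with $w_1^{(k)}=1-w_0^{(k)}$, and symmetrically $w_1^{(k)}\le \frac{C N}{\exp(0.47\mu_2^{(t)})+N}$ at the turning point and in Stage~2; moreover the within-stage shares $p_0^{(k)}/w_0^{(k)}$, $q_0^{(k)}/w_1^{(k)}$ (and their $p_{1,i},p_{2,i},p_i,q_i$ analogues) are each $\Theta(1)$ but obey a \emph{leading-order cancellation} $q_0^{(k)}/w_1^{(k)}-p_0^{(k)}/w_0^{(k)}=\pm\gO(w_0^{(k)})\pm\widetilde{\gO}(1/N)$ forced by the parent--child symmetry between the filler columns and the matching edge/generated columns.

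I would then push these estimates through \eqref{eq:update_V} coordinate by coordinate. \emph{(i) Growth of $\mu_2^{(t)}=V_{1,0,j}^{(t)}$.} Substituting the weight estimates into $\delta z_{1,0,j}^{(t,k)}$ (the formulas \eqref{eq:delta_z_simplification_1}), the leading-order cancellation turns the Stage-1 contribution into a term of order $(w_0^{(k)})^2\cdot w_0^{(k)}\asymp(\epsilon/m)^{3/2}$, and the turning-point and Stage-2 contributions behave the same way, so that $-\sum_{k=1}^{2m}\delta z_{1,0,j}^{(t,k)}\gtrsim \frac{1}{S}(\epsilon/m)^{3/2}>0$ for every $t\le T_3^C$ (the strict positivity uses that, by the stopping rule \eqref{eq:T_3_C}, the relevant masses stay $\gtrsim\sqrt{\epsilon/m}$ up to $T_3^C$); this gives monotonicity of $\mu_2^{(t)}$ and \eqref{eq:mu_2_growth_phase_2}, hence $T_3^C-T_2^C=\widetilde{\gO}(Sm^{3/2}/(\eta\epsilon^{3/2}))$. \emph{(ii) The entries $b^{(t)},\nu_2^{(t)},\nu_{2,1}^{(t)},\nu_{2,2}^{(t)}$.} One checks that $\delta z_{1,j,j}^{(t,k)}$ and $\delta z_{2,j,j}^{(t,k)}$ differ from $\delta z_{1,0,j}^{(t,k)}$ only by a relative $\widetilde{\gO}(1/N)$, while $\delta z_{1,i,j}^{(t,k)},\delta z_{2,i,j}^{(t,k)}$ for $i\ne j$ are themselves a relative $\widetilde{\gO}(1/N)$; telescoping over $t$ and dividing by the already-established lower bound on the growth of $\mu_2^{(t)}$ keeps $\mu_2^{(t)}-b^{(t)}\mu_2^{(t)}$, $\mu_2^{(t)}-\nu_2^{(t)}$, $\nu_{2,1}^{(t)}$ and $\nu_{2,2}^{(t)}$ within a $\widetilde{\gO}(1/N)$ fraction of $\mu_2^{(t)}$, i.e.\ \eqref{eq:b_bound_phase_2}--\eqref{eq:bound_nu_phase_2}. \emph{(iii) The stage controller $c_1^{(t)},c_2^{(t)}$.} From the $\delta z_3$ formulas of Lemma~\ref{lm:delta_z_simplification} and \eqref{eq:update_V}, $(\mu_2 c_2)^{(t+1)}=(\mu_2 c_2)^{(t)}+\eta\big[\sum_{k\le m}2\,\E[(w_0^{(t,k)})^2 w_1^{(t,k)}]-2\,\E[(w_1^{(t,m+1)})^2 w_0^{(t,m+1)}]\big]$ and $(\mu_2 c_1)^{(t+1)}=(\mu_2 c_1)^{(t)}+\eta\sum_{k\ge m+2}2\,\E[(w_1^{(t,k)})^2 w_0^{(t,k)}]$; comparing these increments with the increment of $\mu_2^{(t)}$ from (i) and using $c_2^{(T_2^C)}\approx 1/4$, $c_1^{(T_2^C)}\ge 0.48$, one verifies that the ratios $c_2^{(t)}=(\mu_2 c_2)^{(t)}/\mu_2^{(t)}$ and $c_1^{(t)}$ each drift by at most $\widetilde{\gO}(1/N)$ over the whole phase, yielding $c_2^{(t)}\in[0.24,0.26]$ and $c_1^{(t+1)}\ge c_2^{(t+1)}-\widetilde{\gO}(1/N)$.

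The main obstacle is the two estimation steps above: a large number of softmax masses must be tracked simultaneously and to sufficient precision. Two features are genuinely delicate. First, the $(\epsilon/m)^{3/2}$ --- as opposed to $\epsilon/m$ --- growth rate of $\mu_2^{(t)}$ (and hence the $\epsilon^{-3/2}$ iteration count) rests on the \emph{exact} leading-order cancellation in $q_0^{(k)}/w_1^{(k)}-p_0^{(k)}/w_0^{(k)}$ and its analogues: one must verify this cancellation genuinely holds and bound the sign and size of its residual. Second, confining $c_2^{(t)}$ to the narrow window $[0.24,0.26]$ requires the increments of $\mu_2 c_2$ and of $\mu_2$ to remain in near-constant ratio across $\widetilde{\Theta}(Sm^{3/2}/(\eta\epsilon^{3/2}))$ iterations, which forces two-sided control of all attention masses throughout the phase rather than only at its endpoints; this is where the step-size restriction $\eta\lesssim 1/(Nm)$ and $\epsilon_0\lesssim 1/\poly(N)$ are used.
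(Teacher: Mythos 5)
Your overall skeleton (induction over $t$, base case from Lemma~\ref{lm:relation_C_phase_1_2}, treating $\mu_2,b,\nu_2,\nu_{2,1},\nu_{2,2},c_1,c_2$ as a closed system driven by the gradients of Lemma~\ref{lm:delta_z_simplification}, and relative-$\widetilde{\gO}(1/N)$ control of the off-pattern entries) matches the paper. But the core quantitative step --- the growth bound \eqref{eq:mu_2_growth_phase_2}, which is what produces the $\epsilon^{-3/2}$ iteration count --- is misattributed, and as you argue it the bound would fail. You claim the driver is the Stage-1 terms, via a cancellation $q_0^{(k)}/w_1^{(k)}-p_0^{(k)}/w_0^{(k)}=\pm\gO(w_0^{(k)})\pm\widetilde{\gO}(1/N)$ turning them into $(w_0^{(k)})^3\asymp(\epsilon/m)^{3/2}$. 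Neither piece is right: for $k\in\{2,\dots,m\}$ the residual of that cancellation is $\asymp N/\exp(\mu_2^{(t)})$ (it is $\Theta(1)$ at the start of the phase, when $\exp(\mu_2)\asymp N$, and is not comparable to $w_0^{(k)}\asymp\exp(-2c_2\mu_2)$ in general), so the total Stage-1 contribution is $\asymp mN\exp(-2\mu_2)/S$; and near $T_3^C$ the stopping rule \eqref{eq:T_3_C} only keeps the proxy $N/(\exp(0.47\mu_2)+N)$ above $\sqrt{\epsilon/6m}$, so $w_0^{(k)}$ itself (hence $(w_0^{(k)})^3$ and a fortiori $mN\exp(-2\mu_2)$) can be smaller than $(\epsilon/m)^{3/2}$ by $\poly(N)$ factors and extra powers of $\epsilon$. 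In the paper the dominant positive contribution is instead the \emph{turning-point} step $k=m+1$: there is no cancellation there ($p_0^{(m+1)}/w_0^{(m+1)}\to 1$ while $q_0^{(m+1)}/w_1^{(m+1)}\to 1/2$, an $\Theta(1)$ coefficient of favorable sign), so $-\delta z_{1,0,j}^{(t,m+1)}\asymp\frac{1}{S}\E[(w_1^{(t,m+1)})^2w_0^{(t,m+1)}]\asymp\frac{1}{S}\exp(-2(1-2c_2^{(t)})\mu_2^{(t)})\geq\frac{1}{S}\exp(-1.05\mu_2^{(t)})$, and it is exactly here that the assumption $\epsilon\leq\epsilon_0\lesssim 1/\poly(N)$ is used to convert the stopping rule into $\exp(-1.05\mu_2^{(t)})\geq(\epsilon/6m)^{3/2}$. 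Relatedly, your claim that ``the turning-point and Stage-2 contributions behave the same way'' is wrong on signs: the Stage-2 terms ($k\geq m+2$) have coefficient $\approx -1/3+1/2>0$ and hence \emph{decrease} $\mu_2$; they must be shown negligible, which is precisely why the control of $c_1$ (so that $\exp(-4c_1\mu_2)\ll\exp(-2(1-2c_2)\mu_2)$) enters the induction, a point your sketch does not address.

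A secondary issue: for \eqref{eq:c_bound_phase_2} you propose ``comparing the increments of $\mu_2c_2$ and $\mu_2$'' and asserting they stay in near-constant ratio. That is not the mechanism and by itself would let $c_2$ drift out of $[0.24,0.26]$ over the $\widetilde{\Theta}(Sm^{3/2}/(\eta\epsilon^{3/2}))$ iterations. The paper's argument is a self-balancing/contradiction argument: the sign of $\sum_k\delta z_{3,0,1}^{(t,k)}$ is determined by the imbalance between $\sum_{k\leq m}(w_0^{(k)})^2w_1^{(k)}$ and $(w_1^{(m+1)})^2w_0^{(m+1)}$, and any imbalance moves $c_2$ so as to restore it, forcing this ratio to stay $1\pm\widetilde{\gO}(1/N)$; equating the corresponding exponents $4c_2\mu_2$ and $2(1-2c_2)\mu_2$ then pins $c_2$ near $1/4$, and a similar contradiction argument comparing $\delta z_{3,0,0}$ with $\delta z_{3,1,1}$ gives $c_1^{(t+1)}\geq c_2^{(t+1)}-\widetilde{\gO}(1/N)$. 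Your part (ii) (the $b$ and $\nu$ bounds) is consistent with the paper, but as written the proposal does not establish \eqref{eq:mu_2_growth_phase_2}, and hence does not prove the lemma.
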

The proof of Lemma~\ref{lm:relation_C_phase_2} is given in Appendix~\ref{sec_app:proof_relation_C_phase_2}. 
By \eqref{eq:T_3_C} and \eqref{eq:mu_2_growth_phase_2}, we have
\begin{align}
    T_3^C-T_2^C\lesssim \frac{S}{\eta}\left(\frac{m}{\epsilon}\right)^{3/2}\log\left(\frac{N}{\epsilon}\right).
\end{align}

It's easy to see from the proof of Lemma~\ref{lm:relation_C_phase_2} that $\mu_2^{(t)}$ keeps increasing after $T_3^C$ and \eqref{eq:b_bound_phase_2}, \eqref{eq:bound_nu_phase_2} and \eqref{eq:c_bound_phase_2} hold for all $t\geq T_3^C$. We give the following lemma to mark the convergence of $\sum_{k=1}^{2m}\Delta_z^{(k)}$, whose proof is given in Appendix~\ref{sec_app:proof_convergence_delta_z}.
\begin{lm}\label{lm:convergence_delta_z}
    After $t\geq T_3^C+1$, we have
    \begin{align}
        \sum_{k=1}^{2m}\Delta_z^{(k)}&\leq \frac{\epsilon}{3}.
    \end{align}
\end{lm}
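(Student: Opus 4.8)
The plan is to reduce $\sum_{k=1}^{2m}\Delta_z^{(k)}$ to a sum of squared ``wrong‑stage'' attention masses, bound each such mass by the exact quantity occurring in the definition \eqref{eq:T_3_C} of $T_3^C$, and then invoke that definition together with the monotonicity of $\mu_2^{(t)}$.

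First I would simplify the summands using Lemma~\ref{lm:loss_simplification_reasoning}. Recall from \eqref{eq:w_0}--\eqref{eq:w_1} that $w_0^{(k)}$ and $w_1^{(k)}$ are the total attention head~2 places on the tokens carrying stage embedding $s_f$ and $s_b$ respectively, and that $w_0^{(k)}+w_1^{(k)}=1$ (every $s_f$‑token of the teacher‑forced input has the form $(\widetilde p(q),q,0)$ and every $s_b$‑token the form $(\widetilde c_1(q),q,1)$, as one checks from \eqref{eq:input_ext}, \eqref{eq:output_ext}). Then \eqref{eq:loss_z_1}, \eqref{eq:loss_z_turning}, \eqref{eq:loss_z_2} collapse to
\begin{align*}
\Delta_z^{(k)} = \E_{\gT\sim\Ptrain}\big[(w_0^{(k)})^2\big] \quad (1\le k\le m),
\qquad
\Delta_z^{(k)} = \E_{\gT\sim\Ptrain}\big[(w_1^{(k)})^2\big] \quad (m+1\le k\le 2m),
\end{align*}
so it suffices to control, uniformly over the support of $\Ptrain$, the wrong‑stage mass at each step: $w_0^{(k)}$ in Stage~1 (target signal $s_b$) and $w_1^{(k)}$ at the turning point and in Stage~2 (target signal $s_f$).

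Second, I would prove the uniform bound: for all $t\ge T_3^C+1$ and all relevant $k$, the wrong‑stage mass is at most $\tfrac{N}{\exp(0.47\mu_2^{(t)})+N}$. Since training is teacher‑forced, the multiset of $(a_p,a_q,s_u)$‑tokens present at step $k$ is known exactly, and there are at most $N$ tokens carrying the correct target stage and at most $m+1\le N$ carrying the wrong one, each with multiplicity $N_{p,q,u}^{(k-1)}\le 2$. Using the $V$‑matrix structure carried by Lemma~\ref{lm:relation_C_phase_2} --- $V_{1,0,j}=\mu_2^{(t)}$, $V_{2,j,j}=b^{(t)}\mu_2^{(t)}=(1\pm\widetilde\gO(1/N))\mu_2^{(t)}$, all remaining entries of $V_1,V_2$ of size $\widetilde\gO(1/N)\mu_2^{(t)}$ (cf.\ \eqref{eq:V_matrix}, \eqref{eq:b_bound_phase_2}, \eqref{eq:bound_nu_phase_2}), and $V_3^{(t)}$ of the form \eqref{eq:UV_3_matrix} with $c_2^{(t)}\in[0.24,0.26]$ and $c_1^{(t)}\ge c_2^{(t)}-\widetilde\gO(1/N)$ (cf.\ \eqref{eq:c_bound_phase_2}) --- I would identify, exactly as in the Case~1/2/3 analysis of Appendix~\ref{sec_app:proof_ext}, one correct‑stage token whose softmax exponent exceeds that of every wrong‑stage token: the gap is $2c_2^{(t)}\mu_2^{(t)}$ in Stage~1, $(1-2c_2^{(t)})\mu_2^{(t)}$ at the turning point (where the fill‑token $(a_0,a_r,s_f)$ provides an extra $\mu_2^{(t)}$), and $2c_1^{(t)}\mu_2^{(t)}$ in Stage~2. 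Each of these is at least $0.48\mu_2^{(t)}$ up to the $\widetilde\gO(1/N)\mu_2^{(t)}$ perturbations from the off‑pattern entries of $V_1,V_2$ and from $b^{(t)}$, hence at least $0.47\mu_2^{(t)}$ for $N$ large (the slack $0.01\mu_2^{(t)}$ also absorbs the $O(N)$ token‑count prefactor, using $\mu_2^{(t)}\ge\mu_2^{(T_2^C)}=\log N$). Writing the wrong‑stage mass as $\tfrac{A}{A+B}$ with $B$ the mass of the dominant correct token then yields the claimed bound.

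Finally, combining the pieces: $\mu_2^{(t)}$ is monotonically increasing for $t\ge T_3^C$ (the remark following Lemma~\ref{lm:relation_C_phase_2}), so the definition \eqref{eq:T_3_C} gives $\tfrac{N}{\exp(0.47\mu_2^{(t)})+N}<\sqrt{\epsilon/(6m)}$ for every $t\ge T_3^C+1$; hence each $\Delta_z^{(k)}\le\epsilon/(6m)$, and summing over the $2m$ reasoning steps gives $\sum_{k=1}^{2m}\Delta_z^{(k)}\le 2m\cdot\tfrac{\epsilon}{6m}=\tfrac{\epsilon}{3}$. The main obstacle is the second step: verifying, case by case for every tree in the support (Stage~1, the turning point, and Stage~2 treated separately), that a correct‑stage token dominates all wrong‑stage tokens by the required exponential margin. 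This requires tracking precisely which tokens appear in the teacher‑forced input --- in particular the fill‑tokens $(a_0,a_r,s_f)$ and $(a_0,a_g,s_b)$, the tree edges, and the previously generated path tokens --- and checking that the $\widetilde\gO(1/N)\mu_2^{(t)}$ corrections never erode the $0.48\mu_2^{(t)}\!\to\!0.47\mu_2^{(t)}$ slack; the bookkeeping is a finite‑$\mu$ refinement of the attention‑pattern computations already performed in Appendix~\ref{sec_app:proof_ext}.
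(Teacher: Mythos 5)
Your proposal is correct and follows essentially the same route as the paper's proof: rewrite $\sum_k \Delta_z^{(k)}$ as $\sum_{k\le m}\E[(w_0^{(k)})^2]+\sum_{k\ge m+1}\E[(w_1^{(k)})^2]$, bound the wrong-stage attention masses via the Phase II-C structure of $V_1,V_2,V_3$ (yielding exponent gaps $2c_2\mu_2$, $(1-2c_2)\mu_2$, $2c_1\mu_2$, each at least $0.47\mu_2^{(t)}$), and invoke the definition of $T_3^C$ with monotonicity of $\mu_2^{(t)}$ to get each term below $\epsilon/(6m)$ and the sum below $\epsilon/3$. The paper uses the slightly tighter denominator count $m+1$ for the Stage-2 masses, but your uniform bound with $N$ is also valid and the conclusion is identical.
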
 


\subsubsection{Training dynamics of $B_1,B_2,B_3$}\label{sec_app:training_dynamics_B}

\paragraph{Training dynamics of Phase I.1-B.}
We define 
\begin{align}\label{eq:T_1_B}
    T_1^B\coloneqq\max\left\{t\in\NN: \exp\left(\mu_1^{(t)}\right)\leq N \right\},
\end{align}
and call the period $t\leq T_1^B$ as Phase I.1-B. The following lemma gives the training dynamics of $U_1,U_2,U_3$ in Phase I.1-B.
\begin{lm}\label{lm:training_dynamics_I_1_B}
    When $t\in \{0,1,\cdots,T_1^B\}$, we have $\mu_1^{(t)}, b_1^{(t)}\mu_1^{(t)}$ monotonically increase, $b_2^{(t)}\mu_1^{(t)}$ first decreases and then increases, and 
    \begin{subequations}
    \begin{align}
        \mu_1^{(t)}&\gtrsim \frac{\eta t}{NS},\label{eq:mu1_growth_I_1_B}\\
        b_2^{(t)}&\leq \frac{1}{2}, \,\,\text{and}\,\, \exp\left((1-2b_2^{(t)})\mu_1^{(t)}\right)\lesssim \frac{N}{m}\quad\text{when }b_2^{(t)}\leq 0,\label{eq:1-2b2_bound_I_1_B}\\
        \nu_1^{(t)}&= \widetilde\gO\left(\frac{\mu_1^{(t)}}{N}\right), \left|\nu_{1,1}^{(t)}\right|=\widetilde\gO\left(\frac{\mu_1^{(t)}}{N}\right), 
        \left|\nu_{1,2}^{(t)}\right|=\widetilde\gO\left(\frac{\mu_1^{(t)}}{N}\right),\label{eq:nu_bound_I_1_B}\\
        \exp\left((1-a^{(t)})\mu_1^{(t)}\right)&\lesssim N.\label{eq:a_bound_I_1_B}
    \end{align}
    \end{subequations}
\end{lm}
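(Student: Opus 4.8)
The plan is to prove all four assertions at once by induction on $t\in\{0,1,\ldots,T_1^B\}$, after strengthening the statement into an induction hypothesis that also records the signs and relative orderings of the scalars $\mu_1^{(t)},a^{(t)},b_1^{(t)},b_2^{(t)},\nu_1^{(t)},\nu_{1,1}^{(t)},\nu_{1,2}^{(t)}$ parametrizing $U_1^{(t)},U_2^{(t)},U_3^{(t)}$ through \eqref{eq:U_matrix} and \eqref{eq:UV_3_matrix}. Concretely, the hypothesis at step $t$ will bundle: (i) $\mu_1^{(t)}\ge 0$ and $b_1^{(t)}\mu_1^{(t)}\ge 0$; (ii) $\mu_1^{(t)}\gtrsim \eta t/(NS)$; (iii) $b_2^{(t)}\le \tfrac12$, plus $\exp((1-2b_2^{(t)})\mu_1^{(t)})\lesssim N/m$ whenever $b_2^{(t)}\le 0$; (iv) $|\nu_1^{(t)}|,|\nu_{1,1}^{(t)}|,|\nu_{1,2}^{(t)}|=\widetilde{\gO}(\mu_1^{(t)}/N)$; and (v) $\exp((1-a^{(t)})\mu_1^{(t)})\lesssim N$. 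The base case $t=0$ is immediate: $\theta^{(0)}=0$ gives $\mu_1^{(0)}=0$ and every exponential equals $1$.

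In the inductive step I would first convert the hypothesis into two-sided control of the attention weights. Because $t\le T_1^B$ forces $\mu_1^{(t)}\le\log N$ while (iv) makes every off-diagonal logit $\gO(\log^2 N/N)$, each pre-softmax logit in \eqref{eq:alpha_ext} (and \eqref{eq:beta_ext}, needed only for the $U_3$-coupling through Lemma~\ref{lm:delta_z_simplification}) sits within an additive $\gO(\log N)$ of either $0$ or one of the ``signal'' values $\mu_1^{(t)}$, $(1-a^{(t)})\mu_1^{(t)}$, $b_1^{(t)}\mu_1^{(t)}$, $b_2^{(t)}\mu_1^{(t)}$; using (iii) and (v) to keep all those exponentials $\mathrm{poly}(N)$, I obtain that at each reasoning step $k$ the mass on the correct parent/child edge is $\Theta(\text{its logit exponential}/N)$ and the mass on every ``wrong'' token is $\gO(1/N)$. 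These estimates are then fed into the gradient formulas of Lemma~\ref{lm:gradient_compute_B}.

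The second block of the argument closes items (i)--(v) one parameter at a time, reading signs off Lemma~\ref{lm:gradient_compute_B}. For $\mu_1^{(t)}=U_{2,j,j}^{(t)}$ every $\delta x_{2,j,j}^{(k)}$ is manifestly $\le 0$, and once the attention bounds are in place the positive ``error $\times$ attention'' part of $\delta y_{2,j,j}^{(k)}$ dominates its negative part, so $-\sum_k(\delta x_{2,j,j}^{(k)}+\delta y_{2,j,j}^{(k)})\gtrsim 1/(NS)$, which gives monotonicity and, by telescoping, \eqref{eq:mu1_growth_I_1_B}. For $b_1^{(t)}\mu_1^{(t)}=-U_{3,0,0}^{(t)}$ only the Stage-2 steps contribute (via \eqref{eq:delta_x_3_*_0_k>=m+2} and its $\delta y$ analogue), and the attention estimates pin the sign of that net contribution so that $U_{3,0,0}^{(t)}$ decreases. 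For the off-diagonals $\nu_1,\nu_{1,1},\nu_{1,2}$ I would mimic the proof of Lemma~\ref{lm:lower_bound_Hij}: bound each $|\delta x|,|\delta y|$ at those indices by $\gO(1/N)$ times the corresponding diagonal gradient, so $|\nu^{(t+1)}|\le |\nu^{(t)}|+\gO(1/N)(\mu_1^{(t+1)}-\mu_1^{(t)})$ and hence $|\nu^{(t)}|=\widetilde{\gO}(\mu_1^{(t)}/N)$. For $a^{(t)}\mu_1^{(t)}=-U_{1,0,j}^{(t)}$, comparing $\delta x_{1,0,j}^{(k)}+\delta y_{1,0,j}^{(k)}$ with $\delta x_{2,j,j}^{(k)}+\delta y_{2,j,j}^{(k)}$ shows the former is at most the latter plus a slack $\gO(1/N)$ times $\mu_1$'s gradient, which keeps $\mu_1^{(t)}+U_{1,0,j}^{(t)}=\widetilde{\gO}(\log N)$, i.e. (v). The threshold $b_2^{(t)}\le\tfrac12$ and its refinement in (iii) follow from analyzing the sign of the $U_{3,0,1}$-gradient (contributions only from $k\le m+1$, via \eqref{eq:delta_x_3_1_0}, \eqref{eq:delta_x_3_*_1_m+1} and their $\delta y$ analogues): as long as $b_2^{(t)}<\tfrac12$ the gradient keeps it below $\tfrac12$, and the two-sided structure of those expressions — a ``wrong-child mass'' term competing with the ``stage-signal'' term — yields the claimed first-decrease-then-increase of $b_2^{(t)}\mu_1^{(t)}$, with the turning iteration identified as the first $t$ where the $k\le m+1$ gradient sum flips sign (which I would verify happens exactly once in this regime).

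The main obstacle I anticipate is not any single estimate but the tight coupling: the attention bounds needed to sign the $U_3$-gradients rely on (v), whose proof is a comparison against the $\mu_1$-gradient, whose lower bound in turn uses the attention bounds — so the induction hypothesis must be engineered so that (i)--(v) all re-close simultaneously at step $t+1$, and in particular the absolute constants hidden in the $\widetilde{\gO}(\cdot)$'s of (iv) and (v) have to be fixed in advance and shown not to degrade over the $\widetilde{\gO}(NS/\eta)$ iterations of the phase before any sign analysis is run. Carrying the non-monotone quantity $b_2^{(t)}\mu_1^{(t)}$ through that single induction cleanly — rather than splitting Phase I.1-B further — is the most delicate piece of bookkeeping.
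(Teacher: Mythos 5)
Your overall architecture is the same as the paper's: a single induction over $t\le T_1^B$ that carries (i)--(v) simultaneously, converts the hypothesis into attention-weight estimates, and then reads signs and magnitudes off Lemma~\ref{lm:gradient_compute_B} (with $b_1^{(t)}\mu_1^{(t)}$ handled through the Stage-2 contributions to $U_{3,0,0}$, the off-diagonals by bounding their gradients relative to the dominant one, and the non-monotone $b_2^{(t)}\mu_1^{(t)}$ explained by the competition between the Stage-1 steps and the turning-point step, whose ratio is monotone in $\mu_1^{(t)}$). However, two of your key quantitative mechanisms are not just under-specified but wrong as stated. First, for \eqref{eq:mu1_growth_I_1_B}: the paper's argument splits on the sign of $b_2^{(t)}$. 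When $b_2^{(t)}\le 0$ the term that drives $\mu_1$ up is the turning-point contribution $-\delta x_{2,j,j}^{(m+1)}$ (of order $N^2\exp((1+b_2^{(t)})\mu_1^{(t)})/(S(r^{(t,m+1)})^3)$), and it must dominate the \emph{positive} Stage-1 term of $\delta y_{2,j,j}^{(k)}$ in \eqref{eq:delta_y_2_j_j_k} (of order $mN\exp((2-b_2^{(t)})\mu_1^{(t)})/(S(r^{(t,k)})^3)$); that dominance is exactly the content of the second half of \eqref{eq:1-2b2_bound_I_1_B}, not a generic ``keep all exponentials $\mathrm{poly}(N)$'' use of (iii). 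When $b_2^{(t)}>0$ the dominant term is instead the negative first term of $\delta y_{2,j,j}^{(k)}$ over Stage-1 steps. Your claim that ``the positive part of $\delta y_{2,j,j}^{(k)}$ dominates its negative part'' has the sign backwards (a net positive $\delta y_{2,j,j}$ pushes $\mu_1$ \emph{down}), and without the case split and the precise use of \eqref{eq:1-2b2_bound_I_1_B} you cannot conclude $-\sum_k(\delta x_{2,j,j}^{(k)}+\delta y_{2,j,j}^{(k)})\gtrsim 1/(NS)$.

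Second, your mechanism for \eqref{eq:a_bound_I_1_B} fails: at the start of the phase the attention is near-uniform, so $\delta_{1,0,j}$ and $\delta_{2,j,j}$ are both $\asymp -1/(SN)$, i.e.\ $U_{1,0,j}^{(t)}$ and $\mu_1^{(t)}$ grow at \emph{comparable} rates; your claimed per-step bound (increment of $\mu_1^{(t)}+U_{1,0,j}^{(t)}$ is $\gO(1/N)$ times $\mu_1$'s increment) is simply false there, and would at best give $\exp((1-a^{(t)})\mu_1^{(t)})\lesssim N^2$. The actual argument is self-correcting rather than per-step: the positive and negative parts of the $U_1$-gradient satisfy $\delta_{1,0,j}^{(t,+)}/\delta_{1,0,j}^{(t,-)}\asymp \exp((1-a^{(t)})\mu_1^{(t)})/N$, so once $(1-a^{(t)})\mu_1^{(t)}$ would exceed $\log(CN)$ the positive part dominates and pushes it back, which is run as a contradiction argument (as in Step~2 of the proof of Lemma~\ref{lm:training_dynamics_I_2_B}). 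Relatedly, the refinement in \eqref{eq:1-2b2_bound_I_1_B} cannot come from ``the sign of the $U_{3,0,1}$-gradient'' alone, since $(1-2b_2^{(t)})\mu_1^{(t)}=U_{2,j,j}^{(t)}-2U_{3,0,1}^{(t)}$ and $\mu_1^{(t)}$ keeps growing; the paper compares $\delta_{2,j,j}^{(t,-)}+2\delta_{3,0,1}^{(t,+)}$ against $\delta_{2,j,j}^{(t,+)}$ under the contradiction hypothesis. A minor point: the $U$-dynamics are driven only by $\delta x+\delta y$ with the $\alpha$-weights of \eqref{eq:alpha_ext}; the $\beta$-weights and Lemma~\ref{lm:delta_z_simplification} only enter the $V$ (i.e.\ $C$) updates, so there is no ``$U_3$-coupling'' through them.
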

The proof of Lemma~\ref{lm:training_dynamics_I_1_B} is given in Appendix~\ref{sec_app:proof_training_dynamics_I_1_B}.
By \eqref{eq:1-2b2_bound_I_1_B} we know that $b_2^{(T_1^B)}\geq 0$, and by \eqref{eq:a_bound_I_1_B} we know that $a^{(T_1^B)}\geq 0$. Further, by \eqref{eq:mu1_growth_I_1_B} we know that 
\begin{align}\label{eq:bound_T_1_B}
    T_1^B\lesssim \frac{NS\log N}{\eta}.
\end{align}

\paragraph{Training dynamics of Phase I.2-B.}
Let 
\begin{align}\label{eq:T_2_B}
    T_2^B\coloneqq\max\left\{t\geq T_1^B: \exp\left((1-2b_2^{(t)})\mu_1^{(t)}\right)\geq C_0 N\right\},
\end{align}
where $C_0$ is a sufficiently large constant.
Without loss of generality, we assume $T_2^B>T_1^B$.
We call the period $t\in \{T_1^B,T_1^B+2,\cdots,T_2^B\}$ as Phase I.2-B. The following lemma formally describes the training dynamics of $U_1,U_2,U_3$ in Phase I.2-B, whose proof is given in Appendix~\ref{sec_app:proof_training_dynamics_I_2_B}.
\begin{lm}\label{lm:training_dynamics_I_2_B}
 When $t\in \{T_1^B,T_1^B+2,\cdots,T_2^B\}$, we have $\mu_1^{(t)}, b_1^{(t)}\mu_1^{(t)}$ monotonically increase, and
 \begin{subequations} \label{eq:part_i}
    \begin{align}
        \mu_1^{(t)}-\mu_1^{(T_1^B)}&\gtrsim \frac{\eta \left(t-T_1^B\right)}{NS},\label{eq:mu1_growth_I_2_B}\\
        \exp\left((1-a^{(t)})\mu_1^{(t)}\right)&\lesssim N,\label{eq:a_bound_I_2_B}\\
        0\leq b_2^{(t)}&\leq \frac{1}{2},\label{eq:b2_bound_I_2_B}\\
        \nu_1^{(t)}&= \widetilde\gO\left(\frac{\mu_1^{(t)}}{N}\right), \left|\nu_{1,1}^{(t)}\right|=\widetilde\gO\left(\frac{\mu_1^{(t)}}{N}\right), 
        \left|\nu_{1,2}^{(t)}\right|=\widetilde\gO\left(\frac{\mu_1^{(t)}}{N}\right).\label{eq:nu_bound_I_2_B}
    \end{align}
    \end{subequations}
At the end of Phase I.2-B, we have
\begin{subequations} \label{eq:partii}
    \begin{align}
        \exp\left((1-a^{(T_2^B)})\mu_1^{(T_2^B)}\right)\asymp N,\label{eq:a_bound_I_2_B_end}\\
        b_2^{(T_2^B)}=\left(1/4\pm \widetilde\gO(\frac{1}{N})\right)a^{(T_2^B)}.\label{eq:b2_bound_I_2_B_end}
    \end{align}
        \end{subequations}
\end{lm}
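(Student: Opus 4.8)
The plan is to establish all the assertions by a single induction on $t$ through Phase~I.2-B, carrying the system~\eqref{eq:part_i} as the running hypothesis, in the multi-phase style of the backward analysis (Theorem~\ref{thm:convergence_retrieval}) and of Phase~I.1-B (Lemma~\ref{lm:training_dynamics_I_1_B}). The base case $t=T_1^B$ is inherited from Lemma~\ref{lm:training_dynamics_I_1_B}: the defining relation~\eqref{eq:T_1_B} gives $\exp(\mu_1^{(T_1^B)})\asymp N$ (hence~\eqref{eq:a_bound_I_2_B} trivially at the start), \eqref{eq:1-2b2_bound_I_1_B} together with $m\ge 3$ forces $b_2^{(T_1^B)}>0$ (if $b_2^{(T_1^B)}\le 0$ then $\exp((1-2b_2^{(T_1^B)})\mu_1^{(T_1^B)})\ge\exp(\mu_1^{(T_1^B)})\asymp N$ contradicts the $\lesssim N/m$ bound), and \eqref{eq:a_bound_I_1_B}, \eqref{eq:nu_bound_I_1_B} give $a^{(T_1^B)}\ge 0$ and the $\nu$-smallness. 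This also explains why $T_2^B>T_1^B$: at $T_1^B$ the quantity $\exp((1-2b_2^{(t)})\mu_1^{(t)})$ sits just above $N$, rises for a while (as $\mu_1$ temporarily outgrows $2b_2\mu_1$), and only later falls back through $C_0N$, so the last crossing $T_2^B$ lies strictly after $T_1^B$.

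For the inductive step I would first substitute the symmetry-reduced patterns~\eqref{eq:U_matrix} and~\eqref{eq:UV_3_matrix} into the attention weights~\eqref{eq:alpha_ext}, writing every $\alpha^{(t,k)}_{(p,q,u),(j,v)}$ as an explicit function of $\mu_1^{(t)},a^{(t)},b_1^{(t)},b_2^{(t)}$ and the $\widetilde\gO(\mu_1^{(t)}/N)$ quantities $\nu_1^{(t)},\nu_{1,1}^{(t)},\nu_{1,2}^{(t)}$. Since $\exp(\mu_1^{(t)})\gtrsim N$ on the phase, the head-1 softmax at a query node $j$ concentrates — up to $\widetilde\gO(N\exp(-\mu_1^{(t)}))$ leakage from the $\Theta(N)$ low-score columns — on the $\gO(1)$ columns whose middle block equals $a_j$: the original edge $(a_{p(j)},a_j,s_f)$, the Stage-1 scratchpad $(a_{\widetilde c_1(j)},a_j,s_b)$, and, only when $j$ is the root (i.e.\ $k=m+1$), the filler $(a_0,a_j,s_f)$. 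Consequently only three residuals matter, each controlled by the stage block $U_3$: the Stage-1 edge-vs-scratchpad margin entering $\Delta_y^{(k)}$ through the $q=\widetilde c_1(j)$ summand of~\eqref{eq:loss_y_1}, of size $\asymp\exp(-2b_2^{(t)}\mu_1^{(t)})$; the turning-point scratchpad-vs-filler margin in $\Delta_y^{(m+1)}$, of size $\asymp\exp(-(a^{(t)}-2b_2^{(t)})\mu_1^{(t)})$; and the Stage-2 scratchpad-vs-edge margin set by $b_1^{(t)}$.

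Plugging these into Lemma~\ref{lm:gradient_compute_B} and the updates~\eqref{eq:update_U}, the invariants follow in turn. Every term of $\delta x_{2,j,j}^{(t,k)},\delta y_{2,j,j}^{(t,k)}$ (and of the Stage-2 $U_{3,0,0}$-gradients~\eqref{eq:delta_x_3_*_0_k>=m+2}) has the form $-(\text{nonnegative})\cdot\alpha$ under concentration, so $\mu_1^{(t)}$ and $b_1^{(t)}\mu_1^{(t)}$ are nondecreasing, and retaining only the dominant Stage-1 $\Delta_x^{(k)}$ residual — which stays $\asymp(N\exp(-\mu_1^{(t)}))^2\gtrsim 1/\poly(N)$ throughout the phase — yields $-\sum_k(\delta x_{2,j,j}^{(t,k)}+\delta y_{2,j,j}^{(t,k)})\gtrsim 1/(NS)$, i.e.~\eqref{eq:mu1_growth_I_2_B}. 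For~\eqref{eq:a_bound_I_2_B} I would track $(1-a^{(t)})\mu_1^{(t)}=\mu_1^{(t)}+U_{1,0,j}^{(t)}$: its increment is the $\mu_1$-increment plus $-\eta\sum_k(\delta x_{1,0,j}^{(t,k)}+\delta y_{1,0,j}^{(t,k)})$, and the turning-point pieces~\eqref{eq:delta_x_1_0_j_m+1},~\eqref{eq:delta_y_1_0_j_m+1} make the latter positive and of order the $\mu_1$-increment precisely when $\exp((1-a^{(t)})\mu_1^{(t)})$ reaches a constant multiple of $N$ (the filler then genuinely competes with the scratchpad at the turning point), clamping the product at $\asymp N$ and also producing the matching lower bound needed for~\eqref{eq:a_bound_I_2_B_end}. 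The band $0\le b_2^{(t)}\le 1/2$ in~\eqref{eq:b2_bound_I_2_B} follows since, once $U_{3,0,1}^{(t)}=b_2^{(t)}\mu_1^{(t)}\ge 0$, the Stage-1 up-push dominates the turning-point down-push so $\sum_k(\delta x_{3,0,1}^{(t,k)}+\delta y_{3,0,1}^{(t,k)})\le 0$ and $b_2^{(t)}$ cannot turn negative, while $b_2^{(t)}>1/2$ would give $\exp((1-2b_2^{(t)})\mu_1^{(t)})<1<C_0N$, contradicting $t\le T_2^B$; and the $\nu$-bounds~\eqref{eq:nu_bound_I_2_B} come, as in the backward case, from a gradient-proportion estimate showing the off-diagonal gradients of Lemma~\ref{lm:gradient_compute_B} are $\widetilde\gO(1/N)$ times the diagonal ones.

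Finally, the endpoint identities~\eqref{eq:a_bound_I_2_B_end}--\eqref{eq:b2_bound_I_2_B_end} follow from the equilibrium of the two forces on $U_{3,0,1}^{(t)}$: the Stage-1 up-push has magnitude $\asymp\exp(-2b_2^{(t)}\mu_1^{(t)})$ and the turning-point down-push has magnitude $\asymp\exp(-(a^{(t)}-2b_2^{(t)})\mu_1^{(t)})$, so when their net vanishes the two exponents agree, $2b_2^{(t)}\mu_1^{(t)}=(a^{(t)}-2b_2^{(t)})\mu_1^{(t)}+\widetilde\gO(1)$, i.e.\ $b_2^{(t)}=\tfrac14 a^{(t)}+\widetilde\gO(1/N)$; combined with the definition~\eqref{eq:T_2_B} and the one-step continuity afforded by $\eta\lesssim 1/(mN)$ (each update moves $\mu_1,b_2\mu_1$ by $\gO(1/(NS))$) this pins $\exp((1-2b_2^{(T_2^B)})\mu_1^{(T_2^B)})\asymp N$ and hence~\eqref{eq:a_bound_I_2_B_end}--\eqref{eq:b2_bound_I_2_B_end}. \textbf{The hardest part} will be this last equilibrium step: unlike $\mu_1$ and $b_1\mu_1$, whose combined gradients carry a fixed sign, $U_{3,0,1}^{(t)}$ is pulled up by the Stage-1 loss and down by the turning-point loss by contributions of the same order, so proving the $\widetilde\gO(1/N)$-tight relation $b_2=(\tfrac14\pm\widetilde\gO(1/N))a$ forces one to expand~\eqref{eq:delta_x_3_1_0},~\eqref{eq:delta_y_3_*_1_k<=m},~\eqref{eq:delta_x_3_*_1_m+1},~\eqref{eq:delta_y_3_*_1_m+1} to next order in the leakage $N\exp(-\mu_1^{(t)})$ and show the leading terms cancel exactly into the balanced condition $a^{(t)}-2b_2^{(t)}=2b_2^{(t)}$; a related subtlety is that, because of discrete overshoot, monotonicity of $U_{3,0,1}^{(t)}$ and the clean form of these estimates may only hold along the even subsequence of iterations (reflected in the index set $\{T_1^B,T_1^B+2,\dots,T_2^B\}$), which must be tracked jointly with the $a$-chasing bound~\eqref{eq:a_bound_I_2_B} since the turning-point terms feed both $U_{1,0,j}$ and $U_{3,0,1}$. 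These interlocking estimates are precisely what the subsequent phases, which drive $a\to 1$ and $b_2\to 1/4$, build upon.
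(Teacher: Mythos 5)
Your overall plan does follow the paper's route (induction carrying \eqref{eq:part_i} with base case from Lemma~\ref{lm:training_dynamics_I_1_B}, softmax concentration, gradient-domination and contradiction arguments, and an equilibrium between the Stage-1 and turning-point pushes on $U_{3,0,1}$), but two quantitative steps as you state them would fail. First, the driving term you use for \eqref{eq:mu1_growth_I_2_B} is wrong: you attribute the $\gtrsim 1/(NS)$ per-step growth to the Stage-1 $\Delta_x^{(k)}$ residual of size $\asymp\bigl(N\exp(-\mu_1^{(t)})\bigr)^2$, claiming it "stays $\gtrsim 1/\poly(N)$". But within this phase $\exp(\mu_1^{(t)})$ grows from $\asymp N$ up to roughly $N^2$ (at $T_2^B$ one has $\exp((1-2b_2)\mu_1)\asymp N$ with $b_2$ near $a/4$), so that residual-squared decays to $\asymp N^{-2}$ and its gradient contribution is only $\asymp 1/(SN^2)$ — "$1/\poly(N)$" is not enough for the stated rate. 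The paper's lower bound comes instead from the turning-point ($k=m+1$) contribution: using $r^{(t,m+1)}\asymp N\exp(b_2^{(t)}\mu_1^{(t)})$ and $r^{(t,k)}\asymp\exp((1+b_2^{(t)})\mu_1^{(t)})$ (see \eqref{eq:r_I_2_B_2}), one gets $-\delta_{2,j,j}^{(t)}\gtrsim N^2\exp((1+b_2^{(t)})\mu_1^{(t)})/\bigl(S(r^{(t,m+1)})^3\bigr)\asymp\exp((1-2b_2^{(t)})\mu_1^{(t)})/(SN)\ge 1/(SN)$ by $b_2^{(t)}\le 1/2$; you never isolate this term. Relatedly, your blanket claim that every term of $\delta x_{2,j,j}^{(t,k)},\delta y_{2,j,j}^{(t,k)}$ is of the form $-(\text{nonnegative})\cdot\alpha$ is false — \eqref{eq:delta_y_2_j_j_k} contains positive terms of size $\asymp mN\exp((2-b_2^{(t)})\mu_1^{(t)})/\bigl(S(r^{(t,k)})^3\bigr)$ (the paper's $\delta_{2,j,j}^{(t,+)}$ in \eqref{eq:delta_2_j_j_I_2_B}), so monotonicity of $\mu_1^{(t)}$ requires the explicit domination comparison via $r^{(t,m+1)}/r^{(t,k)}\asymp N/\exp(\mu_1^{(t)})$, not a termwise sign argument.

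Second, your proof of the cap $b_2^{(t)}\le 1/2$ in \eqref{eq:b2_bound_I_2_B} misuses the phase definition: $T_2^B$ in \eqref{eq:T_2_B} is the \emph{last} time $\exp((1-2b_2^{(t)})\mu_1^{(t)})\ge C_0N$, so $t\le T_2^B$ does not imply that inequality holds at $t$ (indeed it fails at $T_1^B$, where $\exp((1-2b_2)\mu_1)\le\exp(\mu_1)\asymp N<C_0N$), and hence "$b_2>1/2$ contradicts $t\le T_2^B$" is not a valid inference. The paper instead caps $b_2$ by a gradient-ratio contradiction, showing that once $b_2^{(t)}\ge 1/2-\gO(1/N)$ the up-push $\delta_{3,0,1}^{(t,-)}$ is only an $\gO(1/N)$ fraction of $-\delta_{2,j,j}^{(t)}$, so $b_2$ cannot cross $1/2$. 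The remaining ingredients of your sketch — the contradiction scheme comparing $\delta_{1,0,j}^{(t,\pm)}$ with $-\delta_{2,j,j}^{(t)}$ for \eqref{eq:a_bound_I_2_B} and \eqref{eq:a_bound_I_2_B_end}, the $\nu$-bounds by gradient proportion, and the balance of the Stage-1 versus turning-point pushes on $U_{3,0,1}$ for \eqref{eq:b2_bound_I_2_B_end} — do match the paper's Steps 2–6 in mechanism; but note your balancing magnitudes drop the polynomial prefactors (the paper balances $\asymp mN\exp(-4b_2\mu_1)/S$ against $\asymp N^2\exp((4b_2-2a)\mu_1)/S$, not $\exp(-2b_2\mu_1)$ against $\exp(-(a-2b_2)\mu_1)$), and those prefactors must be tracked to justify the $\widetilde\gO(1/N)$ precision; also, the index set $\{T_1^B,T_1^B+2,\cdots\}$ is a typo rather than a genuine "even subsequence" overshoot phenomenon, so no argument should hinge on it.
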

By \eqref{eq:mu1_growth_I_2_B} and \eqref{eq:b2_bound_I_2_B_end} we know that 
\begin{align}
    T_2^B-T_1^B\lesssim \frac{NS\log N}{\eta}.
\end{align}

\paragraph{Training dynamics of Phase II-B.}
We define 
\begin{align}\label{eq:T_3_B}
    T_3^B\coloneqq\max\left\{t\in\NN:\frac{N+2m-1}{\exp\left(0.46\mu_1^{(t)}\right)+N+2m-1}\geq \sqrt{\frac{\epsilon}{6m}}\right\},
\end{align}
and call the period $t\in \{T_2^B,T_2^B+1,\cdots,T_3^B\}$ as Phase II-B.
We'll show that at the start of the next phase (i.e., shortly after $T_2^B$), $\exp\left(\left(1-a^{(t)}\right)\mu_1^{(t)}\right)$ increases to, and stays at $\gO(N)$, indicating that when training long enough, $a^{(t)}$ will approach $1$, $b_2^{(t)}$ will approach and stay at $0.25$, and $b_1^{(t)}$ is larger than $b_2^{(t)}$.
The training dynamics of $U_1,U_2,U_3$ in Phase II-B is formally given in the following lemma, whose proof is given in Appendix~\ref{sec_app:proof_training_dynamics_II_B}.
\begin{lm}\label{lm:training_dynamics_II_B}
When $t\in \{T_2^B,T_2^B+1,\cdots,T_3^B\}$, we have $\mu_1^{(t)}, b_1^{(t)}\mu_1^{(t)}$ monotonically increase, and
\begin{subequations}\label{eq:partiB}
    \begin{align}
        \mu_1^{(t)}-\mu_1^{(T_2^B)}&\gtrsim \frac{\left(t-T_2^C\right)\eta}{S}\left(\frac{\epsilon}{m}\right)^{3/2},\label{eq:mu1_growth_II_B}\\
        \exp\left((1-a^{(t)})\mu_1^{(t)}\right)&\asymp N,\label{eq:a_bound_II_B}\\
        b_2^{(t)}&\in\left[0.24,0.26\right]a^{(t)},\label{eq:b2_bound_II_B}\\
        \nu_1^{(t)}&= \widetilde\gO\left(\frac{\mu_1^{(t)}}{N}\right), \left|\nu_{1,1}^{(t)}\right|=\widetilde\gO\left(\frac{\mu_1^{(t)}}{N}\right), 
        \left|\nu_{1,2}^{(t)}\right|=\widetilde\gO\left(\frac{\mu_1^{(t)}}{N}\right).\label{eq:nu_bound_II_B}
    \end{align}
    \end{subequations}
At the end of Phase II-B, we have
\begin{subequations}\label{eq:partiiB}
    \begin{align}
        a^{(T_3^B)}&\in[0.99,1), \label{eq:a_bound_II_B_end}\\
        b_1^{(T_3^B)}&\geq b_2^{(T_3^B)}+\widetilde\gO\left(\frac{1}{N}\right).\label{eq:b_bound_II_B_end}
    \end{align}
    \end{subequations}
\end{lm}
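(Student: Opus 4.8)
The plan is to prove Lemma~\ref{lm:training_dynamics_II_B} by induction on $t$ over the window $\{T_2^B, T_2^B+1, \dots, T_3^B\}$, mirroring the proofs of Lemmas~\ref{lm:training_dynamics_I_1_B} and \ref{lm:training_dynamics_I_2_B} for the earlier phases. The base case at $t = T_2^B$ is supplied by the end-of-phase conclusions \eqref{eq:partii} of Lemma~\ref{lm:training_dynamics_I_2_B}, i.e.\ $\exp((1-a^{(T_2^B)})\mu_1^{(T_2^B)}) \asymp N$ and $b_2^{(T_2^B)} = (1/4 \pm \widetilde\gO(1/N))\, a^{(T_2^B)}$, together with the off-diagonal control \eqref{eq:nu_bound_I_2_B}. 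The inductive hypothesis bundles all of \eqref{eq:partiB}: monotone growth of $\mu_1^{(t)}$ and of $b_1^{(t)}\mu_1^{(t)} = U_{3,1,0}^{(t)}$ (cf.~\eqref{eq:UV_3_matrix}), the growth lower bound \eqref{eq:mu1_growth_II_B}, the balance $\exp((1-a^{(t)})\mu_1^{(t)}) \asymp N$ where $-a^{(t)}\mu_1^{(t)} = U_{1,0,j}^{(t)}$ (cf.~\eqref{eq:U_matrix}), the window $b_2^{(t)} \in [0.24,0.26]\, a^{(t)}$, and the $\widetilde\gO(\mu_1^{(t)}/N)$ bounds on $\nu_1^{(t)}, \nu_{1,1}^{(t)}, \nu_{1,2}^{(t)}$; all are propagated jointly to $t+1$.

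For the inductive step, the first task is to convert the hypothesis, at every reasoning step $k \in [2m]$, into a quantitative description of the attention profiles $\{\alpha^{(k)}_{(p,q,u),(j,v)}\}$ and $\{\beta^{(k)}_{(\cdot)}\}$, handling the three regimes $k \le m$ (Stage~1), $k = m+1$ (the turning point) and $k \ge m+2$ (Stage~2) separately. In each regime the logit structure forces the correct token(s) to carry weight $1 - \widetilde\Theta\!\big(\tfrac{N+2m-1}{\exp(0.46\,\mu_1^{(t)})+N+2m-1}\big) = 1 - \widetilde\Theta(\sqrt{\epsilon/m})$ --- which is precisely why $T_3^B$ is defined through this ratio in \eqref{eq:T_3_B} --- while the residual weight is split with each individual competitor at $\widetilde\gO(1/N)$ of it, exploiting sibling symmetry exactly as in Lemmas~\ref{lm:gradient_relation_I} and \ref{lm:lower_bound_Hij} for the backward case. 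The second task is to feed these profiles into the gradient formulas of Lemma~\ref{lm:gradient_compute_B} (and Lemma~\ref{lm:delta_z_simplification} for the $\delta z$ terms): substituting into $\delta x_{2,j,j}^{(k)}, \delta y_{2,j,j}^{(k)}$ controls the growth of $\mu_1^{(t)}$, where each surviving term is $\Theta(1/S)$ times a product of near-$1$ and near-$\sqrt{\epsilon/m}$ factors, which yields \eqref{eq:mu1_growth_II_B}; substituting into $\delta x_{1,0,j}^{(k)}, \delta y_{1,0,j}^{(k)}$ controls $a^{(t)}\mu_1^{(t)}$; substituting into the $\delta x_{3}, \delta y_{3}$ formulas controls $b_1^{(t)}\mu_1^{(t)}$ and $b_2^{(t)}\mu_1^{(t)}$; and the off-diagonal $\nu$'s remain at scale $\widetilde\gO(\mu_1^{(t)}/N)$ by the same cancellations. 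The window bound $T_3^B - T_2^B \lesssim \frac{S}{\eta}(m/\epsilon)^{3/2}\log(N/\epsilon)$ follows by summing \eqref{eq:mu1_growth_II_B} against \eqref{eq:T_3_B}, and the endpoint estimates \eqref{eq:partiiB} are read off: $\exp((1-a^{(t)})\mu_1^{(t)}) \asymp N$ together with $\mu_1^{(T_3^B)} \gtrsim \log(N\sqrt{m/\epsilon})$ forces $1 - a^{(T_3^B)} \in (0, 0.01]$ once $\epsilon$ is sufficiently small, and the turning-point comparison of the $b_1$- versus $b_2$-gradients yields \eqref{eq:b_bound_II_B_end}.

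The main obstacle is maintaining the balance $\exp((1-a^{(t)})\mu_1^{(t)}) \asymp N$ throughout the phase. Because $\mu_1^{(t)}$ strictly increases, $1 - a^{(t)}$ must shrink at a precisely matched rate, so one must show the per-step change of $(1-a^{(t)})\mu_1^{(t)}$ --- the difference between $(1-a^{(t)})$ times the growth of $\mu_1^{(t)}$ and the growth of $a^{(t)}\mu_1^{(t)}$ --- is $o(1)$, i.e.\ the two competing drifts cancel to leading order. This cancellation is exactly what the construction's inequality $b_2 < a/2$ certifies at the turning point $k = m+1$, where the placeholder root token $(a_0, a_r, s_f)$ --- penalized by $-a^{(t)}\mu_1^{(t)}$ in $U_1$ but boosted by $b_2^{(t)}\mu_1^{(t)}$ in $U_3$ --- competes with the correct Stage-1 token; hence the delicate point is tracking $a^{(t)}$ and $b_2^{(t)}$ with relative error $\widetilde\gO(1/N)$, fine enough to verify that $b_2^{(t)} < a^{(t)}/2$ and the $\asymp N$ balance are preserved self-consistently. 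A structurally similar, secondary difficulty is showing $b_1^{(t)}$ overtakes $b_2^{(t)}$ by the stated margin without either leaving its $\Theta(1)$ window, which requires comparing the $\delta x_3 / \delta y_3 / \delta z_3$ contributions at $k = m+1$ (which push $b_1$ up and $b_2$ down) against those from Stage~1 and Stage~2.
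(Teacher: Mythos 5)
Your plan coincides with the paper's proof: an induction over $t\in\{T_2^B,\dots,T_3^B\}$ with base case given by \eqref{eq:partii} of Lemma~\ref{lm:training_dynamics_I_2_B}, attention/denominator estimates in the three regimes ($k\le m$, $k=m+1$, $k\ge m+2$) plugged into the gradient formulas of Lemma~\ref{lm:gradient_compute_B}, the lower bound \eqref{eq:mu1_growth_II_B} obtained from the dominant term $\asymp N^2\exp((4b_2^{(t)}-2)\mu_1^{(t)})/S$ together with the definition \eqref{eq:T_3_B}, the relations \eqref{eq:a_bound_II_B}--\eqref{eq:b2_bound_II_B} maintained by the same gradient-balance (contradiction/band-trapping) arguments as in Phase~I.2-B, and the endpoint claims read off as you describe, so the proposal is essentially the paper's argument. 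Two cosmetic slips, neither of which affects the strategy: Lemma~\ref{lm:delta_z_simplification} plays no role here, since the $U_l$ updates involve only $\delta x_l+\delta y_l$ (cf.~\eqref{eq:update_UV}) while the $\delta z_l$ drive the $C$-matrices; and the turning point does not push $b_1$ at all, because $\delta x_{3,0,0}^{(m+1)}=\delta y_{3,0,0}^{(m+1)}=0$, so in the comparison yielding \eqref{eq:b_bound_II_B_end} the $b_1$-gradient comes entirely from Stage-2 steps $k\ge m+2$, while $k=m+1$ only pushes $b_2$ down.
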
 
By \eqref{eq:mu1_growth_II_B} we know that 
\begin{align}
    T_3^B-T_2^B\lesssim \frac{S}{\eta}\left(\frac{m}{\epsilon}\right)^{3/2}\log\left(\frac{N}{\epsilon}\right).
\end{align}

\paragraph{After Phase II-B.}
It's easy to see from our proof that after Phase II-B, $\mu_1^{(t)}$ keeps increasing (but slowly), \eqref{eq:nu_bound_II_B} and \eqref{eq:partiiB} of Lemma~\ref{lm:training_dynamics_II_B} still hold.
After Phase II-B, the loss reaches $\epsilon$ as indicated by the following lemma, whose proof is given in Appendix~\ref{sec_app:proof_convergence_B}.
\begin{lm}\label{lm:convergence_B}
    After $t\geq T_3^B+1$, we have
    \begin{align}\label{eq:convergence_B}
        \sum_{k=1}^{2m}\left(\Delta_x^{(t,k)}+\Delta_y^{(t,k)}\right)\leq \frac{2\epsilon}{3}.
    \end{align}
\end{lm}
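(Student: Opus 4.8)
\textbf{Proof proposal for Lemma~\ref{lm:convergence_B}.}
The plan is to control each summand $\Delta_x^{(t,k)}$ and $\Delta_y^{(t,k)}$ separately and then add up the $2m$ contributions. The crucial simplifying observation is that the training loss uses teacher forcing: the input at reasoning step $k$ is the exact ground-truth prefix $(E,o^{(1)},\ldots,o^{(k-1)})$, so each attention matrix $\alpha^{(t,k)}_{(p,q,u),(j,v)}$ in \eqref{eq:alpha_ext} depends only on the current parameters $\{B_l^{(t)}\}$ and on the true prefix. Hence there is no error accumulation across steps, and each $\Delta_x^{(t,k)},\Delta_y^{(t,k)}$ can be bounded in isolation using the closed forms in Lemma~\ref{lm:loss_simplification_reasoning}. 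In each of the three regimes ($k\le m$, $k=m+1$, $m+2\le k\le 2m$), the formulas express $\Delta_x^{(k)}$ (resp.\ $\Delta_y^{(k)}$) as $\tfrac12\,\E$ of a sum $(1-T)^2+\sum_i g_i^2$, where $T$ is the total attention weight on the at-most-three ``target'' tokens whose (value-transformed) embeddings reconstruct $o^{(k)}$, and each $g_i\ge 0$ is a partial sum of non-target weights; since all the $g_i$ are nonnegative and $\sum_i g_i\le 1-T$, one has $\sum_i g_i^2\le(1-T)^2$, so each $\Delta_x^{(t,k)},\Delta_y^{(t,k)}\le \E[(1-T)^2]$. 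It therefore suffices to show $1-T\le\sqrt{\epsilon/(6m)}$ pointwise for $t> T_3^B$: summing then gives $\sum_{k=1}^{2m}(\Delta_x^{(t,k)}+\Delta_y^{(t,k)})\le 2m\cdot 2\cdot \epsilon/(6m)=2\epsilon/3$.

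The core step is to lower-bound $T$, equivalently to show the softmax logits of the target tokens dominate those of all competitors by a margin $\ge 0.46\,\mu_1^{(t)}$. For this I would recompute the logits $U_{1,p,j}^{(t)}+U_{2,q,j}^{(t)}+U_{3,u,v}^{(t)}$ exactly as in the construction analysis of Theorem~\ref{thm:construction_ext} (cf.\ the logit tables \eqref{eq:mu_nu_stage1}, \eqref{eq:mu_nu_turning}, \eqref{eq:mu_stage2}), now substituting the post–Phase II-B values furnished by Lemma~\ref{lm:training_dynamics_II_B} and the remark following it: $\mu_1^{(t)}$ large, $a^{(t)}\in[0.99,1)$, $b_2^{(t)}\in[0.24,0.26]\,a^{(t)}$, $b_1^{(t)}\ge b_2^{(t)}+\widetilde{\gO}(1/N)$, $\exp((1-a^{(t)})\mu_1^{(t)})\asymp N$, together with $|\nu_1^{(t)}|,|\nu_{1,1}^{(t)}|,|\nu_{1,2}^{(t)}|=\widetilde{\gO}(\mu_1^{(t)}/N)$ for the off-diagonal entries of $U_1^{(t)},U_2^{(t)}$. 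In Stage 1, the two target tokens carry logit $(1\pm b_2^{(t)})\mu_1^{(t)}$ while every competitor carries logit at most $\bigl(\max\{b_2^{(t)},\,1-a^{(t)}-b_2^{(t)}\}\bigr)\mu_1^{(t)}+\widetilde{\gO}(\mu_1^{(t)}/N)$, a gap of at least $(1-2b_2^{(t)})\mu_1^{(t)}-\widetilde{\gO}(\mu_1^{(t)}/N)\ge 0.46\,\mu_1^{(t)}$ for $N$ large; in Stage 2 the dominant target has logit $(1+b_1^{(t)})\mu_1^{(t)}$ and the gap to every other token is again $\ge 0.46\,\mu_1^{(t)}$. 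Consequently, using that each combination appears at most twice and there are at most $N+2m-1$ competitor tokens at step $k$, the normalization in \eqref{eq:alpha_ext} yields $1-T\le\frac{N+2m-1}{\exp(0.46\,\mu_1^{(t)})+N+2m-1}$, which for $t>T_3^B$ is $<\sqrt{\epsilon/(6m)}$ by the defining inequality \eqref{eq:T_3_B} of $T_3^B$ (and $\mu_1^{(t)}$ keeps increasing afterward). Plugging this into the bound $\Delta_x^{(t,k)},\Delta_y^{(t,k)}\le\E[(1-T)^2]\le\epsilon/(6m)$ and summing over $k\in[2m]$ and over the two components gives \eqref{eq:convergence_B}.

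The main obstacle is the logit bookkeeping at the turning point $k=m+1$. There the first head must \emph{stop} attending to all original-edge tokens (stage $s_f$) and to the Stage-1 outputs (stage $s_b$), and instead attend to the freshly produced column $(x_{-1}^{(k-1)\top},y_{-1}^{(k-1)\top},s_b^\top)^\top$; the dangerous competitor is the root's edge token $(a_0,a_{r},s_f)$, which acquires logit $\approx(1-a^{(t)}+b_2^{(t)})\mu_1^{(t)}$, whereas the correct token has logit $\approx(1-b_2^{(t)})\mu_1^{(t)}$. Separating these requires the strict inequality $a^{(t)}>2b_2^{(t)}$ with a quantitative margin — which is precisely why Lemma~\ref{lm:training_dynamics_II_B} is arranged to drive $a^{(t)}\to 1$ and $b_2^{(t)}\to 1/4$ — and one must propagate the $\widetilde{\gO}(1/N)$ slack on all entries of $U_1^{(t)},U_2^{(t)},U_3^{(t)}$ through this comparison (as well as through the analogous turning-point analysis for $\Delta_y$, where the second-head stage switch must also be verified via the $V_3^{(t)}$ pattern). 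Once the uniform gap $\ge 0.46\,\mu_1^{(t)}$ is in place across all three regimes, the remainder is the routine aggregation described above.
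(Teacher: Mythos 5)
Your proposal follows essentially the same route as the paper's proof: bound each $\Delta_x^{(t,k)},\Delta_y^{(t,k)}$ via Lemma~\ref{lm:loss_simplification_reasoning} by the expected square of the total competitor attention weight, control that weight through a logit gap of at least $0.46\,\mu_1^{(t)}$ using the post-Phase II-B parameter values from Lemma~\ref{lm:training_dynamics_II_B} (which persist after $T_3^B$), invoke the defining inequality \eqref{eq:T_3_B} to get each term below $\epsilon/(6m)$, and sum over the $2m$ steps and two components. The only slip is in the Stage-1 bookkeeping---for the $y$-component the worst competitor is an appended CoT token with logit $\approx(1-b_2^{(t)})\mu_1^{(t)}$, so the binding gap is $2b_2^{(t)}\mu_1^{(t)}$ rather than your $(1-2b_2^{(t)})\mu_1^{(t)}$---but since both exceed $0.46\,\mu_1^{(t)}$ the conclusion is unchanged.
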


\subsubsection{Proof of Lemma~\ref{lm:loss_simplification_reasoning}}\label{sec_app:proof_loss_simplification_reasoning}
Note that when $k\in[m]$ we have $n_{-1}^{(k)}=n_{2^{m-k+1}}$, and hence by \eqref{eq:output_simplified_ext},
when $k\in[m]$, we have
\begin{align}
    \Delta_x^{(k)} &= \frac{1}{2}\E\Bigg\|a_{n_{-1}}-\sum_{i=1}^{2^m-1}\left(\alpha^{(k)}_{(i,2i,0),(n_{-1},1)}+\alpha^{(k)}_{(i,2i+1,0),(n_{-1},1)}\right)a_{2i+1}\notag\\
    &\hspace{1.2cm} - \alpha^{(k)}_{(0,n_1,0),(n_{-1},1)}a_{n_1}-\alpha^{(k)}_{(0,n_{2^m},1),(n_{-1},1)}a_{n_{2^m}}
    -\sum_{l=m-k+2}^{m} \alpha^{(k)}_{(n_{2^l},n_{2^{l-1}},1),(n_{-1},1)}a_{n_{2^{l-1}}}\Bigg\|^2_2\notag\\
    &= \frac{1}{2}\sum_{j=1}^S\E\Biggl\|\mathbbm{1}\{n_{-1}=j\}\Bigg\{\left(1-\alpha^{(k)}_{(p(j),j,0),(j,1)}-\alpha^{(k)}_{(\widetilde{c}_1(j),j,0),(j,1)}\right)a_j\notag\\
    &\hspace{2cm}-\sum_{q\in[S]\setminus\{j\}}\left(\alpha^{(k)}_{(p(q),j,0),(j,1)}+\alpha^{(k)}_{(\widetilde{c}_1(q),j,0),(j,1)}\right)a_q\Bigg\}\Biggr\|^2_2.
\end{align}
By Assumption~\ref{asmp:orthonomal_ext}, we have $a_0,a_1,\cdots,a_S$ are orthonormal, and hence 
\begin{align}
    \Delta_x^{(k)} &= \frac{1}{2}\sum_{j=1}^S\E\Bigg[\mathbbm{1}\{n_{-1}=j\}\Bigg\{\left(1-\alpha^{(k)}_{(p(j),j,0),(j,1)}-\alpha^{(k)}_{(\widetilde{c}_1(j),j,0),(j,1)}\right)^2\notag\\
    &\hspace{1.2cm} +\sum_{q\in[S]\setminus\{j\}}\bigg(
        \alpha^{(k)}_{(p(q),q,0),(j,1)}+\alpha^{(k)}_{(\widetilde{c}_1(q),q,1),(j,1)}\bigg)^2\Bigg\}\Bigg].
\end{align}
This gives \eqref{eq:loss_x_1}. The relations \eqref{eq:loss_x_turning} and \eqref{eq:loss_x_2} can be verified in a similar way.

For $\Delta_y^{(k)}$, when $k\in[m]$, $x_{-1}^{(k-1)}=a_{n_{2^{m-k}}}=a_{p(n_{-1})}$, and hence by \eqref{eq:output_simplified_ext}, we have
\begin{align}
    \Delta_y^{(k)} &= \frac{1}{2}\E\Bigg\|a_{p(n_{-1})}-\sum_{i=1}^{2^m-1}\left(\alpha^{(k)}_{(i,2i,0),(n_{-1},1)}+\alpha^{(k)}_{(i,2i+1,0),(n_{-1},1)}\right)a_i\notag\\
    &\hspace{1.2cm} -\left( \alpha^{(k)}_{(0,n_1,0),(n_{-1},1)}+\alpha^{(k)}_{(0,n_{2^m},1),(n_{-1},1)}\right)a_0-\sum_{l=m-k+2}^{m} \alpha^{(k)}_{(n_{2^l},n_{2^{l-1}},1),(n_{-1},1)}a_{n_{2^{l}}}\Bigg\|^2_2\notag\\
    &= \frac{1}{2}\sum_{j=1}^S\E\Biggl\|\mathbbm{1}\{n_{-1}=j\}\Bigg\{\left(1-\alpha^{(k)}_{(p(j),j,0),(j,1)}-\alpha^{(k)}_{(p(j),s(j),0),(j,1)}\right)a_j-\left( \alpha^{(k)}_{(0,n_1,0),(j,1)}+\alpha^{(k)}_{(0,n_{2^m},1),(j,1)}\right)a_0\notag\\
    &\hspace{2cm}-\sum_{q\in[S]\setminus\{j\}}\left(\alpha^{(k)}_{(q,c_1(q),0),(j,1)}+\alpha^{(k)}_{(q,c_2(q),0),(j,1)}+\alpha^{(k)}_{(q,p(q),1),(j,1)}\right)a_q\Bigg\}\Biggr\|^2_2.
\end{align}
Since $a_0,a_1,\cdots,a_S$ are orthonormal, we have 
\begin{align}
    \Delta_y^{(k)} &= \frac{1}{2}\sum_{j=1}^S\E\Bigg[\mathbbm{1}\{n_{-1}=j\}\Bigg\{\left(1-\alpha^{(k)}_{(p(j),j,0),(j,1)}-\alpha^{(k)}_{(p(j),s(j),0),(j,1)}\right)^2+\left(\alpha^{(k)}_{(0,n_1,0),(j,1)}+\alpha^{(k)}_{(0,n_{2^m},1),(j,1)}\right)^2\notag\\
    &\hspace{2cm}+\sum_{q\in[S]\setminus\{j\}}\bigg(
        \alpha^{(k)}_{(q,c_1(q),0),(j,1)}+\alpha^{(k)}_{(q,c_2(q),0),(j,1)}+\alpha^{(k)}_{(q,p(q),1),(j,1)}\bigg)^2\Bigg\}\Bigg].
\end{align}
This gives \eqref{eq:loss_y_1}. The relations \eqref{eq:loss_y_turning} and \eqref{eq:loss_y_2} can be computed analogously.

Finally, for $\Delta_z^{(k)}$, we have when $k\in[m]$,
\begin{align}
    \Delta_z^{(k)} &= \frac{1}{2}\E\Bigg\|s_b-\sum_{i=1}^{2^m-1}\left(\beta^{(k)}_{(i,2i,0),(n_{-1},1)}+\beta^{(k)}_{(i,2i+1,0),(n_{-1},1)}\right)s_f\notag\\
    &\qquad\qquad-\beta^{(k)}_{(0,n_1,0),(n_{-1},1)}s_b-\beta^{(k)}_{(0,n_{2^m},1),(n_{-1},1)}s_f-\sum_{l=m-k+2}^{m} \beta^{(k)}_{(n_{2^l},n_{2^{l-1}},1),(n_{-1},1)}s_f\Bigg\|^2_2\notag\\
    &= \frac{1}{2}\sum_{j=1}^S\E\Biggl\|\mathbbm{1}\{n_{-1}=j\}\Bigg\{\Bigg(1-\sum_{q\in[S]}\beta^{(k)}_{(\widetilde{c}_1(q),q,1),(j,1)}\Bigg)^2+\Bigg(\sum_{q\in[S]}\beta^{(k)}_{(\widetilde{p}(q),q,0),(j,1)}\Bigg)^2\Bigg\}\Biggr\|^2_2.
\end{align}
This gives \eqref{eq:loss_z_1}. The relations \eqref{eq:loss_z_turning} and \eqref{eq:loss_z_2} can be computed analogously.


\subsubsection{Proof of Lemma~\ref{lm:gradient_reasoning}}\label{sec_app:proof_gradient_reasoning}

It's easy to verify by straightforward computation that for any $b\in\{0,1,\ldots,S\}$, $c\in[S]$, $j\in[S]$, $u,v\in\{0,1\}$, $k\in[2m]$, we have
\begin{align}
    \frac{\partial \alpha^{(k)}_{(p,q,u),(j,v)}}{\partial a_b^\top B_1^{(t)} a_c} &= \begin{cases}
        \alpha^{(k)}_{(p,q,u),(j,v)}\left(1-\sum_{s\in[S]\atop z\in\{0,1\}}\alpha^{(k)}_{(p,s,z),(j,v)}\right), & \text{if } b=p,c=j,\\
        -\alpha^{(k)}_{(p,q,u),(j,v)}\sum_{s\in[S]\atop z\in\{0,1\}}\alpha^{(k)}_{(b,s,z),(j,v)}, & \text{if } b\neq p,c=j,\\
        0, & \text{otherwise},
    \end{cases}
\end{align}
\begin{align}
    \frac{\partial \alpha^{(k)}_{(p,q,u),(j,v)}}{\partial a_b^\top B_2^{(t)} a_c} &= \begin{cases}
        \alpha^{(k)}_{(p,q,u),(j,v)}\left(1-\sum_{r\in[S]\atop z\in\{0,1\}}\alpha^{(k)}_{(r,q,z),(j,v)}\right), & \text{if } b=q,c=j,\\
        -\alpha^{(k)}_{(p,q,u),(j,v)}\sum_{r\in[S]\atop z\in\{0,1\}}\alpha^{(k)}_{(r,q,z),(j,v)}, & \text{if } b\neq q,c=j,\\
        0, & \text{otherwise},
    \end{cases}
\end{align}
and for any $b,c\in\{0,1\}$, we have
\begin{align}
    \frac{\partial \alpha^{(k)}_{(p,q,u),(j,v)}}{\partial s_b^\top B_3^{(t)} s_c} &= \begin{cases}
        \alpha^{(k)}_{(p,q,u),(j,v)}\left(1-\sum_{r\in\{0,\ldots,S\}\atop s\in[S]}\alpha^{(k)}_{(r,s,z),(j,v)}\right), & \text{if } b=u,c=v,\\
        -\alpha^{(k)}_{(p,q,u),(j,v)}\sum_{r\in\{0,\ldots,S\}\atop s\in[S]}\alpha^{(k)}_{(r,s,b),(j,v)}, & \text{if } b\neq u,c=v,\\
        0, & \text{otherwise}.
    \end{cases}
\end{align}
Then by the chain rule and the above three expressions, we have \eqref{eq:delta_x_1}, \eqref{eq:delta_x_2} and \eqref{eq:delta_x_3}.
Also, \eqref{eq:delta_z_1}, \eqref{eq:delta_z_2} and \eqref{eq:delta_z_3} can be computed analogously.


\subsubsection{Proof of Lemma~\ref{lm:delta_z_simplification}}\label{sec_app:proof_delta_z_simplification}

Using \eqref{eq:loss_z_1} and \eqref{eq:delta_z_1}, we can compute that for any $k\in[m]$: 
\begin{align}
    \delta z_{1,0,j}^{(k)} &= \frac{1}{S}\E\Bigl[-\left(1-w_1^{(k)}\right)q_0^{(k)}\left(1-q_0^{(k)}-p_0^{(k)}\right)+w_0^{(k)}q_0^{(k)}\left(1-q_0^{(k)}-p_0^{(k)}\right)\notag\\
    &\qquad+\left(1-w_1^{(k)}\right)\left(w_1^{(k)}-q_0^{(k)}\right)\left(q_0^{(k)}+p_0^{(k)}\right)
    -w_0^{(k)}\left(w_0^{(k)}-p_0^{(k)}\right)\left(q_0^{(k)}+p_0^{(k)}\right)\Big|n_{-1}=j\Bigr].
\end{align}
By using the relation $w_0^{(k)}+w_1^{(k)}=1$, we can further simplify the above expression as
\begin{align}\label{eq:delta_z_1_simplified_1}
    \forall k\in[m]:\quad\delta z_{1,0,j}^{(k)} &= \frac{2}{S}\E\left[\left(w_0^{(k)}\right)^2 w_1^{(k)}\left(-\frac{q_0^{(k)}}{w_1^{(k)}}+\frac{p_0^{(k)}}{w_0^{(k)}}\right)\Big|n_{-1}=j\right].
\end{align}
Similarly, we can compute that at the turning point where $k=m+1$,
\begin{align}\label{eq:delta_z_1_simplified_turning}
    \delta z_{1,0,j}^{(m+1)} &= \frac{2}{S}\E\left[\left(w_1^{(m+1)}\right)^2 w_0^{(m+1)}\left(-\frac{p_0^{(m+1)}}{w_0^{(m+1)}}+\frac{q_0^{(m+1)}}{w_1^{(m+1)}}\right)\Big|n_{-1}=j\right],
\end{align}
and for any $k\in\{m+2,\ldots,2m\}$,
\begin{align}\label{eq:delta_z_1_simplified_2}
    \delta z_{1,0,j}^{(k)} &= \frac{2}{S}\E\left[\left(w_1^{(k)}\right)^2 w_0^{(k)}\left(-\frac{p_0^{(k)}}{w_0^{(k)}}+\frac{q_0^{(k)}}{w_1^{(k)}}\right)\Big|n_{-1}=j\right].
\end{align}
Combining \eqref{eq:delta_z_1_simplified_1}, \eqref{eq:delta_z_1_simplified_turning} and \eqref{eq:delta_z_1_simplified_2}, we have \eqref{eq:delta_z_simplification_1}. The other expressions in Lemma~\ref{lm:delta_z_simplification} can be computed analogously.


\subsubsection{Proof of Lemma~\ref{lm:relation_C_phase_1}}\label{sec_app:proof_relation_C_phase_1}

We prove the lemma by induction.
First note that \eqref{eq:bound_phase_1} is true when $t=0$ by our initialization. For induction, we assume \eqref{eq:bound_phase_1} holds for all time steps $s\leq t$ ($0\leq t\leq T_1^C-1$).
Below we prove they hold at time $t+1$.
 
\paragraph{Step 1: showing \eqref{eq:mu_2_growth_phase_1} holds at step $t+1$.} 
By \eqref{eq:update_V} and \eqref{eq:V_matrix}, we know that
    \begin{align}\label{eq:mu_2_update}
        \mu_2^{(t+1)}=\mu_2^{(t)}-\eta\sum_{k=1}^{2m}\left(-\delta z_{1,0,j}^{(t,k)}\right).
    \end{align}
    Thus to track the growth of $\mu_2$ and show \eqref{eq:mu_2_growth_phase_1} holds at step $t+1$, we need to estimate its the terms in the gradient expression \eqref{eq:delta_z_simplification_1}.

We start from bounding the iteration-varying versions of $-\frac{q_0^{(k)}}{w_1^{(k)}}+\frac{p_0^{(k)}}{w_0^{(k)}}$ for $k\in[m]$, and $-\frac{p_0^{(k)}}{w_0^{(k)}}+\frac{q_0^{(k)}}{w_1^{(k)}}$ for $k\in\{1,\ldots,2m\}$. Note that for $\frac{q_0^{(t,k)}}{w_1^{(t,k)}}$,
\begin{itemize}
    \item when $k=1$, we have 
    \begin{align}\label{eq:c0_w1_1}
        \frac{q_0^{(t,k)}}{w_1^{(t,k)}} = 1,
    \end{align}
    where $q_0^{(t,k)}$ and $w_1^{(t,k)}$ are the iteration-varying versions of \eqref{eq:p_0_q_0} and \eqref{eq:w_1}, respectively;
    \item when $k\in\{2,\ldots,m+1\}$, we have
    \begin{align}\label{eq:c0_w1_k<=m+1}
        \frac{q_0^{(t,k)}}{w_1^{(t,k)}}&=
    \frac{\exp\left((1+c_2^{(t)})\mu_2^{(t)}+\nu_{2,2}^{(t)}\right)}{\exp\left((1+c_2^{(t)})\mu_2^{(t)}+\nu_{2,2}^{(t)}\right)+(k-2)\exp\left(\nu_{2,1}^{(t)}+\nu_{2,2}^{(t)}+c_2^{(t)}\mu_2^{(t)}\right)+\exp\left(\nu_{2,1}^{(t)}+(b^{(t)}+c_2^{(t)})\mu_2^{(t)}\right)}\notag\\
        & =\left(1 \pm \gO\left(\frac{\log^2 N}{N}\right)\right) \frac{\exp\left((1+c_2^{(t)})\mu_2^{(t)}\right)}{\exp\left((1+c_2^{(t)})\mu_2^{(t)}\right)+(k-2)\exp\left(c_2^{(t)}\mu_2^{(t)}\right)+\exp\left((1+c_2^{(t)})\mu_2^{(t)}\right)}\notag\\
        &= \left(1 \pm \gO\left(\frac{\log^2 N}{N}\right)\right)\frac{\exp\left(\mu_2^{(t)}\right)}{2\exp\left(\mu_2^{(t)}\right)+k-2}\geq \left(1 \pm \gO\left(\frac{\log^2 N}{N}\right)\right)\frac{1}{k},
    \end{align}
    where in the second line we use the fact that when $t\leq T_1^C$, we have 
    $$\exp\left(x\right)=1+\gO(x)=1+\gO\left(\frac{\log^2 N}{N}\right),$$
    where $x=\nu_{2,1}^{(t)},\nu_{2,2}^{(t)},(1-b^{(t)})\mu_2^{(t)}$ (note that $\mu^{(t)}_2\lesssim \log N$ by our choice of $T_1^C$);
    \item when $k\in\{m+2,\ldots,2m\}$, similarly, we have 
    \small
    \begin{align}\label{eq:q0_w1_k>=m+2}
        \frac{q_0^{(t,k)}}{w_1^{(t,k)}}
        &=\left(1 \pm \gO\left(\frac{\log^2 N}{N}\right)\right) \frac{\exp\left((1-c_1^{(t)})\mu_2^{(t)}\right)}{\exp\left((1-c_1^{(t)})\mu_2^{(t)}\right)+(m-1)\exp\left(-c_1^{(t)}\mu_2^{(t)}\right)+\exp\left((1-c_1^{(t)})\mu_2^{(t)}\right)}\notag\\
        & = \left(1 \pm \gO\left(\frac{\log^2 N}{N}\right)\right)\frac{\exp\left(\mu_2^{(t)}\right)}{2\exp\left(\mu_2^{(t)}\right)+m-1},
    \end{align}
    \normalsize
    where the last step follows from our induction hypothesis.
\end{itemize}

Besides, turning to $ \frac{p_0^{(t,k)}}{w_0^{(t,k)}}$,
\begin{itemize}
    \item when  $ k\in[m]$:
    \small
    \begin{align}\label{eq:p0_w0_k<=m}
    \frac{p_0^{(t,k)}}{w_0^{(t,k)}}
    &=\left(1 \pm \gO\left(\frac{\log^2 N}{N}\right)\right) \frac{\exp\left((1-c_2^{(t)})\mu_2^{(t)}\right)}{\exp\left((1-c_2^{(t)})\mu_2^{(t)}\right)+(N-2)\exp\left(-c_2^{(t)}\mu_2^{(t)}\right)+\exp\left((b^{(t)}-c_2^{(t)})\mu_2^{(t)}\right)}\notag\\
    &=\left(1 \pm \gO\left(\frac{\log^2 N}{N}\right)\right) \frac{\exp\left(\mu_2^{(t)}\right)}{2\exp\left(\mu_2^{(t)}\right)+N-2}\lesssim \frac{1}{\sqrt{N}},
    \end{align}
    \normalsize
    where the last two steps follow from induction hypothesis and the choice of $T_1^C$;
    \item when $k=m+1$, we have
    \begin{align}\label{eq:p0_w0_m+1}
        \frac{p_0^{(t,m+1)}}{w_0^{(t,m+1)}}
        &=\left(1 \pm \gO\left(\frac{\log^2 N}{N}\right)\right) \frac{\exp\left((2-c_2^{(t)})\mu_2^{(t)}\right)}{(N-1)\exp\left(-c_2^{(t)}\mu_2^{(t)}\right)+\exp\left((2-c_2^{(t)})\mu_2^{(t)}\right)}\notag\\
        &= \left(1 \pm \gO\left(\frac{\log^2 N}{N}\right)\right)\frac{\exp\left(2\mu_2^{(t)}\right)}{N-1+\exp\left(2\mu_2^{(t)}\right)}.
    \end{align}
    \item when $k\in\{m+2,\ldots, 2m\}$, we have
\begin{align}\label{eq:p0_w0_k>=m+2}
    \frac{p_0^{(t,k)}}{w_0^{(t,k)}}
    &=\left(1 \pm \gO\left(\frac{\log^2 N}{N}\right)\right) \frac{\exp\left((1+c_2^{(t)})\mu_2^{(t)}\right)}{(N-4+k-m)\exp\left(c_2^{(t)}\mu_2^{(t)}\right)+3\exp\left((1+c_2^{(t)})\mu_2^{(t)}\right)}\notag\\
    &=\left(1 \pm \gO\left(\frac{\log^2 N}{N}\right)\right) \frac{\exp\left(\mu_2^{(t)}\right)}{3\exp\left(\mu_2^{(t)}\right)+N-4+k-m}.
\end{align}
\end{itemize}

Together, \eqref{eq:c0_w1_k<=m+1} and \eqref{eq:p0_w0_k<=m} give that
for any $k\in[m]$:
\begin{align}\label{eq:c0_p0_z_1_k<=m}
    \frac{q_0^{(t,k)}}{w_1^{(t,k)}}-\frac{p_0^{(t,k)}}{w_0^{(t,k)}}\gtrsim \frac{1}{k}-\frac{1}{\sqrt{N}}.
\end{align}
Besides, from \eqref{eq:c0_w1_1} and \eqref{eq:c0_w1_k<=m+1} we can see that when $k=1$:
\begin{align}
    \frac{q_0^{(t,1)}}{w_1^{(t,1)}}-\frac{p_0^{(t,1)}}{w_0^{(t,1)}}
    &=1- \left(1 \pm \gO\left(\frac{\log^2 N}{N}\right)\right)\frac{\exp\left(\mu_2^{(t)}\right)}{2\exp\left(\mu_2^{(t)}\right)+m-1}\asymp 1,
\end{align}
and when $k\in\{2,\ldots,m\}$:
\begin{align}
    \frac{q_0^{(t,k)}}{w_1^{(t,k)}}-\frac{q_0^{(t,m+1)}}{w_1^{(t,m+1)}}
    &=\left(1 \pm \gO\left(\frac{\log^2 N}{N}\right)\right) \left(\frac{\exp\left(\mu_2^{(t)}\right)}{2\exp\left(\mu_2^{(t)}\right)+k-2}-\frac{\exp\left(\mu_2^{(t)}\right)}{2\exp\left(\mu_2^{(t)}\right)+m-1}\right)\gtrsim \frac{1}{\sqrt{N}},
\end{align}
where the last inequality follows from the choice of $T_1^C$, which guarentees $\mu_2^{(t)}\lesssim\sqrt{N}$,
and by \eqref{eq:p0_w0_k<=m} and \eqref{eq:p0_w0_m+1}, we have
\begin{align}
    \frac{p_0^{(t,k)}}{w_0^{(t,k)}}\leq\frac{p_0^{(t,m+1)}}{w_0^{(t,m+1)}}.
\end{align}
The above two inequalities imply that for any $k\in[m]$:
\begin{align}\label{eq:c0_p0_k_m+1}
    -\left(-\frac{q_0^{(t,k)}}{w_1^{(t,k)}}+\frac{p_0^{(t,k)}}{w_0^{(t,k)}}\right)\geq -\frac{p_0^{(t,m+1)}}{w_0^{(t,m+1)}}+\frac{q_0^{(t,m+1)}}{w_1^{(t,m+1)}}+\gO\left(\frac{\log^2 N}{N}\right).
\end{align}
By comparing \eqref{eq:c0_w1_k<=m+1}, \eqref{eq:q0_w1_k>=m+2}, \eqref{eq:p0_w0_k<=m}, \eqref{eq:p0_w0_k>=m+2}, we have
\begin{align}\label{eq:compare_frac_k_k'}
    \forall k\in[m],\,\, k'\in\{m+2,\ldots,2m\}: \quad & \frac{q_0^{(t,k)}}{w_1^{(t,k)}}-\frac{p_0^{(t,k)}}{w_0^{(t,k)}}\gtrsim \frac{q_0^{(t,k')}}{w_1^{(t,k')}}-\frac{p_0^{(t,k')}}{w_0^{(t,k')}}>0.
\end{align}

We next bound the iteration-varying versions of the terms $\left(w_0^{(k)}\right)^2 w_1^{(k)}$ and $\left(w_1^{(k)}\right)^2 w_0^{(k)}$ in the expression of $\delta z_{1,0,j}^{(k)}$ (c.f.~\eqref{eq:delta_z_simplification_1}) for $k\in[m]$ and $k\in\{m+1,\ldots,2m\}$, respectively. By the induction hypothesis, we have 
\begin{itemize}
    \item When $k=1$, 
    \small
\begin{align}
    w_0^{(t,1)}&=\left(1 \pm \gO\left(\frac{\log^2 N}{N}\right)\right) \frac{(N-2)\exp\left(-c_2^{(t)}\mu_2^{(t)}\right)+2\exp\left((1-c_2^{(t)})\mu_2^{(t)}\right)}{\exp\left((2+c_2^{(t)})\mu_2^{(t)}\right)+(N-2)\exp\left(-c_2^{(t)}\mu_2^{(t)}\right)+2\exp\left((1-c_2^{(t)})\mu_2^{(t)}\right)}, \label{eq:w0_1}\\
    w_1^{(t,1)}&=\left(1 \pm \gO\left(\frac{\log^2 N}{N}\right)\right) \frac{\exp\left((2+c_2^{(t)})\mu_2^{(t)}\right)}{\exp\left((2+c_2^{(t)})\mu_2^{(t)}\right)+(N-2)\exp\left(-c_2^{(t)}\mu_2^{(t)}\right)+2\exp\left((1-c_2^{(t)})\mu_2^{(t)}\right)}. \label{eq:w1_1}
\end{align}
\normalsize
    \item When $k\in\{2,\ldots,m\}$:
\small
\begin{align}
    w_0^{(t,k)}&=\left(1 \pm \gO\left(\frac{\log^2 N}{N}\right)\right) \nonumber \\
    & \qquad \cdot \frac{(N-2)\exp\left(-c_2^{(t)}\mu_2^{(t)}\right)+2\exp\left((1-c_2^{(t)})\mu_2^{(t)}\right)}{2\exp\left((1+c_2^{(t)})\mu_2^{(t)}\right)+(k-2)\exp(c_2^{(t)}\mu_2^{(t)})+(N-2)\exp\left(-c_2^{(t)}\mu_2^{(t)}\right)+2\exp\left((1-c_2^{(t)})\mu_2^{(t)}\right)},  \label{eq:w0_k<=m} \\
    w_1^{(t,k)}&=\left(1 \pm \gO\left(\frac{\log^2 N}{N}\right)\right)  \nonumber \\
    & \qquad \cdot \frac{2\exp\left((1+c_2^{(t)})\mu_2^{(t)}\right)+(k-2)\exp(c_2^{(t)}\mu_2^{(t)})}{2\exp\left((1+c_2^{(t)})\mu_2^{(t)}\right)+(k-2)\exp(c_2^{(t)}\mu_2^{(t)})+(N-2)\exp\left(-c_2^{(t)}\mu_2^{(t)}\right)+2\exp\left((1-c_2^{(t)})\mu_2^{(t)}\right)}. \label{eq:w1_k<=m}
\end{align}
\normalsize
\item When $k=m+1$, we have
\small
\begin{align}
    w_0^{(t,m+1)}&=\left(1 \pm \gO\left(\frac{\log^2 N}{N}\right)\right) \nonumber \\
    & \qquad \cdot \frac{(N-1)\exp\left(-c_2^{(t)}\mu_2^{(t)}\right)+\exp\left((2-c_2^{(t)})\mu_2^{(t)}\right)}{2\exp\left((1+c_2^{(t)})\mu_2^{(t)}\right)+(m-1)\exp(c_2^{(t)}\mu_2^{(t)})+(N-1)\exp\left(-c_2^{(t)}\mu_2^{(t)}\right)+\exp\left((2-c_2^{(t)})\mu_2^{(t)}\right)} , \label{eq:w0_m+1} \\
    w_1^{(t,m+1)}&=\left(1 \pm \gO\left(\frac{\log^2 N}{N}\right)\right)  \nonumber \\
    & \qquad \cdot \frac{2\exp\left((1+c_2^{(t)})\mu_2^{(t)}\right)+(m-1)\exp(c_2^{(t)}\mu_2^{(t)})}{2\exp\left((1+c_2^{(t)})\mu_2^{(t)}\right)+(m-1)\exp(c_2^{(t)}\mu_2^{(t)})+(N-1)\exp\left(-c_2^{(t)}\mu_2^{(t)}\right)+\exp\left((2-c_2^{(t)})\mu_2^{(t)}\right)}. \label{eq:w1_m+1}
\end{align}
\normalsize
\item When $k\in\{m+2,\ldots,2m\}$, we have
\small
\begin{align}
    w_1^{(t,k)}&=\left(1 \pm \gO\left(\frac{\log^2 N}{N}\right)\right)\notag\\ &\cdot\frac{2\exp\left((1-c_1^{(t)})\mu_2^{(t)}\right)+(m-1)\exp\left(-c_1^{(t)}\mu_2^{(t)}\right)}{(N-4+k-m)\exp\left(c_1^{(t)}\mu_2^{(t)}\right)+3\exp\left((1+c_1^{(t)})\mu_2^{(t)}\right)+2\exp\left((1-c_1^{(t)})\mu_2^{(t)}\right)+(m-1)\exp\left(-c_1^{(t)}\mu_2^{(t)}\right)}\notag\\
    &\lesssim \max\left\{\frac{\log N}{N},\frac{2}{N\exp\left((2c_1^{(t)}-1)\mu_2^{(t)}\right)}\right\},  \label{eq:w1_k>=m+2} \\
    w_0^{(t,k)}&=\left(1 \pm \gO\left(\frac{\log^2 N}{N}\right)\right)\notag\\ &\cdot\frac{(N-4+k-m)\exp\left(c_1^{(t)}\mu_2^{(t)}\right)+3\exp\left((1+c_1^{(t)})\mu_2^{(t)}\right)}{(N-4+k-m)\exp\left(c_1^{(t)}\mu_2^{(t)}\right)+3\exp\left((1+c_1^{(t)})\mu_2^{(t)}\right)+2\exp\left((1-c_1^{(t)})\mu_2^{(t)}\right)+(m-1)\exp\left(-c_1^{(t)}\mu_2^{(t)}\right)}. \label{eq:w0_k>=m+2} 
\end{align}
\normalsize
\end{itemize}

By \eqref{eq:w0_k<=m}, \eqref{eq:w1_k<=m}, \eqref{eq:w0_m+1}, \eqref{eq:w1_m+1} and the fact that $\exp\left(\mu_2^{(t)}\right)\lesssim \sqrt{N}$ (c.f.,~\eqref{eq:T_1_C}), we can compute that for any $s\leq t$:  
\begin{align}
       \forall k\in[m]:\quad \left(w_0^{(t,k)}\right)^2\left(w_1^{(t,k)}\right)&\asymp \left(\frac{1}{1+\frac{2}{N}\exp\left((1+2c_2^{(t)})\mu_2^{(t)}\right)}\right)^2\left(\frac{\frac{2}{N}\exp\left((1+2c_2^{(t)})\mu_2^{(t)}\right)}{1+\frac{2}{N}\exp\left((1+2c_2^{(t)})\mu_2^{(t)}\right)}\right),\label{eq:w0_w1_t_k}\\
        \left(w_1^{(t,m+1)}\right)^2\left(w_0^{(t,m+1)}\right)&\asymp \left(\frac{\frac{2}{N}\exp\left((1+2c_2^{(t)})\mu_2^{(t)}\right)}{1+\frac{2}{N}\exp\left((1+2c_2^{(t)})\mu_2^{(t)}\right)}\right)^2\left(\frac{1}{1+\frac{2}{N}\exp\left((1+2c_2^{(t)})\mu_2^{(t)}\right)}\right),\label{eq:w1_w0_t_m+1}
\end{align}
which gives
\begin{align}\label{eq:ratio_w0_w1_k<=m+1}
    \frac{\left(w_0^{(t,k)}\right)^2\left(w_1^{(t,k)}\right)}{\left(w_1^{(t,m+1)}\right)^2\left(w_0^{(t,m+1)}\right)}&\asymp \frac{N}{\exp\left((1+2c_2^{(t)})\mu_2^{(t)}\right)}.
\end{align}

With the above expressions, we could derive the following lemma, whose proof can be found in Appendix \ref{sec:proof_of_lemma_sum_w0_w1}.
\begin{lm}\label{lm:sum_w0_w1}
    Under our induction hypothesis, there exist absolute constants $C,C'>0$ such that when $\eta\leq\frac{C}{Nm}$, we have 
    \begin{align}\label{eq:sum_w0_w1_1}
        \sum_{k=1}^{m}\left(w_0^{(t,k)}\right)^2w_1^{(t,k)}\geq \left(1- \gO\left(\frac{m}{N}\right)\right) \left(w_1^{(k,m+1)}\right)^2w_0^{(t,m+1)}.
    \end{align}
\end{lm}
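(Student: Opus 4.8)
The plan is to prove \eqref{eq:sum_w0_w1_1} by induction on $t\in\{0,1,\dots,T_1^C\}$, reusing the asymptotic identities \eqref{eq:w0_w1_t_k}, \eqref{eq:w1_w0_t_m+1} and the ratio \eqref{eq:ratio_w0_w1_k<=m+1} already extracted from the standing induction hypothesis \eqref{eq:bound_phase_1}. The first move is to collapse both sides to elementary functions of the single \emph{stage--sharpness} parameter $\theta^{(t)}\defas \tfrac2N\exp\!\big((1+2c_2^{(t)})\mu_2^{(t)}\big)$. By \eqref{eq:w0_w1_t_k} every STAGE--1 summand obeys $(w_0^{(t,k)})^2 w_1^{(t,k)}=\big(1\pm\gO(\log^2 N/N)\big)\,\tfrac{\theta^{(t)}}{(1+\theta^{(t)})^3}$ uniformly in $k\in[m]$, while by \eqref{eq:w1_w0_t_m+1} the turning--point quantity obeys $(w_1^{(t,m+1)})^2 w_0^{(t,m+1)}=\big(1\pm\gO(\log^2 N/N)\big)\,\tfrac{(\theta^{(t)})^2}{(1+\theta^{(t)})^3}$. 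Summing the first over $k$ and dividing by the second, \eqref{eq:sum_w0_w1_1} reduces (modulo the harmless $1\pm\gO(\log^2 N/N)$ factors) to the scalar statement $\theta^{(t)}\le m\big(1-\gO(m/N)\big)$.

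So the real content is the bound $\theta^{(t)}=\gO(1)$ (in particular $\theta^{(t)}\ll m$) throughout Phase I.1--C, where I would put the bulk of the effort. The definition \eqref{eq:T_1_C} of $T_1^C$ already gives $\exp(\mu_2^{(t)})\le\sqrt{2N}$, so it remains to control the off--diagonal stage weight $V_{3,1,1}^{(t)}=\mu_2^{(t)}c_2^{(t)}$, i.e.\ to keep $\exp\!\big(2V_{3,1,1}^{(t)}\big)=\gO(N^{\delta})$ for a small $\delta$, whence $\theta^{(t)}\lesssim \sqrt{2N}\cdot N^{\delta}/N\to 0$. I would track $V_{3,1,1}^{(t)}$ through its update $V_{3,1,1}^{(t+1)}=V_{3,1,1}^{(t)}+\eta\sum_{k=1}^{2m}\delta z_{3,0,1}^{(t,k)}$ and the formulas \eqref{eq:delta_z_simplification_3_*1}, which give
\begin{align*}
\sum_{k=1}^{2m}\delta z_{3,0,1}^{(t,k)}=2\Big(\sum_{k=1}^{m}\E\big[(w_0^{(t,k)})^2 w_1^{(t,k)}\big]-\E\big[(w_1^{(t,m+1)})^2 w_0^{(t,m+1)}\big]\Big).
\end{align*}
The crucial point is that this increment is \emph{almost a perfect cancellation}: by \eqref{eq:sum_w0_w1_1} itself (available for all earlier $t$ from the induction hypothesis) together with its matching upper bound $\sum_{k=1}^m(w_0^{(t,k)})^2 w_1^{(t,k)}\le C'(w_1^{(t,m+1)})^2 w_0^{(t,m+1)}$ (the constant $C'$ of the statement, obtained from the same scalar reduction), the increment is $\gO(1)\cdot\tfrac{(\theta^{(t)})^2}{(1+\theta^{(t)})^3}\le\gO(1)\min\{\theta^{(t)},1\}$ in magnitude. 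Feeding in $\theta^{(t)}\lesssim \sqrt{2N}\,\exp(2V_{3,1,1}^{(t)})/N$, the restriction $\eta\lesssim 1/(Nm)$, and the iteration bound $T_1^C=\widetilde\gO\!\big(SN^{3/2}/\eta\big)$ from \eqref{eq:T_1_C_bound}, a short subsidiary induction shows $V_{3,1,1}^{(t)}$ cannot climb past a $\gO(1)$ level before $T_1^C$; hence $\theta^{(t)}=\gO(N^{-1/2})\ll m$, and the scalar inequality closes the main induction.

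The remaining pieces are routine. For the base case $t=0$ all attention weights are uniform, so a direct count ($N$ input columns carry $s_f$ and $k$ carry $s_b$ at STAGE--1 step $k$, versus $m+1$ carrying $s_b$ at the turning step) gives $\sum_{k=1}^m (w_0^{(0,k)})^2 w_1^{(0,k)}\asymp m^2/N\gg m^2/N^2\asymp (w_1^{(0,m+1)})^2 w_0^{(0,m+1)}$, far stronger than needed. In the inductive step one finally transfers the scalar bound from the conditional quantities back to the unconditional expectations appearing in \eqref{eq:delta_z_simplification_3_*1}, which is immediate since $\Ptrain$ is symmetric over node labels so every conditioning $n_{-1}^{(k)}=j$ induces the same law of the attention weights.

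The step I expect to be the main obstacle is exactly the control of $V_{3,1,1}^{(t)}=\mu_2^{(t)}c_2^{(t)}$ over the long horizon $T_1^C$: unlike $\mu_2^{(t)}$, which is clipped directly by the definition of $T_1^C$, the factor $c_2^{(t)}$ carries no a priori ceiling, and the only reason $\theta^{(t)}$ stays $\gO(1)$ is the near--cancellation in $\sum_k\delta z_{3,0,1}^{(t,k)}$ that \eqref{eq:sum_w0_w1_1} itself supplies. Making this bootstrap rigorous --- verifying that the increment of $V_{3,1,1}^{(t)}$ shrinks fast enough (as $\theta^{(t)}$ threatens to grow) that $\theta^{(t)}$ never leaves the $\gO(N^{-1/2})$ regime --- is the delicate part, and it is precisely what forces the hypothesis $\eta\lesssim 1/(Nm)$.
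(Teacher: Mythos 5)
Your plan has a genuine gap, and it sits exactly where you predicted the difficulty would be. The central quantitative claim — that $\theta^{(t)}\defas\tfrac2N\exp\big((1+2c_2^{(t)})\mu_2^{(t)}\big)$ stays $\gO(N^{-1/2})$ throughout Phase I.1-C because $V_{3,1,1}^{(t)}=c_2^{(t)}\mu_2^{(t)}$ stays $\gO(1)$ — is false, and in fact contradicts the very induction hypothesis the lemma is allowed to assume: \eqref{eq:c_bound_phase_1} gives $c_2^{(t)}\ge 1/2$ on all of $[T_1^C]$, while by \eqref{eq:T_1_C} $\exp(\mu_2^{(t)})$ climbs to $\sqrt{2N}$, so near the end of the phase $V_{3,1,1}^{(t)}\ge\tfrac14\log(2N)$ and $\theta^{(t)}\ge\tfrac2N\exp(2\mu_2^{(t)})\ge 4$; indeed the correct a priori ceiling (which the paper only obtains \emph{as a consequence} of this lemma, see \eqref{eq:1+2c_1_bound}) is $\exp((1+2c_2^{(t)})\mu_2^{(t)})\lesssim mN$, i.e.\ $\theta^{(t)}$ can be of order $m$. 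That is precisely the regime where the two sides of \eqref{eq:sum_w0_w1_1} are genuinely comparable, so the inequality cannot be salvaged by showing the ratio is simply huge. Two auxiliary steps also fail: the collapse to a single scalar is not uniform in $k$ with $1\pm\gO(\log^2N/N)$ accuracy (at $t=0$ one has $w_1^{(0,k)}=k/(N+k)$ by \eqref{eq:values_t=0}, so $(w_0^{(0,k)})^2w_1^{(0,k)}$ varies by a factor of order $m$ across $k$; the paper's \eqref{eq:w0_w1_t_k} is only an $\asymp$ statement), and order-of-magnitude identities with unspecified constants cannot deliver the sharp leading constant $1-\gO(m/N)$ in \eqref{eq:sum_w0_w1_1}; moreover your "matching upper bound" $\sum_{k\le m}(w_0^{(t,k)})^2w_1^{(t,k)}\le C'(w_1^{(t,m+1)})^2w_0^{(t,m+1)}$ with an absolute constant is false at early times, where the ratio is of order $N$ and there is no cancellation in the increment of $V_{3,1,1}^{(t)}$ (which is exactly why $c_2\mu_2$ grows in the first place).

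The paper avoids any a priori bound on $\theta^{(t)}$ by a first-violation-time contradiction. Let $t_0$ be the first time \eqref{eq:sum_w0_w1_1} fails with constant $C'm/N$. The exact identity \eqref{eq:sum_delta_z_3_0_1} says the one-step increment of $V_{3,0,1}$ is (twice) precisely the LHS-minus-RHS of the target inequality; since all these gradients are $\gO(m)$ and $\eta\lesssim\tfrac1{Nm}$, every entry of $U_l,V_l$ moves by $\gO(1/N)$ per step, so the ratio at $t_0-1$ is already below $1-C'm/(2N)$. But then $-\sum_k\delta z_{3,0,1}^{(t_0-1,k)}\gtrsim\tfrac{C'm}{N}\E[(w_1^{(t_0-1,m+1)})^2w_0^{(t_0-1,m+1)}]$, which (for $C'$ large, comparing against the $\gO(m/S)$ drift of $\mu_2$ from $\delta z_{1,0,1}$) forces $(1+2c_2)\mu_2$ to \emph{decrease} from $t_0-1$ to $t_0$; by the monotone dependence of $(w_0^{(k)})^2w_1^{(k)}$ (increasing) and $(w_1^{(m+1)})^2w_0^{(m+1)}$ (decreasing) on $(1+2c_2)\mu_2$, the ratio at $t_0-1$ must already have violated the threshold, contradicting minimality of $t_0$. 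This self-stabilization argument uses only exact sign/monotonicity information plus one-step stability, which is why it survives in the delicate regime $\theta^{(t)}\asymp m$ where your scalar reduction loses the constant. If you want to repair your write-up, you would need to replace the "$\theta=\gO(N^{-1/2})$" bootstrap with an argument of this self-correcting type (or otherwise obtain the sharp constant directly), not merely tighten the estimate on $V_{3,1,1}^{(t)}$.
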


By Lemma \ref{lm:sum_w0_w1}, \eqref{eq:c0_p0_k_m+1} and \eqref{eq:delta_z_simplification_1} we know that
\begin{align}\label{eq:util_1}
    \sum_{k=1}^{m}\left(-\delta z_{1,0,j}^{(k)}\right)-\delta z_{1,0,j}^{(m+1)}\gtrsim \frac{1}{S\sqrt{N}}\sum_{k=1}^{m}\E\left[\left(w_0^{(k)}\right)^2 w_1^{(k)}\right].
\end{align}

Lemma \ref{lm:sum_w0_w1} combined with \eqref{eq:ratio_w0_w1_k<=m+1} also implies that
\begin{align}\label{eq:1+2c_1_bound}
    \exp\left((1+2c_2^{(t)})\mu_2^{(t)}\right)\lesssim mN.
\end{align}
By \eqref{eq:w0_w1_t_k}, \eqref{eq:1+2c_1_bound}  and our initialization we know that for any $k\in[m]$: 
\begin{align}\label{eq:E_w0_w1_bound}
    \E\left[\left(w_0^{(t,k)}\right)^2 w_1^{(t,k)}\right]\gtrsim \frac{1}{N}.
\end{align}

Moreover, similar as how we prove Lemma \ref{lm:sum_w0_w1} in Appendix~\ref{sec:proof_of_lemma_sum_w0_w1}, using relations \eqref{eq:w0_k<=m}, \eqref{eq:w1_k<=m},  \eqref{eq:w1_k>=m+2}, \eqref{eq:w0_k>=m+2}, we can show by contradiction that 
\begin{align}\label{eq:sum_w1_w0_k>=m+2_bound}
     \forall k\in[m],\,\,k'\in\{m+2,\cdots,2m\}:\quad\left(w_1^{(t,k')}\right)^2 w_0^{(t,k')}\lesssim \frac{\log N}{N} \left(w_0^{(t,k)}\right)^2w_1^{(t,k)}.
\end{align}
Recall \eqref{eq:delta_z_simplification_1} gives
\begin{align*}
\forall j\in[S]:\,\,\delta z_{1,0,j}^{(t,k)} &=\begin{cases}
    \frac{2}{S}\E\left[\left(w_0^{(t,k)}\right)^2 w_1^{(t,k)}\left(-\frac{q_0^{(t,k)}}{w_1^{(k)}}+\frac{p_0^{(t,k)}}{w_0^{(t,k)}}\right)\Big|n_{-1}=j\right], & k\in[m]\\
    \frac{2}{S}\E\left[\left(w_1^{(t,k)}\right)^2 w_0^{(t,k)}\left(-\frac{p_0^{(t,k)}}{w_0^{(t,k)}}+\frac{q_0^{(t,k)}}{w_1^{(t,k)}}\right)\Big|n_{-1}=j\right], & k\in\{m+1,\ldots,2m\}.
\end{cases}
\end{align*}
Thus by \eqref{eq:sum_w1_w0_k>=m+2_bound} and \eqref{eq:compare_frac_k_k'}, 
we have  
\begin{align}\label{eq:delta_z_1_0_j_k>=m+2_small}
    \forall j\in[S]:\quad&|\delta z_{1,0,j}^{(t,k')}|\lesssim \frac{\log N}{N}\cdot\left(-\delta z_{1,0,j}^{(t,k)}\right),\quad \forall k\in[m],\,\,k'\in\{m+2,\ldots,2m\}.
\end{align}

Combining \eqref{eq:util_1}, \eqref{eq:E_w0_w1_bound} and \eqref{eq:delta_z_1_0_j_k>=m+2_small}, we have
\begin{align}\label{eq:gradient_mu_2>0_phase_1}
    \sum_{k=1}^{2m}\left(-\delta z_{1,0,j}^{(t,k)}\right)\gtrsim \frac{1}{S\sqrt{N}}\sum_{k=1}^{m}\E\left[\left(w_0^{(t,k)}\right)^2 w_1^{(t,k)}\right]\gtrsim \frac{1}{SN^{3/2}},
\end{align}
and thus 
\begin{align}
    \mu_2^{(t+1)}\overset{\eqref{eq:mu_2_update}}=\mu_2^{(t)}-\eta\sum_{k=1}^{2m}\left(-\delta z_{1,0,j}^{(t,k)}\right)\gtrsim \frac{(t+1)\eta}{SN^{3/2}},
\end{align}
where we use the induction hypothesis in the last step.
This shows \eqref{eq:mu_2_growth_phase_1} is true at time $t+1$. Moreover, by \eqref{eq:gradient_mu_2>0_phase_1} we know that $\mu_2^{(t+1)}>\mu_2^{(t)}$.

\paragraph{Step 2: showing \eqref{eq:bound_c_phase_1} holds at time $t+1$.} 
By \eqref{eq:delta_z_simplification_1}, \eqref{eq:delta_z_simplification_2} and the induction hypothesis, we have for any $s\leq t$:
\begin{align}\label{eq:delta_z_1_j_j_k<=m+1_small}
    \sum_{k=1}^{m+1}\left(-\delta z_{1,j,j}^{(s,k)}\right)=\left(1\pm \gO\left(\frac{\log N}{N}\right)\right)\sum_{k=1}^{m+1}\left(-\delta z_{1,0,j}^{(s,k)}\right),
\end{align}
and when $k\in\{m+2,\ldots,2m\}$, 
\begin{align}
    -\delta z_{1,j,j}^{(s,k)}\asymp -\delta z_{1,0,j}^{(s,k)}.
\end{align}
By \eqref{eq:delta_z_1_0_j_k>=m+2_small}, we have
\begin{align}
    \sum_{k=1}^{2m}\left(-\delta z_{1,j,j}^{(s,k)}\right)= \left(1\pm \gO\left(\frac{\log N}{N}\right)\right)\sum_{k=1}^{2m}\left(-\delta z_{1,0,j}^{(s,k)}\right).
\end{align}
Combining the above expresion with \eqref{eq:update_U}, and summing over $s=0,1,\ldots,t$, we have
\begin{align}
    b^{(t+1)}\mu^{(t+1)}=\left(1\pm \gO\left(\frac{\log N}{N}\right)\right)\mu^{(t+1)}.
\end{align}
This shows \eqref{eq:bound_c_phase_1} holds at time $t+1$.

\paragraph{Step 3: showing \eqref{eq:bound_nu_phase_1} holds at time $t+1$.} For any $i,j\in[S]$ and $n_{-1}^{(k)}=j$, by definition (c.f.~\eqref{eq:p_c}) we have
$$\forall k\in[m+1]:\quad q_{p(i)}^{(t,k)}=\beta^{(t,k)}_{(i,p(i),1),(j,1)}.$$
Thus we have
\begin{itemize}
    \item when $k=1$:
\begin{align}
    \frac{q_{p(i)}^{(t,k)}}{w_1^{(t,k)}}=0.
\end{align}
\item When $k\in\{2,\ldots,m+1\}$,
\small 
\begin{align}\label{eq:q_p_i_w1_k<=m+1}
  &  \frac{q_{p(i)}^{(t,k)}}{w_1^{(t,k)}}=\mathbbm{1}\{\exists l\in\{0,\ldots,k-2\},n_{2^{m-l}}=i|n_{-1}^{(t,k)}=j\}\frac{q_{p(i)}^{(t,k)}}{w_1^{(t,k)}}\notag\\
    &=\mathbbm{1}\{p(i)=j|n_{-1}^{(k)}=j\} \left(1 \pm \gO\left(\frac{\log^2 N}{N}\right)\right) \frac{\exp\left((1+c_2^{(t)})\mu_2^{(t)}\right)}{\exp\left((1+c_2^{(t)})\mu_2^{(t)}\right)+(k-2)\exp\left(c_2^{(t)}\mu_2^{(t)}\right)+\exp\left((1+c_2^{(t)})\mu_2^{(t)}\right)}\notag\\
    &\quad + \mathbbm{1}\{\exists l\in\{0,\ldots,k-3\},n_{2^{m-l}}=i|n_{-1}^{(k)}=j\}\cdot\left(1 \pm \gO\left(\frac{\log^2 N}{N}\right)\right)\notag\\ &\qquad\qquad\cdot\frac{\exp\left(c_2^{(t)}\mu_2^{(t)}\right)}{\exp\left((1+c_2^{(t)})\mu_2^{(t)}\right)+(k-2)\exp\left(c_2^{(t)}\mu_2^{(t)}\right)+\exp\left((1+c_2^{(t)})\mu_2^{(t)}\right)}\notag\\
    &\overset{\eqref{eq:c0_w1_k<=m+1}}{\lesssim}\mathbbm{1}\{p(i)=j|n_{-1}^{(k)}=j\}\frac{q_0^{(k)}}{w_1^{(k)}}+\mathbbm{1}\{\exists l\in\{0,\ldots,k-3\},n_{2^{m-l}}=i|n_{-1}^{(k)}=j\}\frac{1}{k}.
\end{align}
\normalsize
\end{itemize}
In addition, we have
\begin{itemize}
\item When $k\in\{1,\ldots,m\}$:
\begin{align}\label{eq:q_p_i_w1_k<=m}
   & \frac{p_{1,i}^{(t,k)}+p_{2,i}^{(t,k)}}{w_0^{(t,k)}} =\mathbbm{1}\{i\in\gV(\gT),i\text{ is not a leaf}|n_{-1}^{(t,k)}=j\}\frac{p_{1,i}^{(t,k)}+p_{2,i}^{(t,k)}}{w_0^{(t,k)}}\notag\\
    &=\mathbbm{1}\{c_1(i)=j|n_{-1}^{(t,k)}=j\}\left(1 \pm \gO\left(\frac{\log^2 N}{N}\right)\right) \frac{\exp\left((1-c_2^{(t)})\mu_2^{(t)}\right)+\exp\left(-c_2^{(t)}\mu_2^{(t)}\right)}{(N-2)\exp\left(-c_2^{(t)}\mu_2^{(t)}\right)+2\exp\left((1-c_2^{(t)})\mu_2^{(t)}\right)}\notag\\
    &\quad + \mathbbm{1}\{i\in\gV(\gT),i\text{ is not a leaf},c_1(i)\neq j|n_{-1}^{(t,k)}=j\}\left(1 \pm \gO\left(\frac{\log^2 N}{N}\right)\right) \notag\\
    &\quad\cdot\frac{2\exp\left(-c_2^{(t)}\mu_2^{(t)}\right)}{(N-2)\exp\left(-c_2^{(t)}\mu_2^{(t)}\right)+2\exp\left((1-c_2^{(t)})\mu_2^{(t)}\right)}\notag\\
    &\overset{\eqref{eq:p0_w0_m+1}}{\lesssim} \mathbbm{1}\{c_1(i)=j|n_{-1}^{(t,k)}=j\}\frac{p_0^{(t,k)}}{w_0^{(t,k)}}  +\mathbbm{1}\{i\in\gV(\gT),i\text{ is not a leaf},c_1(i)\neq j|n_{-1}^{(t,k)}=j\}\frac{1}{N},
\end{align}
\item When $k=m+1$, 
\begin{align}\label{eq:p_1_2_i_w0_m+1}
    \frac{p_{1,i}^{(t,m+1)}+p_{2,i}^{(t,m+1)}}{w_0^{(t,m+1)}}&=\mathbbm{1}\{i\in\gV(\gT),i\text{ is not a leaf and root}|n_{-1}^{(t,m+1)}=j\} \notag\\
    &\quad\cdot\left(1 \pm \gO\left(\frac{\log^2 N}{N}\right)\right)\frac{2\exp\left(-c_2^{(t)}\mu_2^{(t)}\right)}{(N-1)\exp\left(-c_2^{(t)}\mu_2^{(t)}\right)+\exp\left((2-c_2^{(t)})\mu_2^{(t)}\right)}\notag\\
    &\lesssim \mathbbm{1}\{i\in\gV(\gT),i\text{ is not a leaf and root}|n_{-1}^{(t,m+1)}=j\}\frac{1}{N}.
\end{align}
\end{itemize}
Thus we have for all $k\in[m+1]$:
\begin{align*}
    \left|\delta z_{1,i,j}^{(t,k)}\right| &\overset{\eqref{eq:delta_z_simplification_1i}}\leq  
    \frac{2}{S}\E\left[\left(w_0^{(t,k)}\right)^2 w_1^{(t,k)}\left(\frac{q_{p(i)}^{(t,k)}}{w_1^{(t,k)}}+\frac{p_{1,i}^{(t,k)}+p_{2,i}^{(t,k)}}{w_0^{(t,k)}}\right)\Bigg|n_{-1}=j\right]\notag\\
    &\lesssim \frac{2}{N}\cdot \frac{1}{S}\E\left[\left(w_0^{(t,k)}\right)^2 w_1^{(t,k)}\left(\frac{q_{0}^{(t,k)}}{w_1^{(t,k)}}+\frac{p_{0}^{(t,k)}}{w_0^{(t,k)}}\right)\Bigg| n_{-1}=j\right] + \frac{2}{N}\cdot\frac{1}{S}\E\left[\left(w_0^{(t,k)}\right)^2 w_1^{(t,k)}\right]\notag\\
    &\lesssim \frac{\log N}{N}\left(-\delta z_{1,0,j}^{(t,k)}\right),
\end{align*}
where the second line follows from \eqref{eq:q_p_i_w1_k<=m+1}, 
\eqref{eq:q_p_i_w1_k<=m} and \eqref{eq:p_1_2_i_w0_m+1}, and the last line follows from \eqref{eq:delta_z_simplification_1}, \eqref{eq:c0_w1_k<=m+1} and \eqref{eq:p0_w0_m+1}.
When $k\in\{m+2,\ldots,2m\}$, similarly, we can compute that
\begin{align}
    \left|\delta z_{1,i,j}^{(t,k)}\right| &\lesssim \frac{\log N}{N}\left|\delta z_{1,0,j}^{(t,k)}\right|\overset{\eqref{eq:delta_z_1_0_j_k>=m+2_small}}{\lesssim} \frac{\log N}{N}\cdot\frac{\log N}{N}\left(-\delta z_{1,0,j}^{(t,k')}\right),\quad \forall k'\in[m].
\end{align}
Combining the above two expressions, we have
\begin{align}\label{eq:nu_grad_small}
    \forall i,j\in[S]:\quad\left|\sum_{k=1}^{2m}\delta z_{1,i,j}^{(t,k)}\right| &\lesssim \frac{\log N}{N}\sum_{k=1}^{2m}\left(-\delta z_{1,0,j}^{(t,k)}\right).
\end{align}
This combined with \eqref{eq:V_matrix} and \eqref{eq:update_V} gives for any $i,j\in[S]$, $i\neq j$:
\begin{align}
    \left|\nu_{2,1}^{(t+1)}\right| &\leq \left|\nu_{2,1}^{(t)}\right|+\eta\left|\sum_{k=1}^{2m}\delta z_{1,i,j}^{(t,k)}\right|\lesssim \frac{\log N}{N}\mu_2^{(t)}+\frac{\eta\log N}{N}\sum_{k=1}^{2m}\left(-\delta z_{1,0,j}^{(t,k)}\right)=\frac{\log N}{N}\mu_2^{(t+1)},\label{eq:nu_2_1_bound_phase_1}\\
    \left|\nu_{2}^{(t+1)}\right| &\leq \left|\nu_{2}^{(t)}\right|+\eta\left|\sum_{k=1}^{2m}\delta z_{1,j,j}^{(t,k)}\right|\lesssim \frac{\log N}{N}\mu_2^{(t)}+\frac{\eta\log N}{N}\sum_{k=1}^{2m}\left(-\delta z_{1,0,j}^{(t,k)}\right)=\frac{\log N}{N}\mu_2^{(t+1)}.\label{eq:nu_2_bound_phase_1}
\end{align}
where the second inequality follows from \eqref{eq:nu_grad_small} and the induction hypothesis. 

Analogously, we can show that
\begin{align}\label{eq:nu_2_2_bound_phase_1}
    \left|\nu_{2,2}^{(t+1)}\right|\lesssim \frac{\log N}{N}\mu_2^{(t+1)}.
\end{align}
Combining \eqref{eq:nu_2_1_bound_phase_1}, \eqref{eq:nu_2_bound_phase_1} and \eqref{eq:nu_2_2_bound_phase_1}, we know that \eqref{eq:bound_nu_phase_1} holds at time $t+1$. 

\paragraph{Step 4: showing \eqref{eq:c_bound_phase_1} holds at time $t+1$.} 
We show 
    $c_2^{(t+1)}\geq 1/2$
by contradiction. If 
\begin{align}\label{eq:contradiction_c2}
    c_2^{(t+1)}<1/2,
\end{align}
since $\eta\lesssim \frac{1}{Nm}$, by \eqref{eq:ratio_w0_w1_k<=m+1} we have
\begin{align*}
    \frac{\sum_{k=1}^{m}\left(w_0^{(t,k)}\right)^2\left(w_1^{(t,k)}\right)}{\left(w_1^{(t,m+1)}\right)^2\left(w_0^{(t,m+1)}\right)}&\asymp\left(1\pm \gO\left(\frac{1}{Nm}\right)\right)\frac{mN}{\exp\left((1+2c_2^{(t+1)})\mu_2^{(t+1)}\right)}\gtrsim \log N,
\end{align*}
which combined with \eqref{eq:delta_z_simplification_1}, \eqref{eq:delta_z_simplification_3_*1}, \eqref{eq:delta_z_1_0_j_k>=m+2_small} and Lemma~\ref{lm:sum_w0_w1} indicates that for any $j\in[S]$:
\begin{align}\label{eq:gradient_c2_large}
    -\sum_{k=1}^{2m}\delta z_{3,1,1}^{(t,k)}&\gtrsim -\sum_{k=1}^{2m}S\left(-\delta z_{1,0,j}^{(t,k)}\right).
\end{align}
This indicates that 
$$c_2^{(t+1)}=\frac{C_{3,1,1}^{(t)}-\sum_{k=1}^{2m}\delta z_{3,1,1}^{(t,k)}}{C_{1,0,j}^{(t)}-\sum_{k=1}^{2m}\delta z_{1,0,j}^{(t,k)}}\geq \frac{1}{2},$$
where the last inequality follows from \eqref{eq:gradient_c2_large} and our hypothesis that $c_2^{(t)}\geq 1/2$.
The above expression contradicts \eqref{eq:contradiction_c2}, thus we have $c_2^{(t+1)}\geq 1/2$.

\subsubsection{Proof of Lemma \ref{lm:relation_C_phase_1_2}}\label{sec_app:proof_relation_C_phase_1_2}

We prove by induction that \eqref{eq:mu_2_growth_phase_1_2}, \eqref{eq:bound_c_phase_1_2}, \eqref{eq:bound_nu_phase_1_2}, \eqref{eq:c_bound_phase_1_2} and 
\begin{align}
    \frac{\sum_{k=m+2}^{2m}\left(w_1^{(t,k)}\right)^2 w_0^{(t,k)}}{\left(w_1^{(t,m+1)}\right)^2 w_0^{(t,m+1)}}&\lesssim \frac{\log^2 N}{N},\label{eq:small_delta_z>=m+2_phase_1_2}\\
    \exp\left(\left(1+2c_2^{(t)}\right)\mu_2^{(t)}\right)&\geq N\label{eq:1+c2_bound_phase_1_2}
\end{align}
hold for all $t\in[T_1^C,T_2^C]$.

\paragraph{Base case.} When $t=T_1^C$, \eqref{eq:mu_2_growth_phase_1_2}, \eqref{eq:bound_c_phase_1_2}, \eqref{eq:bound_nu_phase_1_2} hold. By \eqref{eq:c_bound_phase_1} we know that \eqref{eq:c_bound_phase_1_2} and \eqref{eq:1+c2_bound_phase_1_2} hold at time $t=T_1^C$. 
By \eqref{eq:c_bound_phase_1}, \eqref{eq:w0_m+1} and \eqref{eq:w1_m+1}, we have 
\begin{align}
    \left(w_1^{(T_1^C,m+1)}\right)^2 w_0^{(T_1^C,m+1)}&\asymp \frac{1}{\exp\left(\left(2c_2^{(T_1^C)}-1\right)\mu_2^{(T_1^C)}\right)},
\end{align}
and by \eqref{eq:w0_k>=m+2} and \eqref{eq:w1_k>=m+2}, we have 
\begin{align}
    \left(w_1^{(T_1^C,k)}\right)^2 w_0^{(T_1^C,k)}&\asymp \frac{1}{N^2\exp\left(2\left(2c_1^{(T_1^C)}-1\right)\mu_2^{(T_1^C)}\right)}.
\end{align}
Thus we have
\begin{align}\label{eq:above}
    \frac{\sum_{k=m+2}^{2m}\left(w_1^{(T_1^C,k)}\right)^2 w_0^{(T_1^C,k)}}{\left(w_1^{(T_1^C,m+1)}\right)^2 w_0^{(T_1^C,m+1)}}&\asymp \frac{m\exp\left(\left(2c_2^{(T_1^C)}-1\right)\mu_2^{(T_1^C)}\right)}{N^2\exp\left(2\left(2c_1^{(T_1^C)}-1\right)\mu_2^{(T_1^C)}\right)}\notag\\
    &=\frac{m\exp\left(\left(2c_2^{(T_1^C)}+1\right)\mu_2^{(T_1^C)}\right)}{N^2\exp\left(4c_1^{(T_1^C)}\mu_2^{(T_1^C)}\right)}\overset{\eqref{eq:1+2c_1_bound}}\lesssim \frac{\log^2 N}{N},
\end{align}
where the last inequality also uses the fact that $V_{3,0,0}^{(t)}\overset{\eqref{eq:UV_3_matrix}}=c_1^{(t)}\mu_2^{(t)}>0$ for any $t$, because by \eqref{eq:delta_z_simplification_3_*0}, we know that $V_{3,0,0}^{(t)}$ strictly increases with $t$.
By \eqref{eq:above} we know that \eqref{eq:small_delta_z>=m+2_phase_1_2} holds at time $t=T_1^C$.

\paragraph{Induction.} Assume that  \eqref{eq:mu_2_growth_phase_1_2}, \eqref{eq:bound_c_phase_1_2}, \eqref{eq:bound_nu_phase_1_2}, \eqref{eq:c_bound_phase_1_2} and \eqref{eq:small_delta_z>=m+2_phase_1_2} hold for all time steps $s\in[T_1^C,t]$ ($T_1^C\leq t\leq T_2^C-1$).
Below we prove that \eqref{eq:mu_2_growth_phase_1_2}, \eqref{eq:bound_c_phase_1_2}, \eqref{eq:bound_nu_phase_1_2} and \eqref{eq:small_delta_z>=m+2_phase_1_2} hold for $t+1$.

\paragraph{Step 1: Showing \eqref{eq:mu_2_growth_phase_1_2} for $t+1$.}
By \eqref{eq:p0_w0_m+1} and \eqref{eq:c0_w1_k<=m+1} and the induction hypothesis, we have
\begin{align}
    \frac{p_0^{(t,m+1)}}{w_0^{(t,m+1)}}&= \left(1 \pm \widetilde\gO\left(\frac{1}{N}\right)\right)\frac{\exp\left(2\mu_2^{(t)}\right)}{N-1+\exp\left(2\mu_2^{(t)}\right)}=\left(1 \pm\widetilde\gO\left(\frac{1}{N}\right)\right)\cdot\frac{2}{3},\notag\\
    \frac{q_0^{(t,m+1)}}{w_1^{(t,m+1)}}&=\left(1 \pm \widetilde\gO\left(\frac{1}{N}\right)\right)\frac{\exp\left(\mu_2^{(t)}\right)}{2\exp\left(\mu_2^{(t)}\right)+m-1}=\left(1 \pm \widetilde\gO\left(\frac{1}{N}\right)\right)\cdot\frac{1}{2},
\end{align}
and thus by \eqref{eq:delta_z_simplification_1} and the above expression, we have
\begin{align}
    -\delta z_{1,0,j}^{(t,m+1)}& \asymp \frac{2}{S}\E\left[\left(w_1^{(t,m+1)}\right)^2 w_0^{(t,m+1)}\Big|n_{-1}=j\right].
\end{align}
By \eqref{eq:c0_w1_k<=m+1}, \eqref{eq:p0_w0_k<=m} and \eqref{eq:delta_z_simplification_1}, we have
$    -\delta z_{1,0,j}^{(t,m+1)}\geq 0$,
and by \eqref{eq:small_delta_z>=m+2_phase_1_2} we have
\begin{align}\label{eq:small_delta_z>=m+2_phase_1_2_t+1}
    \left|\sum_{k=m+2}^{2m}\delta z_{1,0,j}^{(t,k)}\right|\lesssim \frac{\log^2 N}{N} \left(-\delta z_{1,0,j}^{(t,m+1)}\right).
\end{align}
Thus we have
\begin{align}\label{eq:mu_2_growth_phase_1_2_t+1}
    \mu_2^{(t+1)}\gtrsim \mu_2^{(t)}-\eta\delta z_{1,0,j}^{(t,m+1)}.
\end{align}
By \eqref{eq:w0_m+1} and \eqref{eq:w1_m+1}, we have
\begin{align}\label{eq:w1^2_w0_m+1_phase_1_2}
    \left(w_1^{(t,m+1)}\right)^2 w_0^{(t,m+1)}&\asymp \begin{cases}
    \frac{1}{\exp\left(\left(2c_2^{(t)}-1\right)\mu_2^{(t)}\right)},\,\,\text{when $c_2^{(t)}\geq 0.5$}\\
    \frac{1}{\exp\left(2\left(1-2c_2^{(t)}\right)\mu_2^{(t)}\right)},\,\,\text{when $c_2^{(t)}\leq 0.5$}
    \end{cases},
\end{align}
and when $c_2^{(t)}\geq 0.5$, by \eqref{eq:w0_k<=m} and \eqref{eq:w1_k<=m}, we have     $\forall k\in[m]$:
\begin{align} \label{eq:ratio_w0_w1_k<=m_c1>=0.5}
\left(w_0^{(t,k)}\right)^2 w_1^{(t,k)}&\asymp \frac{N^2}{\exp\left(2\left(2c_2^{(t)}+1\right)\mu_2^{(t)}\right)}, \; \mbox{and} \;
    \frac{\left(w_0^{(t,k)}\right)^2w_1^{(t,k)}}{\left(w_1^{(t,m+1)}\right)^2 w_0^{(t,m+1)}}&\asymp \frac{N^2}{\exp\left(\left(2c_2^{(t)}+3\right)\mu_2^{(t)}\right)}.
\end{align}
Following the same argument as we show \eqref{eq:1+2c_1_bound} in Phase 1 by leveraging \eqref{eq:ratio_w0_w1_k<=m+1}, here by leveraging \eqref{eq:ratio_w0_w1_k<=m_c1>=0.5}, we can show that when $\eta\lesssim \frac{1}{mN}$ and $c_2^{(t)}\geq 0.5$,
\begin{align}\label{eq:3+2c_2_bound_c1>=0.5}
    \exp\left(\left(2c_2^{(t)}+3\right)\mu_2^{(t)}\right)\lesssim mN^2.
\end{align}
This combined with \eqref{eq:w1^2_w0_m+1_phase_1_2} suggests that when $c_2^{(t)}\geq 0.5$, we have
\begin{align}
    \left(w_1^{(t,m+1)}\right)^2 w_0^{(t,m+1)}&\gtrsim \frac{\exp\left(4\mu_2^{(t)}\right)}{mN^2}\gtrsim \frac{1}{m}.
\end{align}
Thus by \eqref{eq:mu_2_growth_phase_1_2_t+1} and the above relation, we have
\begin{align}\label{eq:mu_2_growth_phase_1_2_t+1_c2>=0.5}
    \mu_2^{(t+1)}-\mu_2^{(T_1^C)}\gtrsim \mu_2^{(t)}-\mu_2^{(T_1^C)}+\eta\frac{1}{Sm}\gtrsim \frac{\eta(t+1-T_1^C)}{SN},
\end{align}
where the last step follows from our induction hypothesis.

On the other hand, \eqref{eq:3+2c_2_bound_c1>=0.5} suggests that 
\begin{align}\label{eq:when_c2<=0.5}
    c_2^{(t)}\leq 0.5\,\,\text{after } \mu^{(t)}\gtrsim m^{1/4}\sqrt{N}.
\end{align}
After $c_2^{(t)}\leq 0.5$, by \eqref{eq:w1^2_w0_m+1_phase_1_2} and \eqref{eq:c_bound_phase_1_2}, we have 
\begin{align}
    \left(w_1^{(t,m+1)}\right)^2 w_0^{(t,m+1)}&\gtrsim \frac{1}{\exp\left(\mu_2^{(t)}\right)}\geq \frac{1}{N}.
\end{align}
Thus by \eqref{eq:mu_2_growth_phase_1_2_t+1} and the above relation, we have
\begin{align}\label{eq:mu_2_growth_phase_1_2_t+1_c2<=0.5}
    \mu_2^{(t+1)}-\mu_2^{(T_1^C)}\gtrsim \mu_2^{(t)}-\mu_2^{(T_1^C)}+\eta\frac{1}{N}\gtrsim \frac{\eta(t+1-T_1^C)}{SN},
\end{align}
where the last step follows from our induction hypothesis. 
\eqref{eq:mu_2_growth_phase_1_2_t+1_c2<=0.5} and \eqref{eq:mu_2_growth_phase_1_2_t+1_c2>=0.5} suggest that \eqref{eq:mu_2_growth_phase_1_2} holds for $t+1$.

\paragraph{Step 2: Showing \eqref{eq:bound_c_phase_1_2} and \eqref{eq:bound_nu_phase_1_2} hold for $t+1$.} Note that \eqref{eq:small_delta_z>=m+2_phase_1_2} indicates that
\begin{align}
    \left|\sum_{k=m+2}^{2m}\delta z_{1,0,j}^{(t,k)}\right|\lesssim \frac{\log^2 N}{N}\left(-\delta z_{1,0,j}^{(t,m+1)}\right).
\end{align}
Following a similar argument as we show \eqref{eq:bound_c_phase_1} and \eqref{eq:bound_nu_phase_1} in Phase 1 in Appendix~\ref{sec_app:proof_relation_C_phase_1} (Step 2 and Step 3), we can show that \eqref{eq:bound_c_phase_1_2} and \eqref{eq:bound_nu_phase_1_2} hold for $t+1$. Here we omit the details to avoid repetition.

\paragraph{Step 3: Showing \eqref{eq:1+c2_bound_phase_1_2} hold for $t+1$.} We prove this by contradiction. Assume that 
\begin{align}\label{eq:contradiction_1+c2_bound_phase_1_2}
\exp\left(\left(1+2c_2^{(t+1)}\right)\mu_2^{(t+1)}\right)<N
\end{align}
then when $\eta$ is small enough, we have
 \begin{align}\label{eq:1+2c2_bound_phase_1_2_contradiction}
    \exp\left(\left(1+2c_2^{(t)}\right)\mu_2^{(t)}\right)<2N.
 \end{align}
Then by \eqref{eq:w0_k<=m} and \eqref{eq:w1_k<=m}, we have
\begin{align}
    \forall k\in[m]:\quad \left(w_0^{(t,k)}\right)^2w_1^{(t,k)}&\asymp \frac{\exp\left(\left(2c_2^{(t)}+1\right)\mu_2^{(t)}\right)}{N}.
\end{align}
By the above relation and \eqref{eq:w1^2_w0_m+1_phase_1_2}, we have
\begin{align}
   \frac{\sum_{k=1}^{m}\left(w_0^{(t,k)}\right)^2w_1^{(t,k)}}{\left(w_1^{(t,m+1)}\right)^2 w_0^{(t,m+1)}}&\asymp \frac{m\exp\left(\left(3-2c_2^{(t)}\right)\mu_2^{(t)}\right)}{N}\geq\frac{m\exp\left(4\mu_2^{(t)}\right)}{2N^2}\gtrsim m,
\end{align}
where the first inequality follows from \eqref{eq:1+2c2_bound_phase_1_2_contradiction}. Thus by \eqref{eq:delta_z_simplification_3_*1}, we have 
\begin{align}\label{eq:proportion_s_f}
    \sum_{k=1}^{2m}\left(-\delta z_{3,1,1}^{(t,k)}\right)&\gtrsim S\sum_{k=1}^{2m}\left(-\delta z_{1,0,j}^{(t,k)}\right),
\end{align}
indicating that 
\begin{align}
    \exp\left(\left(1+2c_2^{(t+1)}\right)\mu_2^{(t+1)}\right)&\gtrsim \exp\left(\left(1+2c_2^{(t)}\right)\mu_2^{(t)}\right)\gtrsim N,
\end{align}
where the last step follows from our induction hypothesis. This contradicts with the fact that \eqref{eq:contradiction_1+c2_bound_phase_1_2}.

\paragraph{Step 4: Showing \eqref{eq:c_bound_phase_1_2} hold for $t+1$.} By  \eqref{eq:1+c2_bound_phase_1_2}, we have after $c_2^{(t)}\leq 0.5$,
\begin{align}\label{eq:ratio_w0_w1_m_c2<=0.5}
    \frac{\sum_{k=1}^{m}\left(w_0^{(t,k)}\right)^2w_1^{(t,k)}}{\left(w_1^{(t,m+1)}\right)^2 w_0^{(t,m+1)}}&\asymp \frac{mN^2}{\exp\left(8c_2^{(t)}\mu_2^{(t)}\right)}.
\end{align}
By a similar argument as in Step 3, we can show that when $\eta$ is small enough, $c_2^{(t+1)}\geq 0.25$, by leveraging the above relation.

\paragraph{Step 5: Showing \eqref{eq:small_delta_z>=m+2_phase_1_2} hold for $t+1$.} We can compute that for any $k\in[m+2,2m]$,
\begin{align}\label{eq:ratio_w1^2_w0_k_m+1}
    \frac{\left(w_1^{(t,k)}\right)^2 w_0^{(t,k)}}{\left(w_1^{(t,m+1)}\right)^2 w_0^{(t,m+1)}}&\asymp \begin{cases}
    \frac{\exp\left(\left(2c_2^{(t)}+1-4c_1^{(t)}\right)\mu_2^{(t)}\right)}{N^2},\,\,\text{when $c_2^{(t)}\geq 0.5$}\\
    \frac{\exp\left(\left(4-4c_2^{(t)}-4c_1^{(t)}\right)\mu_2^{(t)}\right)}{N^2},\,\,\text{when $c_2^{(t)}\leq 0.5$}
    \end{cases}.
\end{align}

When $c_2^{(t)}\geq 0.5$, by \eqref{eq:3+2c_2_bound_c1>=0.5} and the above relation, we have
\begin{align}\label{eq:small_delta_z>=m+2_phase_1_2_c2>=0.5}
    \frac{\sum_{k=m+2}^{2m}\left(w_1^{(t+1,k)}\right)^2 w_0^{(t+1,k)}}{\left(w_1^{(t+1,m+1)}\right)^2 w_0^{(t+1,m+1)}}&\lesssim \frac{m^2N^2}{\exp\left(2\mu_2^{(t+1)}\right)N^2}\lesssim \frac{m^2}{N}.
\end{align}

When $c_2^{(t)}\leq 0.5$, we can show \eqref{eq:small_delta_z>=m+2_phase_1_2} also holds at time $t+1$ by contradiction. First by contradiction similar to Step 3, using \eqref{eq:ratio_w0_w1_m_c2<=0.5} we can show that 
\begin{align}
    \frac{\sum_{k=m+2}^{2m}\left(w_1^{(t+1,k)}\right)^2 w_0^{(t+1,k)}}{\left(w_1^{(t+1,m+1)}\right)^2 w_0^{(t+1,m+1)}}&\lesssim m.
\end{align}
This together with \eqref{eq:delta_z_simplification_1} and induction hypothesis implies that 
\begin{align}\label{eq:delta_z_1_m+1_dominate}
    \sum_{k=m+2}^{2m}\left(-\delta z_{1,0,j}^{(t+1,k)}\right)&\lesssim m \left(-\delta z_{1,0,j}^{(t,m+1)}\right).
\end{align}
By the induction hypothesis, there exists an absolute constant $C$ such that 
\begin{align}
    \frac{\sum_{k=m+2}^{2m}\left(w_1^{(s,k)}\right)^2 w_0^{(s,k)}}{\left(w_1^{(s,m+1)}\right)^2 w_0^{(s,m+1)}}\leq \frac{Cm^2}{N}
\end{align}
for all $s\in[T_1^C,t]$. We now claim 
$$
\frac{\sum_{k=m+2}^{2m}\left(w_1^{(t+1,k)}\right)^2 w_0^{(t+1,k)}}{\left(w_1^{(t+1,m+1)}\right)^2 w_0^{(t+1,m+1)}}\leq \frac{Cm^2}{N},$$
which can be shown by contradiction. If
\begin{align}\label{eq:contradiction_small_delta_z>=m+2_phase_1_2_c2<=0.5}
    \frac{\sum_{k=m+2}^{2m}\left(w_1^{(t+1,k)}\right)^2 w_0^{(t+1,k)}}{\left(w_1^{(t+1,m+1)}\right)^2 w_0^{(t+1,m+1)}}> \frac{Cm^2}{N},
\end{align}
then when $\eta$ is small enough, similar as in the proof of Lemma \ref{lm:sum_w0_w1} in Appendix \ref{sec:proof_of_lemma_sum_w0_w1}, we can show that when learning rate $\eta\lesssim \frac{1}{Nm}$, we have 
\begin{align}
    \frac{\sum_{k=m+2}^{2m}\left(w_1^{(t,k)}\right)^2 w_0^{(t,k)}}{\left(w_1^{(t,m+1)}\right)^2 w_0^{(t,m+1)}}\geq \frac{Cm^2}{2N}.
\end{align}
By \eqref{eq:delta_z_simplification_1} and \eqref{eq:delta_z_simplification_3_*0} we have 
\begin{align}
    -\delta z_{3,0,0}^{(t,m+1)}\gtrsim \frac{m^2 S}{2N}\left(-\delta z_{1,0,j}^{(t,m+1)}\right)\overset{\eqref{eq:delta_z_1_m+1_dominate}}\gtrsim  \frac{m S}{2N}\sum_{k=1}^{2m}\left(-\delta z_{1,0,j}^{(t,k)}\right),
\end{align}
indicating $4-4c_2^{(t)}-4c_1^{(t)}$ decreases from step $t$ to step $t+1$, i.e.,
\begin{align}
    4-4c_2^{(t+1)}-4c_1^{(t+1)}<4-4c_2^{(t)}-4c_1^{(t)}.
\end{align}
Thus by \eqref{eq:ratio_w1^2_w0_k_m+1}, we have
\begin{align}
    \frac{\sum_{k=m+2}^{2m}\left(w_1^{(t+1,k)}\right)^2 w_0^{(t+1,k)}}{\left(w_1^{(t+1,m+1)}\right)^2 w_0^{(t+1,m+1)}}&\leq \frac{\sum_{k=m+2}^{2m}\left(w_1^{(t,k)}\right)^2 w_0^{(t,k)}}{\left(w_1^{(t,m+1)}\right)^2 w_0^{(t,m+1)}}\leq \frac{Cm^2}{N}.
\end{align}
This contradicts with \eqref{eq:contradiction_small_delta_z>=m+2_phase_1_2_c2<=0.5}.
The induction is complete.

\paragraph{Step 6: Showing $c_2^{(T_2^C)}\leq 0.26$, $c_1^{(T_2^C)}\geq 0.48$.} 
Recall in Step 1 we demonstrated that (c.f.\eqref{eq:when_c2<=0.5})
\begin{align}
    c_2^{(t)}\leq 0.5\,\,\text{after } \mu^{(t)}\gtrsim m^{1/4}\sqrt{N}.
\end{align}
After $c_2^{(t)}\leq 0.5$, 
using a similar argument as we show \eqref{eq:proportion_s_f}, by \eqref{eq:ratio_w0_w1_m_c2<=0.5} we know that before $c_2^{(t)}\leq 0.26$, 
\begin{align}
    \sum_{k=1}^{2m}\left(-\delta z_{3,1,1}^{(t,k)}\right)&\gtrsim S\sum_{k=1}^{2m}\left(-\delta z_{1,0,j}^{(t,k)}\right),
\end{align}
indicating that $c_2^{(t)}\mu_2^{(t)}$ decreases approximately $S$ times faster than $\mu_2$ increases, and will decrease below 0.26 before $T_2^C$.

In addition, after $c_2^{(t)}\leq 0.5$, by \eqref{eq:small_delta_z>=m+2_phase_1_2} and \eqref{eq:ratio_w1^2_w0_k_m+1} we know that for any $k\in[m+2,2m]$,
\begin{align}
    \frac{\exp\left(\left(4-4c_2^{(T_2^C)}-4c_1^{(T_2^C)}\right)\mu_2^{(T_2^C)}\right)}{N^2}&\lesssim \frac{\log N}{N}.
\end{align}
This together with the fact that $c_2^{(T_2^C)}\leq 0.26$ and $\exp\left(\mu_2^{(T_2^C)}\right)\asymp N$ implies that 
\begin{align}
    c_1^{(T_2^C)}\geq 0.48.
\end{align}


\subsubsection{Proof of Lemma \ref{lm:relation_C_phase_2}}\label{sec_app:proof_relation_C_phase_2}
We first compute some important quantities that will be used in the proof.

\paragraph{Basic computations.}
When $t\geq T_2^C$, by induction hypothesis and a similar computation as in Step 1 of the proof of Lemma \ref{lm:relation_C_phase_1}, we have when $\epsilon$ is small enough:
\begin{align}\label{eq:w01_1_final}
    w_0^{(t,1)}&\asymp\exp\left(\pm\widetilde\gO\left(\frac{\mu_2^{(t)}}{N}\right)\right)\frac{1}{\exp\left((1+2c_2^{(t)})\mu_2^{(t)}\right)},\quad w_1^{(t,1)}\asymp 1.
\end{align}
and for any $k\in\{2,\ldots,m\}$:
\begin{align}\label{eq:w01_k<=m_final}
    w_0^{(t,k)}&\asymp\exp\left(\pm\widetilde\gO\left(\frac{\mu_2^{(t)}}{N}\right)\right) \frac{1}{\exp\left((2c_2^{(t)})\mu_2^{(t)}\right)}, \quad w_1^{(t,k)}\asymp 1.
\end{align}
When $k=m+1$, we have
\begin{align}\label{eq:w01_m+1_final}
    w_0^{(t,m+1)}&\asymp 1,\quad w_1^{(t,m+1)}\asymp \exp\left(\pm\widetilde\gO\left(\frac{\mu_2^{(t)}}{N}\right)\right)\frac{1}{\exp\left((1-2c_2^{(t)})\mu_2^{(t)}\right)}.
\end{align}
When $k\in\{m+2,\ldots,2m\}$, we have
\begin{align}\label{eq:w01_k>=m+2_final}
    w_1^{(t,k)}&\asymp\exp\left(\pm\widetilde\gO\left(\frac{\mu_2^{(t)}}{N}\right)\right)\frac{1}{\exp\left(2c_1^{(t)}\mu_2^{(t)}\right)},\quad w_0^{(t,k)}\asymp 1.
\end{align}
The above expressions give that
\begin{align}\label{eq:w0^2_w1_final_k<=m}
\left(w_0^{(t,k)}\right)^2w_1^{(t,k)}&=\begin{cases}
        \exp\left(\pm\widetilde\gO\left(\frac{\mu_2^{(t)}}{N}\right)\right)\frac{1}{\exp\left(2(1+2c_2^{(t)})\mu_2^{(t)}\right)}, & k=1\\
        \exp\left(\pm\widetilde\gO\left(\frac{\mu_2^{(t)}}{N}\right)\right) \frac{1}{\exp\left((4c_2^{(t)})\mu_2^{(t)}\right)}, & k\in\{2,\ldots,m\},
    \end{cases}
\end{align}
and
\begin{align}\label{eq:w1^2_w0_final_m+1}
\left(w_1^{(t,m+1)}\right)^2w_0^{(t,m+1)}&=\exp\left(\pm\widetilde\gO\left(\frac{\mu_2^{(t)}}{N}\right)\right)\frac{1}{\exp\left(2(1-2c_2^{(t)})\mu_2^{(t)}\right)}.
\end{align}
and
\begin{align}\label{eq:w1^2_w0_final_k>=m+2}
\forall k\in\{m+2,\ldots,2m\},\quad \left(w_1^{(t,k)}\right)^2w_0^{(t,k)}&=\exp\left(\pm\widetilde\gO\left(\frac{\mu_2^{(t)}}{N}\right)\right)\frac{1}{\exp\left(4c_1^{(t)}\mu_2^{(t)}\right)}.
\end{align}


Below we show Lemma~\ref{lm:relation_C_phase_2} 
 By Lemma~\ref{lm:relation_C_phase_1_2} we know that Lemma~\ref{lm:relation_C_phase_2} 
holds for $t=T_2^C$.
We assume that Lemma~\ref{lm:relation_C_phase_2} 
holds for all $s\leq t$ ($t\in[T_2^C,T_3^C-1]$).
Below we show that Lemma~\ref{lm:relation_C_phase_2} holds for $t+1$.

\paragraph{Step 1: showing \eqref{eq:mu_2_growth_phase_2} holds for $t+1$.}
Following the same computation as in Step 1 in the proof of Lemma~\ref{lm:relation_C_phase_1_2}, we could compute that \eqref{eq:mu_2_growth_phase_1_2_t+1} holds here, i.e., we have
\begin{align}\label{eq:mu_2_growth_phase_2_t+1}
    \mu_2^{(t+1)}\gtrsim \mu_2^{(t)}-\eta\delta z_{1,0,j}^{(t,m+1)},
\end{align}
and 
\begin{align}
    -\delta z_{1,0,j}^{(t,m+1)}& \asymp \frac{2}{S}\E\left[\left(w_1^{(t,m+1)}\right)^2 w_0^{(t,m+1)}\Big|n_{-1}=j\right].
\end{align}
By \eqref{eq:w1^2_w0_final_m+1} and induction hypothesis we know that when $\epsilon$ is small enough, we have
\begin{align}
    \left(w_1^{(t,m+1)}\right)^2 w_0^{(t,m+1)}\geq \frac{1}{\exp\left(\left(2(1-0.48)+0.01\right)\mu_2^{(t)}\right)}=\frac{1}{\exp\left(1.05\mu_2^{(t)}\right)}\geq \left(\frac{\epsilon}{6m}\right)^{3/2}.
\end{align}
This together with \eqref{eq:mu_2_growth_phase_2_t+1} implies that
\begin{align}
    \mu_2^{(t+1)}\gtrsim \mu_2^{(t)}+\frac{\eta}{S}\left(\frac{\epsilon}{m}\right)^{3/2},
\end{align}
indicating that \eqref{eq:mu_2_growth_phase_2} holds for $t+1$.

\paragraph{Step 2: showing \eqref{eq:b_bound_phase_2}, \eqref{eq:bound_nu_phase_2} hold for $t+1$.}
This step is almost the same as Step 2 and 3 in the proof of Lemma~\ref{lm:relation_C_phase_1} and is omitted here.

\paragraph{Step 3: showing \eqref{eq:c_bound_phase_2} holds for $t+1$.}
Similar to Step 3 in the proof of Lemma~\ref{lm:relation_C_phase_1_2}, we could show by contradiction that when $\eta\lesssim \frac{1}{mN}$:
\begin{align}\label{eq:ratio_for_c2_phase_2}
    \frac{\sum_{k=1}^{m}\left(w_0^{(t,k)}\right)^2w_1^{(t,k)}}{\left(w_1^{(t,m+1)}\right)^2 w_0^{(t,m+1)}}= 1\pm\widetilde\gO\left(\frac{1}{N}\right),
\end{align}
since by \eqref{eq:delta_z_simplification_3_*1}, if
\begin{align}
\sum_{k=1}^{m}\left(w_0^{(t,k)}\right)^2w_1^{(t,k)}>\left(w_1^{(t,m+1)}\right)^2 w_0^{(t,m+1)},
\end{align}
$c_2^{(t)}$ will increase,  i.e., $c_2^{(t+1)}>c_2^{(t)}$  and if
\begin{align}
    \sum_{k=1}^{m}\left(w_0^{(t,k)}\right)^2w_1^{(t,k)}<\left(w_1^{(t,m+1)}\right)^2 w_0^{(t,m+1)},
\end{align}
$c_2^{(t)}$ will decrease, i.e., $c_2^{(t+1)}<c_2^{(t)}$. Combining \eqref{eq:ratio_for_c2_phase_2} with \eqref{eq:w0^2_w1_final_k<=m} and \eqref{eq:w1^2_w0_final_m+1}, we have
\begin{align}
    \exp\left(\pm\widetilde\gO\left(\frac{\mu_2^{(t+1)}}{N}\right)\right) \frac{\exp\left(2(1-2c_2^{(t+1)})\mu_2^{(t+1)}\right)}{\exp\left((4c_2^{(t+1)})\mu_2^{(t+1)}\right)}\asymp 1,
\end{align}
indicating that $c_2^{(t+1)}\in[0.24,0.26]$.

Next, we show by contradiction that 
$$c_1^{(t+1)}\geq c_2^{(t+1)}-\widetilde\gO\left(\frac{1}{N}\right).$$
By induction hypothesis, there exists some absolute constant $C$ such that for all $s\in[T_2^C,t]$,
$$c_1^{(s)}-c_2^{(s)}\geq-\frac{C\poly(\log N)}{N},$$
where $\poly(\log N)$ stands for some polynomial in $\log N$.
\eqref{eq:delta_z_simplification_3_*0}, \eqref{eq:delta_z_simplification_3_*1} and \eqref{eq:ratio_for_c2_phase_2} give that
\begin{align}\label{eq:delta_z_3_comparison}
    - \sum_{k=1}^{2m}\delta z_{3,0,0}^{(t,k)}&=2\sum_{k=m+2}^{2m}\E\left[\left(w_1^{(t,k)}\right)^2 w_0^{(t,k)}\right],\notag\\
    - \sum_{k=1}^{2m}\delta z_{3,1,1}^{(t,k)}&= \widetilde\gO\left(\frac{1}{N}\right)\sum_{k=1}^{m}\E\left[\left(w_1^{(t,k)}\right)^2 w_0^{(t,k)}\right].
\end{align}
Thus if
\begin{align}\label{eq:contradiction_c1_c2_phase_2}
    c_1^{(t+1)}<c_2^{(t+1)}-\frac{C\poly(\log N)}{N},
\end{align} 
then when $\eta\lesssim \frac{1}{mN}$, we have
\begin{align*}
    c_1^{(t)}<c_2^{(t)}-\frac{C\poly(\log N)}{2N}.
\end{align*} 
By \eqref{eq:w0^2_w1_final_k<=m}, \eqref{eq:w1^2_w0_final_k>=m+2} and \eqref{eq:delta_z_3_comparison}, when $C$ is large enough, we have
\begin{align*}
    - \delta z_{3,0,0}^{(t,m+1)} \gg - \delta z_{3,1,1}^{(t,m+1)},
\end{align*}
leading to
$$c_1^{(t+1)}-c_2^{(t+1)}\geq c_1^{(t)}-c_2^{(t)}\geq c_1^{(t+1)}-c_2^{(t+1)}-\frac{C\poly(\log N)}{N},$$
where the last inequality uses induction hypothesis.
The above expression contradicts \eqref{eq:contradiction_c1_c2_phase_2}, thus we have
$$c_1^{(t+1)}\geq c_2^{(t+1)}-\widetilde\gO\left(\frac{1}{N}\right).$$


\subsubsection{Proof of Lemma~\ref{lm:convergence_delta_z}}\label{sec_app:proof_convergence_delta_z}

Note that by our definitions of $w_0^{(k)}$, $w_1^{(k)}$ and Lemma~\ref{lm:loss_simplification_reasoning} we could rewrite 
$\sum_{k=1}^{2m}\Delta_z^{(k)}$ as
\begin{align}\label{eq:sum_delta_z_3_final}
    \sum_{k=1}^{2m}\Delta_z^{(k)}&=\sum_{k=1}^{m}\E\left[\left(w_0^{(t,k)}\right)^2\right]+\sum_{k=m+1}^{2m}\E\left[\left(w_1^{(t,k)}\right)^2\right].
\end{align}
By 
a similar calculation as in Step 1 of the proof of Lemma \ref{lm:relation_C_phase_1}, 
Lemma~\ref{lm:relation_C_phase_2} (and the fact that the relations in Lemma~\ref{lm:relation_C_phase_2} still hold after time $T_3^C$), we have for all $t\geq T_3^C+1$:
\begin{align}
    \forall k\in[m]:\quad w_0^{(t,k)}&\leq \frac{N}{\exp\left(0.47\mu_2^{(t)}\right)+N}\leq \sqrt{\frac{\epsilon}{6m}},\\
    \forall k\in[m+1,2m]:\quad w_1^{(t,k)}&\leq \frac{m+1}{\exp\left(0.47\mu_2^{(t)}\right)+m+1}\leq \sqrt{\frac{\epsilon}{6m}}.
\end{align}
Plugging the above into \eqref{eq:sum_delta_z_3_final}, we have
\begin{align*}
    \sum_{k=1}^{2m}\Delta_z^{(k)}&\leq \frac{\epsilon}{3}.
\end{align*}

\subsubsection{Proof of Lemma \ref{lm:sum_w0_w1}}\label{sec:proof_of_lemma_sum_w0_w1}
We prove this lemma by contradiction.
    Assume that there exists the first time $t_0\leq t$ such that 
    \begin{align}\label{eq:contradiction_1}
        \sum_{k=1}^{m}\left(w_0^{(t_0,k)}\right)^2w_1^{(t_0,k)}<\left(1-\frac{C'm}{N}\right)  \left(w_1^{(t_0,m+1)}\right)^2w_0^{(t_0,m+1)}
    \end{align}
    for some absolute constant $C'$. 

    First note that when $t=0$, we have
    \begin{align}\label{eq:values_t=0}
    w_0^{(0,k)}= \begin{cases}
        \frac{N}{N+k}, & k\in[m]\\
        \frac{N+k-m-1}{N+k}, & k\in\{m+1,\ldots,2m\}
    \end{cases},\quad
    w_1^{(0,k)}= \begin{cases}
        \frac{k}{N+k}, & k\in[m ]\\
        \frac{m+1}{N+k}, & k\in\{m+1,\ldots,2m\}
    \end{cases},
\end{align}
    which implies that \eqref{eq:sum_w0_w1_1} holds when $t=0$, thus we have $t_0\geq 1$.
    
    By \eqref{eq:delta_z_simplification_3_*0} we have for any $t\in\NN$:
    \begin{align}\label{eq:sum_delta_z_3_0_1}
        \sum_{k=1}^{2m}\delta z_{3,0,1}^{(t,k)}= 2\sum_{k=1}^{m}\E\left[\left(w_0^{(t,k)}\right)^2w_1^{(t,k)}\right]-2\E\left[\left(w_1^{(t,m+1)}\right)^2w_0^{(t,m+1)}\right].
    \end{align}
    This implies that
    $$\left|\sum_{k=1}^{2m}\delta z_{3,0,1}^{(k)}\right|\leq \frac{m}{2},$$
    and thus by \eqref{eq:update_V} we have at time $t_0-1$, 
    \begin{align}
        V_{3,0,1}^{(t_0-1)}&=V_{3,0,1}^{(t_0)}+\eta\sum_{k=1}^{2m}\delta z_{3,0,1}^{(t_0,k)}\leq V_{3,0,1}^{(t_0)}+C\frac{1}{2N}.
    \end{align}
    Similarly, by \eqref{eq:delta_z_simplification_1}, \eqref{eq:delta_z_simplification_2} and \eqref{eq:update_U}, we have
    \begin{align}
        U_{l,i,j}^{(t_0-1)}&=U_{l,i,j}^{(t_0)}+\gO\left(\frac{C}{NS}\right).
    \end{align}
     Thus by the expression of $w_0^{(t,k)}$ and $w_1^{(t,k)}$ (c.f.\eqref{eq:w0_k<=m}, \eqref{eq:w1_k<=m}, \eqref{eq:w0_m+1}, \eqref{eq:w1_m+1}, \eqref{eq:w1_k>=m+2} and \eqref{eq:w0_k>=m+2}), we know that
     \begin{align}
        \forall k\in[2m]:\quad w_0^{(t_0-1,k)}=\left(1 \pm \gO\left(\frac{C}{N}\right)\right) w_0^{(t_0,k)},\quad w_1^{(t_0-1,k)}=\left(1 \pm \gO\left(\frac{C}{N}\right)\right) w_1^{(t_0,k)},
     \end{align}
     and therefore, we have
     \begin{align}
        \frac{\sum_{k=1}^{m}\left(w_0^{(t_0-1,k)}\right)^2w_1^{(t_0-1,k)}}{\left(w_1^{(t_0-1,m+1)}\right)^2w_0^{(t_0-1,m+1)}}\leq \left(1 + \gO\left(\frac{C}{N}\right)\right) \frac{\sum_{k=1}^{m}\left(w_0^{(t_0,k)}\right)^2w_1^{(t_0,k)}}{\left(w_1^{(t_0,m+1)}\right)^2w_0^{(t_0,m+1)}}.
     \end{align}
By \eqref{eq:contradiction_1} and the above expression, we know that when $C$ is small enough, we have
\begin{align}
    \frac{\sum_{k=1}^{m}\left(w_0^{(t_0-1,k)}\right)^2w_1^{(t_0-1,k)}}{\left(w_1^{(t_0-1,m+1)}\right)^2w_0^{(t_0-1,m+1)}}\leq 1-\frac{C'm}{2N}.
\end{align}
By the above expression and \eqref{eq:sum_delta_z_3_0_1} we know that
\begin{align}
    -\sum_{k=1}^{2m}\delta z_{3,0,1}^{(t_0-1,k)}\geq \frac{C'm}{N} \E\left[\left(w_1^{(t_0-1,m+1)}\right)^2w_0^{(t_0-1,m+1)}\right].
\end{align}
Thus by \eqref{eq:update_V} we know that $V_{3,0,1}$ (recall $V_{3,0,1}=-c_2\mu_2$) will increase from $t_0-1$ to $t_0$.
, i.e., 
\begin{align}
    c_2^{(t_0)}\mu_2^{(t_0)}<c_2^{(t_0-1)}\mu_2^{(t_0-1)}.
\end{align}
    Furthermore, by \eqref{eq:delta_z_1_simplified_1} we know that 
    \begin{align}
       \left|\sum_{k=1}^{2m}\delta z_{1,0,1}^{(t_0-1,k)}\right|\leq \frac{2m}{S}\E\left[\left(w_1^{(t_0-1,m+1)}\right)^2w_0^{(t_0-1,m+1)}\right]\leq \frac{2C'N}{S}\left( -\sum_{k=1}^{2m}\delta z_{3,0,1}^{(t_0-1,k)}\right),
    \end{align}
    suggesting we could choose the absolute constant $C'$ large enough so that 
    \begin{align}\label{eq:1+c_2_decrease}
        (1+2c_2^{(t_0-1)})\mu_2^{(t_0-1)}>(1+2c_2^{(t_0)})\mu_2^{(t_0)}.
    \end{align}
    \eqref{eq:w0_w1_t_k}, \eqref{eq:w1_w0_t_m+1} and the above expression imply that we could choose the absolute constant $C'$ large enough so that
\begin{align}
    \left(w_0^{(t_0,k)}\right)^2\left(w_1^{(t_0,k)}\right)&>\left(w_0^{(t_0-1,k)}\right)^2\left(w_1^{(t_0-1,k)}\right),\\
    \left(w_1^{(t_0,m+1)}\right)^2\left(w_0^{(t_0,m+1)}\right)&<\left(w_1^{(t_0-1,m+1)}\right)^2\left(w_0^{(t_0-1,m+1)}\right).
\end{align}
Therefore, by \eqref{eq:contradiction_1} we have
\begin{align}
    \sum_{k=1}^{m}\left(w_0^{(t_0-1,k)}\right)^2w_1^{(t_0-1,k)}<\left(1-\frac{C'm}{N}\right)  \left(w_1^{(t_0-1,m+1)}\right)^2w_0^{(t_0-1,m+1)}.
\end{align}
This contradicts with our choice that $t_0$ is the first time that \eqref{eq:contradiction_1} is satisfied.

\subsubsection{Proof of Lemma~\ref{lm:training_dynamics_I_1_B}}\label{sec_app:proof_training_dynamics_I_1_B}

We first define 
\begin{align}\label{eq:denominators}
    r^{(k)}\coloneqq \sum_{i=1}^{N+k}\exp\left(x_i^{(k-1)\top}B_1 y_{-1}^{(k-1)}+y_i^{(k-1)\top}B_2 y_{-1}^{(k-1)}+z_i^{(k-1)\top}B_3 z_{-1}^{(k-1)}\right),
\end{align}
where we let $x_i^{(k-1)}, y_i^{(k-1)}, z_i^{(k-1)}$ be the $i$-th column of $X^{(k-1)}, Y^{(k-1)}, Z^{(k-1)}$ respectively. $r^{(k)}$ is the denominator of the attention score at the $k$-th reasoning step.

For any $i,j\in\{0,\ldots, S\}$, we also define 
\begin{align}\label{eq:dominant_terms}
    \delta_{1,i,j}^{(t)}\coloneqq \sum_{k=1}^{2m} \left(\delta x_{1,i,j}^{(t,k)}+\delta y_{1,i,j}^{(t,k)}\right),
\end{align}
and we define $\delta_2^{(t)},\delta_3^{(t)}$ likewise. Then by \eqref{eq:update_U} we know that
\begin{align}
    U^{(t+1)}_l=U^{(t)}_l-\eta \delta_l^{(t)}.
\end{align}
From Lemma~\ref{lm:gradient_compute} we can see that each $\delta_l^{(t)}$ consists of some positive terms and some negative terms. We denote the sum of all the positive terms as $\delta_l^{(t,+)}$ and the absolute value of the sum of all the negative terms as $\delta_l^{(t,-)}$.
Then we have
\begin{align}\label{eq:positive_negative_terms}
    \delta_l^{(t)}=\delta_l^{(t,+)}-\delta_l^{(t,-)}.
\end{align}

With the above definitions, we prove Lemma~\ref{lm:training_dynamics_I_1_B} by induction. In addition, we'll also show that after $b_2^{(t)}> 0$, $b_2^{(t)}\mu_1^{(t)}$ (i.e., $U_{3,0,1}^{(t)}$) monotonically increases.

By our initialization, Lemma~\ref{lm:training_dynamics_I_1_B} and the above claim hold when $t=0$.
We assume Lemma~\ref{lm:training_dynamics_I_1_B} and the above claim holds at time $t$ ($t\in\{0,1,\cdots,T_1^B-1\}$) and prove it at time $t+1$.

\paragraph{Step 1: Showing \eqref{eq:mu1_growth_I_1_B} holds at time $t+1$.} We break discussion into two cases.

\textbf{Case 1: $b_2^{(t)}\leq 0$.} When $b_2^{(t)}\leq 0$, by Lemma~\ref{lm:gradient_compute_B} and induction hypothesis 
we can compute that  
    \begin{align}\label{eq:delta_x_2-_I_1_B}
        \delta_{2,j,j}^{(t,-)}&\asymp-\delta x_{2,j,j}^{(t,m+1)}\notag\\
        &\asymp\frac{1}{S}\E\Bigg[
            \left(1-\alpha^{(t,m+1)}_{(0,j,0),(j,1)}-\alpha^{(t,m+1)}_{(\widetilde{c}_1(j),j,1),(j,1)}\right)^2\left(\alpha^{(t,m+1)}_{(0,j,0),(j,1)}+\alpha^{(t,m+1)}_{(\widetilde{c}_1(j),j,1),(j,1)}\right)\Bigg| n_{1}=j\Bigg]\notag\\
            &\asymp\frac{N^2\exp\left(\left(1+b_2^{(t)}\right)\mu_1^{(t)}\right)}{S\left(r^{(t,m+1)}\right)^3},
    \end{align}
and
\begin{align}\label{eq:delta_x_2+_I_1_B}
    \delta_{2,j,j}^{(t,+)}     &\asymp \sum_{k=2}^m\frac{1}{S}\E\Bigg[\left(\alpha^{(k)}_{(c_1(j),j,1),(j,1)}+\alpha^{(k)}_{(c_1(j),c_1^2(j),0),(j,1)}+\alpha^{(k)}_{(c_1(j),c_2(c_1(j)),0),(j,1)}\right)\alpha^{(k)}_{(c_1(j),j,1),(j,1)}\notag\\
    &\hspace{6cm}\cdot\left(1-\alpha^{(k)}_{(p(j),j,0),(j,1)}-\alpha^{(k)}_{(\widetilde{c}_1(j),j,1),(j,1)}\right)\Big|n^{-1}=j\Bigg]\notag\\
    &\asymp \frac{mN\exp\left(\left(2-b_2^{(t)}\right)\mu_1^{(t)}\right)}{S\left(r^{(t,k)}\right)^3}
\end{align}
for any $k\in\{2,\cdots,m\}$,
where the second and third line is the second term of the expression of $\delta y_{2,j,j}^{(m+1)}$ given in \eqref{eq:delta_y_2_j_j_k}.

By induction hypothesis we also have
\begin{align}\label{eq:r_m+1_bound_I_1_B}
    r^{(t,m+1)}&=\left(1+\widetilde\gO\left(\frac{1}{N}\right)\right)\bigg((N-1)\exp\left(b_2^{(t)}\mu_1^{(t)}\right)+\exp\left((b_2^{(t)}+1-a^{(t)})\mu_1^{(t)}\right)\notag\\
    &\quad+\exp\left((-a^{(t)}-b_2^{(t)})\mu_1^{(t)}\right)+(m-1)\exp\left(-b_2^{(t)}\mu_1^{(t)}\right)+\exp\left((1-b_2^{(t)})\mu_1^{(t)}\right)\bigg)\notag\\
    &\asymp N\exp\left(b_2^{(t)}\mu_1^{(t)}\right),
\end{align}
where the second relation follows from \eqref{eq:a_bound_I_1_B} and \eqref{eq:1-2b2_bound_I_1_B}, and for all $k\in\{2,\cdots,m\}$ we have
\begin{align}\label{eq:r_k<=m_I_1_B}
    r^{(t,k)}&=\left(1+\widetilde\gO\left(\frac{1}{N}\right)\right)\bigg((N-2)\exp\left(b_2^{(t)}\mu_1^{(t)}\right)+\exp\left((b_2^{(t)}+1)\mu_1^{(t)}\right)
    +\exp\left((-b_2^{(t)}-a^{(t)})\mu_1^{(t)}\right)\notag\\
    &\quad+\exp\left((b_2^{(t)}-a^{(t)})\mu_1^{(t)}\right)+(k-2)\exp\left(-b_2^{(t)}\mu_1^{(t)}\right)+\exp\left((1-b_2^{(t)})\mu_1^{(t)}\right)\bigg)\notag\\
    &\asymp N\exp\left(b_2^{(t)}\mu_1^{(t)}\right).
\end{align}
(note that this relations holds during Phase I.1-C, even when $b_2^{(t)}>0$.)
Thus we have
\begin{align}\label{eq:r_equal_I_1_B}
    r^{(t,m+1)}&\asymp r^{(t,k)},\quad \forall k\in\{2,\cdots,m\}.
\end{align}
By \eqref{eq:1-2b2_bound_I_1_B} we know that $\delta x_{2,j,j}^{(t,-)}$ dominates $\delta x_{2,j,j}^{(t,+)}$, 
and thus
\begin{align}\label{eq:delta_x_2_bound_I_1_B}
    -\delta_{2,j,j}^{(t)}\asymp \delta_{2,j,j}^{(t,-)}\asymp \frac{\exp\left(\left(1-2b_2^{(t)}\right)\mu_1^{(t)}\right)}{SN}\gtrsim \frac{1}{SN},
\end{align}
where the second relation follows from \eqref{eq:delta_x_2-_I_1_B}, \eqref{eq:r_equal_I_1_B}, and the last relation follows from \eqref{eq:1-2b2_bound_I_1_B}.
\eqref{eq:delta_x_2_bound_I_1_B} indicates that \eqref{eq:mu1_growth_I_1_B} holds at time $t+1$ if $b_2^{(t)}\leq 0$,

\textbf{Case 2: $b_2^{(t)}>0$.} When $b_2^{(t)}>0$, by Lemma~\ref{lm:gradient_compute_B} and induction hypothesis we have
\begin{align}\label{eq:delta_2_jj_I_1_B_d2_-}
 -\delta_{2,j,j}^{(t)}&\asymp \delta_{2,j,j}^{(t,-)}\notag\\
    &\asymp \sum_{k=1}^m\frac{1}{S}\E\Bigg[
        \left(1-\alpha^{(k)}_{(p(j),j,0),(j,1)}-\alpha^{(k)}_{(p(j),s(j),0),(j,1)}\right)\alpha^{(k)}_{(p(j),j,0),(j,1)}\notag\\
        &\hspace{3.5cm}\cdot\left(1-\alpha^{(k)}_{(p(j),j,0),(j,1)}-\alpha^{(k)}_{(\widetilde{c}_1(j),j,1),(j,1)}\right)\Big|n^{-1}=j\Bigg]\notag\\
        &\asymp \frac{mN^2\exp\left(\left(1+3b_2^{(t)}\right)\mu_1^{(t)}\right)}{S\left(r^{(t,k)}\right)^3}\asymp \frac{m\exp\left(\mu_1^{(t)}\right)}{SN}\gtrsim \frac{1}{SN}.
\end{align}
where the first and second line corresponds to the first term of $\delta y_{2,j,j}^{(k)}$ given in \eqref{eq:delta_y_2_j_j_k}, and $k\in\{2,\cdots,m\}$, and the last relation uses \eqref{eq:r_k<=m_I_1_B}.
Therefore, \eqref{eq:delta_x_2_bound_I_1_B} holds at time $t+1$ if $b_2^{(t)}>0$.


\paragraph{Step 2: Showing \eqref{eq:1-2b2_bound_I_1_B} holds at time $t+1$ if $b_2^{(t)}> 0$, then $U_{3,0,1}^{(t+1)}\geq U_{3,0,1}^{(t)}$.}
We first show by contradiction that if $b_2^{(t+1)}\leq 0$, then we have
$$\exp\left((1-2b_2^{(t+1)})\mu_1^{(t+1)}\right)\lesssim \frac{N}{m}.$$
Suppose for contradiction that $b_2^{(t+1)}\leq 0$ and 
\begin{align}\label{eq:1-2b2_bound_I_1_B_induction_contradiction}
    \exp\left((1-2b_2^{(t+1)})\mu_1^{(t+1)}\right)>\frac{CN}{m}.
\end{align}
By the induction hypothesis we have
$$b_2^{(s)}\leq 0,\quad \forall s\in\{0,1,\cdots,t\},$$
and there exists some absolute constant $C>0$ such that 
\begin{align}\label{eq:1-2b2_bound_I_1_B_induction}
    \exp\left((1-2b_2^{(s)})\mu_1^{(s)}\right)\leq \frac{CN}{m},\quad \forall s\in\{0,1,\cdots,t\}.
\end{align}

By \eqref{eq:1-2b2_bound_I_1_B_induction_contradiction}, when the learning rate $\eta\lesssim \frac{1}{mN}$, we have
\begin{align}\label{eq:1-2b2_bound_I_1_B_induction_contradiction_2}
    \exp\left((1-2b_2^{(t)})\mu_1^{(t)}\right)\geq \frac{CN}{2m}.
\end{align}
Using the induction hypothesis and Lemma~\ref{lm:gradient_compute_B}, we can compute that
\begin{align}\label{eq:delta_3_0_1_+_I_1_B}
    \delta_{3,0,1}^{(t,+)}
    &\asymp \frac{1}{S}\E\Bigg[\left(1-\alpha^{(t,m+1)}_{(n_2,j,1),(j,1)}-\alpha^{(t,m+1)}_{(n_2,c_1(n_2),0),(j,1)}-\alpha^{(t,m+1)}_{(n_2,c_2(n_2),0),(j,1)}\right)\alpha^{(t,m+1)}_{(n_2,j,1),(j,1)}\notag\\
    &\hspace{5cm}\cdot\left(\sum_{q\in[S]}\alpha^{(t,m+1)}_{(\widetilde{p}(q),q,0),(j,1)}\right)\Big|n_1=j\Bigg]\notag\\
    &\quad + \frac{1}{S}\E\Bigg[\left(1-\alpha^{(t,m+1)}_{(0,j,0),(j,1)}-\alpha^{(t,m+1)}_{(\widetilde{c}_1(j),j,1),(j,1)}\right)\alpha^{(t,m+1)}_{(0,j,0),(j,1)}\left(\sum_{q\in[S]}\alpha^{(t,m+1)}_{(\widetilde{p}(q),q,0),(j,1)}\right)\Big|n_1=j\Bigg]\notag\\
    &\asymp \frac{N^2\exp\left(\left(1+b_2^{(t)}\right)\mu_1^{(t)}\right)}{S\left(r^{(t,m+1)}\right)^3},
\end{align}
where the second and third line corresponds to the fourth term of $\delta y_{3,0,1}^{(m+1)}$ given in \eqref{eq:delta_y_3_*_1_m+1}, and the fourth line corresponds to the third term of $\delta x_{3,0,1}^{(m+1)}$ given in \eqref{eq:delta_x_3_*_1_m+1}. 
By the above expression and \eqref{eq:1-2b2_bound_I_1_B_induction_contradiction_2} we have
\begin{align}
\delta_{2,j,j}^{(t,-)}+2\delta_{3,0,1}^{(t,+)}&\asymp\frac{N^2\exp\left(\left(1+b_2^{(t)}\right)\mu_1^{(t)}\right)}{S\left(r^{(t,m+1)}\right)^3}\lesssim 
    \frac{mN}{CS\left(r^{(t,m+1)}\right)^3}\exp\left((2-b_2^{(t)})\mu_1^{(t)}\right).
\end{align}
By \eqref{eq:delta_x_2+_I_1_B}, \eqref{eq:r_equal_I_1_B} and the above expression we have when $C$ is large enough, $\delta_{2,j,j}^{(t,-)}+2\delta_{3,0,1}^{(t,+)}$ is dominated by $\delta_{2,j,j}^{(t,+)}$, leading to
\begin{align}
    \delta_{2,j,j}^{(t,-)}-\delta_{2,j,j}^{(t,+)}-2\left(\delta_{3,0,1}^{(t,-)}-\delta_{3,0,1}^{(t,+)}\right)\leq 0.
\end{align}
This implies that
\begin{align}
    \left(1-2b_2^{(t+1)}\right)\mu_1^{(t+1)}&\leq \left(1-2b_2^{(t)}\right)\mu_1^{(t)},
\end{align}
and thus by \eqref{eq:1-2b2_bound_I_1_B_induction} we have
$$\exp\left((1-2b_2^{(t+1)})\mu_1^{(t+1)}\right)\leq \frac{CN}{m},$$
which contradicts with \eqref{eq:1-2b2_bound_I_1_B_induction_contradiction}.

We next show 
$$b_2^{(t+1)}\leq \frac{1}{2}$$
by contradiction. If $b_2^{(t+1)}>\frac{1}{2}$, then when $\eta\lesssim \frac{1}{mN}$, we have
\begin{align}\label{eq:b2_bound_I_1_B_contradiction}
    b_2^{(t)}\geq\frac{1}{2}-\gO\left(\frac{1}{N}\right).
\end{align}

We can compute by Lemma~\ref{lm:gradient_compute_B} and induction hypothesis that
\begin{align}\label{eq:delta_3_0_1_-_I_1_B}
    \delta_{3,0,1}^{(t,-)}&\asymp -\sum_{k=1}^m\delta y_{3,0,1}^{(k)}\asymp \frac{mN\exp\left(\left(2+b_2^{(t)}\right)\mu_1^{(t)}\right)}{S\left(r^{(t,k)}\right)^3}\overset{\eqref{eq:delta_2_jj_I_1_B_d2_-}}{\leq} \gO\left(\frac{1}{N}\right) \left(-\delta_{2,j,j}^{(t)}\right),
\end{align}
where the last step also uses \eqref{eq:b2_bound_I_1_B_contradiction}.
This implies that from step $t$ to step $t+1$, 
$$\mu_1^{(t+1)}-\mu_1^{(t)}\gtrsim N\left(b_2^{(t+1)}\mu_1^{(t+1)}-b_2^{(t)}\mu_1^{(t)}\right),$$
and thus
\begin{align}
    b_2^{(t+1)}&\leq b_2^{(t)}\leq \frac{1}{2}.
\end{align}
This leads to a contradiction.

Lastly, if $b_2^{(t)}>0$, by \eqref{eq:delta_3_0_1_-_I_1_B} and \eqref{eq:delta_3_0_1_+_I_1_B} we have
$$\frac{\delta_{3,0,1}^{(t,-)}}{\delta_{3,0,1}^{(t,+)}}\mu_1^{(t)}\asymp \frac{m\exp\left(\mu_1^{(t)}\right)}{N},$$
indicating that $-\delta_{3,0,1}^{(t)}=\delta_{3,0,1}^{(t,-)}-\delta_{3,0,1}^{(t,+)}$ increases with $t$ after $b_2^{(t)}\geq 0$. Therefore, $U_{3,0,1}^{(t+1)}\geq U_{3,0,1}^{(t)}$.

 \paragraph{Step 3: Showing \eqref{eq:nu_bound_I_1_B} holds at time $t+1$.}
This can be shown by following a similar calculation as in Step 3 in the proof of Lemma~\ref{lm:relation_C_phase_1}, and we omit the details.

\paragraph{Step 4: Showing \eqref{eq:a_bound_I_1_B} holds at time $t+1$.}
If $a^{(t+1)}\geq 0$, then by our choice of $T_1^C$ we know that \eqref{eq:a_bound_I_1_B} holds at time $t+1$. Below we consider the case when $a^{(t+1)}<0$.
By induction hypothesis and Lemma~\ref{lm:gradient_compute_B}, we can compute that
\begin{align}\label{eq:delta_1_0_j_-_I_1_B}
    \delta_{1,0,j}^{(t,-)}&\asymp \frac{1}{S}\E\Bigg[\left(1-\alpha^{(t,m+1)}_{(0,j,0),(j,1)}-\alpha^{(t,m+1)}_{(\widetilde{c}_1(j),j,1),(j,1)}\right)\alpha^{(t,m+1)}_{(0,j,0),(j,1)}\left(1-\alpha^{(t,m+1)}_{(0,j,0),(j,1)}-\alpha^{(t,m+1)}_{(n_{2^m},j,1),(j,1)}\right)\Big|n_1=j\Bigg]\notag\\
    &\asymp \frac{N^2\exp\left(\left(3b_2^{(t)}+1-a^{(t)}\right)\mu_1^{(t)}\right)}{S\left(r^{(t,m+1)}\right)^3}.
\end{align}
where the first two lines correspond to the first term of $\delta x_{1,0,j}^{(m+1)}$ given in \eqref{eq:delta_x_1_0_j_m+1}.

We could also compute that 
\begin{align}\label{eq:delta_1_0_j_+_I_1_B}
    \delta_{1,0,j}^{(t,+)}&\gtrsim \frac{1}{S}\E\Bigg[
        \left(\alpha^{(t,m+1)}_{(0,j,0),(j,1)}+\alpha^{(t,m+1)}_{(0,n_{2^m},1),(j,1)}\right)^2\left(1-\alpha^{(t,m+1)}_{(0,j,0),(j,1)}-\alpha^{(t,m+1)}_{(0,n_{2^m},1),(j,1)}\right)\Big|n_1=j\Bigg]\notag\\
        &\asymp \frac{N\exp\left(\left(3b_2^{(t)}+2-2a^{(t)}\right)\mu_1^{(t)}\right)}{S\left(r^{(t,m+1)}\right)^3}.
\end{align}
By comparing the above expressions and using a contradiction argument similar as in Step 2 in the proof of Lemma~\ref{lm:training_dynamics_I_2_B}, we can show that 
\begin{align}
    \exp\left((1-a^{(t)})\mu_1^{(t)}\right)\lesssim  N.
\end{align}

\paragraph{Step 5: Showing $U_{1,0,j}^{(t+1)}> U_{1,0,j}^{(t)}$.}

By Lemma~\ref{lm:gradient_compute_B} and induction hypothesis, we can compute that for all $k\in\{m+2,\cdots,m+1+2^m\}$,
\begin{align}\label{eq:delta_x_3_0_0_I_1_B}
    \delta x_{3,0,0}^{(k)} &\asymp
    \frac{1}{S}\E\Bigg[\left(1-\alpha^{(k)}_{(p(j),j,0),(j,0)}-\alpha^{(k)}_{(c_1(j),j,1),(j,0)}\right)\alpha^{(k)}_{(c_1(j),j,1),(j,0)}\left(\sum_{q\in[S]}\alpha^{(k)}_{(\widetilde{p}(q),q,0),(j,0)}\right)\Big|n_{-1}=j\Bigg]\notag\\
    &\asymp \frac{\left(2\exp\left(2\mu_1\right)+N\exp\left(\mu_1\right)\right)\left(N\exp\left(-b_1\mu_1\right)+(m-1)\exp(b_1\mu_1)\right)}{S\left(r^{(k)}\right)^3},
\end{align}
and 
\begin{align}\label{eq:delta_y_3_0_0_I_1_B}
    \delta y_{3,0,0}^{(k)}
    &\asymp \frac{1}{S}\E\Bigg[\left(1-\alpha^{(k)}_{(i,c_1(i),0),(j,0)}-\alpha^{(k)}_{(i,c_2(i),0),(j,0)}-\alpha^{(k)}_{(i,j,1),(j,0)}\right)\alpha^{(k)}_{(i,j,1),(j,0)}\left(\sum_{q\in[S]}\alpha^{(k)}_{(\widetilde{p}(q),q,0),(j,0)}\right)\Big|n_{-1}=j\Bigg]\notag\\
    &\asymp \frac{\left(N\exp\left(\left(1-b_1\right)\mu_1\right)+2\exp\left(\left(2-b_1\right)\mu_1\right)+\exp\left((1+b_1)\mu_1\right)\right)\left(N+2\exp(\mu_1)\right)}{S\left(r^{(k)}\right)^3}.
\end{align}
Recall that Lemma~\ref{lm:gradient_compute_B} gives that for all $k\in[m+1]$:
\begin{align}
    \delta x_{3,0,0}^{(k+m+1)}=\delta y_{3,1,0}^{(k)}=0.
\end{align}
Thus by the above expressions we can see that $$\delta_{3,0,0}^{(t)}=\sum_{k=m+2}^{m+1+2^m}\delta x_{3,0,0}^{(t,k)}+\sum_{k=1}^{m}\delta y_{3,0,0}^{(t,k)}$$
is strictly positive, indicating that $U_{3,0,0}^{(t+1)}< U_{3,0,0}^{(t)}$.

\subsubsection{Proof of Lemma~\ref{lm:training_dynamics_I_2_B}}\label{sec_app:proof_training_dynamics_I_2_B}
We follow the same notation as in the proof of Lemma~\ref{lm:training_dynamics_I_1_B}.
We first show \eqref{eq:part_i} holds by induction.

From Lemma~\ref{lm:training_dynamics_I_1_B} we know that when $t=T_1^B$, \eqref{eq:part_i}   holds. We assume that \eqref{eq:part_i} holds for all $s\in\{T_1^B,T_1^B+2,\cdots,t\}$ ($T_1^B\leq t<T_2^B$). We next show that \eqref{eq:part_i} also holds for $t+1$.

\paragraph{Step 1: Showing $\mu_1^{(t+1)}>\mu_1^{(t)}$ and \eqref{eq:mu1_growth_I_2_B} holds for $t+1$.}
Similar to the calculation in Step 1 of Lemma~\ref{lm:training_dynamics_I_1_B}, we can compute that (note $\exp(\mu_1^{(t)})\gtrsim N$ by induction hypothesis, $\exp\left((1-2b_2^{(t)})\mu_1^{(t)}\right)\lesssim N$ by our choice of $T_1^B$)
\begin{align}\label{eq:r_ratio_I_2_B}
    \forall k\in[m]:\quad\frac{r^{(t,m+1)}}{r^{(t,k)}}\asymp \frac{N}{\exp\left(\mu_1^{(t)}\right)},
\end{align}
since
\begin{align}\label{eq:r_I_2_B_2}
    r^{(t,m+1)}\asymp N\exp(b_2^{(t)}\mu_1^{(t)}),\quad r^{(t,k)}\asymp \exp\left((1+b_2^{(t)})\mu_1^{(t)}\right),\quad \forall k\in[m].
\end{align}
And we can compute that
\begin{align}\label{eq:delta_2_j_j_I_2_B}
    \delta_{2,j,j}^{(t,+)} &\asymp \frac{mN\exp\left(\left(2-b_2^{(t)}\right)\mu_1^{(t)}\right)}{S\left(r^{(t,k)}\right)^3},\\
    \delta_{2,j,j}^{(t,-)} &\asymp \frac{N^2\exp\left(\left(1+b_2^{(t)}\right)\mu_1^{(t)}\right)}{S\left(r^{(t,m+1)}\right)^3}+\frac{mN^2\exp\left(\left(1+3b_2^{(t)}\right)\mu_1^{(t)}\right)}{S\left(r^{(t,k)}\right)^3}.
\end{align}
By \eqref{eq:r_ratio_I_2_B} and the above two expressions we can see that $\delta_{2,j,j}^{(t,-)}$ dominates $\delta_{2,j,j}^{(t,+)}$, and thus 
\begin{align}
    -\delta_{2,j,j}^{(t)}=\delta_{2,j,j}^{(t,-)}\gtrsim \frac{N^2\exp\left(\left(1+b_2^{(t)}\right)\mu_1^{(t)}\right)}{S\left(r^{(t,m+1)}\right)^3}\asymp \frac{\exp\left(\left(1-2b_2^{(t)}\right)\mu_1^{(t)}\right)}{SN}\geq \frac{1}{SN},
\end{align}
where the last inequality follows from the induction hypothesis that $b_2^{(t)}\leq \frac{1}{2}$.
The above expression implies that \eqref{eq:mu1_growth_I_2_B} holds for $t+1$, and $\mu_1^{(t+1)}>\mu_1^{(t)}$.

\paragraph{Step 2: Showing \eqref{eq:a_bound_I_2_B} holds for $t+1$.}
By Lemma~\ref{lm:gradient_compute_B} and induction hypothesis, we can compute that
\begin{align}\label{eq:delta_1_0_j_+_I_2_B}
    \delta_{1,0,j}^{(t,+)}&\asymp \delta y_{1,0,j}^{(t,m+1)}\notag\\
    &\asymp \frac{1}{S}\sum_{i\in[S]}\E\Bigg[\mathbbm{1}\{n_{2^{k-m}}=i\}
    \left(\alpha^{(t,k)}_{(0,j,0),(j,0)}+\alpha^{(t,k)}_{(0,n_{2^m},1),(j,0)}\right)^2\left(1-\alpha^{(t,k)}_{(0,j,0),(j,0)}-\alpha^{(t,k)}_{(0,n_{2^m},1),(j,0)}\right)\notag\\
    &\quad + \left(1-\alpha^{(t,k)}_{(i,c_1(i),0),(j,0)}-\alpha^{(t,k)}_{(i,c_2(i),0),(j,0)}-\alpha^{(t,k)}_{(i,j,1),(j,0)}\right)\left(\alpha^{(t,k)}_{(i,c_1(i),0),(j,0)}+\alpha^{(t,k)}_{(i,c_2(i),0),(j,0)}+\alpha^{(t,k)}_{(i,j,1),(j,0)}\right)\notag\\
    &\qquad\qquad\cdot\left(\alpha^{(t,k)}_{(0,j,0),(j,0)}+\alpha^{(t,k)}_{(0,n_{2^m},1),(j,0)}\right)\Bigg|n_1=j\Bigg]\notag\\
    &\asymp \underbrace{\frac{N\exp\left(\left(3b_2^{(t)}+2-2a^{(t)}\right)\mu_1^{(t)}\right)}{S\left(r^{(t,m+1)}\right)^3}}_{(a)}+\underbrace{\frac{\exp\left(\left(b_2^{(t)}+2-a^{(t)}\right)\mu_1^{(t)}\right)\left(N+\exp\left(\left(1-a^{(t)}\right)\mu_1^{(t)}\right)\right)}{S\left(r^{(t,m+1)}\right)^3}}_{(b)}.
\end{align}
And same as \eqref{eq:delta_1_0_j_-_I_1_B}, here we also have
\begin{align}\label{eq:delta_1_0_j_-_I_2_B}
    \delta_{1,0,j}^{(t,-)}&\asymp \frac{N^2\exp\left(\left(3b_2^{(t)}+1-a^{(t)}\right)\mu_1^{(t)}\right)}{S\left(r^{(t,m+1)}\right)^3}.
\end{align}
We show by contradiction that 
\begin{align}
    \exp\left((1-a^{(t)})\mu_1^{(t)}\right)\lesssim N.
\end{align}
By the induction hypothesis, there exists some absolute constant $C>0$ such that
\begin{align}\label{eq:a_bound_I_2_B_induction}
    \exp\left((1-a^{(s)})\mu_1^{(s)}\right)\leq CN,\quad \forall s\in\{T_1^B,T_1^B+2,\cdots,t\}.
\end{align}
If at time $t+1$, we can compute that
\begin{align}\label{eq:a_bound_I_2_B_contradiction_1}
    \exp\left((1-a^{(t)})\mu_1^{(t)}\right)>CN,
\end{align}
then when $\eta\lesssim \frac{1}{mN}$, we have
\begin{align}\label{eq:a_bound_I_2_B_contradiction}
    \exp\left((1-a^{(t)})\mu_1^{(t)}\right)> \frac{CN}{2}.
\end{align}
When $C$ is large enough, by \eqref{eq:delta_1_0_j_+_I_2_B} and \eqref{eq:delta_1_0_j_-_I_2_B} we have the term (a) is larger than $\delta_{1,0,j}^{(t,-)}$, and thus 
\begin{align}
    \delta_{1,0,j}^{(t)}\asymp \delta_{1,0,j}^{(t,+)}.
\end{align}
Moreover, we have
\begin{align}\label{eq:r_ratio_I_2_B_contradiction}
    r^{(t,m+1)}\asymp \exp\left(\left(1+b_2^{(t)}-a^{(t)}\right)\mu_1^{(t)}\right),\quad r^{(t,k)}\asymp \exp\left(\left(1+b_2^{(t)}\right)\mu_1^{(t)}\right),\quad \forall k\in[m],
\end{align}
and in Step 1 we have computed that 
\begin{align}\label{eq:delta_2_j_j_I_2_B_contradiction}
    -\delta_{2,j,j}^{(t)}\asymp \underbrace{\frac{N^2\exp\left(\left(1+b_2^{(t)}\right)\mu_1^{(t)}\right)}{S\left(r^{(t,m+1)}\right)^3}}_{(c)}+\underbrace{\frac{mN^2\exp\left(\left(1+3b_2^{(t)}\right)\mu_1^{(t)}\right)}{S\left(r^{(t,k)}\right)^3}}_{(d)}.
\end{align}
By \eqref{eq:delta_1_0_j_+_I_2_B} we have that
\begin{align}\label{eq:(b)>(c)}
    (b)\gtrsim\frac{N\exp\left(\left(b_2^{(t)}+2-a^{(t)}\right)\mu_1^{(t)}\right)}{S\left(r^{(t,m+1)}\right)^3}\gtrsim\frac{CN^2\exp\left(\left(1+b_2^{(t)}-a^{(t)}\right)\mu_1^{(t)}\right)}{S\left(r^{(t,m+1)}\right)^3}\gtrsim C\times(c),
\end{align}
where the last step uses \eqref{eq:a_bound_I_2_B_contradiction}.
By \eqref{eq:delta_1_0_j_+_I_2_B}, \eqref{eq:delta_2_j_j_I_2_B_contradiction} and \eqref{eq:r_ratio_I_2_B_contradiction} we have that
\begin{align}
    (d)&=\frac{mN^2\exp\left(\left(1+3b_2^{(t)}\right)\mu_1^{(t)}\right)}{S\left(r^{(t,m+1)}\right)^3}\frac{\left(r^{(t,m+1)}\right)^3}{\left(r^{(t,k)}\right)^3}\notag\\
    &\overset{\eqref{eq:r_ratio_I_2_B_contradiction}}\asymp \frac{mN^2\exp\left(\left(1+3b_2^{(t)}-3a^{(t)}\right)\mu_1^{(t)}\right)}{S\left(r^{(t,m+1)}\right)^3}=\frac{mN^2\exp\left(\left(1+3b_2^{(t)}-3a^{(t)}\right)\mu_1^{(t)}\right)}{S\left(r^{(t,m+1)}\right)^3}.
\end{align} 
where $k\in[m]$ is arbitrary.
On the other hand, by \eqref{eq:delta_1_0_j_+_I_2_B} and \eqref{eq:a_bound_I_2_B_contradiction} we have that
\begin{align}\label{eq:(a)>(d)}
    (a)\gtrsim \frac{CN^2\exp\left(\left(3b_2^{(t)}+1-a^{(t)}\right)\mu_1^{(t)}\right)}{S\left(r^{(t,m+1)}\right)^3}\gtrsim C\times(d).
\end{align}
By \eqref{eq:(b)>(c)} and \eqref{eq:(a)>(d)} we have 
\begin{align}
    \delta_{1,0,j}^{(t)}\gtrsim -C\delta_{2,j,j}^{(t)}.
\end{align}
This indicates that we could choose $C$ large enough such that 
\begin{align}
    \left(1-a^{(t+1)}\right)\mu_1^{(t+1)}<\left(1-a^{(t)}\right)\mu_1^{(t)}.
\end{align}
Combining this with \eqref{eq:a_bound_I_2_B_induction} we have that 
\begin{align}
    \exp\left((1-a^{(t+1)})\mu_1^{(t+1)}\right)\leq CN.
\end{align}
This contradicts with \eqref{eq:a_bound_I_2_B_contradiction_1}.

\paragraph{Step 3: Showing \eqref{eq:b2_bound_I_2_B} holds for $t+1$.}
Same as \eqref{eq:delta_3_0_1_-_I_1_B}, \eqref{eq:delta_3_0_1_+_I_1_B} given in the proof of Lemma~\ref{lm:training_dynamics_I_1_B}, here we also have
\begin{align}\label{eq:delta_3_0_1_-_I_2_B}
    \delta_{3,0,1}^{(t,-)}&\asymp -\sum_{k=1}^m\delta y_{3,0,1}^{(k)}\asymp \frac{mN\exp\left(\left(2+b_2^{(t)}\right)\mu_1^{(t)}\right)}{S\left(r^{(t,k)}\right)^3}
\end{align}
and 
\begin{align}\label{eq:delta_3_0_1_+_I_2_B}
    \delta_{3,0,1}^{(t,+)}\asymp \frac{N^2\exp\left(\left(1+b_2^{(t)}\right)\mu_1^{(t)}\right)}{S\left(r^{(t,m+1)}\right)^3}\gtrsim \frac{N^2\exp\left(\left(1+b_2^{(t)}\right)\mu_1^{(t)}\right)}{S\left(r^{(t,k)}\right)^3}
\end{align}
where the last line uses \eqref{eq:r_ratio_I_2_B}. Thus following the same contradiction argument as in Step 2 in the proof of Lemma~\ref{lm:training_dynamics_I_1_B}, we can show that $b_2^{(t+1)}\leq 1/2$. 

On the other hand, by our choice of $T_2^B$ we have that
\begin{align}
    r^{(t,m+1)}\gtrsim \exp\left(\left(1-b_2^{(t)}\right)\mu_1^{(t)}\right).
\end{align}
This together with \eqref{eq:r_I_2_B_2}, \eqref{eq:delta_3_0_1_+_I_2_B} yields
\begin{align}
    \delta_{3,0,1}^{(t,+)}&\lesssim \frac{N^2\exp\left(\left(1+b_2^{(t)}\right)\mu_1^{(t)}\right)}{S\left(r^{(t,k)}\right)^3}\cdot \exp\left(6b_2^{(t)}\mu_1^{(t)}\right)\notag\\
    &=\frac{N^2\exp\left(\left(1+7b_2^{(t)}\right)\mu_1^{(t)}\right)}{S\left(r^{(t,k)}\right)^3} \lesssim \frac{N\exp\left(\left(2+7b_2^{(t)}\right)\mu_1^{(t)}\right)}{S\left(r^{(t,k)}\right)^3}.
\end{align}
By comparing the above expression with \eqref{eq:delta_3_0_1_-_I_2_B} we have that if $b_2^{(t)}\leq 0$, $\delta_{3,0,1}^{(t,-)}$ dominates $\delta_{3,0,1}^{(t,+)}$, and thus it can also be shown by contradiction that $b_2^{(t+1)}\geq0$.

\paragraph{Step 4: Showing \eqref{eq:nu_bound_I_2_B} holds for $t+1$, and $U_{3,1,0}^{(t+1)}> U_{3,1,0}^{(t)}$.}
\eqref{eq:nu_bound_I_2_B} can be shown by following a similar calculation as in Step 3 in the proof of Lemma~\ref{lm:relation_C_phase_1}, and $U_{3,1,0}^{(t+1)}> U_{3,1,0}^{(t)}$ can be shown by following the same calculation as in Step 5 in the proof of Lemma~\ref{lm:training_dynamics_I_1_B}. We omit the details here.
The induction is complete. 

We now move on to show \eqref{eq:partii}.

\paragraph{Step 5: Showing \eqref{eq:a_bound_I_2_B_end} holds.}
After 
\begin{align}\label{eq:condition_I_2_B_end_1}
    \exp\left((1-2b_2^{(T_2^B)})\mu_1^{(T_2^B)}\right)\asymp N,\quad \exp\left(\left(1-a^{(T_2^B)}\right)\mu_1^{(T_2^B)}\right)\asymp N,
\end{align}
we can compute that 
\begin{align}\label{eq:r_I_2_B_end}
    r^{(T_2^B,m+1)}\asymp \exp\left((1-b_2^{(T_2^B)})\mu_1^{(T_2^B)}\right),\quad r^{(T_2^B,k)}\asymp \exp\left((1+b_2^{(T_2^B)})\mu_1^{(T_2^B)}\right),\quad \forall k\in[m],
\end{align}
and
\begin{align}\label{eq:delta_3_0_1_-_I_2_B_end}
    \delta_{3,0,1}^{(t,-)}&\asymp-\sum_{k=1}^m\delta y_{3,0,1}^{(t,k)}\asymp \frac{mN\exp\left(\left(3-b_2^{(t)}\right)\mu_1^{(t)}\right)}{S\left(r^{(t,k)}\right)^3}\asymp \frac{mN\exp\left(\left(3-7b_2^{(t)}\right)\mu_1^{(t)}\right)}{S\left(r^{(t,m+1)}\right)^3},
\end{align}
\begin{align}\label{eq:delta_3_0_1_+_I_2_B_end}
    \delta_{3,0,1}^{(t,+)}\asymp \delta y_{3,0,1}^{(t,m+1)}\asymp \frac{N^2\exp\left(\left(3-2a^{(t)}+b_2^{(t)}\right)\mu_1^{(t)}\right)}{S\left(r^{(t,m+1)}\right)^3}.
\end{align}
Note that when $b_2^{(t)}=1/4$, the above two terms are equal in magnitude. If $\delta_{3,0,1}^{(t,-)}$ (resp. $\delta_{3,0,1}^{(t,+)}$) dominates $\delta_{3,0,1}^{(t,+)}$ (resp. $\delta_{3,0,1}^{(t,-)}$), then $b_2^{(t+1)}$ will increase (resp. decrease) to near $1/4$.
Thus by contradiction similar as in Step 3 we can show that when $C_0$ in \eqref{eq:T_2_B} is large enough, there exists $s\leq T_2^B$ such that 
\begin{align}
    \forall t\in\{s,s+1,\cdots,T_2^B\}:\quad b_2^{(t)}= \left(\frac{1}{4}\pm\widetilde\gO\left(\frac{1}{N}\right)\right)a^{(t)}.
\end{align}

\paragraph{Step 6: Showing \eqref{eq:a_bound_I_2_B_end} holds.}
When \eqref{eq:condition_I_2_B_end_1} is satisfied, by Lemma~\ref{lm:gradient_compute_B}
we can compute that
\begin{align}\label{eq:delta_1_0_j_-_I_2_B_end}
    \delta_{1,0,j}^{(t,-)}
    &\asymp \frac{1}{S}\E\Bigg[-\left(1-\alpha^{(t,m+1)}_{(0,j,0),(j,1)}-\alpha^{(t,m+1)}_{(\widetilde{c}_1(j),j,1),(j,1)}\right)\alpha^{(t,m+1)}_{(0,j,0),(j,1)}\left(1-\alpha^{(t,m+1)}_{(0,j,0),(j,1)}-\alpha^{(t,m+1)}_{(0,n_{2^m},1),(j,1)}\right)\Big|n_1=j\Bigg]\notag\\
    &\asymp\frac{N\exp\left(\left(b_2^{(t)}+2-a^{(t)}\right)\mu_1^{(T_2^B)}\right)}{S\left(r^{(t,m+1)}\right)^3}\gtrsim \delta_{1,0,j}^{(t,+)}. 
\end{align}
In addition, 
\begin{align}
    \delta_{1,0,j}^{(t,+)}&\asymp \delta y_{1,0,j}^{(t,m+1)}\notag\\
    &\asymp \frac{1}{S}\sum_{i\in[S]}\E\Bigg[\mathbbm{1}\{n_{2^{k-m}}=i\}
    \left(\alpha^{(t,k)}_{(0,j,0),(j,0)}+\alpha^{(t,k)}_{(0,n_{2^m},1),(j,0)}\right)^2\left(1-\alpha^{(t,k)}_{(0,j,0),(j,0)}-\alpha^{(t,k)}_{(0,n_{2^m},1),(j,0)}\right)\notag\\
    &\quad + \left(1-\alpha^{(t,k)}_{(i,c_1(i),0),(j,0)}-\alpha^{(t,k)}_{(i,c_2(i),0),(j,0)}-\alpha^{(t,k)}_{(i,j,1),(j,0)}\right)\left(\alpha^{(t,k)}_{(i,c_1(i),0),(j,0)}+\alpha^{(t,k)}_{(i,c_2(i),0),(j,0)}+\alpha^{(t,k)}_{(i,j,1),(j,0)}\right)\notag\\
    &\qquad\qquad\cdot\left(\alpha^{(t,k)}_{(0,j,0),(j,0)}+\alpha^{(t,k)}_{(0,n_{2^m},1),(j,0)}\right)\Bigg|n_1=j\Bigg]\notag\\
    &\asymp \underbrace{\frac{N\exp\left(\left(3b_2^{(t)}+2-2a^{(t)}\right)\mu_1^{(t)}\right)}{S\left(r^{(t,m+1)}\right)^3}}_{(a)}+\underbrace{\frac{\exp\left(\left(b_2^{(t)}+2-a^{(t)}\right)\mu_1^{(t)}\right)\left(N+\exp\left(\left(1-a^{(t)}\right)\mu_1^{(t)}\right)\right)}{S\left(r^{(t,m+1)}\right)^3}}_{(b)}.
\end{align}
\normalsize
By comparing the above two expressions we can see that if 
$$\left(1-a^{(t)}\right)\mu_1^{(t)}\lesssim CN$$
for some small enough constant $C>0$, then $\delta_{1,0,j}^{(t,-)}$ dominates $\delta_{1,0,j}^{(t,+)}$, causing $a^{(t+1)}\mu_1^{(t+1)}<a^{(t)}\mu_1^{(t)}$. And since $\mu_1^{(t+1)}>\mu_1^{(t)}$, $(1-a^{(t+1)})\mu_1^{(t+1)}$ increases faster than $\mu_1^{(t)}$ and could reacch $\gO(N)$ before $t$ reaches $T_2^B$.

\subsubsection{Proof of Lemma~\ref{lm:training_dynamics_II_B}}\label{sec_app:proof_training_dynamics_II_B}

\paragraph{Proof of \eqref{eq:partiB}.}
We use the same notation as in the proof of Lemma~\ref{lm:training_dynamics_I_1_B}, and prove \eqref{eq:partiB} by induction.
By Lemma~\ref{lm:training_dynamics_I_2_B} we know that that \eqref{eq:partiB} holds for $t=T_2^B$. We assume that \eqref{eq:partiB} holds for $s\in \{T_2^B,T_2^B+1,\cdots,t\}$, and will show that it continues to hold for $t+1$.
In fact, the relations \eqref{eq:a_bound_II_B}, \eqref{eq:b2_bound_II_B} and \eqref{eq:nu_bound_II_B} hold for $t+1$, as well as that $b_1^{(t+1)}\mu_1^{(t+1)}>b_1^{(t)}\mu_1^{(t)}$ can be shown by following a similar calculation as in the proof of Lemma~\ref{lm:training_dynamics_I_2_B}. We omit the details here and only compute that \eqref{eq:mu1_growth_II_B} holds for $t+1$.

Similar as in the proof of Lemma~\ref{lm:training_dynamics_I_2_B}, we can compute that  
\begin{align}\label{eq:r_II_B}
    r^{(t,k)}=\begin{cases}
        \exp\left(\widetilde\gO\left(\pm\frac{\mu_1^{(t)}}{N}\right)\right) \exp\left((1+b_2^{(t)})\mu_1^{(t)}\right), & k\in[m],\\
        \exp\left(\widetilde\gO\left(\pm\frac{\mu_1^{(t)}}{N}\right)\right) \exp\left((1-b_2^{(t)})\mu_1^{(t)}\right), & k=m+1,\\
        \exp\left(\widetilde\gO\left(\pm\frac{\mu_1^{(t)}}{N}\right)\right) \exp\left((1+b_1^{(t)})\mu_1^{(t)}\right), & k\in\{m+2,\cdots, 2m\}.
    \end{cases}.
\end{align}
Then following a similar calculation as in the proof of Lemma \ref{lm:training_dynamics_I_2_B}, we can compute that
\begin{align}\label{eq:delta_2_j_j_II_B}
\delta_{2,j,j}^{(t)}&=\exp\left(\pm\widetilde\gO\left(\frac{\mu_1^{(t)}}{N}\right)\right) \frac{N^2\exp\left(\left(1+b_2^{(t)}\right)\mu_1^{(t)}\right)}{S\left(r^{(t,m+1)}\right)^3}\notag\\
    &= \exp\left(\pm\widetilde\gO\left(\frac{\mu_1^{(t)}}{N}\right)\right) \frac{N^2\exp\left(\left(4b_2^{(t)}-2\right)\mu_1^{(t)}\right)}{S}\notag\\
    &\geq \frac{\exp\left(\left(4\times 0.24-2-0.01\right)\mu_1^{(T_3^B)}\right)}{S}\overset{\eqref{eq:T_3_B}}\geq \frac{1}{S}
\left(\frac{\epsilon}{6m}\right)^{3/2}
\end{align}
when $\epsilon$ is small enough, where the last step uses the induction hypothesis, and thus \eqref{eq:mu1_growth_II_B} holds for $t+1$.

\paragraph{Proof of \eqref{eq:partiiB}.}
By \eqref{eq:a_bound_II_B} we know that when $\epsilon$ is small enough, \eqref{eq:a_bound_II_B_end} holds.
For \eqref{eq:b_bound_II_B_end}, by computation we could verify that
\eqref{eq:delta_x_3_0_0_I_1_B}, \eqref{eq:delta_y_3_0_0_I_1_B} is still valid here, and by which we deduce that when $\mu_1^{(t)}$ is large enough:
\small
\begin{align}
    \delta_{3,0,0}^{(t)}&=\exp\left(\pm\widetilde\gO\left(\frac{\mu_1^{(t)}}{N}\right)\right)\Bigg(\sum_{k=m+1}^{2m}\frac{\left(2\exp\left(2\mu_1^{(t)}\right)+N\exp\left(\mu_1^{(t)}\right)\right)\left(N\exp\left(-b_1^{(t)}\mu_1^{(t)}\right)+(m-1)\exp(b_1^{(t)}\mu_1^{(t)})\right)}{S\left(r^{(t,k)}\right)^3}\notag\\
&\quad+\frac{\left(N\exp\left(\left(1-b_1^{(t)}\right)\mu_1^{(t)}\right)+2\exp\left(\left(2-b_1^{(t)}\right)\mu_1^{(t)}\right)+\exp\left((1+b_1^{(t)})\mu_1^{(t)}\right)\right)\left(N+2\exp(\mu_1^{(t)})\right)}{S\left(r^{(t,k)}\right)^3}\Bigg)\notag\\
    &\overset{\eqref{eq:r_II_B}}= \exp\left(\pm\widetilde\gO\left(\frac{\mu_1^{(t)}}{N}\right)\right)\frac{\exp\left(\left(3-b_1^{(t)}\right)\mu_1^{(t)}\right)+\exp\left(\left(2+b_1^{(t)}\right)\mu_1^{(t)}\right)}{S\left(\left(3+3b_1^{(t)}\right)\mu_1^{(t)}\right)^3}.
\end{align}
\normalsize

On the other hand, we can verify by straightforward calculation that \eqref{eq:delta_3_0_1_-_I_2_B_end} and \eqref{eq:delta_3_0_1_+_I_2_B_end} are still valid, by which we have
\begin{align}
    -\delta_{3,0,1}^{(t)}&\lesssim \exp\left(\pm\widetilde\gO\left(\frac{\mu_1^{(t)}}{N}\right)\right)\frac{\exp\left(\left(3-b_2^{(t)}\right)\mu_1^{(t)}\right)}{S\exp\left(\left(3+3b_2^{(t)}\right)\mu_1^{(t)}\right)}.
\end{align}
By comparing the above two expressions and
using contradiction similar as, for example, Step 2 in the proof of Lemma~\ref{lm:training_dynamics_I_2_B}, we can show that 
$$b_1^{(t)}\geq b_2^{(t)}-\widetilde\gO\left(\frac{1}{N}\right).$$


\subsubsection{Proof of Lemma~\ref{lm:convergence_B}}\label{sec_app:proof_convergence_B}

By Lemma~\ref{lm:loss_simplification_reasoning} we know that 
for all $k\in[m]$, we have
\begin{align}
    \Delta_x^{(t,k)} &= \frac{1}{2}\sum_{j=1}^S\E\Bigg[\mathbbm{1}\{n_{-1}=j\}\Bigg\{\left(1-\alpha^{(t,k)}_{(p(j),j,0),(j,1)}-\alpha^{(t,k)}_{(\widetilde{c}_1(j),j,1),(j,1)}\right)^2\notag\\
    &\hspace{2cm}+\sum_{q\in[S]\setminus\{j\}}\left(
        \alpha^{(t,k)}_{(p(q),q,0),(j,1)}+\alpha^{(t,k)}_{(\widetilde{c}_1(q),q,1),(j,1)}\right)^2\Bigg\}\Bigg]\notag\\
        &\leq \sum_{j=1}^S\E\Bigg[\mathbbm{1}\{n_{-1}=j\}\Bigg\{\left(1-\alpha^{(t,k)}_{(p(j),j,0),(j,1)}-\alpha^{(t,k)}_{(\widetilde{c}_1(j),j,1),(j,1)}\right)^2\Bigg\}\Bigg]\notag\\
        &\leq \exp\left(\pm\widetilde\gO\left(\frac{\mu_1^{(t)}}{N}\right)\right) \left(\frac{(N+2m-1)\exp\left(b_2^{(t)}\mu_1^{(t)}\right)}{\exp\left(\left(1+b_2^{(t)}\right)\mu_1^{(t)}\right)+(N+2m-1)\exp\left(b_2^{(t)}\mu_1^{(t)}\right)}\right)^2.
\end{align}
where the second inequality follows from that fact that 
$$(u_1+\ldots+u_n)^2\geq \sum_{i=1}^n u_i^2,\quad \forall u_i\geq 0,$$ 
and the last relation uses Lemma~\ref{lm:training_dynamics_II_B} (and the fact that the relations other than \eqref{eq:mu1_growth_II_B} in Lemma~\ref{lm:training_dynamics_II_B} still hold after time $T_3^B$).

Following the same computation we can obtain that when $k=m+1$,
\begin{align}
    \Delta_x^{(t,m+1)} &= \exp\left(\pm\widetilde\gO\left(\frac{\mu_1^{(t)}}{N}\right)\right) \left(\frac{(N+2m-1)\exp\left(b_2^{(t)}\mu_1^{(t)}\right)}{\exp\left(\left(1-b_2^{(t)}\right)\mu_1^{(t)}\right)+(N+2m-1)\exp\left(b_2^{(t)}\mu_1^{(t)}\right)}\right)^2,
\end{align}
when $k\in\{m+2,\cdots,2m\}$,
\begin{align}
    \Delta_x^{(t,k)} &= \exp\left(\pm\widetilde\gO\left(\frac{\mu_1^{(t)}}{N}\right)\right) \left(\frac{(N+2m-1)\exp\left(b_2^{(t)}\mu_1^{(t)}\right)}{\exp\left(\left(1+b_1^{(t)}\right)\mu_1^{(t)}\right)+(N+2m-1)\exp\left(b_2^{(t)}\mu_1^{(t)}\right)}\right)^2,
\end{align}
and for $\Delta_y^{(t,k)}$, we have for all $k\in[m]$,
\begin{align}
    \Delta_y^{(t,k)} &= \exp\left(\pm\widetilde\gO\left(\frac{\mu_1^{(t)}}{N}\right)\right) \left(\frac{(N+2m-1)\exp\left(1-b_2^{(t)}\mu_1^{(t)}\right)}{\exp\left(\left(1+b_2^{(t)}\right)\mu_1^{(t)}\right)+(N+2m-1)\exp\left(1-b_2^{(t)}\mu_1^{(t)}\right)}\right)^2, \\
    \Delta_y^{(t,m+1)} &= \exp\left(\pm\widetilde\gO\left(\frac{\mu_1^{(t)}}{N}\right)\right) \left(\frac{(N+2m-1)\exp\left(1-b_2^{(t)}\mu_1^{(t)}\right)}{\exp\left(\left(1+b_2^{(t)}\right)\mu_1^{(t)}\right)+(N+2m-1)\exp\left(1-b_2^{(t)}\mu_1^{(t)}\right)}\right)^2,
\end{align}
and for $k\in\{m+2,\cdots,2m\}$,
\begin{align}
    \Delta_y^{(t,k)} &= \exp\left(\pm\widetilde\gO\left(\frac{\mu_1^{(t)}}{N}\right)\right) \left(\frac{(N+2m-1)\exp\left(\max\{1-b_1^{(t)}, b_1^{(t)}\}\mu_1^{(t)}\right)}{\exp\left(\left(1+b_1^{(t)}\right)\mu_1^{(t)}\right)+(N+2m-1)\exp\left(\max\{1-b_1^{(t)}, b_1^{(t)}\}\mu_1^{(t)}\right)}\right)^2,
\end{align}
and by Lemma~\ref{lm:training_dynamics_II_B}, all $\Delta_\xi^{(t,k)}$ ($\xi\in\{x,y\}$, $k\in[2m]$) can be upper bounded by the following:
\begin{align}
    \Delta_\xi^{(t,k)}&\leq \left(\frac{(N+2m-1)\exp\left(1-b_2^{(t)}\mu_1^{(t)}\right)}{\exp\left(\left(1+b_2^{(t)}-0.01\right)\mu_1^{(t)}\right)+(N+2m-1)\exp\left(1-b_2^{(t)}\mu_1^{(t)}\right)}\right)^2\notag\\
    &\leq \left(\frac{N+2m-1}{\exp\left((0.24\times 2\times 0.99-0.01)\mu_1^{(t)}\right)+N+2m-1}\right)^2\notag\\
    &=\left(\frac{N+2m-1}{\exp\left(0.46\mu_1^{(t)}\right)+N+2m-1}\right)^2\leq \frac{\epsilon}{6m},
\end{align}
as long as $\epsilon$ is small enough, where the last inequality follows from \eqref{eq:T_3_B}.
This gives \eqref{eq:convergence_B}.

\section{Generalization Analysis}\label{sec_app:generalization_forward}

\subsection{Proof of Theorem~\ref{thm:generalization_retrieval}}\label{sec_app:proof_generalization_retrieval}

\paragraph{Notation.}
Throughout the proof, we suppose
Assumptions~\ref{asmp:train_data} and~\ref{asmp:orthonomal} hold, and the model is trained for $t\geq T$ steps under the training distribution, where $T$ is the same as in Theorem~\ref{thm:convergence_retrieval}. Let 
$$E^{(k-1)}=(E,\widehat{O}^{(k-1)})$$
be the input at the $k$-th reasoning step. Define
\begin{align*}
    x_i\coloneqq E^{(\widetilde{m})}(1:d_1,i),\quad y_i\coloneqq E^{(\widetilde{m})}(d_1+1:2d_1,i),\quad \forall i\in[\widetilde{N}+\widetilde{m}+1],
\end{align*}
and 
\begin{align*}
    \widehat{x}^{(k)}\coloneqq\widehat{o}^{(k)}(1:d_1),\quad \widehat{y}^{(k)}\coloneqq\widehat{o}^{(k)}(d_1+1:2d_1),\\
    x^{(k)}\coloneqq o^{(k)}(1:d_1),\quad y^{(k)}\coloneqq o^{(k)}(d_1+1:2d_1).
\end{align*}
Furthermore, let $r=r(\gT)$ and $g=g(\gT)$ denote the root and goal node of $\gT$, respectively. Let $p(i)$ be the parent of $i$ in $\gT$, with $p(r)=0$. Let $c(i)=c(i;\gT)$ be the child set of $i$, with $c(0)=\{r\}$.
We also let 
\begin{align}
    \mu^{(t)}\coloneqq H_{1,1}^{(t)},\quad\nu^{(t)}\coloneqq H_{1,2}^{(t)}.
\end{align}
Then for all $i,j\in[S]$, $i\neq j$, we have $H_{j,j}^{(t)}=\mu^{(t)}$ and $H_{i,j}^{(t)}=\nu^{(t)}$.

We first give the following lemma, whose proof is provided in Appendix~\ref{proof:representation_o_hat}.
\begin{lm}\label{lm:representation_o_hat}
    For any $q\in[\widetilde{N}+1+\widetilde{m}]$, there exists $\{\beta_i^{(q)}\}_{i=1}^S$ such that
    \begin{align}\label{eq:represent_coefficient}
        \begin{pmatrix}
        x_q\\
        y_q
        \end{pmatrix}=\sum_{i=0}^S\beta_i^{(q)}
        \begin{pmatrix}
        a_{p(i)}\\
        a_{i}
        \end{pmatrix},
        \quad\text{and}\quad
        \sum_{i=0}^S\beta_i^{(q)}= 1, \,\,\beta_i^{(q)}\geq 0, \,\,\forall 0\leq i\leq S.
    \end{align}
\end{lm}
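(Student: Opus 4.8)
The plan is to prove Lemma~\ref{lm:representation_o_hat} by a straightforward induction on the reasoning step, using the fact that in the backward‑reasoning architecture the value and output matrices are the identity and are \emph{not} trained, so every generated token is an honest convex combination of the current context columns. Concretely, since $W^O = W_1^V = I_{2d_1}$ (Section~\ref{sec:construction_backward}) and only $B$ is trainable ($\theta=\{B\}$), for every parameter value the test‑time recursion \eqref{eq:recursion} reads $\widehat{o}^{(k)} = E^{(k-1)}\,\sigma^{(k)}$ with $\sigma^{(k)} = \sm\!\big(E^{(k-1)\top} W_1^{KQ} E^{(k-1)}_{:,-1}\big)$ a probability vector (nonnegative entries summing to one). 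In particular $B$ (hence training) only reshapes the attention weights $\sigma^{(k)}$, never the ``values'' being mixed, so $\widehat{o}^{(k)}$ always lies in the convex hull of the columns of $E^{(k-1)}=(E,\widehat O^{(k-1)})$.

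Next I would introduce the atom set $\mathcal{A} \coloneqq \big\{(a_{p(i)}^\top, a_i^\top)^\top\big\}_{i=0}^S$, where in addition to $p(r(\gT)) = 0$ I adopt the convention $p(0) \coloneqq g(\gT)$, so that the atom indexed by $0$ is exactly the second fill slot $(a_{g(\gT)}^\top, a_0^\top)^\top$ of $\embedretrieval(\gT)$ (this is consistent with $c(0)=\{r(\gT)\}$ and $p(r(\gT))=0$ already fixed in the notation). The induction claim, over $k\in\{0,1,\dots,\widetilde m\}$, is: \emph{every column of $E^{(k)}$ equals $\sum_{i=0}^S \beta_i (a_{p(i)}^\top, a_i^\top)^\top$ for some $\beta_i\ge 0$ with $\sum_i\beta_i=1$}; evaluating at $k=\widetilde m$ and reading off the coefficients yields \eqref{eq:represent_coefficient}. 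Base case $k=0$: the columns of $E=\embedretrieval(\gT)$ are either genuine edge embeddings $(a_{p(i)}^\top,a_i^\top)^\top$ for $i\in\gV(\gT)\setminus\{r(\gT)\}$ (atoms, with $\beta$ one‑hot), the first fill column $(a_0^\top,a_{r(\gT)}^\top)^\top = (a_{p(r(\gT))}^\top,a_{r(\gT)}^\top)^\top$ (the atom $i=r(\gT)$), or the second fill column, which is the atom $i=0$ by the convention above. Inductive step: if every column of $E^{(k-1)}$ lies in $\mathrm{conv}(\mathcal{A})$, then the appended column $\widehat{o}^{(k)} = E^{(k-1)}\sigma^{(k)}$ is a convex combination of columns of $E^{(k-1)}$, hence a convex combination of elements of $\mathrm{conv}(\mathcal{A})$, hence itself in $\mathrm{conv}(\mathcal{A})$; the remaining columns of $E^{(k)}$ are those of $E^{(k-1)}$, so the property propagates.

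The lemma is purely structural, so there is no quantitative estimate to grind through; the only points that need care are (i) verifying that $W^O,W_1^V$ stay equal to the identity along the gradient‑descent trajectory (immediate from $\theta=\{B\}$), which is what makes the attention output a genuine convex combination rather than an arbitrary linear combination, and (ii) the bookkeeping of the two fill columns and the phantom node $0$ in the base case — in particular accommodating $(a_{g(\gT)}^\top,a_0^\top)^\top$ by extending $p(\cdot)$ with $p(0)=g(\gT)$. I expect (ii) to be the only place where one must be careful, but it is routine.
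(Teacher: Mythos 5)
Your proof is correct and follows essentially the same route as the paper's: induct over reasoning steps, note that because $W^O=W_1^V=I_{2d_1}$ the new column is a softmax-weighted (hence convex) combination of the existing columns, and compose convex combinations, exactly as the paper does by setting $\beta_j^{(s)}=\sum_i\gamma_i^{(k)}\beta_j^{(i)}$. Your explicit convention $p(0)=g(\gT)$ for the second fill column is a slightly more careful treatment of the base case that the paper handles only implicitly, but it changes nothing substantive.
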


\paragraph{Main proof.}
We begin with reformulating the test-time loss. 
Define
\begin{align} \label{eq:alpha_(1)}
    \alpha^{(1)}=\frac{\exp(\mu)}{\exp(\mu)+\left(\widetilde{N}-1\right)\exp(\nu)+1} =\beta_{g}^{(\widetilde{N}+2)},
\end{align}
which is the attention weight of $x_{-1}^{(0)}=y^{(1)}=a_g$ on $a_g$. Further, define
\begin{align}
    \forall k\in[\widetilde{m}]:\quad \alpha^{(k)}\coloneqq \beta_{p^{k-1}(g)}^{(\widetilde{N}+1+k)}.
\end{align}
Then $\alpha^{(k)}$ is the proportion of $o^{(k)}$ in $\widehat{o}^{(k)}$. 
By our definition, we have for all $k\in[\widetilde{m}]$,
\begin{align}
    \widehat{o}^{(k)} = \alpha^{(k)} o^{(k)} + \sum_{i\in\gV(\gT)\atop i\neq p^{k-1}(g)} \beta_i^{(k)} \begin{pmatrix}
        a_{p(i)}\\
        a_i
    \end{pmatrix},\quad o^{(k)}=\begin{pmatrix}
        a_{p^{k}(g)}\\
        a_{p^{k-1}(g)}
    \end{pmatrix},
\end{align}
and 
\begin{align}\label{eq:coefficient_q_sum=1}
    \alpha^{(k)}+\sum_{i\in\gV(\gT)\atop i\neq p^{k-1}(g)}\beta_i^{(k)}=1.
\end{align}
Thus we have
\begin{align}\label{eq:loss_test_1}
    \losstest(\gT;\theta^{(t)}) 
    &= \frac{1}{2}\sum_{k=1}^{\widetilde{m}}\left\|\widehat{o}^{(k)}-o^{(k)}\right\|_2^2\notag\\
    &=\frac{1}{2}\sum_{k=1}^{\widetilde{m}}\norm{\sum_{i\in\gV(\gT)\atop i\neq p^{k-1}(g)} \beta_i^{(k)} \begin{pmatrix}
        a_{p(i)}\\
        a_i
    \end{pmatrix}-\left(1-\alpha^{(k)}\right)\begin{pmatrix}
        a_{p^{k}(g)}\\
        a_{p^{k-1}(g)}
    \end{pmatrix}}^2_2\notag\\
    &=\frac{1}{2}\sum_{k=1}^{\widetilde{m}}\norm{\sum_{i\in\gV(\gT)\atop i\neq p^{k-1}(g)} \beta_i^{(k)}a_{p(i)}-\left(1-\alpha^{(k)}\right)a_{p^{k}(g)}}^2_2+\frac{1}{2}\sum_{k=1}^{\widetilde{m}}\norm{\sum_{i\in\gV(\gT)\atop i\neq p^{k-1}(g)} \beta_i^{(k)}a_{i}-\left(1-\alpha^{(k)}\right)a_{p^{k-1}(g)}}^2_2.
\end{align}
The first term can be bounded via
\begin{align*}
    \norm{\sum_{i\in\gV(\gT)\atop i\neq p^{k-1}(g)} \beta_i^{(k)}a_{p(i)}-\left(1-\alpha^{(k)}\right)a_{p^{k}(g)}}^2_2& = \sum_{i\in\gV(\gT)\atop i\neq p^{k-1}(g)} \left( \beta_i^{(k)} \right)^2 +\left(1-\alpha^{(k)}\right)^2  \\
   & =   \max_{i\in\gV(\gT)\atop i\neq p^{k-1}(g)} |\beta_i^{(k)}|\cdot  \sum_{i\in\gV(\gT)\atop i\neq p^{k-1}(g)} \beta_i^{(k)}  +\left(1-\alpha^{(k)}\right)^2  \overset{\eqref{eq:coefficient_q_sum=1}}\leq 2(1-\alpha^{(k)})^2.
\end{align*}
Similarly, we have
\begin{align*}
    \norm{\sum_{i\in\gV(\gT)\atop i\neq p^{k-1}(g)} \beta_i^{(k)}a_{i}-\left(1-\alpha^{(k)}\right)a_{p^{k-1}(g)}}^2_2\leq 2(1-\alpha^{(k)})^2.
\end{align*}
Substituting the above two inequalities into \eqref{eq:loss_test_1}, we have
\begin{align}\label{eq:test_loss_reform}
    \losstest(\gT;\theta^{(t)})&\leq 2\widetilde{m}(\underbrace{1-\alpha^{(k)}}_{\coloneqq \delta^{(k)}})^2.
\end{align}
Therefore, we can bound the test-time loss by bounding 
\begin{align}\label{eq:delta_k}
    \delta^{(k)}\coloneqq 1-\alpha^{(k)}.
\end{align}

We give the following lemma, where $T_2$ is defined in \eqref{eq:T_2}. The proof is deferred to Appendix~\ref{proof:delta_k}.
\begin{lm}[bounding $\delta^{(k)}$]\label{lm:delta_k}
    For all $k\in[\widetilde{m}]$, we have
    \begin{align}\label{eq:IH_generation_retrieval}
        \delta^{(k)}\leq \max\left\{1,\frac{\widetilde{N}+\widetilde{m}-1}{N+m-2}\right\}\cdot 2\delta,
    \end{align}
    where
\begin{align}\label{eq:delta}
    \delta\coloneqq 1-\frac{\exp(\mu^{(t)})}{(N+m-2)\exp(\nu^{(t)})+\exp(\mu^{(t)})+1}\overset{\eqref{eq:T_2}}\leq\sqrt{\frac{\epsilon}{2m}}
\end{align}
for $t> T_2$.
\end{lm}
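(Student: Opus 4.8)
Set $C\coloneqq\max\{1,(\widetilde N+\widetilde m-1)/(N+m-2)\}$, so the target is $\delta^{(k)}\le 2C\delta$ for all $k\in[\widetilde m]$. The side claim $\delta\le\sqrt{\epsilon/2m}$ is immediate: $1-\delta$ is exactly the quantity $\widecheck\alpha^{(t)}$ of~\eqref{eq:alpha_m}, which by the convergence analysis of Theorem~\ref{thm:convergence_retrieval} is increasing for $t>T_2$ (there $\mu^{(t)}\coloneqq H^{(t)}_{1,1}$ keeps increasing and $\nu^{(t)}\coloneqq H^{(t)}_{1,2}$ keeps decreasing) and satisfies $\widecheck\alpha^{(T_2)}\ge 1-\sqrt{\epsilon/2m}$ by~\eqref{eq:T_2}; since $T\ge T_2$ the bound follows. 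The main claim I would prove by induction on $k$. Throughout I use that $H^{(t)}=A^\top B^{(t)}A$ has all diagonal entries $\mu^{(t)}$ and all off-diagonal entries $\nu^{(t)}$, with $\mu^{(t)}$ large and $|\nu^{(t)}|$ of lower order (Phase I/II estimates of Theorem~\ref{thm:convergence_retrieval}), and that $\delta^{(k)}\le 1-\gamma^{(k)}_{j^\star}$, where $\gamma^{(k)}=\sm\big(Y^{(k-1)\top}B^{(t)}\widehat x^{(k-1)}\big)$ is the step-$k$ attention vector and $j^\star$ is the original input column of $E^{(k-1)}$ whose child is $p^{k-1}(g)$. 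This column exists since $p^{k-1}(g)$ is non-root for $k\le\widetilde m$, and by Lemma~\ref{lm:representation_o_hat} it has indicator coefficient vector $\beta^{(j^\star)}=e_{p^{k-1}(g)}$, so $\alpha^{(k)}=\beta^{(\widetilde N+1+k)}_{p^{k-1}(g)}=\sum_j\gamma^{(k)}_j\beta^{(j)}_{p^{k-1}(g)}\ge\gamma^{(k)}_{j^\star}$.

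\textbf{Base case ($k=1$).} At step $1$ the keys are the $\widetilde N-1$ child-node embeddings, the root embedding $a_r$, and the filler $a_0=0$; plugging the $(\mu^{(t)},\nu^{(t)})$ structure of $H^{(t)}$ into~\eqref{eq:alpha_(1)} gives $\delta^{(1)}=\big((\widetilde N-1)e^{\nu^{(t)}}+1\big)\big/\big(e^{\mu^{(t)}}+(\widetilde N-1)e^{\nu^{(t)}}+1\big)$, to be compared with $\delta=\big((N+m-2)e^{\nu^{(t)}}+1\big)\big/\big(e^{\mu^{(t)}}+(N+m-2)e^{\nu^{(t)}}+1\big)$ from~\eqref{eq:delta}. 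Since $y\mapsto y/(x+y)$ is increasing, if $\widetilde N-1\le N+m-2$ then $\delta^{(1)}\le\delta$; otherwise the numerator of $\delta^{(1)}$ is at most $(\widetilde N-1)/(N+m-2)$ times that of $\delta$ while its denominator is at least that of $\delta$, so $\delta^{(1)}\le\tfrac{\widetilde N-1}{N+m-2}\delta$. Either way $\delta^{(1)}\le 2C\delta$.

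\textbf{Inductive step.} Assume $\delta^{(j)}\le 2C\delta$ for all $j<k$ and put $\Delta\coloneqq\max_{j<k}\delta^{(j)}$. By Lemma~\ref{lm:representation_o_hat} the last column $\widehat o^{(k-1)}$ of $E^{(k-1)}$ is a convex combination of edge embeddings, hence $\widehat x^{(k-1)}=\sum_v q_v a_v$ with $\sum_v q_v=1$, $q_v\ge0$; grouping the coefficients of $\widehat o^{(k-1)}$ by parent node gives $q_{p^{k-1}(g)}\ge\alpha^{(k-1)}\ge 1-\Delta$, while $q_v\le 1-\alpha^{(k-1)}\le\Delta$ for every other node $v$ (including the root), because the parent of $p^{k-2}(g)$ is $p^{k-1}(g)\ne v$. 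Computing the step-$k$ logits $\ell_j=(y^{(k-1)}_j)^\top B^{(t)}\widehat x^{(k-1)}$ with the $(\mu^{(t)},\nu^{(t)})$ structure of $H^{(t)}$, one finds that $\ell_{j^\star}$ exceeds every original wrong-node logit by at least $(1-2\Delta)(\mu^{(t)}-\nu^{(t)})$, and — using that each output column $\widehat o^{(l)}$, $l<k$, has representation concentrated on $a_{p^{l-1}(g)}\ne a_{p^{k-1}(g)}$ with $a_0$-weight $\le\delta^{(l)}$ — exceeds every output-column logit by at least $(1-3\Delta)(\mu^{(t)}-\nu^{(t)})-|\nu^{(t)}|$, while the filler column has logit exactly $0$. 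Feeding these gaps into $\gamma^{(k)}$: the $\le\widetilde N+\widetilde m-2$ wrong-node columns each contribute a term carrying an $e^{\nu^{(t)}}$ factor, matching the $(N+m-2)e^{\nu^{(t)}}$ summand in $\delta$, and the $a_0$ column contributes $\approx e^{-\mu^{(t)}}$, matching the $+1$ in the numerator of $\delta$; consequently $1-\gamma^{(k)}_{j^\star}\le 2C\delta$ once the multiplicative perturbation factors $e^{O(\Delta)(\mu^{(t)}-\nu^{(t)})}$ are bounded by a constant. This closes the induction, and with the already-derived bound $\losstest(\gT;\theta^{(t)})\le 2\widetilde m\,(\delta^{(k)})^2$ (see~\eqref{eq:test_loss_reform}) and $\delta\le\sqrt{\epsilon/2m}$ the generalization theorem follows.

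\textbf{Main obstacle.} The delicate step is the non-ideal error propagation inside the inductive step: because the query $\widehat x^{(k-1)}$ is only a $(1-\delta^{(k-1)})$-dominated mixture of node embeddings, every step-$k$ logit is perturbed by $O(\delta^{(k-1)})(\mu^{(t)}-\nu^{(t)})$, which enters the softmax as an exponential factor that must be controlled \emph{uniformly over all $t\ge T$}, not merely at $t=T$. This is what forces the dependence $\epsilon_0=\epsilon_0(m,\widetilde N,\widetilde m)$ and requires invoking the monotone post-$T_2$ dynamics so that $\delta^{(t)}(\mu^{(t)}-\nu^{(t)})\to0$ (as $\delta^{(t)}$ decays like $e^{-\mu^{(t)}}$ while $\mu^{(t)},|\nu^{(t)}|$ grow only polynomially). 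One must also keep the $e^{\nu^{(t)}}$ factors matched between numerator and denominator so that the column-count ratio $(\widetilde N+\widetilde m-1)/(N+m-2)$, rather than a cruder bound, emerges.
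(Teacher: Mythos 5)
Your proposal follows essentially the same route as the paper's proof: an induction on $k$ with the identical base case, the same logit-gap computation for the correct key versus wrong-node and output columns (gaps of order $(1-2\Delta)(\mu-\nu)$ and $(1-3\Delta)(\mu-\nu)$), the same column-count matching that yields the ratio $(\widetilde N+\widetilde m-1)/(N+m-2)$, and the same mechanism of absorbing the perturbation factor $e^{O(\Delta)(\mu-\nu)}$ into a constant by taking $\epsilon_0$ small, using $\delta\log(1/\delta)\to 0$ together with the training-dynamics control of $\nu$. The details you leave implicit ("one finds", "consequently") correspond exactly to the paper's case analysis on the size and sign of $\nu$ (including the bound $-\nu\lesssim \mu/S$) in Appendix~\ref{proof:delta_k}, so the argument is correct and matches the paper's.
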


By Lemma~\ref{lm:delta_k} and \eqref{eq:test_loss_reform}, we have
\begin{align}\label{eq:test_loss_bound}
    \losstest(\gT;\theta^{(t)})&\leq \max\left\{1,\left(\frac{\widetilde{N}+\widetilde{m}-1}{N+m-2}\right)^2\right\}\cdot 4\frac{\widetilde{m}}{m}\epsilon.
\end{align}

\subsubsection{Proof of Lemma~\ref{lm:representation_o_hat}}
\label{proof:representation_o_hat}
We prove this lemma by induction on $k$. First, \eqref{eq:represent_coefficient} holds trivially for $1\leq q\leq \widetilde{N}+1$ ($(x_q,y_q)^\top$ is in $E^{(0)}$). 
Assume that \eqref{eq:represent_coefficient} holds for $q\leq s-1$ ($\widetilde{N}+2\leq s\leq \widetilde{N}+1+\widetilde{m}$).  We prove that \eqref{eq:represent_coefficient} holds for $s$.

For notational convenience, we let $k = s-1-\widetilde{N}$,
and let 
\begin{align}\label{eq:gamma_k}
    \gamma^{(k)}=(\gamma_1^{(k)},\ldots,\gamma_{s-1}^{(k)})=\sm\left(Y^{(k-1)\top} B \widehat{x}^{(k-1)}\right).
\end{align}
Then by \eqref{eq:output_simplified}, we have
\begin{align*}
\begin{pmatrix}
x_s\\
y_s
\end{pmatrix}=
\begin{pmatrix}
\widehat{x}^{(k)}\\
\widehat{y}^{(k)}
\end{pmatrix}=\sum_{i=1}^{s-1}\gamma_i^{(k)}
\begin{pmatrix}
x_i\\
y_i
\end{pmatrix}=
\sum_{j=0}^S\left(\sum_{i=1}^{s-1}\gamma_i^{(k)}\beta_j^{(i)}\right)
\begin{pmatrix}
a_{p(j)}\\
a_{j}
\end{pmatrix},
\end{align*}
where the second equality uses the induction hypothesis.
Let 
$\beta_j^{(s)}=\sum_{i=1}^{s-1}\gamma_i^{(k)}\beta_j^{(i)}$,
then we have
\begin{align}
    \begin{pmatrix}
    x_s\\
    y_s
    \end{pmatrix}=\sum_{j=0}^S\beta_j^{(s)}
    \begin{pmatrix}
    a_{p(j)}\\
    a_{j}
    \end{pmatrix},
\end{align}
and by the induction hypothesis, we have $\beta_j^{(s)}\geq 0$, $\forall j\in[S]$ and
\begin{align}
    \sum_{j=0}^S\beta_j^{(s)}=\sum_{j=0}^S\left(\sum_{i=1}^{s-1}\gamma_i^{(k)}\right)\beta_j^{(i)}
    \overset{\eqref{eq:gamma_k}}=\sum_{j=0}^S\beta_j^{(i)}=1.
\end{align}
This completes the induction.


\subsubsection{Proof of Lemma~\ref{lm:delta_k}}\label{proof:delta_k}
We prove by induction.
When $k=1$, by \eqref{eq:delta} and \eqref{eq:alpha_(1)} we immediately know that if $N+m-2\geq \widetilde{N}-1$, we have $\delta^{(1)}\leq \delta$. If $N+m-2< \widetilde{N}-1$, we have
\begin{align}
    \frac{\delta^{(1)}}{\delta}&=\underbrace{\frac{\left(\widetilde{N}-1\right)\exp(\nu^{(t)})+1}{\left(N+m-2\right)\exp(\nu^{(t)})+1}}_{\leq \frac{\widetilde{N}-1}{N+m-2}}\cdot\underbrace{\frac{\exp(\mu^{(t)})+\left(N+m-2\right)\exp(\nu^{(t)})+1}{\exp(\mu^{(t)})+\left(\widetilde{N}-1\right)\exp(\nu^{(t)})+1}}_{\leq 1}\leq \frac{\widetilde{N}-1}{N+m-2}.
\end{align}
Thus in either case, we have
\eqref{eq:IH_generation_retrieval} holds for $k=1$.

Assume that \eqref{eq:IH_generation_retrieval} holds for all $s\in[k]$ ($1\leq k\leq \widetilde{m}-1$). We aim to show that \eqref{eq:IH_generation_retrieval} holds for $k+1$.
Let $B=B^{(t)}$, where $t\geq T_2$. Then $a_{p^k(g)}= x^{(k)}=y^{(k+1)}$. And note that $\beta_{r}^{(k)}$ is the proportion of $(0,a_r)^\top$ in $\widehat{o}^{(k)}$, thus we have
\begin{align}\label{eq:main_attention}
    a_{p^k(g)}^\top B \widehat{x}^{(k)} = \alpha^{(k)} \mu + (1-\alpha^{(k)}-\beta_{r}^{(k)}) \nu.
\end{align}
For all $i\in\gV(\gT)\setminus\{p^k(g)\}$, we have
\begin{align}\label{eq:sub_pure_attention}
    a_i^\top B \widehat{x}^{(k)} \leq (1-\alpha^{(k)}-\beta_{r}^{(k)}) \mu +\alpha^{(k)} \nu.
\end{align}
We define $p^{-1}(g)=0$ for all $s\in[k]$, and recall $c(0)=\{r\}$, we have
\begin{align}
    \widehat{y}^{(s) \top} B \widehat{x}^{(k)}&=
    \left(\alpha^{(s)} a_{p^{s-1}(g)}+\sum_{i\in\gV(\gT)\cup\{0\}\atop i\neq p^{s-1}(g)}\beta_i^{(s)} a_i\right)^\top B \left(\alpha^{(k)} a_{p^{k}(g)}+\sum_{i\in\gV(\gT)\cup\{0\}\atop i\neq p^{k-1}(g)}\beta_i^{(i)} a_{p(i)}\right)\notag\\
    &=\underbrace{\bigg(\alpha^{(s)}\beta_{p^{s-2}}+\beta^{(s)}_{p^k(g)}\alpha^{(k)}+\sum_{i\in\gV(\gT)\cup\{0\}\atop i\neq p^{s-1}(g),p^k(g)}\beta_i^{(s)}\sum_{j\in c(i)}\beta_j^{(k)}\bigg)}_{\triangle}\mu\notag\\
    &\quad + \bigg((1-\beta_{r}^{(k)})(1-\beta_{0}^{(s)})-\underbrace{\bigg(\alpha^{(s)}\beta_{p^{s-2}}+\beta^{(s)}_{p^k(g)}\alpha^{(k)}+\sum_{i\in\gV(\gT)\cup\{0\}\atop i\neq p^{s-1}(g),p^k(g)}\beta_i^{(s)}\sum_{j\in c(i)}\beta_j^{(k)}\bigg)}_{\triangle}\bigg)\nu.
\end{align}
Note that
\begin{align}
    \triangle &\leq \alpha^{(s)}(1-\alpha^{(k)}-\beta_{r}^{(k)})+\alpha^{(k)}(1-\alpha^{(s)}-\beta_{0}^{(s)})+(1-\alpha^{(s)}-\beta_{0}^{(s)})(1-\alpha^{(k)}-\beta_{r}^{(k)})\notag\\
    &= (1-\beta_{r}^{(k)})(1-\beta_{0}^{(s)})-\alpha^{(s)}\alpha^{(k)},
\end{align}
and by Theorem~\ref{thm:convergence_retrieval}, we have $\mu>\nu$ and $\mu>0$. Thus we have
\begin{align}\label{eq:sub_mix_attention}
    \widehat{y}^{(s) \top} B \widehat{x}^{(k)}&\leq \left((1-\beta_{r}^{(k)})(1-\beta_{0}^{(s)})-\alpha^{(s)}\alpha^{(k)}\right)\mu+\alpha^{(s)}\alpha^{(k)}\nu\notag\\
    &\leq \left(1-\beta_{r}^{(k)}-\alpha^{(s)}\alpha^{(k)}\right)\mu + \alpha^{(s)}\alpha^{(k)}\nu.
\end{align}
Combining \eqref{eq:main_attention}, \eqref{eq:sub_pure_attention}, and \eqref{eq:sub_mix_attention}, and denote $\widetilde{\alpha}^{(k)}=\min_{s\in[k]}\{\alpha^{(s)}\}$, we have
\begin{align}\label{eq:alpha_k+1}
    \alpha^{(k+1)}&=\frac{\exp\left(a_{p^k(g)}^\top B \widehat{x}^{(k)}\right)}{\exp\left(a_{p^k(g)}^\top B \widehat{x}^{(k)}\right)+\sum_{i\in\gV(\gT)\atop i\neq p^k(g)}\exp\left(a_i^\top B \widehat{x}^{(k)}\right)+\sum_{s=1}^k \exp\left(\widehat{y}^{(s) \top} B \widehat{x}^{(k)}\right)+1}\notag\\
    &\geq \frac{\exp\left(\widetilde\alpha^{(k)} \mu + (1-\widetilde\alpha^{(k)}-\beta_{r}^{(k)}) \nu\right)}{\exp\left(\widetilde\alpha^{(k)} \mu + (1-\widetilde\alpha^{(k)}-\beta_{r}^{(k)}) \nu\right)+(\widetilde{N}-2+\widetilde{m})\exp\left(\left(1-\beta_{r}^{(k)}-\left(\widetilde\alpha^{(k)}\right)^2\right)\mu + \left(\widetilde\alpha^{(k)}\right)^2\nu\right)+1}\notag\\
&\geq\frac{\exp\left(\left(\widetilde\alpha^{(k)}+\left(\widetilde\alpha^{(k)}\right)^2-1+\beta_r^{(k)}\right)(\mu-\nu)\right)}{\exp\left(\left(\widetilde\alpha^{(k)}+\left(\widetilde\alpha^{(k)}\right)^2-1+\beta_r^{(k)}\right)(\mu-\nu)\right)+\widetilde{N}-2+\widetilde{m}+\exp\left(-\left(\widetilde\alpha^{(k)}\right)^2\nu\right)}\notag\\
&\geq\frac{\exp\left(\left(\widetilde\alpha^{(k)}+\left(\widetilde\alpha^{(k)}\right)^2-1\right)(\mu-\nu)\right)}{\exp\left(\left(\widetilde\alpha^{(k)}+\left(\widetilde\alpha^{(k)}\right)^2-1\right)(\mu-\nu)\right)+\widetilde{N}-2+\widetilde{m}+\exp\left(-\left(\widetilde\alpha^{(k)}\right)^2\nu\right)},
\end{align}
where the second line uses the following fact:
\begin{align}
    (1-\alpha^{(k)}-\beta_{r}^{(k)}) \mu +\alpha^{(k)} \nu&\leq \left(1-\beta_{r}^{(k)}-\alpha^{(s)}\alpha^{(k)}\right)\mu + \alpha^{(s)}\alpha^{(k)}\nu\notag\\
    &\leq \left(1-\beta_{r}^{(k)}-\left(\widetilde\alpha^{(k)}\right)^2\right)\mu + \left(\widetilde\alpha^{(k)}\right)^2\nu.
\end{align}
Define
$$\widetilde\delta^{(k)}=1-\widetilde\alpha^{(k)}=\max_{s\in[k]}\{\delta^{(s)}\},$$
then we have
\begin{align}
 1-\left(\widetilde\alpha^{(k)}+\left(\widetilde\alpha^{(k)}\right)^2-1\right)=\left(2+\widetilde\alpha^{(k)}\right)\left(1-\widetilde\alpha^{(k)}\right)\leq 3\widetilde\delta^{(k)}.
 \end{align}
 Combining the above inequality with \eqref{eq:alpha_k+1}, we have
 \begin{align}
    \alpha^{(k+1)}\geq \frac{\exp\left(\left(1-3\widetilde\delta^{(k)}\right)(\mu-\nu)\right)}{\exp\left(\left(1-3\widetilde\delta^{(k)}\right)(\mu-\nu)\right)+\widetilde{N}-2+\widetilde{m}+\exp\left(-\left(\widetilde\alpha^{(k)}\right)^2\nu\right)},
 \end{align}
 which gives
 \begin{align}\label{eq:delta_k+1}
    \delta^{(k+1)}=1-\alpha^{(k+1)}\leq\frac{\widetilde{N}-2+\widetilde{m}+\exp\left(-\left(\widetilde\alpha^{(k)}\right)^2\nu\right)}{\exp\left(\left(1-3\widetilde\delta^{(k)}\right)(\mu-\nu)\right)+\widetilde{N}-2+\widetilde{m}+\exp\left(-\left(\widetilde\alpha^{(k)}\right)^2\nu\right)}.
 \end{align}

On the other hand, \eqref{eq:delta} gives
\begin{align}\label{eq:mu-nu}
    \mu-\nu= \log\left((N+m-2+\exp(-\nu))\left(\frac{1}{\delta}-1\right)\right).
\end{align}
Thus if $\exp(-\nu)\leq \widetilde{N}-2+\widetilde{m}$, we have
\begin{align}\label{eq:case_1_generalization_retrieval}
    \mu-\nu\leq \log \left(2(N+m-2)\left(\frac{1}{\delta}-1\right)\right).
\end{align}

If $\exp(-\nu)\geq \widetilde{N}-2+\widetilde{m}$, we have
\begin{align}
    \mu-\nu\leq \log \left(2\exp(-\nu)\left(\frac{1}{\delta}-1\right)\right)\leq \log \left(2\left(\frac{1}{\delta}-1\right)\right)-\nu.
\end{align}

By \eqref{eq:lower_bound_Hij_after_T_1} and Lemma~\ref{lm:H_jj_after_T_2}, we know that there exists $\epsilon_1(m)>0$ such that for all $\epsilon\in(0,\epsilon_1(m)]$, after $t>T_2$, we have
\begin{align}
    -\nu\leq \frac{5}{S}\mu\leq \frac{5}{S}(\mu-\nu),
\end{align}
where the second inequality uses the fact that $\nu\leq 0$, since $\exp(-\nu)\geq \widetilde{N}-2+\widetilde{m}$. Combining the above two inequalities, we have
\begin{align}\label{eq:case_2_generalization_retrieval}
    \mu-\nu\leq 
    2\log \left(2\left(\frac{1}{\delta}-1\right)\right).
\end{align}
Combining \eqref{eq:case_1_generalization_retrieval} and \eqref{eq:case_2_generalization_retrieval}, we have when $\epsilon\in(0,\epsilon_1(m)]$,
\begin{align}\label{eq:lowerbound_mu-nu}
    \mu-\nu\leq 2\log\left(2(N+m-2)\left(\frac{1}{\delta}-1\right)\right).
\end{align}
Note that 
\begin{align}
    \delta\log\left(2(N+m-2)\left(\frac{1}{\delta}-1\right)\right)\rightarrow 0 \quad \text{as } \delta\rightarrow 0+.
\end{align}
Thus we could choose $\epsilon_0(m,\widetilde{N},\widetilde{m})\leq \epsilon_1(m)$ small enough such that for all $\epsilon\in(0,\epsilon_0(m,\widetilde{N},\widetilde{m})]$, we have
\begin{align}\label{eq:lowerbound_delta_mu-nu_2}
    6\frac{\widetilde{N}+\widetilde{m}-1}{N+m-2}\delta(\mu-\nu)\leq \log 2,
\end{align}
which gives
\begin{align}\label{eq:lowerbound_delta_mu-nu}
    \exp\left(3\overline{\delta}^{(k)}(\mu-\nu)\right)\leq\exp\left(6\max\left\{1,\frac{\widetilde{N}+\widetilde{m}-1}{N+m-2}\right\}\delta(\mu-\nu)\right)\leq 2,
\end{align}
where the first inequality uses induction hypothesis.

Define
\begin{align}
    a\coloneqq 2\max\left\{1,\frac{\widetilde{N}+\widetilde{m}-1}{N+m-2}\right\}.
\end{align}
Then 
\begin{itemize}
\item if $\widetilde{N}+\widetilde{m}-2<N+m-2$, we have $a=2$
and
    \begin{align}
        \frac{N+m-2+\exp(-\nu)}{\widetilde{N}+\widetilde{m}-2+\exp\left(-\left(\widetilde\alpha^{(k)}\right)^2\nu\right)}\geq \begin{cases}
         \frac{N+m-2}{\widetilde{N}+\widetilde{m}-1}\geq 1 \qquad \mbox{if}\; \nu\geq 0 \\
          \frac{N+m-2+\exp(-\nu)}{\widetilde{N}+\widetilde{m}-2+\exp\left(-\nu\right)}\geq 1   \qquad \mbox{if}\; \nu< 0
       \end{cases},
    \end{align}
    which gives
\begin{align}\label{eq:case_I}
    \frac{1}{a\delta}-1=\frac{1}{2\delta}-1\leq \frac{1}{2}\left(\frac{1}{\delta}-1\right)\leq \frac{\left(\frac{1}{\delta}-1\right)(N+m-2+\exp(-\nu))}{2\left(\widetilde{N}+\widetilde{m}-2+\exp\left(-\left(\widetilde\alpha^{(k)}\right)^2\nu\right)\right)}.
\end{align}
\item if $\widetilde{N}+\widetilde{m}-2\geq N+m-2$, we have
$a=2\cdot\frac{\widetilde{N}+\widetilde{m}-1}{N+m-2}\geq 1$,
and 
\begin{align}\label{eq:intermediate_generalization_retrieval_1}
    \frac{2}{a}\left(\frac{1}{\delta}-a\right)=\frac{N+m-2}{\widetilde{N}+\widetilde{m}-1}\left(\frac{1}{\delta}-a\right)\leq \frac{N+m-2}{\widetilde{N}+\widetilde{m}-1}\left(\frac{1}{\delta}-1\right).
\end{align}
Moreover, we have
    \begin{align}
        \frac{N+m-2+\exp(-\nu)}{\widetilde{N}+\widetilde{m}-2+\exp\left(-\left(\widetilde\alpha^{(k)}\right)^2\nu\right)}\geq \begin{cases}
         \frac{N+m-2+\exp(-\nu)}{\widetilde{N}+\widetilde{m}-2+\exp\left(-\nu\right)}\geq \frac{N+m-2}{\widetilde{N}+\widetilde{m}-1} \qquad \mbox{if}\; \nu<0 \\
          \frac{N+m-2}{\widetilde{N}+\widetilde{m}-1}  \qquad \mbox{if}\; \nu \geq 0 .
          \end{cases}
    \end{align} 
Combined with \eqref{eq:intermediate_generalization_retrieval_1}, we have when $\widetilde{N}+\widetilde{m}-2\geq N+m-2$,
\begin{align}\label{eq:case_II}
    \frac{1}{a\delta}-1\leq \frac{1}{2}\cdot\frac{N+m-2}{\widetilde{N}+\widetilde{m}-1}\left(\frac{1}{\delta}-1\right)\leq \frac{\left(\frac{1}{\delta}-1\right)(N+m-2+\exp(-\nu))}{2\left(\widetilde{N}+\widetilde{m}-2+\exp\left(-\left(\widetilde\alpha^{(k)}\right)^2\nu\right)\right)}.
\end{align}
\end{itemize}
By \eqref{eq:case_I} and \eqref{eq:case_II}, we have the above relation holds for both cases. Furthermore, from this relation we derive that
\begin{align}
    \exp(\mu-\nu)&\overset{\eqref{eq:mu-nu}}=(N+m-2+\exp(-\nu))\left(\frac{1}{\delta}-1\right)\notag\\
    &\overset{\eqref{eq:case_II}}\geq 2\left(\widetilde{N}+\widetilde{m}-2+\exp\left(-\left(\widetilde\alpha^{(k)}\right)^2\nu\right)\right)\left(\frac{1}{a\delta}-1\right)\notag\\
    &\overset{\eqref{eq:lowerbound_delta_mu-nu}}\geq \exp\left(3\overline{\delta}^{(k)}(\mu-\nu)\right)\left(\widetilde{N}+\widetilde{m}-2+\exp\left(-\left(\widetilde\alpha^{(k)}\right)^2\nu\right)\right)\left(\frac{1}{a\delta}-1\right).
\end{align}
Thus we have
\begin{align}
    &\exp\left((1-3\overline{\delta}^{(k)})(\mu-\nu)\right)\geq \left(\widetilde{N}+\widetilde{m}-2+\exp\left(-\left(\widetilde\alpha^{(k)}\right)^2\nu\right)\right)\left(\frac{1}{a\delta}-1\right)\notag\\
    &\Leftrightarrow\quad\frac{\exp\left((1-3\overline{\delta}^{(k)})(\mu-\nu)\right)}{\widetilde{N}+\widetilde{m}-2+\exp\left(-\left(\widetilde\alpha^{(k)}\right)^2\nu\right)}\geq \frac{1}{a\delta}-1\notag\\
    &\Leftrightarrow\quad\frac{\widetilde{N}+\widetilde{m}-2+\exp\left(-\left(\widetilde\alpha^{(k)}\right)^2\nu\right)}{\widetilde{N}+\widetilde{m}-2+\exp\left(-\left(\widetilde\alpha^{(k)}\right)^2\nu\right)+\exp\left((1-3\overline{\delta}^{(k)})(\mu-\nu)\right)}\leq a\delta.
\end{align}
Combining the above with \eqref{eq:delta_k+1}, we have
\begin{align}
    \delta^{(k+1)}\leq a\delta.
\end{align}
The induction is complete.

\subsection{Proof of Theorem~\ref{thm:generalization_forward}}\label{sec_app:proof_generalization_forward}

Assume the tree $\gT$ at test time is $\gT$ with $\widetilde{N}$ distinct nodes chosen from $[S]$ and a path length $\widetilde{m}$, and Assumption~\ref{asmp:train_data},~\ref{asmp:orthonomal} hold, and the model is trained for $t\geq \max\{T_3^B,T_3^C\}+1$ steps with the trianing distribution, where $T_3^B$ and $T_3^C$ are defined in \eqref{eq:T_3_B} and \eqref{eq:T_3_C}.
We let 
$$E^{(k-1)}=(E,\widehat{O}^{(k-1)})$$
be the input at the $k$-th reasoning step, and define
\begin{align}
    x_i\coloneqq E^{(\widetilde{m})}(1:d_1,i),\quad y_i\coloneqq E^{(\widetilde{m})}(d_1+1:2d_1,i),\quad \forall i\in[\widetilde{N}+\widetilde{m}+1].
\end{align}
We let $\widehat o^{(k)}=\widehat o^{(k)}(\gT)\in\R^{2d_1+d_2}$ denote the output of the $k$-th reasoning step of the model at test time for all $k\in[2\widetilde{m}]$.
We define
\begin{align}
    \widehat x^{(k)}\coloneqq \widehat o^{(k)}(1:d_1),\quad \widehat y^{(k)}\coloneqq \widehat o^{(k)}(d_1+1:2d_1),\quad \widehat z^{(k)}\coloneqq \widehat o^{(k)}(2d_1+1:2d_1+d_2),
\end{align}
and let
\begin{align}
    x^{(k)}\coloneqq o^{(k)}(1:d_1),\quad y^{(k)}\coloneqq o^{(k)}(d_1+1:2d_1),\quad z^{(k)}\coloneqq o^{(k)}(2d_1+1:2d_1+d_2)
\end{align}
denote the label of the $k$-th reasoning step.

By the same argument as Lemma~\ref{lm:representation_o_hat}, we know that at each reasoning step $k$, there exist $\left\{\beta_i^{(k)}\right\}_{i=1}^{\widetilde{N}+k}$, $\left\{\gamma_i^{(k)}\right\}_{i=1}^{\widetilde{N}+k}$ such that
\begin{align}
    \left(\widehat x^{(k)\top},\widehat y^{(k)\top}\right)^\top=\sum_{i=1}^{\widetilde{N}+k}\beta_i^{(k)}\left(x_i^\top,y_i^\top\right)^\top,\quad \widehat z^{(k)}=\sum_{i=1}^{\widetilde{N}+k}\gamma_i^{(k)}z_i,\notag\\
\sum_{i=1}^{\widetilde{N}+k}\beta_i^{(k)}=\sum_{i=1}^{\widetilde{N}+k}\gamma_i^{(k)}=1, \quad\beta_i^{(k)}, \gamma_i^{(k)}\geq 0.
\end{align}
Especially, we let $\beta_\star^{(k)}$ denote the proportions of $(x^{(k)\top},y^{(k)\top})^\top$ in $(\widehat x^{(k)\top},\widehat y^{(k)\top})^\top$,
and let $\gamma_\star^{(k)}$ denote the proportions of $z^{(k)}$ in $\widehat z^{(k)}$. 
We let
\begin{align}
    \alpha^{(k)}\coloneqq \min\left\{\beta_\star^{(k)},\gamma_\star^{(k)}\right\},\quad \delta^{(k)}\coloneqq \max\left\{1-\beta_\star^{(k)},1-\gamma_\star^{(k)}\right\}.
\end{align}
We define 
\begin{subequations}
\begin{align}
    \mu_i^{(k)} & \coloneqq x_i^\top B_1 y^{(k-1)} + y_i^\top B_2 x^{(k-1)} + z_i^\top B_3 z^{(k-1)},\\
    \mu_\star^{(k)}& \coloneqq x^{(k)\top} B_1 y^{(k-1)} + y^{(k)\top} B_2 x^{(k-1)} + z^{(k)\top} B_3 z^{(k-1)},\\
    \nu_i^{(k)}& \coloneqq x_i^\top C_1 y^{(k-1)} + y_i^\top C_2 x^{(k-1)} + z_i^\top C_3 z^{(k-1)},\\
    \nu_\star^{(k)}& \coloneqq x^{(k)\top} C_1 y^{(k-1)} + y^{(k)\top} C_2 x^{(k-1)} + z^{(k)\top} C_3 z^{(k-1)}.
\end{align}
\end{subequations}
We also define for some absolute constant $C>0$,
\begin{align}
    v\coloneqq C(\mu_1+\mu_2),
\end{align}
where $\mu_1,\mu_2$ are defined in \eqref{eq:U_matrix} and \eqref{eq:V_matrix}. By Lemma~\ref{lm:relation_C_phase_2} and Lemma~\ref{lm:training_dynamics_I_2_B} we can see that there exists $C>0$ such that for any $i,j,k,l\in[S]$, and any $p,q\in\{0,1\}$,
\begin{align}
    \left|a_i^{\top} B_1 a_j + a_k^{\top} B_2 a_l + s_p^\top B_3 s_q\right|\leq v^{(k)},\\
    \left|a_i^{\top} C_1 a_j + a_k^{\top} C_2 a_l + s_p^\top C_3 s_q\right|\leq v^{(k)}.
\end{align}
We set $C$ to make the above relations hold.
We define 
\begin{align}
    \forall k\in[2\widetilde{m}],\quad \delta^{(k)}_1=1-\beta_\star^{(k)}, & \quad \delta^{(k)}_2=1-\gamma_\star^{(k)}, \\
    \widetilde\delta_1^{(k)}\coloneqq \max_{s\in[k]}\left\{\delta^{(s)}_1\right\}, &\quad \widetilde\delta_2^{(k)}\coloneqq \max_{s\in[k]}\left\{\delta^{(s)}_2\right\}, \\
    \widetilde\beta_\star^{(k)}\coloneqq \min_{s\in[k]}\left\{\beta_\star^{(s)}\right\}, &\quad \widetilde\gamma_\star^{(k)}\coloneqq \min_{s\in[k]}\left\{\gamma_\star^{(s)}\right\},
\end{align}
and 
\begin{align}\label{eq:delta_generalization}
    \delta_1\coloneqq \frac{N+2m-1}{\exp\left(0.46\mu_1^{(t)}\right)+N+2m-1}\leq \sqrt{\frac{\epsilon}{6m}},\quad \delta_2\coloneqq \frac{N}{\exp\left(0.47\mu_2\right)+N}\leq \sqrt{\frac{\epsilon}{6m}},
\end{align}
where the inequalities follow from \eqref{eq:T_3_B}, \eqref{eq:T_3_C} and that $t\geq \max\{T_3^B,T_3^C\}+1$.
We first prove by induction that 
\begin{align}\label{eq:relation_generalization}
    \forall k\in[2\widetilde{m}],\quad \widetilde\delta^{(k)}_1\leq 2\max\left\{\frac{\widetilde{N}+2\widetilde{m}-1}{N+2m-1},1\right\}\delta_1,\quad \widetilde\delta^{(k)}_2\leq 2\max\left\{\frac{\widetilde{N}}{N},1\right\}\delta_2.
\end{align}
We define
\begin{align}\label{eq:a_generalization}
    a\coloneqq 2\max\left\{\frac{\widetilde{N}+2\widetilde{m}-1}{N+2m-1},1\right\}.
\end{align}

When $k=1$, we have 
\begin{align}
    \delta_1^{(1)}&\leq 1-\frac{\exp\left(\mu_\star^{(1)}\right)}{\exp\left(\mu_\star^{(1)}\right)+\sum_{i=1\atop i\neq \star}^{\widetilde{N}+1}\exp\left(\mu_i^{(1)}\right)}\leq \frac{\widetilde{N}}{\widetilde{N}+\exp\left(0.46\mu_1\right)}\overset{\eqref{eq:delta_generalization}}{\leq} \max\left\{\frac{\widetilde{N}}{N+2m-1},1\right\}\delta_1.
\end{align}
where we use $\star$ to denote the index of $\mu_\star^{(1)}$, and the second inequality follows from Lemma~\ref{lm:training_dynamics_II_B}.
Similarly, we have
\begin{align}
    \delta_2^{(1)}\leq \max\left\{\frac{\widetilde{N}}{N},1\right\}\delta_2.
\end{align}
Thus \eqref{eq:relation_generalization} holds for $k=1$.
Now we assume \eqref{eq:relation_generalization} holds for $k$ ($1\leq k\leq 2\widetilde{m}-1$), and prove it for $k+1$.

We have
\small
\begin{align}
    \beta_\star^{(k+1)}\geq \frac{\exp\left(\left(\widetilde\beta_\star^{(k)}\right)^2\mu_\star^{(k)}-\left(1-\left(\widetilde\beta_\star^{(k)}\right)^2\right) v\right)}{\exp\left(\left(\widetilde\beta_\star^{(k)}\right)^2\mu_\star^{(k)}-\left(1-\left(\widetilde\beta_\star^{(k)}\right)^2\right) v\right)+\sum_{i=1\atop i\neq \star}^{\widetilde{N}+k}\exp\left(\left(\widetilde\beta_\star^{(k)}\right)^2\mu_i^{(k)}+\left(1-\left(\widetilde\beta_\star^{(k)}\right)^2\right) v\right)},
\end{align}
\normalsize
indicating
\begin{align}\label{eq:delta_1_generalization}
    \delta_1^{(k+1)}\leq \frac{\widetilde{N}+k-1}{\widetilde{N}+k-1+\exp\bigg(0.46\left(\widetilde\beta_\star^{(k)}\right)^2\mu_1-2\left(1-\left(\widetilde\beta_\star^{(k)}\right)^2\right)v\bigg)}.
\end{align}
Note that 
\begin{align}
    1-\left(\widetilde\beta_\star^{(k)}\right)^2=\bigg(1-\widetilde\beta_\star^{(k)}\bigg)\bigg(1+\widetilde\beta_\star^{(k)}\bigg)\leq 2\widetilde\delta^{(k)}_1\leq 2a\delta_1,
\end{align}
where the last inequality follows from induction hypothesis and the definition of $a$ (c.f. \eqref{eq:a_generalization}).
Plugging this into \eqref{eq:delta_1_generalization} we have
\begin{align}\label{eq:(a)_generalization}
    \delta_1^{(k+1)}\leq \frac{\widetilde{N}+k-1}{\widetilde{N}+k-1+\exp\bigg(0.46\mu_1-\underbrace{2a\delta_1\left(2v+0.46\mu_1\right)}_{(a)}\bigg)},
\end{align}

On the other hand, by the definition of $\delta_1$ and $\delta_2$ we have
\begin{align}
        \mu_1=\frac{1}{0.46}\log\left(\frac{1}{\delta_1}-1\right),\quad \mu_2=\frac{1}{0.47}\log\left(\frac{1}{\delta_2}-1\right).
\end{align}
and 
\begin{align}
    \delta_1\leq \sqrt{\frac{\epsilon}{6m}}, \quad \delta_2\leq \sqrt{\frac{\epsilon}{6m}}.
\end{align}
Also observe that 
$  x\log x\rightarrow 0_{+}$ as $x\rightarrow 0_{+}$.
Thus we could set $\epsilon_1=\epsilon_1(m,\widetilde{N},\widetilde{m})$ small enough such that (a) in \eqref{eq:(a)_generalization} is smaller than $\log 2$ for any $\epsilon\in(0,\epsilon_1]$. Thus by \eqref{eq:(a)_generalization} we have
\begin{align}
    \delta_1^{(k+1)}\leq \frac{\widetilde{N}+k-1}{\widetilde{N}+k-1+\frac{1}{2}\exp\bigg(0.46\mu_1\bigg)}\leq 2\frac{\widetilde{N}+k-1}{\widetilde{N}+k-1+\exp\bigg(0.46\mu_1\bigg)}\leq 2\max\left\{\frac{\widetilde{N}+k-1}{N+2m-1},1\right\}\delta_1.
\end{align}
Similarly, there exists $\epsilon_2=\epsilon_2(m,\widetilde{N},\widetilde{m})$ small enough such that 
\begin{align}
    \delta_2^{(k+1)}\leq 2\max\left\{\frac{\widetilde{N}}{N},1\right\}\delta_2.
\end{align}
We let $\epsilon_0=\min\{\epsilon_1,\epsilon_2\}$, then \eqref{eq:relation_generalization} holds for $k+1$.

Following a similar calculation as in the proof of Lemma~\ref{lm:convergence_B} and Lemma~\ref{lm:convergence_delta_z}, we can show that
\begin{align}
    \Delta_x^{(k)}\leq \left(\delta_1^{(k)}\right)^2,\quad \Delta_y^{(k)}\leq \left(\delta_1^{(k)}\right)^2,\quad \Delta_z^{(k)}\leq \left(\delta_2^{(k)}\right)^2,
\end{align}
and thus 
\begin{align}
    \losstest(\gT;\theta)\leq 4\max\left\{\left(\frac{\widetilde{N}+2\widetilde{m}-1}{N+2m-1}\right)^2,\left(\frac{\widetilde{N}}{N}\right)^2,1\right\}\epsilon.
\end{align}


\end{document}